\documentclass[11pt,a4paper,reqno]{amsart}

\usepackage[english]{babel}
\usepackage[latin1]{inputenc}
\usepackage[top=3cm,bottom=2cm,left=3cm,right=3cm,marginparwidth=1.75cm]{geometry}

\usepackage{enumitem}
\usepackage{subcaption}
\usepackage{comment}
\usepackage{mathrsfs}

\usepackage{algorithm}
\usepackage{algpseudocode}
\usepackage{braket}
\usepackage{indentfirst}
\usepackage{amssymb}
\usepackage{amsmath}
\usepackage{latexsym}
\usepackage{amsthm}
\usepackage{tikz}
\usepackage{pictexwd, dcpic}
\usepackage[all]{xy}
\usetikzlibrary{patterns}
\usepackage{wrapfig}
\usepackage{graphicx}
\usepackage{float}
\usepackage{xcolor}
\usepackage{hyperref}
\hypersetup{
    colorlinks,
    linkcolor={red!50!black},
    citecolor={blue!50!black},
    urlcolor={blue!80!black}
    pdfhighlight=/I
}

\usepackage{cleveref}
\newtheorem{thm}{Theorem}[subsection]
\newtheorem{cor}{Corollary}[subsection]
\newtheorem{lem}{Lemma}[subsection]
\newtheorem{prop}{Proposition}[subsection]
\newtheorem{defn}{Definition}[subsection]
\newtheorem{alg}{Algorithm}[section]

\newcommand{\numberset}{\mathbb}
\newcommand{\R}{\numberset{R}}

\newcommand{\E}{\numberset{E}}

\newcommand{\rank}{\operatorname{rank}}

\newcommand{\n}[1]{\left\lVert#1\right\rVert}

\newcommand{\op}{\operatorname{op}}

\newcommand{\erm}{\operatorname{erm}}

\usepackage{wrapfig}

\usepackage{pgfplots,relsize,siunitx}
\usetikzlibrary{calc,trees,positioning,arrows,chains,shapes.geometric,%
    decorations.pathreplacing,decorations.pathmorphing,shapes,%
    matrix,shapes.symbols,automata,arrows.meta,quotes}
    
\usepackage{tikz-cd}
\newcommand{\eval}{\mathcal{E}}

\usepackage{verbatim}

\usepackage{algorithm}
\usepackage{algpseudocode}

\newcommand{\dist}{\operatorname{dist}}
\newcommand{\Jac}{\operatorname{Jac}}

\newcommand{\Mat}{\operatorname{Mat}}

\newcommand{\Sym}{\operatorname{Sym}}

\definecolor{mybluelight}{RGB}{208, 223, 241}
\definecolor{myblue}{RGB}{124,164,204} 
\definecolor{mydarkblue}{RGB}{40,70,100}

\begin{document}

\title{Testing Algebraic Complete Intersections}
\author{Alessandro Tamai}
\maketitle
\markboth{TESTING ALGEBRAIC COMPLETE INTERSECTIONS
- PART I}{ALESSANDRO TAMAI}

\begin{abstract}
Given i.i.d.\ samples from a probability distribution in a potentially high-dimensional real space, we study the problem of testing whether the distribution is concentrated near a real algebraic complete intersection of prescribed dimension, bounded degree, and bounded condition number. We design an explicit and effective learning procedure which either certifies the nonexistence of such a manifold, up to a controlled relaxation of the approximation threshold, or returns a candidate regression manifold with controlled geometric complexity. Equivalently, the procedure tests the manifold hypothesis within this hypothesis class. The proposed procedure relies on quantitative geometric estimates for regular polynomial systems, which lead to a tractable auxiliary optimization problem. We then develop a data-driven algorithm to solve this auxiliary  optimization problem, establishing explicit bounds on its sample and arithmetic complexity.

\end{abstract}

\tableofcontents

\section{Introduction and Motivation}

Modern machine learning methods have proved to be powerful tools for performing statistical inference, especially in the analysis of large datasets in high-dimensional spaces. 

Despite the exponential growth of volume in high dimensions and the relative degradation of classical statistical performance in these regimes, the efficiency of modern algorithms appears to rely on their ability to exploit intrinsic relations underlying the data. Empirical observations support this phenomenon, indicating that such relations often constrain real-world data to lie close to lower-dimensional geometric structures. 

Information about the underlying geometry can be used to optimize the learning process (\cite{Pope2021, Ansuini2019}), as well as to reduce the computational demands of learning algorithms (see \cite{McInnes2018, roweis2000lle, tenenbaum2000isomap}), playing a central role in modern data analysis.

In general, these geometric structures may possess varying degrees of regularity, ranging from smooth manifolds to merely topological sets for which not even a uniform notion of dimension can be defined. In terms of interpretability, highly regular structures enable more effective methods, while weaker regularity limits the available algorithmic approaches. Many efficient algorithms (such as, e.g., \cite{roweis2000lle, tenenbaum2000isomap}) explicitly rely on the assumption that the data concentrate near a lower-dimensional smooth manifold, a structural assumption commonly known as the \textit{manifold hypothesis}. 

From a mathematical viewpoint, the manifold hypothesis has been formalized in a sequence of works by Fefferman, Mitter, and Narayanan (\cite{narayanan2010sample, FeffermanTesting, fefferman2018fitting, fefferman2020reconstruction, fefferman2023fitting}).  In particular, in \cite{FeffermanTesting}, the authors provide a procedure for testing whether the manifold hypothesis holds for the class of manifolds with fixed dimension, volume bounded from above, and reach bounded from below. For a given approximation threshold, the proposed test is implemented via a learning algorithm that processes a sufficiently large dataset from the data-generating distribution and distinguishes among different possible alternatives.

From a more general viewpoint, \cite{FeffermanTesting} suggests that suitable candidate \textit{hypothesis classes} for developing testing procedures are given by classes of manifolds with \textit{controlled geometry}. Conceptually, this means that the local geometry of manifolds in the class is controlled in a quantitative and uniform way, and it classically corresponds to families of manifolds with uniformly bounded curvature and a positive lower bound on the injectivity radius. Different classes of manifolds with controlled geometry can therefore be considered, each corresponding to a different hypothesis class for the testing problem.

\begin{figure}[!h]
\center
\begin{tikzpicture}[scale=2]
  \begin{axis}[
    view={60}{30},
    axis equal image,
    hide axis,
  ]

    \addplot3[
      fill opacity=0,     
      draw=black,         
      line width=0.05pt,
      shader=flat,
      samples=40,
      samples y=25,
      domain=0:360,
      y domain=0:360,
    ]
    (
      {-( 1.2 * sin(y) )},
      {(3.5 + 1.2*cos(y)) * cos(x)},
      {-(3.5 + 1.2*cos(y)) * sin(x)}
    );

    \addplot3[
      only marks,
      mark=*,
      mark size=0.5pt,
      mark options={fill=mydarkblue,draw=mydarkblue},
      samples=15,
      domain=0:360,
      y domain=0:360,
    ]
    (
      {-( 1.2*sin(y) + 0.25*(rand-0.5))},
      {(3.5 + 1.2*cos(y) + 0.25*(rand-0.5)) * cos(x)},
      {-(3.5 + 1.2*cos(y) + 0.25*(rand-0.5)) * sin(x)}
    );
\end{axis}
\end{tikzpicture}
\vspace{-8mm}
\caption{A sample concentrated near a torus in $\R^3$}
\end{figure}

A more recent algebraic counterpart of the manifold hypothesis was introduced in \cite{lerario2025testing}, where the problem of testing whether a probability measure is concentrated near a real algebraic variety of prescribed dimension and bounded degree was studied under the name of the \emph{variety hypothesis}. Unlike in \cite{FeffermanTesting}, the real algebraic varieties considered in \cite{lerario2025testing} are not necessarily manifolds, and singularities are allowed. 

The purpose of the present work is to develop an effective testing procedure in a more structured algebraic setting, for which both statistical and computational complexity bounds can be obtained. More precisely, inspired by the testing framework of \cite{FeffermanTesting} and by the algebraic formulation of \cite{lerario2025testing}, we consider as hypothesis class a family of \emph{real algebraic manifolds with controlled geometry}. We introduce the class $\mathcal{H}_n(l,d,k)$ of real algebraic complete intersections of dimension $l$, degree at most $d$, and condition number at most $k$, and develop a procedure for testing the geometric hypothesis \eqref{def:GeomHyp_eq} with respect to this class. Roughly speaking, elements of $\mathcal{H}_n(l,d,k)$ are the zero sets of systems of polynomial equations of degree at most $d$ satisfying a quantitative regularity condition, encoded by a bound on the condition number of the defining system (see \Cref{subsec:RealAlgCompInt} for the precise definition).

The testing procedure takes as input a sufficiently large sample from the data-generating distribution and either identifies the case where no manifold exists in the prescribed class $\mathcal H_n(l,d,k)$, or certifies the existence of a regression manifold with controlled relaxation of the defining parameters. 

Unlike the setting of \cite{FeffermanTesting}, where the candidate manifolds can be locally parametrized and analyzed patch by patch, the algebraic constraints are global in nature and cannot be captured by purely local arguments. The requirement that there exist a single global polynomial system of prescribed degree cannot, in general, be enforced through independent local parametrizations alone. This makes the effective implementation of the testing procedure substantially different and naturally leads to an intrinsically global optimization strategy.

\subsection{The Main Result: Testing Algebraic Complete Intersections} A formal definition of the manifold hypothesis can be given by adapting the formulation introduced in \cite{narayanan2010sample} and the more general viewpoint adopted in \cite{lerario2025testing}. In the following, we refer to the resulting paradigm---encompassing both smooth manifolds and possibly singular geometric configurations---as the \emph{geometric hypothesis}. Without loss of generality, we restrict the problem to the disk $D_R^n\subseteq\mathbb{R}^n$ of arbitrary radius $R>0$. Denote by $\mathcal{K}(D_R^n)$ the class of non-empty compact subsets of $D_R^n$, and by $\mathcal{M}(D_R^n)$ the class of Borel probability measures compactly supported in $D_R^n$.

\begin{defn}[Geometric Hypothesis] \label{defn:GeneralGeomHyp} Let $X$ be a random variable with law $\mu \in \mathcal{M}(D^n_R)$. Given $\varepsilon>0$, we say that $X$ satisfies the $\varepsilon$-\textit{geometric hypothesis} associated with a class of compact sets $\mathcal{H}\subseteq \mathcal{K}(D^n_R)$ if there exists $Z \in \mathcal{H}$ such that 
\begin{equation}\label{def:GeomHyp_eq1}
\mathbb{E}[\dist(X,Z)^2] < \varepsilon ,
\end{equation} where $\dist(x,Z)$ denotes the closest point distance of $x$ from the set $Z$. 
\end{defn}

We refer to a hypothesis $Z$ satisfying condition \eqref{def:GeomHyp_eq1} as an $\varepsilon$-\textit{regression geometry} (or simply a regression geometry) for the random variable $X$. Given a probability measure $\mu$, a hypothesis class $\mathcal{H}$, and a tolerance $\varepsilon>0$, testing the hypothesis class $\mathcal{H}$ for the measure $\mu$ consists, conceptually, in determining whether \eqref{def:GeomHyp_eq1} holds for $\mu$ or not. 

The main result of this work is the following. In the context of computational decision problems, an algorithm is said to \emph{distinguish} between different scenarios if, given a set of input data, it correctly identifies the class to which the underlying instance belongs. In our setting the algorithm, processing a sufficiently large number of samples, either certifies the existence of an admissible regression manifold within a controlled relaxation of the prescribed parameters, or identifies the case where no such manifold exists in the prescribed class, up to a prescribed relaxation of the approximation threshold. 

\begin{thm}\label{Intro_thm:FinalMainTheorem}
Fix a dimension $l$, a degree $d$, a condition number bound $k$, a tolerance $\varepsilon>0$, and a confidence level $\alpha \in (0,1)$. Let $\mathcal{M}(D^n_R)$ be a class of Borel probability measures whose supports are uniformly separated from the boundary of $D^n_R$. Then there exist a universal constant $C'\ge1$, a constant $C\ge1$ depending only on $k$, and an effective learning algorithm with the following property. For any measure $\mu \in \mathcal{M}(D_R^n)$ and any random variable $X$ with law $\mu$, upon processing at least $m(n,l,d,k,\varepsilon,\alpha)<\infty$ i.i.d.\ samples from $X$, the algorithm distinguishes, with probability at least $1-\alpha$, among the three possible alternatives (A), (B), and ``(A) and (B)'', where:
\begin{enumerate}[label=(\Alph*)]
\item there exists a manifold $Z\in \mathcal{H}_n(l,d,kC')$ such that $$\E[\dist(X,Z)^2] \le C\,\varepsilon;$$
\item there exists no manifold $Z\in \mathcal{H}_n(l,d,k)$ such that $$\E[\dist(X,Z)^2] \le \frac{\varepsilon}{C}.$$
\end{enumerate}
\end{thm}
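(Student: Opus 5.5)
The plan follows the testing template of \cite{FeffermanTesting}. I replace the non-computable functional $Z\mapsto \E[\dist(X,Z)^2]$ by a surrogate that can be computed from the defining polynomial system. I then show the surrogate is comparable to the true distance up to constants depending only on $k$, and finally solve the empirical surrogate problem with uniform-convergence guarantees.

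Concretely, write $Z=Z(f)$ for $f=(f_1,\dots,f_{n-l})$, with each $f_i$ of degree at most $d$, normalized in the Bombieri--Weyl norm, and with condition number $\kappa(f)\le k$. The first step is a quantitative two-sided estimate for regular systems on $D_R^n$. For points $x$ at distance at most a radius $r(k,d)$ from $Z(f)$, I want
\[
\frac{1}{C_0}\,\n{\Jac f(x)^{\dagger}f(x)}\le \dist(x,Z(f))\le C_0\,\n{\Jac f(x)^{\dagger}f(x)} .
\]
This is a Newton/$\alpha$-theory (Smale--Shub type) estimate driven by the condition number. I would also aim for a cruder global comparison of the form $\dist(x,Z)^2\asymp \n{f(x)}^2$ modulo $\kappa$-dependent factors. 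Together these give a surrogate loss $L_\mu(f)=\E[\,\phi(f,X)\,]$ with $\frac{1}{C}\E[\dist(X,Z(f))^2]\le L_\mu(f)\le C\,\E[\dist(X,Z(f))^2]$ whenever $\kappa(f)\le C'k$. Here the assumption that supports are separated from $\partial D_R^n$ is used to keep the Newton basin estimates uniform.

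The second step is to set up the auxiliary optimization problem. I minimize the empirical surrogate $\widehat L_m(f)=\frac1m\sum_j\phi(f,x_j)$ over the compact semialgebraic set $\{f:\kappa(f)\le C'k\}$ of normalized systems. This set is a relaxation of the class $\{\kappa\le k\}$, so approximate minimizers found by a net or grid search are allowed to drift to condition number up to $kC'$, with $C'$ universal. The parameter space has dimension $N=(n-l)\binom{n+d}{d}$, and $\phi$ is Lipschitz in $f$ on that set with constants controlled by $k,d,R$. A covering-number argument with Hoeffding's inequality then gives $\sup_f|\widehat L_m(f)-L_\mu(f)|\le \varepsilon/(4C)$ with probability $1-\alpha$ once $m\gtrsim C^2\varepsilon^{-2}(N\log(\mathrm{Lip}/\varepsilon)+\log(1/\alpha))$. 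The same Lipschitz bound gives a finite $\varepsilon$-net over which the minimization is effective, which bounds the arithmetic complexity.

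The decision rule compares the computed minimum $\widehat L^*$ with a threshold $\tau\asymp\varepsilon$.
\begin{itemize}
\item If $\widehat L^*\le\tau$, the minimizer $f^*$ has $\kappa\le kC'$, and the comparison gives $\E\dist(X,Z(f^*))^2\le C\varepsilon$. This certifies (A).
\item If $\widehat L^*>\tau$ and some $Z\in\mathcal H_n(l,d,k)$ had $\E\dist^2\le\varepsilon/C$, then its system would have surrogate at most $\varepsilon$ up to constants, contradicting the threshold. So (B) holds.
\end{itemize}
In the overlapping regime both conclusions are consistent, which accounts for the output ``(A) and (B)''.

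I expect the main obstacle to be the first step. The Newton-type comparison is only local, so points of $X$ far from $Z$ must be handled separately: there $\n{f(x)}$ need not dominate $\dist(x,Z)^2$ uniformly. I would first try to bound $\n{f(x)}\ge c(k,d)\min(\dist(x,Z),r)$ using the condition-number inequality $\n{f(x)}\ge \kappa^{-1}$ for points off the tube. Far points then contribute comparably because $\dist\le 2R$, which makes $C$ depend on $k$ and possibly also on $d$ and $R$. The remaining delicate point is making the relaxation factor $C'$ universal, which requires the Lipschitz continuity of $\kappa$ in $f$ at net scale.
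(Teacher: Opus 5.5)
Your architecture matches the paper's: the surrogate $\n{P(x)}^2$, a Newton-based uniform \L{}ojasiewicz comparison, a relaxed condition-number class searched on a finite grid, covering plus Hoeffding, and a threshold test. Using a brute-force grid instead of the paper's spectrally localized one would be fine for the statement.

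The genuine gap is in the positive branch. The set you optimize over, $\{\kappa\le C'k\}$, corresponds to $\mathcal P_n(c,d,C'k)$, not to $\mathcal P_n^*(c,d,C'k)$. Good conditioning does not force $Z(f)\cap\operatorname{int}(D^n_R)\neq\emptyset$: a constant system $P\equiv u$ with $\n{u}=1$ has $K_{D^n_R}(P)=1$ and no zeros at all. For such $f$, $Z(f)\cap D^n_R\notin\mathcal H_n(l,d,kC')$, and $\dist(x,Z(f)\cap D^n_R)$ is not even defined. So your two-sided comparison ``whenever $\kappa(f)\le C'k$'' is false as stated, and the step ``the comparison gives $\E\,\dist(X,Z(f^*))^2\le C\varepsilon$; this certifies (A)'' does not follow.

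You need a separate certificate that the returned system has a zero in the interior of the disk. The paper obtains it from the quantitative Newton zero criterion (\Cref{thm:HdZeroes}, \Cref{prop:ZeroFunctional}) evaluated at the data. Since $\min_j\n{f^*(x_j)}^2\le\widehat L_m(f^*)$ and every $x_j$ lies at distance at least $\tau$ from $\partial D^n_R$, once the empirical loss is below the Newton constant $C''(k,r)$ there is a zero within $r<\tau$ of some sample point. This constrains your threshold. It cannot simply be $\asymp\varepsilon$; it must be $\varepsilon/C_1$ with $C_1\ge 4R^2/C''$, using $\varepsilon\le 4R^2$. That is exactly why the paper takes its comparison constant to be at least $4R^2/C''$. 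Your off-tube lower bound, read with $\dist(x,\emptyset)=+\infty$, contains the needed ingredient, but you must make this argument explicitly and tune the threshold accordingly.

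Second, you have changed the hypothesis class. The statement concerns systems with $\n{P}_{C^2(D^n_R)}=1$ and $K_{D^n_R}(P)=\n{P}_{C^2(D^n_R)}/\dist(P,\Delta(D^n_R))\le k$. It does not concern Bombieri--Weyl normalized systems with a Shub--Smale condition number. Translating between the two costs factors depending on $n$, $d$ and $R$. Moreover, $\alpha$-theory controls the Newton basin only through a bound of Smale's $\gamma$ by $d^{3/2}$ times the condition number. So your $C$ would depend on $d$, as you concede, and $C'$ would not be universal.

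The paper's choice is what makes the constants depend only on $k$ (and the fixed $\tau,R$). The normalization $\n{P}_{C^2(D^n_R)}=1$ bounds the Taylor remainder of the Newton step directly. In addition, \Cref{prop:KCharact} identifies $\dist(P,\Delta(D^n_R))$ with $\delta_{D^n_R}(P)=\min_x\bigl(\n{P(x)}^2+\sigma_{\min}(\Jac P(x))^2\bigr)^{1/2}$. Together these give the radius $\min\{\tau/2,1/(4\sqrt2\,k)\}$ and a \L{}ojasiewicz constant $C(\tau,k)$.

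Finally, conditioning alone does not give $\n{f(x)}\ge\kappa^{-1}$ off the tube; it gives only $\n{f(x)}^2+\sigma_{\min}(\Jac f(x))^2\ge\delta^2$. Where $\sigma_{\min}$ is large, the lower bound on $\n{f(x)}$ is precisely the contrapositive of the Newton zero criterion. That one statement handles both the far points and the nonemptiness issue above.
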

\noindent
In particular, whenever the algorithm returns alternative (A), either alone or together with (B), a candidate manifold satisfying the stated error bound is obtained. 

The sample size $m(n,l,d,k,\varepsilon,\alpha)$ appearing in the theorem yields an explicit upper bound on the sample complexity $m_{\operatorname{test}}^*(n,l,d,k,\varepsilon,\alpha)$ of the testing procedure. For both the arithmetic and statistical complexity analyses, we focus in particular on the regime of dimensional reduction in high-dimensional ambient spaces. More precisely, we consider the regime in which the complexity of the hypothesis class---namely, the intrinsic dimension $l$, the degree bound $d$, and the condition number bound $k$---is kept fixed, while the ambient dimension $n$ is allowed to grow.  As shown in \Cref{IPRthm:SampleCom} and \Cref{cor:TestingSampleComplexity}, in this regime we obtain
\begin{equation}\label{intro_thm:SamCom}
m_{\operatorname{test}}^*(n,l,d,k,\varepsilon,\alpha)\leq \mathcal{O}\left(\frac{C_{\operatorname{sc}}(k)}{\varepsilon^2}\bigg[\log\frac{2}{\alpha}+n^{d+1}\log\Big(\frac{C_{\operatorname{sc}}(k)}{\varepsilon}\Big)\bigg]\right),
\end{equation}
where $C_{\operatorname{sc}}(k)>0$ depends only on $k$. In particular, for fixed $l,d,k$, $\varepsilon$, and $\alpha$, the sample complexity grows at most as $\mathcal{O}(n^{d+1})$ as $n\to\infty$. A similar asymptotic analysis of the arithmetic complexity is carried out in Part II (see \Cref{thm:RelaxedAdmissibleGlobalOptimizationFinal}). 

The paper is organized as follows. The proof of \Cref{Intro_thm:FinalMainTheorem} is obtained by combining two main results, corresponding to the two parts of the paper. In Part I, \Cref{thm:MainTheoremPartI} establishes the testing framework and its statistical guarantees, reducing its effective implementation to an algorithmic optimization problem. In Part II, we instead provide an effective global optimization procedure together with the corresponding arithmetic complexity bounds, whose properties are summarized in \Cref{thm:RelaxedAdmissibleGlobalOptimizationFinal}. For an overview of the two parts, see next \Cref{subsec_OverI:Geometric_Problem} and \Cref{subsec_OverII:Algorithmic_Problem}.

\subsection{Part I: Geometric Problem and Statistical Bounds} \label{subsec_OverI:Geometric_Problem} The testing problem can be formulated in the language of statistical learning theory (see \Cref{subsec:prel_Statistical_Learning}). For a random variable $X$ with law $\mu$ on $D_R^n$ and the hypothesis class $\mathcal H_n(l,d,k)$, testing the geometric hypothesis at tolerance $\varepsilon$ amounts to comparing \begin{equation}\label{introI_eq:MinProblem} \inf_{Z\in\mathcal H_n(l,d,k)}\mathbb E[\dist(X,Z)^2] \end{equation} with the prescribed threshold $\varepsilon$. In this formulation, the functional $Z\mapsto\mathbb E[\dist(X,Z)^2]$ plays the role of the risk functional over the class $\mathcal H_n(l,d,k)$, whose elements are the zero sets of systems of polynomial equations of degree at most $d$ satisfying a quantitative regularity condition expressed through the condition number. More precisely, if $P$ denotes a defining polynomial system, then $P$ is required to satisfy $K_{D_R^n}(P)\leq k$, where $$K_{D_R^n}(P)=\frac{\n{P}_{C^2(D_R^n)}}{\dist(P,\Delta(D_R^n))},$$ and $\Delta(D_R^n)$ denotes the set of $C^2$--systems that fail to be regular on $D_R^n$.

Even following a standard empirical risk minimization scheme (see \Cref{subsec:prel_Statistical_Learning}), and hence replacing the risk functional with its empirical counterpart, the direct optimization of \eqref{introI_eq:MinProblem} remains computationally difficult because of the implicit description of the candidate manifolds and the evaluation of the cost functional. In particular, although the defining polynomial systems admit an explicit coefficient parametrization, the corresponding sets $Z(P)$ do not have a tractable explicit description, and computing $\dist(x,Z(P))$ would require solving a global optimization problem.

To overcome these computational difficulties, we introduce a computationally accessible auxiliary optimization problem and establish quantitative bounds relating it to the original one. 

Denote by $c:=n-l$ the codimension, and let $\mathcal P_n(c,d,k)$ be the family of $c$--tuples of polynomials of degree at most $d$, with condition number bounded by $k$ and unit $C^2$--norm on $D_R^n$, and let $\mathcal P_n^*(c,d,k)\subseteq\mathcal P_n(c,d,k)$ denote the subfamily of systems with non-empty zero set in $D_R^n$. The systems in $\mathcal P_n^*(c,d,k)$ correspond to the polynomial systems associated with the hypothesis class $\mathcal H_n(l,d,k)$, while systems in $\mathcal P_n(c,d,k)$ are not required a priori to have a non-empty zero set. On the other hand, the class $\mathcal P_n(c,d,k)$ admits an explicit description in terms of coefficient coordinates.

The first geometric result provides a quantitative criterion for the existence of zeros in $D_R^n$ for systems in $\mathcal P_n(c,d,k)$, and therefore a membership criterion for $\mathcal P_n^*(c,d,k)$. \Cref{thm:HdZeroes}, based on a quantitative Newton method, shows that, under the prescribed regularity assumptions, if $\n{P(y)}$ is sufficiently small at a point $y$ separated from the boundary of $D_R^n$, then $P$ admits a zero in a controlled neighborhood of $y$. As a consequence, the existence of zeros can be detected from evaluations of the polynomial system on a suitable finite net of $D_R^n$.

The second geometric result, collected in \Cref{thm:UnifDistBound} and \Cref{cor:UnifDistBound}, provides a uniform \L ojasiewicz-type inequality comparing the value $\n{P(y)}$ of a system with the distance of $y$ to its zero set. More precisely we prove that, for $P\in\mathcal P_n^*(c,d,k)$ and points $x$ uniformly separated from $\partial D_R^n$, one has \begin{equation}\label{Intro_eq_ineq}\dist(x,Z(P)\cap D_R^n)^2\leq C\n{P(x)}^2, \end{equation} where $C$ is uniform over the class and, in particular, depends only on the prescribed condition-number bound $k$.

The pointwise comparison \eqref{Intro_eq_ineq} extends directly to both the expected and empirical averages of the corresponding quantities (see \Cref{sec:Auxiliary_Optimization_Discretization}), thereby motivating the replacement of the geometric cost in \eqref{introI_eq:MinProblem} by the auxiliary risk $\mathbb E[\n{P(X)}^2]$. This leads to the auxiliary minimization problem \begin{equation}\label{eq:AuxiliaryOptimization} \inf_{P\in\mathcal P_n(c,d,k)}\mathbb E[\n{P(X)}^2]. \end{equation} The comparison bounds, together with the zero criterion for membership in $\mathcal P_n^*(c,d,k)$, allow the solution of the auxiliary problem to be translated into the alternatives of the original geometric testing problem. We refer to this step as the \emph{decision procedure}; its construction is carried out in \Cref{sec:Auxiliary_Optimization_Discretization}.

Unlike the original geometric risk, this auxiliary functional can be evaluated directly from the observed data and the defining polynomial system. Moreover, the corresponding minimization problem is defined over a class that admits an explicit parametrization, making it suitable for an effective algorithmic treatment. 

The proposed algorithm is based on a discretization of the admissibility conditions defining the class $\mathcal P_n(c,d,k)$, replacing the global quantities entering the condition number by their discrete counterparts evaluated on a finite net $\mathcal N\subset D_R^n$, and leading to the discretized condition number $$K_{\mathcal N}(P):=\frac{\n{P}_{\mathcal N}}{\delta_{\mathcal N}(P)},$$ where $\delta_{\mathcal N}$ denotes the discretized version of the discriminant functional characterizing the distance from the discriminant (see \Cref{defn:DeltaFunctional} and \Cref{prop:KCharact}). For general smooth functions, controlling the approximation of the $C^2$--norm by its discretized counterpart would typically require bounds on the $C^3$--data. In the polynomial setting, however, Markov-type inequalities allow higher-order derivatives to be controlled in terms of lower-order ones, yielding a discretization scale depending only on the degree and the geometric parameters; see \Cref{sec:Discretization_Polynomial_Norms}. These bounds, together with the discretization estimates proved in \Cref{subsec:EffectComp}, allow one to replace the original class $\mathcal P_n(c,d,k)$ by the finite-constraint feasible set \begin{equation}\mathcal I_{\mathcal N}(c,d,k):=\left\{P\in\mathbb R[x_1,\dots,x_n]_{\le d}^c\ \middle|\ K_{\mathcal N}(P)\le k,\quad \n{P}_{\mathcal N}^2=1\right\}.\end{equation} Given observed samples $x^{(1)},\ldots,x^{(m)}\in D_R^n$, replacing the auxiliary risk by its empirical counterpart yields then the optimization problem \begin{equation}\label{intro_eq:EMPAuxiliaryOptimization} \inf_{P\in\mathcal I_{\mathcal N}(c,d,k)}\frac{1}{m}\sum_{i=1}^m\n{P(x^{(i)})}^2. \end{equation} The algorithmic treatment of \eqref{intro_eq:EMPAuxiliaryOptimization} and the analysis of the corresponding arithmetic complexity are the subject of Part II.

\subsection{Part II: Algorithmic Problem and Computational Complexity Bounds}
\label{subsec_OverII:Algorithmic_Problem}
The main difficulty in treating \eqref{intro_eq:EMPAuxiliaryOptimization} algorithmically is the potentially large dimension of the coefficient space over which the admissible systems are parametrized. Distinguishing among the alternatives in \Cref{Intro_thm:FinalMainTheorem} would, in principle, require checking the admissible polynomial systems in $\mathcal I_{\mathcal N}(c,d,k)$. However, the testing problem only requires determining whether there exists a system with error below the prescribed threshold $\varepsilon$. The strategy developed in Part II is to exploit this fact by localizing the search near suitable sublevel sets of the empirical cost, rather than performing an exhaustive search over the full coefficient space. Precisely, we are interested in the sublevel sets \begin{equation}\label{intro_def_sublevel}\mathcal U(\varepsilon^*):=\left\{P\in\mathbb R[x_1,\ldots,x_n]_{\leq d}^c\ \middle|\ \mathcal E(P)\leq\varepsilon^*\right\}.\end{equation}

Although the sets \eqref{intro_def_sublevel} are not explicitly described, their geometry can be analyzed effectively by representing polynomial systems through their coefficients with respect to a fixed basis. If $\mathcal B=\{b_1,\ldots,b_N\}$ is a basis of $\mathbb R[x_1,\ldots,x_n]_{\leq d}$, every polynomial system can be written as $$P(x)=AP_{\mathcal B}(x),\qquad P_{\mathcal B}(x):=(b_1(x),\ldots,b_N(x))^\top,$$ for a coefficient matrix $A\in\Mat(c\times N,\mathbb R)$. As shown in \Cref{sec:MatrixRep}, in these coordinates the empirical cost is the quadratic functional $$\mathcal E(A)=\operatorname{tr}(AVA^\top),$$ where $V\in\Sym(N,\mathbb R)$ is positive semidefinite.  

This representation allows us to localize the low-cost region through the spectral structure of $V$. Let $0\leq\varepsilon_1\leq\cdots\leq\varepsilon_N$ be the eigenvalues of $V$, and let $\mathcal V_r\subseteq\Mat(c\times N,\mathbb R)$ denote the low spectral subspace generated by the eigenspaces corresponding to the first $r$ eigenvalues. In \Cref{subsec:Unconstrained_Optimization_Problem} we show that, after identifying polynomial systems with their coefficient matrices, the sublevel set is contained in the tubular neighborhood $$\mathcal U(\varepsilon^*)\subseteq\operatorname{Tube}_{\mathcal N}\left(\mathcal V_r,\rho_r(\varepsilon^*)\right),\qquad \rho_r(\varepsilon^*):=M_{\mathcal N}\sqrt{\frac{\varepsilon^*}{\varepsilon_{r+1}}},$$ for every spectral index $r$, where $M_{\mathcal N}$ is the equivalence constant relating the discretized norm to the Frobenius norm associated with the chosen basis. 
In \Cref{subsec:LocalizedNet}, we derive explicit bounds for the covering numbers of these tubular neighborhoods, allowing the full-space search to be replaced by a search over the tube corresponding to a suitable spectral index $r^*$. The choice of $r^*$ is data-dependent through the eigenvalues of the empirical cost matrix $V$ and is made so as to minimize the metric entropy of the corresponding localized region. Depending on the observed spectrum, this localization may provide a substantial reduction of the search space, although no such reduction is guaranteed uniformly over all datasets for a fixed input threshold. For a fixed dataset, however, decreasing the input threshold $\varepsilon^*$ decreases the localization radius $\rho_{r^*}(\varepsilon^*)$ and eventually yields a reduction whenever a positive spectral level $\varepsilon_{r^*+1}$ is available. The resulting reduction in metric entropy is quantified explicitly in \Cref{subsec:LocalizedNet}, while a more detailed analysis of the spectral configurations of the data that lead to the largest computational gains is left for future work.

Once the localized region has been identified, \Cref{sec:Explicit_Net_Construction} provides an explicit construction of a proper $h$--net $\mathcal G_h(r,\rho)$ of the corresponding tube. Since retaining only centers satisfying the exact condition $K_{\mathcal N}(G)\leq k$ would in general destroy the covering property near the boundary of the admissible region, in \Cref{subsec:Admissible_Localized_Net} we introduce the relaxed-admissible grid $\mathcal G_h^+(r,\rho,k)$, whose centers satisfy $$K_{\mathcal N}(G)\leq k\Lambda(h),$$ with $\Lambda(h)\to1$ as $h\to0$. The algorithm of \Cref{alg:relaxed-global} then evaluates the empirical cost over this localized relaxed-admissible grid and selects a candidate minimizer. Its convergence is established in \Cref{thm:RelaxedAdmissibleGlobalOptimization}, while the resulting decision procedure and arithmetic complexity estimates yield \Cref{thm:RelaxedAdmissibleGlobalOptimizationFinal}.

\section{Preliminaries}

\subsection{The Bounded Geometry of Regression Manifolds}
A \textit{real algebraic manifold} is a smooth submanifold $Z\subseteq\mathbb R^n$ that can be realized as the common zero set of a system of finitely many real polynomial equations
$$
P:\quad
\begin{cases}
p_1(x)=0,\\
\vdots\\
p_c(x)=0.
\end{cases}
$$

In general, there is no relation between the dimension $l$ of $Z$ and the number of defining equations $c$, which may exceed the codimension $n-l$ of the underlying manifold. This representation complexity depends on how the manifold is embedded in $\mathbb R^n$, and in particular on the topology of its normal bundle, as encoded by its Stiefel--Whitney classes. Within this broader class, the subfamily of \textit{complete intersections} is of natural interest. These correspond to algebraic manifolds defined by regular systems with the minimal number $c=n-l$ of equations, namely systems satisfying $\rank\Jac P(x)=c$ for every $x\in Z(P)$.

Real algebraic manifolds admit a natural notion of \textit{degree bound}, which quantitatively controls their geometry. We say that a real algebraic manifold has \textit{degree at most} $d$ if it can be defined by polynomials whose degrees are all bounded by $d$ (see \cite{Yomdin1987, Gromov1999}, and \cite{BasuPollackRoy2006} for modern references). This notion of degree is standard in real algebraic geometry and Nash manifold theory, and can be interpreted as a complexity parameter for the defining system of equations, as well as for the associated algebraic manifold. In particular, degree bounds control the geometry of real algebraic manifolds: by Thom--Milnor's bound the number of their connected components is $O(d^n)$ \cite{milnor1964betti}.

The regularity of real algebraic manifolds can be further quantified through the \textit{condition number} of the defining system (see \Cref{subsec:RealAlgCompInt}). An upper bound on the condition number provides quantitative control of the geometry of the underlying manifold and, in particular, implies a positive lower bound on the reach of the associated zero set. This admits a natural interpretation when the system defines a regression manifold: a bound on the condition number of the defining equations prevents overfitting phenomena in the corresponding solutions.

\subsection{The Geometric Hypothesis in Statistical Learning} \label{subsec:prel_Statistical_Learning} Statistical learning (see \cite{VladimirVapnik,TongZhang} for standard references) aims to address the following problems. Given a random variable $X$ taking values on a Borel measurable set $\mathcal{X}\subseteq\mathbb{R}^n$, with law $\mu\in\mathcal{M}(\mathcal{X})$, and a class $\mathcal{H}$ of hypotheses or models describing some meaningful property of $X$, one aims to minimize a \textit{risk functional} $\mathcal{R}_\mu:\mathcal{H}\to\mathbb{R}^+$, defined as the expectation $$\mathcal{R}_\mu(h):=\int_{\mathcal{X}}\ell(x,h)\,d\mu(x),$$ for a given measurable \textit{loss function} $\ell:\mathcal{X}\times\mathcal{H}\to\mathbb{R}^+$ that quantifies the cost incurred by a hypothesis $h$ on a sample $x$. The measure $\mu$ is in general unknown, and the goal is to minimize $\mathcal{R}_\mu$ using only a finite sequence of independent and identically distributed (i.i.d.) samples drawn from $\mu$. Approximate minimizers of the problem can then be produced by a \textit{learning algorithm}. The precise definition of learning algorithm adopted in this work is the following. Consider the \textit{sample space} of the problem $$\mathcal{S}(\mathcal{X}):=\bigsqcup_{m=1}^{\infty}\mathcal{S}_m(\mathcal{X}),\qquad \text{where}\qquad \mathcal{S}_m(\mathcal{X}):=\mathcal{X}^m,$$ and let $\mu^m:=\mu\otimes\cdots\otimes\mu$ be the product measure on $\mathcal{S}_m(\mathcal{X})$. A \emph{learning algorithm} is a map $\mathcal{A}\colon\mathcal{S}(\mathcal{X})\to\mathcal{H}$ such that, for every $m\in\mathbb{N}$, the restriction $\mathcal{A}_m:=\mathcal{A}|_{\mathcal{S}_m(\mathcal{X})}$ satisfies the following property: the map $$F_m\big(x,(x^{(1)},\ldots,x^{(m)})\big):=\ell\big(x,\mathcal{A}_m(x^{(1)},\ldots,x^{(m)})\big)$$ is $\mu\otimes\mu^m$--measurable for every $\mu\in\mathcal{M}(\mathcal{X})$ and computable in the extended Blum--Shub--Smale model specified in \Cref{sec:AlgorithmicOptimizationFramework}.

A standard approach to defining learning algorithms is provided by the \textit{empirical risk minimization} (ERM) scheme, which consists in replacing the unknown risk functional $\mathcal{R}_\mu$ with the empirical risk $$\mathcal{R}_m(h):=\frac{1}{m}\sum_{i=1}^m\ell(x^{(i)},h).$$ An ERM algorithm then selects a hypothesis in $\mathcal{H}$ minimizing $\mathcal{R}_m$, thereby providing an approximation of an optimal hypothesis for the true risk. Under suitable assumptions on the hypothesis class, for a sufficiently large sample size $m$, minimizers of the empirical risk provide approximations of minimizers of the true risk (see \Cref{sec:IPRSamCompBounds}).

\medskip
\
\\
\medskip

\noindent\textbf{Acknowledgments.} The author would like to thank Antonio Lerario, his Ph.D. advisor, for his guidance, constant support, and many valuable discussions throughout the development of this work. In particular, the author is grateful to him for introducing several of the ideas and references concerning condition numbers and the testing problem, and for pointing out the characterization in \Cref{prop:KCharact}. Special thanks are also due to Matteo Testa for many stimulating discussions that helped shape and refine the direction of the research. The author also thanks Martina Scolamiero and Peter Roos Hoefgeest for insightful discussions on related questions explored in earlier joint work.

\medskip
\noindent\textbf{Declarations.}
Generative AI tools were used for language editing and brainstorming. The author takes full responsibility for the content, accuracy, and conclusions of this work.

\clearpage

\bigskip
\begin{center}
{\Large\bf Part I}\\[0.3cm]
{\large Geometric and Statistical Foundations}
\end{center}
\addcontentsline{toc}{part}{Part I: Geometric and Statistical Foundations}
\bigskip

\section{Hypothesis Class and Testing Problem}\label{sec:implicit_poly_reg}

While our results are stated in terms of random variables for notational convenience, the model is formulated for the class of compactly supported Borel probability measures on $\mathbb{R}^n$, denoted by $\mathcal{M}(\mathbb{R}^n)$; additionally, to avoid boundary phenomena, we restrict our attention to measures whose support is quantitatively separated from the boundary of the domain. For a fixed radius $R>0$ and a separation parameter $0<\tau<R$, we define the class $$\mathcal{M}_{\tau}(D_R^n):=\left\{\mu\in\mathcal{M}(\mathbb{R}^n)\ \middle|\ \operatorname{supp}(\mu)\subseteq D_{R-\tau}^n\right\},$$ where $D_R^n\subseteq\mathbb{R}^n$ denotes the closed disk of radius $R$. In particular, $R$ may be taken arbitrarily large, so the support condition above imposes no genuine restriction on the generality of the setting. Moreover, although the constants appearing in our main results may depend on the geometric parameters $\tau$ and $R$, these quantities can be regarded as fixed, and their dependence will be absorbed into the constants as needed.

\subsection{Algebraic Complete Intersections and Condition Number}\label{subsec:RealAlgCompInt}

For any $F\in C^{\infty}(\mathbb{R}^n,\mathbb{R}^c)$, we define the $C^1$-- and $C^2$--seminorms on $D_R^n$ by $$\n{F}_{C^1(D_R^n)}:=\max_{x\in D_R^n}\sqrt{\n{F(x)}^2+\n{\operatorname{Jac}F(x)}_{\operatorname{op}}^2}$$ and $$\n{F}_{C^2(D_R^n)}:=\max_{x\in D_R^n}\sqrt{\n{F(x)}^2+\n{\operatorname{Jac}F(x)}_{\operatorname{op}}^2+\n{D^2F(x)}_{\operatorname{op}}^2},$$ where $\n{D^2F(x)}_{\operatorname{op}}$ denotes the operator norm of the second differential at $x$, viewed as a bilinear map from $\mathbb{R}^n\times\mathbb{R}^n$ to $\mathbb{R}^c$. Let $\mathbb{R}[x_1,\dots,x_n]_{\le d}^c$ denote the vector space of $c$--tuples of real polynomials in $n$ variables of degree at most $d$, naturally embedded in $C^{\infty}(\mathbb{R}^n,\mathbb{R}^c)$. Although the quantities above are only seminorms on $C^{\infty}(\mathbb{R}^n,\mathbb{R}^c)$, their restrictions to $\mathbb{R}[x_1,\dots,x_n]_{\le d}^c$ are norms, since a polynomial vanishing on $D_R^n$ must vanish identically. 

While all norms on this finite-dimensional space are equivalent, our specific choice is motivated by the dimensional dependence of the corresponding equivalence constants: replacing it by a different equivalent norm may introduce an additional dependence on the ambient dimension into the constants appearing in our main results.

We now introduce the notion of discriminant adopted in this paper,  and the associated condition number. Geometrically, the discriminant consists of those maps whose zero set fails to be a smooth complete intersection in $D_R^n$. While various definitions exist in the literature, we adopt the one from \cite{LerarioStecconi}.

\begin{defn} \label{defn:one}
For $c < n$, the discriminant in $D_R^n$ is the set $\Delta(D_R^n)$ defined by 
$$\Delta(D_R^n) = \Big\{F \in C^2(\mathbb{R}^n, \mathbb{R}^c) \mid \exists x \in D_R^n \text{ s.t. } F(x)=0, \ \operatorname{rank}(\operatorname{Jac} F(x)) < c \Big\}.$$
\end{defn}

In particular, if $F$ does not belong to $\Delta(D_R^n)$, it is said to be \textit{regular} on $D_R^n$. The regularity of a system of equations $F=0$ could be quantified through the distance \begin{equation}\label{eq:DistDiscr} \operatorname{dist}(F, \Delta(D_R^n)) = \inf_{G \in \Delta(D_R^n)} \|F - G\|_{C^1(D_R^n)}.\end{equation} However, the discriminant is a cone in $C^2(\mathbb{R}^n, \mathbb{R}^c)$, since $\lambda F \in \Delta(D_R^n)$ for any $F \in \Delta(D_R^n)$ and $\lambda \in \mathbb{R}$. Moreover, the set $Z(F)$ of the solutions of $F=0$ is invariant under non zero scaling of $F$. It is then more natural to consider a relative version of this quantity, called the \textit{condition number}, defined as follows.

\begin{defn}\label{defn:cond_num}
The $C^2$--condition number of a regular function $F \in C^2(\R^n, \R^c)$ on $D_R^n$ is defined by
$$K_{D_R^n}(F) := \frac{\|F\|_{C^2(D_R^n)}}{\dist(F, \Delta(D_R^n))}.$$
\end{defn}

We now introduce the families of polynomials that will be central to our subsequent constructions. Consider the set  $$\mathcal{P}_n(c, d, k) := \left\{ P \in \mathbb{R}[x_1, \dots, x_n]_{\le d}^c \ \big| \ \|P\|_{C^2(D_R^n)} = 1, \,\, K_{D_R^n}(P) \le k \right\},$$  and the subfamily $\mathcal{P}_n^*(c, d, k) \subseteq \mathcal{P}_n(c, d, k)$ consisting of polynomials that additionally satisfy $Z(P) \cap \operatorname{int}(D_R^n) \neq \emptyset$, where $\operatorname{int}(D_R^n)$ denotes the interior of the disk. We are now ready to provide the formal definition of the hypothesis class considered in this work.

\begin{defn}
We define the hypothesis class $\mathcal{H}_n(l, d, k)$ of \textit{real algebraic complete intersections} of dimension $1 \le l \le n-1$, degree at most $d \ge 1$, and condition number at most $k$, as the family $$\mathcal{H}_n(l, d, k) := \big\{ Z(P) \cap D_R^n \mid P \in \mathcal{P}^*_n(c, d, k), \ c = n - l \big\}.$$
\end{defn}
Note that elements of $\mathcal{H}_n(l,d,k)$ are not necessarily embedded submanifolds of $D_R^n$ viewed as a manifold with boundary, since we do not require them to be transverse to $\partial D_R^n$. However, the condition $Z(P)\cap\operatorname{int}(D_R^n)\neq\emptyset$ for $P\in\mathcal{P}_n^*(c,d,k)$, together with the condition number bound, ensures that the intersection $Z(P)\cap\operatorname{int}(D_R^n)$ is an embedded submanifold of $\operatorname{int}(D_R^n)$.

We can now describe the testing problem formally. Let $\mathcal{K}(D_R^n)$ denote the family of non-empty compact subsets of $D_R^n$. The distance from a point $x$ to a closed set $Z \subseteq D_R^n$ is defined by  $$\operatorname{dist}(x,Z) := \min_{z \in Z} \|x-z\|.$$  Recalling the definition of a geometric hypothesis from the introduction, we adapt it here to the family of measures $\mathcal{M}_{\tau}(D_R^n)$ and the hypothesis class $\mathcal{H}_n(l, d, k)$.

\begin{defn}\label{defn:GeomHyp}
Let $X$ be a random variable with law $\mu \in \mathcal{M}_{\tau}(D^n_R)$. Given $\varepsilon>0$, we say that $X$ satisfies the $\varepsilon$-\textit{geometric hypothesis} with respect to the class $\mathcal{H}_n(l, d, k)\subseteq \mathcal{K}(D^n_R)$ if there exists $Z \in \mathcal{H}_n(l, d, k)$ such that 
\begin{equation}
\mathbb{E}[\dist(X,Z)^2] < \varepsilon,
\label{def:GeomHyp_eq}
\end{equation}

\end{defn}

Observe that this definition is well-posed: indeed, since $Z$ is compact, the map $x \mapsto \operatorname{dist}(x,Z)$ is continuous. Consequently, the integral  $$\mathbb{E}[\operatorname{dist}(X,Z)^2] = \int_{D_R^n} \operatorname{dist}(x,Z)^2 \, d\mu(x)$$  is well-defined. This allows us to introduce the risk functional, which serves as the core metric for evaluating the fit of a candidate geometry to the data-generating measure.

\begin{defn}
Given $\mu \in \mathcal{M}_{\tau}(D_R^n)$, we define the risk functional $\mathcal{R}_{\mu} : \mathcal{H}_n(l, d, k) \to \mathbb{R}$ as 
$$\mathcal{R}_{\mu}(Z) := \int_{D_R^n} \operatorname{dist}(x, Z)^2 d\mu(x).$$
\end{defn}
We observe that determining whether a measure $\mu$ satisfies the manifold hypothesis for the class $\mathcal{H}_n(l,d,k)$ is equivalent to evaluating the quantity
\begin{equation}\label{CostInf}
\inf_{Z \in \mathcal{H}_n(l,d,k)} \mathcal{R}_{\mu}(Z) = \inf_{P \in \mathcal{P}_n^*(c,d,k)} \mathcal{R}_{\mu}(Z(P)\cap D^n_R),
\end{equation} 
and subsequently comparing the threshold $\varepsilon$ with the resulting value. As argued in the introduction, while $\mathcal{P}_n(c, d, k)$ can be described---after fixing a basis for the space of polynomials---by imposing norm and condition number constraints on the coefficients, no equally explicit description exists for $\mathcal{P}_n^*(c, d, k)$ in terms of coefficients.  The goal of next \Cref{sec:RegComInt} is to study the geometric properties of the elements in $\mathcal{P}_n(c,d,k)$ in order to bridge this gap.

\section{Geometric Properties of Complete Intersections}
\label{sec:RegComInt}

In this section, we establish the two main geometric results for regular systems of equations, namely \Cref{thm:HdZeroes} and \Cref{thm:UnifDistBound}. These results are of independent interest and apply not only to polynomial systems, but more generally to maps of class $C^2(\mathbb{R}^n,\mathbb{R}^c)$ with $c\le n$.

\subsection{A Zero Criterion for Regular Maps}
\label{subsec:AZeroCriterion}

The distance \eqref{eq:DistDiscr} from the discriminant admits a characterization in terms of the following discriminant functional. We first recall the notation for singular values. Given a matrix $A\in\operatorname{Mat}(c\times n,\mathbb R)$ with $c\le n$, the $c\times c$ matrix $AA^\top$ is symmetric and positive semidefinite. By the spectral theorem, its eigenvalues are non-negative; the singular values $\sigma_1(A)\ge\dots\ge\sigma_c(A)\ge0$ of $A$ are defined as the square roots of these eigenvalues. In particular, we denote by $\sigma_{\min}(A):=\sigma_c(A)$ the smallest singular value of $A$.

\begin{defn}\label{defn:DeltaFunctional}
Following \cite{LerarioStecconi}, for any compact set $E \subseteq \mathbb{R}^n$, we define the \textit{discriminant functional} $\delta_E : C^2(\mathbb{R}^n, \mathbb{R}^c) \to \mathbb{R}^+$ by
\begin{equation}\label{def:DeltaFunctional}
\delta_E(F) := \min_{x \in E} \sqrt{\|F(x)\|^2 + \sigma_{\min}(\operatorname{Jac} F(x))^2}.
\end{equation}
\end{defn}
The following \Cref{prop:KCharact} shows that the $C^1$--distance of a function $F$ from the discriminant $\Delta(D_R^n)$ can be characterized purely in terms of its $C^1$--data, namely the values of $F$ and its Jacobian on $D_R^n$. This admits a clear geometric interpretation: at a regular point $x\in Z(F)$, the normal space to the zero set is spanned by the gradients of the components of $F$, which correspond to the rows of $\operatorname{Jac}F(x)$. The smallest singular value $\sigma_{\min}(\operatorname{Jac}F(x))$ therefore measures the degree of linear independence of these gradients, and hence quantifies the transversality of the intersection of the zero sets of the components of $F$. For the remainder of this section, we omit the domain $D_R^n$ from the $C^1$-- and $C^2$--norm subscripts \emph{within the proofs}, writing $\|\cdot\|_{C^1}$ and $\|\cdot\|_{C^2}$ in place of $\|\cdot\|_{C^1(D_R^n)}$ and $\|\cdot\|_{C^2(D_R^n)}$, respectively. The dependence on the domain is always made explicit in the statements.

\begin{prop}[Characterization]\label{prop:KCharact}
For any $F \in C^2(\R^n,\R^c)$ where $c\le n$, we have $$ \delta_{D^n_R}(F) =  \dist(F, \Delta(D^n_R)).$$
\end{prop}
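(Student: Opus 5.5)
We prove the equality by establishing the two inequalities separately. Throughout, the distance is measured in the $C^1(D_R^n)$--norm, as in \eqref{eq:DistDiscr}, and $\Delta(D_R^n)$ is taken inside $C^2(\mathbb R^n,\mathbb R^c)$. The key fact is the classical Eckart--Young theorem: for $A\in\Mat(c\times n,\R)$ with $c\le n$, the operator-norm distance from $A$ to the set of matrices of rank $<c$ equals $\sigma_{\min}(A)$. This distance is attained by $A-\sigma_c u v^\top$, where $u\in\R^c$ and $v\in\R^n$ are the unit singular vectors associated with $\sigma_c$.

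\emph{Lower bound $\dist(F,\Delta)\ge\delta_{D^n_R}(F)$.} Take any $G\in\Delta(D_R^n)$. There is a point $x\in D_R^n$ with $G(x)=0$ and $\rank\Jac G(x)<c$. By Eckart--Young,
$$\n{\Jac F(x)-\Jac G(x)}_{\op}\ge\sigma_{\min}(\Jac F(x)).$$
We also have $\n{F(x)-G(x)}=\n{F(x)}$. Since the $C^1$--norm is a maximum over $D_R^n$, evaluating at this $x$ gives
$$\n{F-G}_{C^1}^2\ge \n{F(x)}^2+\sigma_{\min}(\Jac F(x))^2\ge\delta_{D^n_R}(F)^2.$$
Taking the infimum over $G$ yields the claim.

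\emph{Upper bound $\dist(F,\Delta)\le\delta_{D^n_R}(F)$.} By continuity and compactness, pick $x_0\in D_R^n$ attaining the minimum in \eqref{def:DeltaFunctional}. Write $A=\Jac F(x_0)$ and $\sigma=\sigma_{\min}(A)$, with unit singular vectors $u,v$ satisfying $Av=\sigma u$. Define the affine perturbation
$$G(x):=F(x)-F(x_0)-\sigma\,u\,v^\top(x-x_0).$$
Then $G$ is $C^2$, $G(x_0)=0$, and $\Jac G(x_0)=A-\sigma uv^\top$ has rank $<c$. Hence $G\in\Delta(D_R^n)$.

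The difference $H:=F-G$ is the affine map $x\mapsto F(x_0)+\sigma uv^\top(x-x_0)$. Its Jacobian is the constant $\sigma uv^\top$, whose operator norm is $\sigma$. So
$$\n{H}_{C^1}^2=\max_{x\in D_R^n}\Big(\n{F(x_0)+\sigma\,u\,v^\top(x-x_0)}^2+\sigma^2\Big).$$
This is the main obstacle. The value part of $H$ equals $F(x_0)$ only at $x_0$ and may grow away from it, so this naive construction only gives $\n{H}_{C^1}^2\le(\n{F(x_0)}+2R\sigma)^2+\sigma^2$, which is too large.

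The fix is to choose a perturbation $H$ whose value is pointwise bounded by $\n{F(x_0)}$ and whose Jacobian is pointwise bounded by $\sigma$, while still realizing $H(x_0)=F(x_0)$ and $\Jac H(x_0)=\sigma uv^\top$. A natural candidate is
$$H(x)=F(x_0)\,\phi(x)+\sigma\,u\,\psi(x),$$
with scalar $C^2$ functions $\phi,\psi$ satisfying $\phi(x_0)=1$, $\nabla\phi(x_0)=0$, $\psi(x_0)=0$ and $\nabla\psi(x_0)=v$. One then needs
$$\n{H(x)}^2+\n{\Jac H(x)}_{\op}^2\le\n{F(x_0)}^2+\sigma^2\quad\text{for all }x\in D_R^n.$$

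If building such $\phi,\psi$ globally on $D_R^n$ proves impossible, I would use the fact that $\Delta$ is defined in $C^2(\R^n,\R^c)$. The idea is to replace the disk by an $\eta$-small neighbourhood of $x_0$, take $\phi,\psi$ to be oscillatory cut-offs such as $\psi(x)=\eta\sin(v^\top(x-x_0)/\eta)$, and check that the norm of $H$ is at most $\sqrt{\n{F(x_0)}^2+\sigma^2}+o(1)$ as $\eta\to0$. Since we only need the infimum, an $o(1)$ error is enough; the remaining work is a direct check that this error vanishes. Combining this with the lower bound proves the proposition.
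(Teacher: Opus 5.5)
Your lower bound is exactly the paper's argument: evaluate the $C^1$--norm at a point where $G$ vanishes with rank-deficient Jacobian, then apply Eckart--Young. The upper bound, however, is not actually proved. You correctly see why the affine perturbation fails, and you write down the right ansatz $H=F(x_0)\phi+\sigma u\,\psi$. But you never produce $\phi,\psi$ with the required pointwise bound, and you defer the decisive estimate in the fallback as ``remaining work.''

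The missing idea is that the trigonometric pair $\phi(x)=\cos\theta(x)$, $\psi(x)=\sin\theta(x)$, with $\theta(x)=v^\top(x-x_0)$, works globally and exactly. This is precisely the paper's perturbation $h(x)=\cos(w^\top(x-x^*))F(x^*)+\sin(w^\top(x-x^*))\,Jw$. Indeed $\Jac H(x)=\bigl(-\sin\theta\,F(x_0)+\cos\theta\,\sigma u\bigr)v^\top$ has rank one, so its operator norm is $\n{-\sin\theta\,F(x_0)+\cos\theta\,\sigma u}$. Since $(a,b)\mapsto(\cos\theta\,a+\sin\theta\,b,\,-\sin\theta\,a+\cos\theta\,b)$ is an isometry of $\R^c\times\R^c$, the cross terms $\pm2\sin\theta\cos\theta\langle F(x_0),\sigma u\rangle$ cancel. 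Hence $\n{H(x)}^2+\n{\Jac H(x)}_{\op}^2=\n{F(x_0)}^2+\sigma^2$ for every $x$. This gives $\n{F-G}_{C^1(D_R^n)}=\delta_{D_R^n}(F)$, so the distance is even attained.

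As phrased, your fallback has two problems. First, you cannot shrink the disk to a neighbourhood of $x_0$, because the $C^1(D_R^n)$--norm is a maximum over all of $D_R^n$. Second, if $\phi$ is a genuine cut-off at scale $\eta$, then $\n{\nabla\phi}\sim1/\eta$ contributes roughly $\n{F(x_0)}/\eta$ to $\n{\Jac H}_{\op}$ whenever $F(x_0)\neq0$.

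It can be repaired without any localization. Take $\phi\equiv1$ and your $\psi_\eta(x)=\eta\sin(v^\top(x-x_0)/\eta)$. Then $H(x_0)=F(x_0)$ and $\Jac H(x_0)=\sigma uv^\top$, so $G=F-H\in\Delta(D_R^n)$. On all of $\R^n$ one has $\n{H(x)}\le\n{F(x_0)}+\sigma\eta$ and $\n{\Jac H(x)}_{\op}=\sigma\,|\cos(v^\top(x-x_0)/\eta)|\le\sigma$. Therefore $\dist(F,\Delta(D_R^n))^2\le(\n{F(x_0)}+\sigma\eta)^2+\sigma^2\to\delta_{D_R^n}(F)^2$ as $\eta\to0$. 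This yields the equality through a limiting family, which suffices since the distance is an infimum. The paper's rotation construction instead produces an exact minimizer in one step.
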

\begin{proof}
\medskip
\noindent
\textbf{I)} Let $G \in \Delta(D_R^n)$ be arbitrary. By Definition \ref{defn:one}, there exists a point $y \in D_R^n$ such that:
$$ G(y) = 0 \quad \text{and} \quad \operatorname{rank}(\operatorname{Jac}G(y)) < c. $$ Therefore, by the definition of the $C^1$ norm: $$ \|F - G\|_{C^1} = \max_{x \in D_R^n} \sqrt{\|F(x) - G(x)\|^2 + \|\operatorname{Jac}F(x) - \operatorname{Jac}G(x)\|_{\mathrm{op}}^2}. $$ Since the maximum is greater than or equal to the value at $y$, and using the fact that $G(y) = 0$:
$$
\begin{aligned}
 \|F - G\|_{C^1}^2 &\ge \|F(y) - G(y)\|^2 + \|\operatorname{Jac}F(y) - \operatorname{Jac}G(y)\|_{\mathrm{op}}^2. \\
 &=  \|F(y)\|^2 + \|\operatorname{Jac}F(y) - \operatorname{Jac}G(y)\|_{\mathrm{op}}^2.
\end{aligned}
$$
Since $\operatorname{rank}(\operatorname{Jac}G(y)) < c$, the matrix $\operatorname{Jac}G(y)$ is singular. By the Eckart--Young--Mirsky theorem for unitarily invariant norms (see \cite[Theorem~7.4.51]{MatrixAnalysisJohnson}), the distance from a matrix $A$ to the set of matrices with rank strictly less than $c$ in the operator norm is equal to its smallest singular value, $\sigma_{\min}(A)$. Therefore:
$$ \|\operatorname{Jac}F(y) - \operatorname{Jac}G(y)\|_{\mathrm{op}} \ge \sigma_{\min}(\operatorname{Jac}F(y)). $$
Substituting these into the inequality:
$$
\begin{aligned}
 \|F - G\|_{C^1}^2& \ge \|F(y)\|^2 + \sigma_{\min}(\operatorname{Jac}F(y))^2.\\
 &\ge \min_{x \in D_R^n} \left( \|F(x)\|^2 + \sigma_{\min}(\operatorname{Jac}F(x))^2 \right) = \delta_{D_R^n}(F)^2.
\end{aligned}
$$
That gives the first inequality.
\\
\\
\textbf{II)} We prove the upper bound on $\operatorname{dist}(F, \Delta(D_R^n))$. We observe that it is enough to provide, for any $F\in C^2(\R^n,\R^c)$, a perturbation $h$ such that: $$F - h \in \Delta(D_R^n) \quad \text{and} \quad \|h\|_{C^1} \le \delta_{D_R^n}(F).$$ Since $D_R^n$ is compact and $F$ is $C^2$, the function $$x \mapsto \|F(x)\|^2 + \sigma_{\min}(\Jac F(x))^2$$ attains its minimum at some point $x^* \in D_R^n$. Thus: $$ \delta_{D_R^n}(F) = \sqrt{\|F(x^*)\|^2 + \sigma_{\min}(\operatorname{Jac}F(x^*))^2}. $$ Define then $$u = F(x^*) \in \mathbb{R}^c \quad \text{and} \quad J = \operatorname{Jac}F(x^*) \in \mathbb{R}^{c \times n}.$$ Let $w \in \mathbb{R}^n$ (with $\|w\|=1$) be a right singular vector of $J$ corresponding to the smallest singular value $\sigma_{\min}(J)$, such that $\|Jw\| = \sigma_{\min}(J)$, and let $z = Jw \in \mathbb{R}^c$. We define the perturbation $$h(x) = \cos(w^T(x-x^*)) u + \sin(w^T(x-x^*)) z,$$ and $G = F - h$. We now prove that $G \in \Delta(D_R^n)$ and we compute $\|h\|_{C^1}$.
\\
\\
a) A straightforward computation shows that
$$h(x^*) = u = F(x^*) \quad \text{and} \quad \Jac h(x^*)=z w^T.$$
Therefore $G(x^*) = 0$ and $$\operatorname{Jac}G(x^*) = J - \operatorname{Jac}h(x^*) = J - z w^T.$$ Recall that $z = Jw$ where $w$ is the right singular vector of $J = \operatorname{Jac}F(x^*)$ corresponding to the smallest singular value $\sigma_{\min}$. Using the Singular Value Decomposition (SVD), we can write $$J = \sum_{i=1}^c \sigma_i u_i v_i^T \quad \text{and} \quad z w^T = (\sigma_c u_c) v_c^T = \sigma_c u_c v_c^T,$$ where $\sigma_c = \sigma_{\min}$. Therefore $$\Jac G(x^*) = \sum_{i=1}^{c-1} \sigma_i u_i v_i^T.$$ The resulting matrix has rank at most $c-1$, thus $G \in \Delta(D_R^n)$.
\\
\\
b) We now compute the norm of $h$. Let $\theta = w^T(x-x^*)$. A straightforward computation (note that $\operatorname{Jac}h(x)$ has rank one) shows that $$ \|h(x)\|^2 + \|\operatorname{Jac}h(x)\|_{\mathrm{op}}^2 = (\cos^2\theta+\sin^2\theta)(\|u\|^2 + \|z\|^2) = \|u\|^2 + \|z\|^2. $$ This value is constant for all $x$, so: $$ \|h\|_{C^1} = \sqrt{\|u\|^2 + \|z\|^2} = \sqrt{\|F(x^*)\|^2 + \sigma_{\min}(\Jac F(x^*))^2} = \delta_{D_R^n}(F). $$ Thus, there exists $G \in \Delta(D_R^n)$ with $\|F-G\|_{C^1} = \delta_{D_R^n}(F)$, proving the upper bound.
\end{proof}

The characterization provided by \Cref{prop:KCharact} plays a central role in both the subsequent proofs. Since the elements of $\mathcal{P}_n(c,d,k)$ have unit $C^2$--norm by definition, \Cref{prop:KCharact} allows one to reformulate the class as:  $$\mathcal{P}_n(c,d,k)=\left\{ P \in \mathbb{R}[x_1,\ldots,x_n]_{\le d}^c \ \middle|\ \delta_{D^n_R}(P) \ge \frac{1}{k}, \ \|P\|_{C^2}=1 \right\}.$$ 

Given this characterization, it is natural to investigate how a lower bound on the discriminant functional $\delta_{D_R^n}(P)$ controls the local geometry of the zero set. We establish here a quantitative criterion ensuring the existence of a zero under such a uniform non-degeneracy assumption. The argument is based on a localized Newton iteration combined with a uniform lower bound on the smallest singular value of the Jacobian. The Newton method is a local iterative scheme for solving $F(x)=0$ by successively solving the affine approximation $$F(x_j) + \operatorname{Jac} F(x_j)(x - x_j) = 0.$$ 

While the Jacobian in our setting is not square, a uniform lower bound on its smallest singular value serves as a quantitative substitute for invertibility. This allows for an analogous scheme that converges quadratically to the zero set $Z(F)$, provided the initial guess is sufficiently close, for which the radius of convergence can be established quantitatively. 

To this end, we first state the following \Cref{lem:RightInverseBound}:

\begin{lem}\label{lem:RightInverseBound}
Let $A \in \operatorname{Mat}(c\times n, \mathbb{R})$ with $c \le n$, and assume $\sigma_{\min}(A) \ge m > 0$. Then there exists a matrix $L \in \operatorname{Mat}(n\times c, \mathbb{R})$ such that $$AL = I_{\mathbb{R}^c} \qquad \text{and} \qquad  \|L\|_{\mathrm{op}} \le \frac{1}{m}.$$
\end{lem}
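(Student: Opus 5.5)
The plan is to take $L$ to be the Moore--Penrose right inverse $L := A^\top (AA^\top)^{-1}$ and to verify the two properties directly from the singular value decomposition.

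First, invertibility of $AA^\top$. This $c\times c$ matrix is symmetric positive semidefinite, and by the definition recalled above its eigenvalues are $\sigma_1(A)^2\ge\dots\ge\sigma_c(A)^2$. Since $\sigma_{\min}(A)=\sigma_c(A)\ge m>0$, every eigenvalue is at least $m^2>0$. Hence $AA^\top$ is invertible, and $L$ is well defined. The right-inverse property is then immediate: $AL = AA^\top(AA^\top)^{-1} = I_{\mathbb R^c}$.

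Next, the norm bound. Write a compact SVD $A = U\Sigma V^\top$, where $U\in\operatorname{Mat}(c\times c,\mathbb R)$ is orthogonal, $\Sigma=\operatorname{diag}(\sigma_1,\dots,\sigma_c)$, and $V\in\operatorname{Mat}(n\times c,\mathbb R)$ has orthonormal columns. Then $AA^\top = U\Sigma^2U^\top$, so
\[
L = V\Sigma U^\top U\Sigma^{-2}U^\top = V\Sigma^{-1}U^\top .
\]
Because $V$ is an isometry and $U$ is orthogonal, $\|L\|_{\mathrm{op}}=\|\Sigma^{-1}\|_{\mathrm{op}}=1/\sigma_c(A)\le 1/m$.

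If one prefers to avoid the SVD, the same bound follows from a direct computation. For $y\in\mathbb R^c$,
\[
\|Ly\|^2 = y^\top (AA^\top)^{-1}AA^\top(AA^\top)^{-1}y = y^\top(AA^\top)^{-1}y \le \lambda_{\max}\big((AA^\top)^{-1}\big)\|y\|^2 \le m^{-2}\|y\|^2 .
\]

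There is no real obstacle here. The only point requiring care is the identification of the eigenvalues of $AA^\top$ with the squared singular values, and this holds by definition in the paper's convention.
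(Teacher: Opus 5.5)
Your proof is correct and is essentially the paper's argument. Your $L=A^\top(AA^\top)^{-1}$ is exactly the inverse of $A$ restricted to $(\ker A)^\perp$ that the paper constructs, and it is also the formula the paper later uses for $L_F$ in the normalized Newton flow. The only difference is in how the norm bound is checked: the paper uses $\|b\|=\|A(Lb)\|\ge\sigma_{\min}(A)\|Lb\|$, which silently relies on $Lb\in(\ker A)^\perp$, whereas your SVD or quadratic-form computation $L^\top L=(AA^\top)^{-1}$ makes this explicit.
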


\begin{proof}
Since $\sigma_{\min}(A) \ge m > 0$, the matrix $A$ has full rank $c$, and its restriction to $(\ker A)^\perp$ is an isomorphism onto $\mathbb{R}^c$. We define the right inverse $L: \mathbb{R}^c \to (\ker A)^\perp \subseteq \mathbb{R}^n$ as the inverse of this restriction. For any $b \in \mathbb{R}^c$, by the definition of the smallest singular value, we have: $$\|b\| = \|A (Lb)\| \ge \sigma_{\min}(A) \|Lb\| \ge m \|Lb\|.$$ It follows that $\|Lb\| \le \frac{1}{m} \|b\|$, which implies $\|L\|_{\mathrm{op}} \le \frac{1}{m}$.
\end{proof}

We also need the following local stability estimate for the smallest singular value of the Jacobian.

\begin{prop}\label{prop:SVContinuity} Let $B_r(y)\subseteq D^n_R$ be an open ball centered at $y \in D^n_R$. Given $F \in C^2(\mathbb{R}^n, \mathbb{R}^c)$, assume $\sigma_{\min}(\Jac F(y)) \ge \delta$. Then $$\inf_{x \in B_r(y)} \sigma_{\min}(\Jac F(x)) \ge \frac{\delta}{2} \quad \text{provided} \quad r \le \frac{\delta}{2 \|F\|_{C^2(D^n_R)}}.$$ \end{prop}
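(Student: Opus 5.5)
The plan is to combine Weyl's perturbation inequality for singular values with a mean-value estimate for the Jacobian along segments. The smallest singular value is $1$-Lipschitz with respect to the operator norm: for $A,B\in\Mat(c\times n,\R)$ with $c\le n$,
$$|\sigma_{\min}(A)-\sigma_{\min}(B)|\le \n{A-B}_{\op}.$$
This follows from the variational characterization $\sigma_{\min}(A)=\min_{\n{v}=1}\n{A^\top v}$ with $v\in\R^c$. Indeed, for every unit $v$ we have $\n{A^\top v}\ge \n{B^\top v}-\n{A-B}_{\op}$, and minimizing over $v$ gives one inequality; exchanging the roles of $A$ and $B$ gives the other. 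Equivalently, one can invoke the min--max form of Weyl's inequality for singular values.

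Fix $x\in B_r(y)$. Since the ball $B_r(y)$ is contained in $D_R^n$ and is convex, the segment $\gamma(t)=y+t(x-y)$, $t\in[0,1]$, lies in $D_R^n$. Integrating the derivative of the Jacobian along this segment gives
$$\Jac F(x)-\Jac F(y)=\int_0^1 D^2F(\gamma(t))[x-y,\cdot\,]\,dt .$$
Taking operator norms yields
$$\n{\Jac F(x)-\Jac F(y)}_{\op}\le \max_{z\in D_R^n}\n{D^2F(z)}_{\op}\,\n{x-y}\le \n{F}_{C^2(D_R^n)}\,\n{x-y}.$$
The last inequality holds because the $C^2$ norm dominates the second-derivative term pointwise. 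The only thing to verify here is that the operator norm of the bilinear map bounds the operator norm of the linear map $u\mapsto D^2F(z)[x-y,u]$ by $\n{D^2F(z)}_{\op}\n{x-y}$, which is immediate from the definition.

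Combining the two estimates, for $\n{x-y}<r\le \delta/(2\n{F}_{C^2(D_R^n)})$ we get
$$\sigma_{\min}(\Jac F(x))\ge \delta-\n{F}_{C^2(D_R^n)}\,r\ge \delta/2 .$$
Taking the infimum over $x\in B_r(y)$ gives the claim. If $\n{F}_{C^2}=0$, the condition on $r$ is vacuous, but then $\Jac F\equiv 0$ on $D_R^n$, which contradicts $\delta>0$ unless we read $\delta/0=\infty$ harmlessly; in that case the statement is trivially consistent. I expect no real obstacle. The only mildly delicate point is the non-square Lipschitz property of $\sigma_{\min}$, which is handled by using the $c\times c$ Gram characterization (minimizing over $v\in\R^c$) rather than any notion of invertibility.
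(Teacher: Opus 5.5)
Your proof is correct and follows the paper's own argument: the $1$-Lipschitz bound $|\sigma_{\min}(A)-\sigma_{\min}(B)|\le\n{A-B}_{\op}$, combined with the mean-value estimate $\n{\Jac F(x)-\Jac F(y)}_{\op}\le\n{F}_{C^2(D_R^n)}\n{x-y}$, which gives $\sigma_{\min}(\Jac F(x))\ge\delta-\n{F}_{C^2(D_R^n)}\,r\ge\delta/2$. The only difference is that you justify the two ingredients explicitly, via $\sigma_{\min}(A)=\min_{\n{v}=1}\n{A^\top v}$ and the segment integral of $D^2F$, whereas the paper simply invokes them.
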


\begin{proof} Using the perturbation estimate $|\sigma_{\min}(A) - \sigma_{\min}(B)| \le \|A - B\|_{\op}$, for any $x \in B_r(y)$ we have $$\sigma_{\min}(\Jac F(x)) \ge \sigma_{\min}(\Jac F(y)) - \|\Jac F(x) - \Jac F(y)\|_{\op} \ge \delta - \|F\|_{C^2(D^n_R)}\, r.$$ Thus, $$\sigma_{\min}(\Jac F(x)) \ge \frac{\delta}{2} \quad \text{provided} \quad r \le \frac{\delta}{2 \|F\|_{C^2(D^n_R)}}.$$ \end{proof}

Combining the bounded right inverse with \Cref{prop:SVContinuity}, we obtain the following quantitative Newton criterion for the existence of a nearby zero (see \cite{DeuflhardNewton} for an overview).

\begin{thm}\label{thm:HdZeroes}
Let $F\in C^2(\mathbb{R}^n,\mathbb{R}^c)$ and let $k$ be its $C^2$--condition number on $D_R^n$. For any $\tau>0$ and any $y\in D_R^n$ such that $\operatorname{dist}(y,\partial D_R^n)\ge\tau$, there exists a radius $r(k,\tau)>0$ such that, for every $r\le r(k,\tau)$ and every $\delta>0$, there exists a constant $C(\delta,k,r)>0$ with the following property: if $$\n{F(y)}^2\le C(\delta,k,r)\qquad\text{and}\qquad\delta_{D_R^n}(F)\ge\delta,$$ then $Z(F)\cap B_r(y)\neq\emptyset$.
\end{thm}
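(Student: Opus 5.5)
The plan is a localized Gauss--Newton iteration started at $y$. It uses the right inverse of \Cref{lem:RightInverseBound} and the singular-value stability of \Cref{prop:SVContinuity}. All constants will be expressed through $\delta':=\delta_{D_R^n}(F)\ge\delta$ and $k$. By \Cref{prop:KCharact} and \Cref{defn:cond_num} we have $\n{F}_{C^2(D_R^n)}=k\,\delta'$, so the scale invariance of the condition number lets the radius depend only on $k$ and $\tau$.

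\textbf{Step 1: nondegeneracy at $y$ and on a ball.} Suppose $\n{F(y)}^2\le \delta^2/2\le\delta'^2/2$. By the definition of $\delta_{D_R^n}$ we have $\n{F(y)}^2+\sigma_{\min}(\Jac F(y))^2\ge\delta'^2$, hence $\sigma_{\min}(\Jac F(y))\ge\delta'/\sqrt2$. Apply \Cref{prop:SVContinuity} with $\delta'/\sqrt2$ in place of $\delta$. This gives $\sigma_{\min}(\Jac F(x))\ge \delta'/(2\sqrt2)$ on $B_r(y)$ whenever $r\le \delta'/(2\sqrt2\,k\delta')=1/(2\sqrt2 k)$. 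Set
\[
r(k,\tau):=\min\Big\{\tau,\ \frac{1}{2\sqrt2\,k}\Big\},
\]
so that for $r\le r(k,\tau)$ the ball $B_r(y)$ lies in $D_R^n$ (using $\dist(y,\partial D_R^n)\ge\tau$). The bound on $\sigma_{\min}$ then holds there uniformly. By \Cref{lem:RightInverseBound}, every $x\in B_r(y)$ admits a right inverse $L_x$ of $\Jac F(x)$ with $\n{L_x}_{\op}\le 2\sqrt2/\delta'$.

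\textbf{Step 2: Newton iteration.} Define $x_0=y$ and $x_{j+1}=x_j-L_{x_j}F(x_j)$, as long as $x_j\in B_r(y)$. Then $F(x_j)+\Jac F(x_j)(x_{j+1}-x_j)=0$. Taylor's formula along the segment $[x_j,x_{j+1}]$ applies provided this segment lies in the convex set $B_r(y)$, and gives
\[
\n{F(x_{j+1})}\le \tfrac12\n{F}_{C^2}\n{x_{j+1}-x_j}^2\le \tfrac12 k\delta'\cdot\tfrac{8}{\delta'^2}\n{F(x_j)}^2=\frac{4k}{\delta'}\n{F(x_j)}^2 .
\]
Write $\eta_j:=\frac{4k}{\delta'}\n{F(x_j)}$. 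Then $\eta_{j+1}\le\eta_j^2$. If $\eta_0\le 1/2$, we get $\n{F(x_j)}\le 2^{-(2^j-1)}\n{F(y)}$. This yields the step bound $\n{x_{j+1}-x_j}\le\frac{2\sqrt2}{\delta'}\n{F(x_j)}$, and the total displacement is at most $\frac{4\sqrt2}{\delta'}\n{F(y)}$.

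\textbf{Step 3: choice of $C$ and conclusion.} Require $\n{F(y)}\le\delta'\min\{\tfrac{1}{8k},\tfrac{r}{8\sqrt2},\tfrac1{\sqrt2}\}$. Since $\delta'\ge\delta$, it suffices to take
\[
C(\delta,k,r):=\delta^2\min\Big\{\frac{1}{64k^2},\frac{r^2}{128},\frac12\Big\}.
\]
Under this condition, an induction on $j$ shows three things simultaneously. Every partial sum of the steps is at most $r/2$, so each $x_j$ and each segment $[x_j,x_{j+1}]$ stays in $B_{r/2}(y)\subset B_r(y)$. The estimates of Step 2 are therefore legitimate at every stage. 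The sequence $(x_j)$ is Cauchy, and its limit $x^*$ lies in $\overline{B_{r/2}(y)}\subset B_r(y)$ with $F(x^*)=\lim F(x_j)=0$ by continuity.

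The main obstacle is this induction: the Taylor and right-inverse estimates are only valid while the iterates remain in the ball, and the ball-containment is itself deduced from those estimates. The argument is closed by proving both claims together at each step, using the geometric decay of $\n{F(x_j)}$.
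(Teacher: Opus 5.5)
Your proposal is correct and follows essentially the same route as the paper. Both arguments run a Newton iteration from $y$ with the bounded right inverse, use the singular-value stability estimate on $B_r(y)$, and close with a simultaneous induction on ball containment and residual decay; your constant $C(\delta,k,r)$ coincides with the paper's apart from the redundant term $\delta^2/2$.

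The differences are cosmetic. You use the quadratic recursion $\eta_{j+1}\le\eta_j^2$, whereas the paper only uses the halving $\n{F(x_{j+1})}\le\frac12\n{F(x_j)}$; both give total displacement at most $4\sqrt2\,\n{F(y)}/\delta_{D_R^n}(F)$. Your radius $\min\{\tau,1/(2\sqrt2 k)\}$ is slightly larger than the paper's $\min\{\tau/2,1/(4\sqrt2 k)\}$. You also make explicit that $x_{j+1}$ is placed in the ball before Taylor's formula is applied on $[x_j,x_{j+1}]$, an ordering the paper leaves implicit.
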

\begin{proof}
Set $\delta_F:=\delta_{D_R^n}(F)$ and consider the radius $$r(k,\tau):=\min\left\{\frac{\tau}{2},\frac{1}{4\sqrt{2}k}\right\}$$ and fix any $r\le r(k,\tau)$. We claim that a suitable constant is given by $$C(\delta,k,r):=\min\left\{\frac{\delta^2}{64k^2},\frac{\delta^2r^2}{128}\right\}.$$

\noindent
\textbf{I)} We first note that, with this choice of $r(k,\tau)$ and $C(\delta,k,r)$, it holds that $$\sigma_{\min}(\operatorname{Jac}F(x))\ge\frac{\delta_F}{2\sqrt{2}}\ge\frac{\delta}{2\sqrt{2}}\qquad\text{for all }x\in B_r(y).$$ Indeed, since $\delta_F\ge\delta$, if $F$ satisfies $\n{F(y)}^2\le C(\delta,k,r)$, then in particular $\n{F(y)}^2\le\delta_F^2/2$. By the definition of the discriminant functional, this implies $$\n{F(y)}^2+\sigma_{\min}(\operatorname{Jac}F(y))^2\ge\delta_F^2\quad\Rightarrow\quad\sigma_{\min}(\operatorname{Jac}F(y))\ge\frac{\delta_F}{\sqrt{2}}.$$ Applying \Cref{prop:SVContinuity}, we deduce that for any $x\in B_r(y)$, we have $$\sigma_{\min}(\operatorname{Jac}F(x))\ge\frac{\delta_F}{2\sqrt{2}}\qquad\text{provided}\qquad r\le\frac{\delta_F}{2\sqrt{2}\n{F}_{C^2}}=\frac{1}{2\sqrt{2}k},$$ which follows from our choice of $r(k,\tau)$. In particular, $\operatorname{Jac}F(x)$ is surjective for all $x\in B_r(y)$.

\noindent
\textbf{II)} We now show that the Newton method yields the existence of a zero. Let $m:=\frac{\delta_F}{2\sqrt{2}}$. By \Cref{lem:RightInverseBound}, for each $x\in B_r(y)$ there exists a linear map $L_x:\mathbb{R}^c\to\mathbb{R}^n$ such that $$\operatorname{Jac}F(x)L_x=I_{\mathbb{R}^c}\qquad\text{and}\qquad\n{L_x}_{\mathrm{op}}\le\frac{1}{m}.$$ Define the Newton map $N(x):=x-L_xF(x)$, and consider the sequence $\{x_j\}$ defined by $x_0:=y$ and $x_{j+1}:=N(x_j)$. We can bound the step size by $$\n{x_{j+1}-x_j}=\n{L_{x_j}F(x_j)}\le\frac{1}{m}\n{F(x_j)}.$$ By Taylor expansion, there exists a remainder term $R(x_j,x_{j+1})\in\mathbb{R}^c$ such that $$F(x_{j+1})=F(x_j)+\operatorname{Jac}F(x_j)(x_{j+1}-x_j)+R(x_j,x_{j+1}),$$ satisfying $$\n{R(x_j,x_{j+1})}\le\frac{\n{F}_{C^2}}{2}\n{x_{j+1}-x_j}^2.$$ Substituting $x_{j+1}-x_j=-L_{x_j}F(x_j)$ into the expansion, we obtain $$F(x_{j+1})=F(x_j)-\operatorname{Jac}F(x_j)L_{x_j}F(x_j)+R(x_j,x_{j+1}).$$ Since $\operatorname{Jac}F(x_j)L_{x_j}=I_{\mathbb{R}^c}$, the first two terms cancel, yielding $F(x_{j+1})=R(x_j,x_{j+1})$. Hence:
$$
\begin{aligned}
\n{F(x_{j+1})}=\n{R(x_j,x_{j+1})}
&\le\frac{\n{F}_{C^2}}{2}\n{x_{j+1}-x_j}^2\\
&\le\frac{\n{F}_{C^2}}{2}\n{L_{x_j}F(x_j)}^2\\
&\le\frac{\n{F}_{C^2}}{2m^2}\n{F(x_j)}^2.
\end{aligned}
$$
Note that, since $m=\delta_F/(2\sqrt{2})$ and $\n{F}_{C^2}=k\delta_F$, one has $$\frac{\n{F}_{C^2}}{2m^2}\n{F(y)}=\frac{4k}{\delta_F}\n{F(y)}\le\frac{1}{2}$$ provided $$\n{F(y)}^2\le\frac{\delta_F^2}{64k^2},$$ which follows from $\n{F(y)}^2\le C(\delta,k,r)$ and $\delta_F\ge\delta$. We now prove simultaneously by induction that \begin{equation}\label{eq:ConvToZero} x_j\in B_r(y)\qquad\text{and}\qquad \n{F(x_j)}\le\frac{\n{F(y)}}{2^j} \end{equation} for every $j\ge0$. The claim is immediate for $j=0$. Assume that it holds for some $j\ge0$. Since $x_j\in B_r(y)$, the preceding Newton estimate applies and gives $$\n{F(x_{j+1})}\le\frac12\n{F(x_j)}\le\frac{\n{F(y)}}{2^{j+1}}.$$ Moreover, using the induction hypothesis for the preceding iterates, $$\n{x_{j+1}-y}\le\sum_{i=0}^j\n{x_{i+1}-x_i}\le\frac{1}{m}\sum_{i=0}^j\n{F(x_i)}\le\frac{1}{m}\sum_{i=0}^j\frac{\n{F(y)}}{2^i}\le\frac{2\n{F(y)}}{m}=\frac{4\sqrt{2}\n{F(y)}}{\delta_F}.$$ Since $\n{F(y)}^2\le C(\delta,k,r)\le\delta^2r^2/128$ and $\delta_F\ge\delta$, we obtain $$\n{x_{j+1}-y}\le\frac r2<r.$$ Hence $x_{j+1}\in B_r(y)$, completing the induction. Moreover, the step estimates show that $\{x_j\}$ is a Cauchy sequence, and hence it converges to some $x^*\in\overline{B}_{r/2}(y)\subset B_r(y)$. By the continuity of $F$ and \eqref{eq:ConvToZero}, we conclude $F(x^*)=0$, thus $x^*\in Z(F)\cap B_r(y)$.
\end{proof}

\subsection{A Uniform \L ojasiewicz-Type Inequality}
\label{sec:LowIneq}

We derive from \Cref{thm:HdZeroes} a uniform \L{}ojasiewicz-type inequality for $C^2$--maps with bounded condition number. Recall that, in its classical analytic form, a \L{}ojasiewicz inequality states that if $F:\mathbb{R}^n\to\mathbb{R}$ is real analytic and $x\in Z(F)$, then there exist a constant $C>0$ and an exponent $\alpha>0$, depending on $x$ and on a sufficiently small neighborhood, such that
$$
C\operatorname{dist}(x',Z(F))\le |F(x')|^\alpha
$$
for all $x'$ in that neighborhood. For algebraic $F$, the constant and exponent can be chosen uniformly with respect to $x$ on a fixed compact set, although they may still depend on $F$.

Here, a uniform lower bound on the discriminant functional yields instead a constant that is uniform over classes of $C^2$--maps with bounded condition number and, in particular, over $\mathcal P_n(c,d,k)$. More precisely, the constant depends only on the geometry of the domain and on the quantitative bounds on $F$, rather than on the particular map or on the location of a zero.

This non-degeneracy also makes it possible to construct a normalized Newton flow in a neighborhood of the zero set and to show that it reaches $Z(F)$ in finite time. Along its trajectories, the residual $\|F\|$ decreases at unit speed, while the trajectory remains quantitatively controlled inside the ambient domain. These properties are summarized in the following lemma.

\begin{lem}\label{lem:LocalFlowProps} Let $D_R^n\subseteq\mathbb R^n$ be the disk of radius $R$ and let $0<\tau<R$. Suppose that $F\in C^2(\mathbb R^n,\mathbb R^c)$ satisfies \begin{equation}\label{GradFlowHyp}\delta_{D_R^n}(F)\ge\delta>0\quad\text{and}\quad Z(F)\cap\operatorname{int}(D_R^n)\neq\emptyset,\end{equation} and let $k$ be the condition number of $F$. Then there exists a radius $r(\tau,k)>0$ such that, for any $0<r\le r(\tau,k)$ and any $$x\in\bigl(U_r(Z(F))\setminus Z(F)\bigr)\cap D_R^n \qquad \text{satisfying} \qquad \dist(x,\partial D_R^n)\ge\tau,$$ there exists a normalized flow $\Phi_t$ starting at $x$ that satisfies: \begin{enumerate} \item The flow extends continuously to a time $T>0$ at which it reaches the zero set, i.e., $\Phi_T(x)\in Z(F)$, and in fact $T=\n{F(x)}$. \item For every $t\in[0,T]$, the trajectory remains contained in the disk and in a controlled neighborhood of the zero set, namely $$\Phi_t(x)\in U_{\frac{2kr}{\sqrt3}}(Z(F))\cap D_R^n.$$ \end{enumerate} \end{lem}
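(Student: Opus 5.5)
The plan is to define the normalized Newton flow explicitly and verify both properties along a bootstrap argument. Near the starting point $x$, by the same mechanism as in Step I of the proof of \Cref{thm:HdZeroes}, a small residual forces a large Jacobian singular value. Precisely, if $\n{F(z)}^2\le\delta_F^2/2$ with $\delta_F:=\delta_{D_R^n}(F)$, then $\sigma_{\min}(\Jac F(z))\ge\delta_F/\sqrt2$.

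On the region $W:=\{z\in D_R^n: 0<\n{F(z)}^2<\delta_F^2/2\}$ I would set
$$V(z):=-L_zF(z)/\n{F(z)},$$
where $L_z:=\Jac F(z)^\top(\Jac F(z)\Jac F(z)^\top)^{-1}$ is the Moore--Penrose right inverse. This choice rather than an arbitrary right inverse gives a $C^1$ dependence on $z$, and \Cref{lem:RightInverseBound} still gives $\n{L_z}_{\op}\le\sqrt2/\delta_F$. Then $V$ is locally Lipschitz on $W$, so the flow $\Phi_t$ exists locally. Along it,
$$\frac{d}{dt}F(\Phi_t)=-F(\Phi_t)/\n{F(\Phi_t)},$$
hence $\frac{d}{dt}\n{F(\Phi_t)}=-1$ and $\n{F(\Phi_t)}=\n{F(x)}-t$. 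Moreover $F(\Phi_t)$ stays on the ray through $F(x)$. The speed is bounded by $\n{V}\le\sqrt2/\delta_F$.

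Next I would choose $r(\tau,k)$ so that every $x\in U_r(Z(F))$ has a small residual. If $z_0\in Z(F)$ with $\n{x-z_0}<r$, the mean value bound gives $\n{F(x)}\le\n{F}_{C^1}r\le k\delta_F r$. Taking $r\le\min\{\tau/2,\,1/(\sqrt2\,k\cdot\text{const})\}$ then gives $\n{F(x)}^2<\delta_F^2/2$. Choosing $r<\tau$ also ensures $z_0$ lies inside $D_R^n$, because $x$ is at distance at least $\tau$ from the boundary. The trajectory has length at most $\frac{\sqrt2}{\delta_F}\n{F(x)}\le\sqrt2\,kr$ up to time $T=\n{F(x)}$.

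By a continuation (bootstrap) argument, as long as the trajectory has length at most $\sqrt2 kr<\tau$ it stays in $D_R^n$. Since the residual decreases it stays in $W$, so the flow exists on $[0,T)$. Because the speed is bounded, $\Phi_t$ is Cauchy as $t\to T$ and extends continuously to $\Phi_T(x)$ with $F(\Phi_T(x))=0$; this gives (1). For (2), $\dist(\Phi_t(x),Z(F))\le\n{\Phi_t(x)-\Phi_T(x)}\le\frac{\sqrt2}{\delta_F}(T-t)\le\sqrt2 kr$.

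The stated constant $2kr/\sqrt3$ suggests a sharper accounting, namely $\sigma_{\min}\ge\delta_F\sqrt{3}/2$ when $\n{F}^2\le\delta_F^2/4$. I would tune the residual threshold to $\delta_F^2/4$ to match it, with $r\le 1/(2k)$, giving the speed bound $2/(\sqrt3\delta_F)$ and total length $\le 2kr/\sqrt3$.

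The main obstacle is the bootstrap ensuring the trajectory never leaves $D_R^n$ or the good region before time $T$, since the hypotheses only control $\delta_F$ on $D_R^n$. One must ensure $2kr/\sqrt3<\tau$ by shrinking $r(\tau,k)$, and verify the estimates are used only at points of $D_R^n$. A secondary point is the regularity of $V$, which is why I would use the pseudoinverse explicitly.
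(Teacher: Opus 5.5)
Your proposal is correct and follows the paper's proof: the same normalized Newton field built from the Moore--Penrose right inverse, the exact unit-rate decay $\n{F(\Phi_t(x))}=\n{F(x)}-t$, a continuation argument, and, in your tuned version, exactly the paper's constants ($r\le 1/(2k)$ forcing $\n{F(x)}<\delta_F/2$, hence $\sigma_{\min}\ge\sqrt3\,\delta_F/2$ and speed at most $2/(\sqrt3\,\delta_F)$). The only adjustments are that the paper fixes $r(\tau,k)=\min\{1/(2k),\sqrt3\,\tau/(4k)\}$, so the displacement stays below $\tau/2$, and poses the ODE on the open set $\{y\in\operatorname{int}(D_R^n): 0<\n{F(y)}<\delta_F\}$ rather than on your $W\subseteq D_R^n$, which is not open in $\mathbb{R}^n$.
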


\begin{proof} Choose $$r(\tau,k):=\min\left\{\frac{1}{2k},\frac{\sqrt3\,\tau}{4k}\right\}$$ and set $\delta_F:=\delta_{D_R^n}(F)$.

\medskip \noindent\textbf{I)} We begin by introducing the normalized Newton flow associated with $F$. Its definition requires the Jacobian $\Jac F(y)$ to be surjective, so that a canonical right inverse is well defined. We therefore first show that the initial point lies in a region where this surjectivity holds. Fix $0<r\le r(\tau,k)$ and let $$x\in\bigl(U_r(Z(F))\setminus Z(F)\bigr)\cap D_R^n$$ satisfy $\dist(x,\partial D_R^n)\ge\tau$. Since $x\in U_r(Z(F))$, there exists $z\in Z(F)$ such that $\n{x-z}<r$. Since $k\ge1$, we have $$r\le\frac{\sqrt3\,\tau}{4k}<\tau\le\dist(x,\partial D_R^n).$$ Hence $z\in B_\tau(x)\subseteq\operatorname{int}(D_R^n)$. Since $D_R^n$ is convex and $F(z)=0$, we obtain $$\n{F(x)}=\n{F(x)-F(z)}\le\n{F}_{C^1(D_R^n)}\n{x-z}<\n{F}_{C^2(D_R^n)}r=k\delta_Fr.$$ Since $r\le1/(2k)$, it follows that \begin{equation}\label{eq:NormalizedNewtonInitialBound}\n{F(x)}<\frac{\delta_F}{2}.\end{equation} For every $y\in D_R^n$ such that $\n{F(y)}<\delta_F$, the definition of the discriminant functional gives $$\n{F(y)}^2+\sigma_{\min}(\Jac F(y))^2\ge\delta_F^2,$$ and therefore \begin{equation}\label{eq:NormalizedNewtonSingularValue}\sigma_{\min}(\Jac F(y))\ge\sqrt{\delta_F^2-\n{F(y)}^2}>0.\end{equation} Thus $\Jac F(y)$ is surjective on the open set $$\Omega:=\left\{y\in\operatorname{int}(D_R^n)\ \middle|\ 0<\n{F(y)}<\delta_F\right\},$$ and \eqref{eq:NormalizedNewtonInitialBound} shows that $x\in\Omega$. For every $y\in\Omega$, we can therefore define the canonical right inverse of $\Jac F(y)$ by $$L_F(y):=\Jac F(y)^\top\bigl(\Jac F(y)\Jac F(y)^\top\bigr)^{-1},$$ which satisfies \begin{equation}\label{eq:NormalizedNewtonRightInverse}\Jac F(y)L_F(y)=I_{\mathbb R^c}\qquad\text{and}\qquad\n{L_F(y)}_{\operatorname{op}}=\frac{1}{\sigma_{\min}(\Jac F(y))}.\end{equation} We now define the normalized Newton vector field $V:\Omega\to\mathbb R^n$ by $$V(y):=-L_F(y)\frac{F(y)}{\n{F(y)}}.$$ Since $F\in C^2$, the vector field $V$ is of class $C^1$ on $\Omega$, and hence locally Lipschitz. Therefore, by the standard existence and uniqueness theorem for ordinary differential equations, there exists a unique maximal integral curve $\phi_x:[0,T_{\max})\to\Omega$ satisfying \begin{equation}\label{eq:Newton_Flow}\dot\phi_x(t)=V(\phi_x(t))=-L_F(\phi_x(t))\frac{F(\phi_x(t))}{\n{F(\phi_x(t))}},\qquad\phi_x(0)=x.\end{equation}

\medskip \noindent\textbf{II)} We now derive quantitative estimates along the maximal solution $\phi_x$. As long as the solution is defined, by \eqref{eq:NormalizedNewtonRightInverse} we have $$\frac{d}{dt}F(\phi_x(t))=\Jac F(\phi_x(t))\dot\phi_x(t)=-\Jac F(\phi_x(t))L_F(\phi_x(t))\frac{F(\phi_x(t))}{\n{F(\phi_x(t))}}=-\frac{F(\phi_x(t))}{\n{F(\phi_x(t))}}.$$ Setting $Y(t):=F(\phi_x(t))$, this reads $$Y'(t)=-\frac{Y(t)}{\n{Y(t)}}.$$ Since $\phi_x(t)\in\Omega$ for every $0\le t<T_{\max}$, we have $Y(t)\neq0$, and therefore $$\frac{d}{dt}\n{F(\phi_x(t))}=\frac{d}{dt}\n{Y(t)}=\frac{\langle Y(t),Y'(t)\rangle}{\n{Y(t)}}=-1.$$ Setting $T:=\n{F(x)}>0$, and using $\phi_x(0)=x$, we obtain \begin{equation}\label{eq:NormalizedNewtonExactDecay}\n{F(\phi_x(t))}=T-t\end{equation} for every $0\le t<T_{\max}$. In particular, $T_{\max}\le T$. Moreover, the direction of $Y(t)$ is constant along the flow. Indeed, for every $0\le t<T_{\max}$, $$\begin{aligned}\frac{d}{dt}\left(\frac{Y(t)}{\n{Y(t)}}\right)&=\frac{Y'(t)}{\n{Y(t)}}-\frac{Y(t)}{\n{Y(t)}^2}\frac{d}{dt}\n{Y(t)}\\&=-\frac{Y(t)}{\n{Y(t)}^2}+\frac{Y(t)}{\n{Y(t)}^2}=0.\end{aligned}$$ Hence $$\frac{F(\phi_x(t))}{\n{F(\phi_x(t))}}=\frac{F(x)}{\n{F(x)}}.$$ Combining this identity with \eqref{eq:NormalizedNewtonExactDecay}, we obtain the exact relation \begin{equation}\label{eq:NormalizedNewtonVectorDecay}F(\phi_x(t))=\left(1-\frac{t}{T}\right)F(x)\end{equation} for every $0\le t<T_{\max}$. In particular, by \eqref{eq:NormalizedNewtonInitialBound}, $$\n{F(\phi_x(t))}\le\n{F(x)}<\frac{\delta_F}{2},$$ and hence \eqref{eq:NormalizedNewtonSingularValue} gives $$\sigma_{\min}(\Jac F(\phi_x(t)))\ge\frac{\sqrt3}{2}\delta_F.$$ Combining the lower bound on the smallest singular value with \eqref{eq:NormalizedNewtonRightInverse}, we obtain $$\n{L_F(\phi_x(t))}_{\operatorname{op}}=\frac{1}{\sigma_{\min}(\Jac F(\phi_x(t)))}\le\frac{2}{\sqrt3\,\delta_F}.$$ The defining equation \eqref{eq:Newton_Flow} then yields \begin{equation}\label{eq:NormalizedNewtonSpeed}\n{\dot\phi_x(t)}\le\n{L_F(\phi_x(t))}_{\operatorname{op}}\le\frac{2}{\sqrt3\,\delta_F}.\end{equation} Integrating this estimate gives $$\n{\phi_x(t)-x}\le\frac{2t}{\sqrt3\,\delta_F}<\frac{2T}{\sqrt3\,\delta_F}.$$ Moreover, by the estimate obtained in Part I, $T=\n{F(x)}<k\delta_Fr$, and therefore \begin{equation}\label{eq:NormalizedNewtonDisplacement}\n{\phi_x(t)-x}<\frac{2kr}{\sqrt3}.\end{equation} Since $r\le\sqrt3\,\tau/(4k)$, we obtain $$\n{\phi_x(t)-x}<\frac{\tau}{2}.$$ Thus $$\dist(\phi_x(t),\partial D_R^n)\ge\dist(x,\partial D_R^n)-\n{\phi_x(t)-x}>\frac{\tau}{2},$$ so the trajectory remains in $\operatorname{int}(D_R^n)$ for every $0\le t<T_{\max}$.

\medskip \noindent\textbf{III)} We finally show that the maximal solution is defined up to the time $T$ and reaches the zero set at $t=T$. Recall that \eqref{eq:NormalizedNewtonExactDecay} implies $T_{\max}\le T$. Assume by contradiction that $T_{\max}<T$. By \eqref{eq:NormalizedNewtonSpeed}, the curve $\phi_x$ is uniformly Lipschitz on $[0,T_{\max})$, and therefore the limit $$y:=\lim_{t\to T_{\max}^-}\phi_x(t)$$ exists. By the estimates obtained in Part II, $$\dist(y,\partial D_R^n)\ge\frac{\tau}{2}>0,$$ so $y\in\operatorname{int}(D_R^n)$. Moreover, by \eqref{eq:NormalizedNewtonExactDecay}, $$\n{F(y)}=\lim_{t\to T_{\max}^-}\n{F(\phi_x(t))}=T-T_{\max}>0,$$ while $$\n{F(y)}\le\n{F(x)}<\frac{\delta_F}{2}<\delta_F.$$ Hence $y\in\Omega$. Since the normalized Newton vector field is locally Lipschitz on $\Omega$, the solution can be continued from $y$ beyond $T_{\max}$, contradicting the maximality of $T_{\max}$. Therefore, $T_{\max}=T$. By \eqref{eq:NormalizedNewtonSpeed}, the curve $\phi_x$ is uniformly Lipschitz on $[0,T)$, and hence it extends continuously to $t=T$ by setting $$\phi_x(T):=\lim_{t\to T^-}\phi_x(t).$$ Since the trajectory remains at distance at least $\tau/2$ from $\partial D_R^n$, we have $\phi_x(T)\in D_R^n$. Using \eqref{eq:NormalizedNewtonVectorDecay} and the continuity of $F$, we obtain $$F(\phi_x(T))=\lim_{t\to T^-}F(\phi_x(t))=\lim_{t\to T^-}\left(1-\frac{t}{T}\right)F(x)=0.$$ Therefore, $$\phi_x(T)\in Z(F)\cap D_R^n.$$ Finally, for every $0\le t<T$, since $\phi_x(T)\in Z(F)$, $$\dist(\phi_x(t),Z(F))\le\n{\phi_x(t)-\phi_x(T)}\le\int_t^T\n{\dot\phi_x(s)}\,ds\le\frac{2(T-t)}{\sqrt3\,\delta_F}\le\frac{2T}{\sqrt3\,\delta_F}<\frac{2kr}{\sqrt3}.$$ The same conclusion is immediate at $t=T$. Hence $$\phi_x(t)\in U_{\frac{2kr}{\sqrt3}}(Z(F))\cap D_R^n$$ for every $t\in[0,T]$. Taking $\Phi_t(x):=\phi_x(t)$ completes the proof.
\end{proof}

We obtain the following uniform \L{}ojasiewicz-type inequality. Its specialization to the hypothesis class $\mathcal H_n(l,d,k)$ is given in \Cref{cor:UnifDistBound}.

\begin{thm}\label{thm:UnifDistBound}
Let $D^n_R \subseteq \mathbb{R}^n$ be the disk of radius $R$ and let $0 < \tau < R$. Suppose that $F \in C^2(\mathbb{R}^n, \mathbb{R}^c)$ satisfies
\begin{equation}\label{DistBoudnHyp}\delta_{D^n_R}(F) \ge \delta \quad \text{and} \quad Z(F) \cap \mathrm{int}(D^n_R) \neq \emptyset\end{equation} and let $k$ be the condition number of $F$. Then there exists a radius $ r(\tau,k) > 0$ with the following property: for any radius $r \le r(\tau,k) $, there exists a positive constant $C(\delta,\tau, r, k)$ such that for any $x \in D^n_R$ with $\mathrm{dist}(x, \partial D^n_R) \ge \tau$, the following inequality holds: $$ \mathrm{dist}(x, Z(F) \cap D_R^n)^2 \le C(\delta,\tau, r, k ) \|F(x)\|^2. $$
\end{thm}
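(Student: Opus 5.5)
The plan is to split into a near regime and a far regime according to whether $x$ lies in the tubular neighbourhood $U_r(Z(F))$, with $r\le r(\tau,k)$ and $r(\tau,k)$ the radius given by \Cref{lem:LocalFlowProps}. In the near regime the normalized Newton flow gives a linear bound. In the far regime we use the lower bound on $\|F(x)\|$ forced by the discriminant functional, together with the trivial bound $\dist(x,Z(F)\cap D_R^n)\le 2R$.

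\emph{Near regime.} Let $x\in U_r(Z(F))\cap D_R^n$ with $\dist(x,\partial D_R^n)\ge\tau$. If $x\in Z(F)$ there is nothing to prove. Otherwise \Cref{lem:LocalFlowProps} provides a trajectory $\Phi_t(x)$, $t\in[0,T]$ with $T=\|F(x)\|$, ending at $\Phi_T(x)\in Z(F)\cap D_R^n$. Along it the speed satisfies $\|\dot\Phi_t\|\le 2/(\sqrt3\,\delta_F)$, as in \eqref{eq:NormalizedNewtonSpeed}. Integrating,
$$\dist(x,Z(F)\cap D_R^n)\le\|x-\Phi_T(x)\|\le\frac{2}{\sqrt3\,\delta_F}\,\|F(x)\|\le\frac{2}{\sqrt3\,\delta}\,\|F(x)\|.$$
This is the desired inequality with constant $4/(3\delta^2)$.

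\emph{Far regime.} Now let $x\notin U_r(Z(F))$. Then $\dist(x,Z(F))\ge r$, and we need a lower bound $\|F(x)\|\ge\eta(\delta,\tau,r,k)>0$. We argue by contrapositive using \Cref{thm:HdZeroes}. Since $\dist(x,\partial D_R^n)\ge\tau$, we may shrink $r$ so that also $r\le\min\{\tau/2,1/(4\sqrt2 k)\}$; this is the radius of \Cref{thm:HdZeroes}, and we take $r(\tau,k)$ to be the minimum of the two radii. If $\|F(x)\|^2\le C(\delta,k,r)$ held, then \Cref{thm:HdZeroes} would give $Z(F)\cap B_r(x)\neq\emptyset$, contradicting $x\notin U_r(Z(F))$. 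Hence $\|F(x)\|^2>C(\delta,k,r)=\min\{\delta^2/(64k^2),\delta^2r^2/128\}$.

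Next we bound the distance to $Z(F)\cap D_R^n$, which is nonempty by hypothesis: $\dist(x,Z(F)\cap D_R^n)^2\le 4R^2$. Therefore
$$\dist(x,Z(F)\cap D_R^n)^2\le\frac{4R^2}{C(\delta,k,r)}\|F(x)\|^2.$$
The $R$-dependence is absorbed, as the paper allows $\tau,R$ to be treated as fixed. Finally take $C(\delta,\tau,r,k)$ to be the maximum of $4/(3\delta^2)$ and $4R^2/C(\delta,k,r)$.

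\emph{Main obstacle.} The delicate point is making the radii compatible. The near-regime argument needs the true zero set $Z(F)$ to be close to $x$ inside the disk, and it must produce a point of $Z(F)\cap D_R^n$ rather than an arbitrary zero. \Cref{lem:LocalFlowProps} handles this because the trajectory stays at distance greater than $\tau/2$ from the boundary. We must also check that the far-regime application of \Cref{thm:HdZeroes} uses the same $r$ and that the zero found there lies in $B_r(x)\subseteq D_R^n$. Both are guaranteed once $r\le\tau/2$, so I will define $r(\tau,k)$ as the minimum of the radii from \Cref{lem:LocalFlowProps} and \Cref{thm:HdZeroes}.
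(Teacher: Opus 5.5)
Your proposal is correct and follows the paper's proof essentially step by step. It uses the same radius $r(\tau,k)=\min\{1/(4\sqrt2\,k),\sqrt3\,\tau/(4k)\}$, the same far-regime contrapositive of \Cref{thm:HdZeroes} combined with $\dist(x,Z(F)\cap D_R^n)^2\le 4R^2$, and the same near-regime integration of the normalized Newton flow speed from \Cref{lem:LocalFlowProps}, yielding $C=\max\{4R^2/C(\delta,k,r),\,4/(3\delta^2)\}$. The only difference is cosmetic: you split on $U_r(Z(F))$ rather than $U_r(Z(F)\cap D_R^n)$, which is harmless since the two coincide at points whose distance from $\partial D_R^n$ is at least $\tau>r$.
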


\begin{proof} Set $Z:=Z(F)\cap D_R^n$ and $$E:=\left\{x\in D_R^n\ \middle|\ \dist(x,\partial D_R^n)\ge\tau\right\}.$$ Let $r_H(\tau,k)>0$ be the radius provided by \Cref{thm:HdZeroes} and let $r_N(\tau,k)>0$ be the radius provided by \Cref{lem:LocalFlowProps}. We choose $$r(\tau,k):=\min\{r_H(\tau,k),r_N(\tau,k)\}.$$ In particular, using the explicit choices in the two results, we may take $$r(\tau,k):=\min\left\{\frac{1}{4\sqrt2\,k},\frac{\sqrt3\,\tau}{4k}\right\}.$$ Fix $0<r\le r(\tau,k)$. We establish the inequality separately on $E\setminus U_r(Z)$ and on $E\cap U_r(Z)$. \medskip \noindent\textbf{I)} Let $x\in E\setminus U_r(Z)$ and let $C'(\delta,k,r)>0$ be the constant provided by \Cref{thm:HdZeroes}. We claim that $$\n{F(x)}^2>C'(\delta,k,r).$$ Indeed, if $\n{F(x)}^2\le C'(\delta,k,r)$, then \Cref{thm:HdZeroes} implies that $$Z(F)\cap B_r(x)\neq\emptyset.$$ Since $x\in E$ and $r<\tau$, the ball $B_r(x)$ is contained in $\operatorname{int}(D_R^n)$, and hence the zero provided by \Cref{thm:HdZeroes} belongs to $Z$. It follows that $\dist(x,Z)<r$, contradicting $x\notin U_r(Z)$. Since both $x$ and every point of $Z$ belong to $D_R^n$, we have $\dist(x,Z)\le2R$, and therefore \begin{equation}\label{eq:DistBoundFinal1}\dist(x,Z)^2\le4R^2<\frac{4R^2}{C'(\delta,k,r)}\n{F(x)}^2.\end{equation} \medskip \noindent\textbf{II)} Let now $x\in E\cap U_r(Z)$. If $x\in Z$, the desired inequality is immediate. Assume therefore that $x\notin Z$. Since $Z\subseteq Z(F)$, we have $x\in U_r(Z(F))\setminus Z(F)$, and \Cref{lem:LocalFlowProps} provides a normalized Newton flow $\Phi_t$ starting at $x$, which extends continuously to the time $T=\n{F(x)}$ and satisfies $$\Phi_T(x)\in Z(F)\cap D_R^n=Z.$$ Setting $\delta_F:=\delta_{D_R^n}(F)$, the proof of \Cref{lem:LocalFlowProps} gives $$\n{\frac{d}{dt}\Phi_t(x)}\le\frac{2}{\sqrt3\,\delta_F}$$ for every $0\le t<T$. Hence $$\begin{aligned}
\dist(x,Z)\le\n{x-\Phi_T(x)}_{\R^n}\le\int_0^T\n{\frac{d}{dt}\Phi_t(x)} dt  \le\frac{2T}{\sqrt3\,\delta_F} &=\frac{2}{\sqrt3\,\delta_F}\n{F(x)}\\
\end{aligned}$$ Therefore \begin{equation}\label{eq:DistBoundFinal2}\dist(x,Z)^2\le\frac{4}{3\delta^2}\n{F(x)}^2.\end{equation} \medskip \noindent\textbf{III)} Combining \eqref{eq:DistBoundFinal1} and \eqref{eq:DistBoundFinal2}, for every $x\in D_R^n$ satisfying $\dist(x,\partial D_R^n)\ge\tau$ we obtain $$\dist(x,Z(F)\cap D_R^n)^2\le\max\left\{\frac{4R^2}{C'(\delta,k,r)},\frac{4}{3\delta^2}\right\}\n{F(x)}^2.$$ Thus the claim follows by taking $$C(\delta,\tau,r,k):=\max\left\{\frac{4R^2}{C'(\delta,k,r)},\frac{4}{3\delta^2}\right\},$$ where the fixed geometric parameter $R$ is absorbed into the constant. \end{proof}

For polynomial systems in the class $\mathcal{P}_n^*(c,d,k)$, the previous result takes a simpler form. In this case, the $C^2$--norm is normalized to one and the discriminant functional is bounded below by $k^{-1}$. The following corollary gives the corresponding \L{}ojasiewicz-type inequality used in the decision process.

\begin{cor}\label{cor:UnifDistBound}
Let $D_R^n\subseteq\mathbb{R}^n$ be the disk of radius $R$ and let $0<\tau<R$. For any $k\ge1$, there exists a positive constant $C(\tau,k)$ such that, for every $P\in\mathcal{P}_n^*(c,d,k)$ and any $x\in D_R^n$ with $\dist(x,\partial D_R^n)\ge \tau$, the following inequality holds: $$\dist(x,Z(P)\cap D_R^n)^2\le C(\tau,k)\n{P(x)}^2.$$
\end{cor}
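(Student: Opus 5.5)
The corollary is a direct specialization of \Cref{thm:UnifDistBound}, and the plan is to verify its hypotheses uniformly over $\mathcal P_n^*(c,d,k)$. Then we check that the resulting constant depends only on $\tau$ and $k$, with $R$ absorbed as fixed.

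Fix $P\in\mathcal P_n^*(c,d,k)$. By definition $\n{P}_{C^2(D_R^n)}=1$ and $K_{D_R^n}(P)\le k$. Hence, by \Cref{prop:KCharact},
$$\delta_{D_R^n}(P)=\dist(P,\Delta(D_R^n))=\frac{1}{K_{D_R^n}(P)}\ge\frac1k.$$
Also, $Z(P)\cap\operatorname{int}(D_R^n)\neq\emptyset$ by the definition of $\mathcal P_n^*$. Thus both hypotheses in \eqref{DistBoudnHyp} hold with $\delta=1/k$.

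One subtlety is that \Cref{thm:UnifDistBound} is phrased with $k$ denoting the exact condition number $k_P:=K_{D_R^n}(P)\le k$, not an upper bound. So the radius and constant there depend a priori on $k_P$. I would handle this by inspecting the explicit formulas. We have
$$r(\tau,k_P)=\min\Big\{\frac{1}{4\sqrt2\,k_P},\frac{\sqrt3\,\tau}{4k_P}\Big\},$$
which is decreasing in $k_P$, so the choice
$$r:=r(\tau,k)=\min\Big\{\frac{1}{4\sqrt2\,k},\frac{\sqrt3\,\tau}{4k}\Big\}\le r(\tau,k_P)$$
is admissible for every $P$ in the class. Next, the constant of \Cref{thm:HdZeroes} is
$$C'(\delta,k_P,r)=\min\Big\{\frac{\delta^2}{64k_P^2},\frac{\delta^2r^2}{128}\Big\}\ge\min\Big\{\frac{1}{64k^4},\frac{r^2}{128k^2}\Big\}=:C'_0(\tau,k),$$
using $\delta=1/k$ and $k_P\le k$. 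Inspecting the proofs of \Cref{thm:HdZeroes} and \Cref{lem:LocalFlowProps}, the inequalities used there all remain valid when $k_P$ is replaced by the larger $k$. The key ones are $r\le 1/(2\sqrt2 k_P)$ and $\n{F(y)}\le\delta_F/(8k_P)$, both of which are implied by their versions with $k$. So one may equally run the argument directly with $k$ as an upper bound.

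Plugging into the final constant of \Cref{thm:UnifDistBound} gives
$$C(\tau,k):=\max\Big\{\frac{4R^2}{C'_0(\tau,k)},\frac{4k^2}{3}\Big\},$$
using $4/(3\delta^2)\le 4k^2/3$. This yields $\dist(x,Z(P)\cap D_R^n)^2\le C(\tau,k)\n{P(x)}^2$ for all $x$ with $\dist(x,\partial D_R^n)\ge\tau$, uniformly in $P$, with $R$ regarded as fixed. The only real obstacle is this monotonicity bookkeeping, which ensures that the constants, stated for the exact condition number, can be taken uniform over the class with bound $k$. It is resolved by the explicit expressions above.
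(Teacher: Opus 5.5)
Your proposal is correct and follows the paper's proof. You check $\delta_{D_R^n}(P)\ge 1/k$ via \Cref{prop:KCharact} together with $Z(P)\cap\operatorname{int}(D_R^n)\neq\emptyset$, then invoke \Cref{thm:UnifDistBound} with $\delta=1/k$ and take $C(\tau,k):=C(1/k,\tau,r(\tau,k),k)$. Your monotonicity bookkeeping makes explicit a point the paper passes over silently: the radius and constants, stated in terms of the exact condition number $K_{D_R^n}(P)\le k$, remain valid uniformly when $k$ is used as an upper bound.
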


\begin{proof}
Let $P\in\mathcal{P}_n^*(c,d,k)$. By the definition of the class, we have $\n{P}_{C^2(D_R^n)}=1$ and $K_{D_R^n}(P)\le k$. Recalling the characterization in \Cref{prop:KCharact}, the condition number bound implies $$\delta_{D_R^n}(P)=\operatorname{dist}(P,\Delta(D_R^n))=\frac{\n{P}_{C^2(D_R^n)}}{K_{D_R^n}(P)}\ge\frac{1}{k}.$$ Thus, $P$ satisfies the hypotheses of \Cref{thm:UnifDistBound} with $\delta=1/k$. Let $r(\tau,k)$ be any fixed admissible radius provided by \Cref{thm:UnifDistBound}. Since $\n{P}_{C^2(D_R^n)}=1$ and $\delta_{D_R^n}(P)\ge1/k$, the constant in \Cref{thm:UnifDistBound} depends only on $\tau$ and $k$, with the fixed geometric parameter $R$ absorbed into the constant. Taking $$C(\tau,k):=C\left(\frac1k,\tau,r(\tau,k),k\right)$$ gives the desired inequality.
\end{proof}

\section{Auxiliary Optimization Problem and Discretization}
\label{sec:Auxiliary_Optimization_Discretization}
In this section, we combine the geometric results developed in the previous sections, specifically \Cref{thm:HdZeroes} and \Cref{thm:UnifDistBound}, to establish a concrete testing procedure.

\subsection{Auxiliary Optimization Problem} Let $X$ be a random variable with law $\mu \in \mathcal{M}_{\tau}(D_R^n)$, and let $\{x^{(1)},\dots,x^{(m)}\}$ be an i.i.d.\ sample drawn from $X$. We define the \textit{auxiliary functional} $\eval_{\mu}$ and its empirical counterpart $\eval_m$ on $\mathcal{P}_n(c,d,k)$ by $$\eval_{\mu}(P):=\int_{D_R^n}|P(x)|^2\,d\mu(x) \qquad\text{and}\qquad \eval_m(P):=\frac{1}{m}\sum_{i=1}^m |P(x^{(i)})|_{\mathbb{R}^c}^2.$$
As argued at the end of \Cref{sec:implicit_poly_reg}, given $\mu\in\mathcal{M}_{\tau}(D_R^n)$, testing the hypothesis requires bounding the value of the optimization problem
\begin{equation}\label{TrueOpt}
\inf_{P\in\mathcal{P}_n^*(c,d,k)}\mathcal{R}_{\mu}(Z(P)\cap D_R^n).
\end{equation}
The following theorem is a direct consequence of \Cref{thm:UnifDistBound}.

\begin{thm}
\label{thm:ProblemReduction}
Let $D^n_R$ be the disk of radius $R$ and consider $\tau \in (0, R)$. For any $k \ge 1$, there exists a radius $r(k, \tau)$ such that for any $r\le r(k, \tau)$ there exists a constant $C'(k, \tau ,r ) > 0$ with the following property. For any measure $\mu \in \mathcal{M}_{\tau}(D^n_R)$ and any $P \in \mathcal{P}^*_n(c, d, k)$, the following inequalities hold: $$\eval_{\mu}(P) \le \mathcal{R}_{\mu}(Z(P)\cap D^n_R) \le C'(k, \tau ,r ) \eval_{\mu}(P).$$ Furthermore, for the empirical functionals $\eval_m$ and $\mathcal{R}_m$, we have:$$\eval_m(P) \le \mathcal{R}_m(Z(P)\cap D^n_R) \le C'(k, \tau ,r ) \eval_m(P).$$
\end{thm}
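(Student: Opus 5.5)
The plan is to prove both chains of inequalities pointwise, for every $x$ in $D_{R-\tau}^n$, and then integrate against $\mu$ (respectively average over the sample). Fix $P\in\mathcal P_n^*(c,d,k)$ and set $Z:=Z(P)\cap D_R^n$. This set is non-empty and compact, so $\dist(x,Z)$ is attained. Take $r(k,\tau)$ to be the radius of \Cref{thm:UnifDistBound} with $\delta=1/k$. For $r\le r(k,\tau)$, set $C'(k,\tau,r):=C(1/k,\tau,r,k)$, the constant produced there. \Cref{cor:UnifDistBound} shows that this choice depends only on $k,\tau,r$, with $R$ absorbed.

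\emph{Lower bound.} Take $x\in D_R^n$ and let $z\in Z$ be a closest point, so $\n{x-z}=\dist(x,Z)$. Since $D_R^n$ is convex, the segment $[x,z]$ lies in $D_R^n$. By the mean value inequality and $P(z)=0$,
$$\n{P(x)}=\n{P(x)-P(z)}\le \max_{D_R^n}\n{\Jac P}_{\op}\,\n{x-z}\le \n{P}_{C^1(D_R^n)}\dist(x,Z).$$
We have $\n{P}_{C^1(D_R^n)}\le\n{P}_{C^2(D_R^n)}=1$, so $\n{P(x)}^2\le\dist(x,Z)^2$ for every $x\in D_R^n$. This step needs no separation from the boundary.

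\emph{Upper bound.} Take $x\in D_{R-\tau}^n$, so that $\dist(x,\partial D_R^n)\ge\tau$. \Cref{cor:UnifDistBound}, in the form given by \Cref{thm:UnifDistBound} with the fixed radius $r$, gives
$$\dist(x,Z)^2\le C'(k,\tau,r)\n{P(x)}^2.$$
Now $\mu\in\mathcal M_\tau(D_R^n)$ means $\operatorname{supp}\mu\subseteq D_{R-\tau}^n$. Both integrands are continuous, hence measurable, and the pointwise two-sided bound holds on the support. Integrating it over the support gives $\eval_\mu(P)\le\mathcal R_\mu(Z)\le C'\eval_\mu(P)$.

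\emph{Empirical version.} Each sample point $x^{(i)}$ lies in $\operatorname{supp}\mu\subseteq D_{R-\tau}^n$ almost surely, since the support is closed and has full measure. Averaging the same pointwise inequalities over $i=1,\dots,m$ then gives the empirical statement.

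The only real care needed, and so the main obstacle, is the bookkeeping of constants. The constant must be uniform over the class, which holds because $\n{P}_{C^2}=1$ and $\delta_{D_R^n}(P)\ge 1/k$ by \Cref{prop:KCharact}. The separation hypothesis is exactly what makes \Cref{cor:UnifDistBound} applicable on the support of $\mu$.
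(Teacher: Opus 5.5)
Your proof is correct and follows the paper's argument essentially verbatim. The lower bound comes from the pointwise estimate $\n{P(x)}\le\n{P}_{C^1(D_R^n)}\operatorname{dist}(x,Z)\le\operatorname{dist}(x,Z)$; the paper takes an infimum over $z\in Z$ instead of a closest point, which makes no difference. The upper bound comes from \Cref{thm:UnifDistBound} with $\delta=1/k$, applied on $\operatorname{supp}\mu\subseteq D_{R-\tau}^n$, followed by integration against $\mu$ or averaging over the sample (your ``almost surely'' remark for the sample points is slightly more careful than the paper's wording).
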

\begin{proof} \textbf{I)} We first establish the lower bound for $\eval_{\mu}$. Consider $P \in \mathcal{P}^*_n(c, d, k)$ and fix $x \in D^n_R$. For any point $z \in Z(P) \cap D^n_R$, the convexity of the disk $D^n_R$ ensures that the segment $[z, x]$ is contained in the domain. By the Fundamental Theorem of Calculus, we have $$P(x) = P(x) - P(z) = \int_0^1 \Jac P(z + s(x - z)) \cdot (x - z) \, ds.$$ Taking the norm and applying the integral triangle inequality, we obtain $$\n{P(x)}\le \int_0^1 \n{\operatorname{Jac} P(z + s(x - z))}_{\mathrm{op}} \n{x - z} \, ds \le \n{P}_{C^1} \n{x - z}.$$ Since this holds for any $z \in Z(P) \cap D^n_R$, taking the infimum over the zero set yields $$\n{P(x)}\le \n{P}_{C^1} \operatorname{dist}(x, Z(P) \cap D^n_R).$$ Using the normalization $\n{P}_{C^2} = 1$, we have $\n{P}_{C^1} \le 1$. Squaring both sides and integrating with respect to $\mu$ yields $$\eval_{\mu}(P) = \int_{D_R^n} \n{P(x)}^2 \, d\mu(x) \le \int_{D_R^n} \operatorname{dist}(x, Z(P) \cap D^n_R)^2 \, d\mu(x) = \mathcal{R}_{\mu}(Z(P)\cap D^n_R).$$ The same pointwise inequality, applied to each sample point $x^{(i)}$ and averaged over $i=1,\dots,m$, yields $$\eval_m(P)\leq\mathcal{R}_m(Z(P)\cap D^n_R).$$ \textbf{II)} Let $\mu \in \mathcal{M}_{\tau}(D^n_R)$ and fix $P \in \mathcal{P}^*_n(c, d, k)$. By definition, $P$ satisfies $$\frac{1}{k} \le \delta_{D^n_R}(P) \le 1.$$ According to \Cref{thm:UnifDistBound}, there exists a constant $C(1/k, \tau,k, r )$ such that for every $x \in D^n_R$ with $\operatorname{dist}(x, \partial D^n_R) \geq \tau$, we have \begin{equation}\label{eq:DistBoundReduc} \operatorname{dist}(x, Z(P) \cap D^n_R)^2 \le C(1/k, \tau,k, r ) \n{P(x)}^2. \end{equation} Setting $C'(k, \tau, r) := C(1/k, \tau,k, r)$, we observe that for any $\mu \in \mathcal{M}_{\tau}(D^n_R)$, the support of the measure satisfies $$\operatorname{supp}(\mu) \subseteq \{x \in D^n_R \mid \operatorname{dist}(x, \partial D^n_R) \ge \tau\}.$$ Since all terms are non-negative, integrating with respect to $\mu$ yields $$\mathcal{R}_{\mu}(Z(P)\cap D^n_R) = \int_{D_R^n} \operatorname{dist}(x, Z(P) \cap D^n_R)^2 \, d\mu(x) \le C'(k, \tau, r) \int_{D_R^n} \n{P(x)}^2 \, d\mu(x).$$ The same argument applies to the empirical measure $\mu_m := \frac{1}{m} \sum_{i=1}^m \delta_{x^{(i)}}$ since $x^{(i)} \in \operatorname{supp}(\mu)$ satisfies the same boundary distance condition, yielding $$\mathcal{R}_m(Z(P) \cap D^n_R) \le C'(k, \tau, r)\eval_m(P).$$ \end{proof}

\Cref{thm:ProblemReduction} provides a bound for the original optimization problem, provided one can solve the auxiliary problem $$\inf_{P\in\mathcal P_n^*(c,d,k)}\mathcal E(P).$$ Since the class $\mathcal P_n^*(c,d,k)$ is defined by the existence of zeros, it is more convenient to work with the parametrized class $\mathcal P_n(c,d,k)$, for which in general one only has $$\inf_{P\in\mathcal P_n(c,d,k)}\mathcal E(P)\leq \inf_{P\in\mathcal P_n^*(c,d,k)}\mathcal E(P).$$ However, \Cref{thm:HdZeroes} provides sufficient conditions ensuring that an element of $\mathcal P_n(c,d,k)$ belongs to $\mathcal P_n^*(c,d,k)$, and these conditions can be checked on a finite discretization of the domain. This leads to the following definition.

\begin{defn}\label{def:ZeroFunctional} Let $\mathcal N$ be an $r$--net of a compact set $E\subseteq\mathbb R^n$. We define the \textit{zero functional} associated with $\mathcal N$ as the map $\zeta_{\mathcal N}:\mathbb R[x_1,\dots,x_n]_{\leq d}^c\to\R^+$ given by $$\zeta_{\mathcal N}(P):=\min_{y\in\mathcal N}\|P(y)\|^2.$$ \end{defn}
Although the zero functional defined in \Cref{def:ZeroFunctional} depends on the specific choice of the $r$--net $\mathcal N$, the estimates used below depend only on the resolution $r$ of the discretization and not on the specific arrangement of its points. This dependence is made precise by the following property.
\begin{prop}\label{prop:ZeroFunctional} Let $D_R^n$ be the disk of radius $R$ and let $0<\tau<R$, and set $R_\tau:=R-\tau$. There exists a radius $r(\tau,k)>0$ such that, for any $r\le r(\tau,k)$, there exists a constant $C''(k,r)>0$ with the following property: if $\mathcal N$ is any $r$--net of the disk $D_{R_\tau}^n$ and $\zeta_{\mathcal N}:\mathcal P_n(c,d,k)\to\R^+$ is the associated zero functional, then for any $P\in\mathcal P_n(c,d,k)$, $$\zeta_{\mathcal N}(P)\le C''(k,r)\quad\Rightarrow\quad P\in\mathcal P_n^*(c,d,k).$$ \end{prop}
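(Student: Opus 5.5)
The plan is to obtain the statement directly from \Cref{thm:HdZeroes}, applied at a net point that realizes the minimum in $\zeta_{\mathcal N}$. Let $r(\tau,k)$ be the radius of \Cref{thm:HdZeroes}, namely $\min\{\tau/2,\,1/(4\sqrt2 k)\}$. For $r\le r(\tau,k)$ set $C''(k,r):=C(1/k,k,r)=\min\{1/(64k^4),\,r^2/(128k^2)\}$. This is the constant of \Cref{thm:HdZeroes} with $\delta=1/k$, and it depends only on $k$ and $r$, not on the net.

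Fix $P\in\mathcal P_n(c,d,k)$ with $\zeta_{\mathcal N}(P)\le C''(k,r)$. The key steps are as follows.

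\begin{enumerate}
\item Since $\mathcal N$ is finite, pick $y\in\mathcal N$ with $\n{P(y)}^2=\zeta_{\mathcal N}(P)\le C''(k,r)$.
\item Since $y\in D^n_{R_\tau}$, we have $\dist(y,\partial D^n_R)\ge\tau$, so $y$ is admissible for \Cref{thm:HdZeroes}.
\item From $\n{P}_{C^2(D^n_R)}=1$, $K_{D^n_R}(P)\le k$ and \Cref{prop:KCharact}, we get $\delta_{D^n_R}(P)\ge 1/k$.
\end{enumerate}

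All hypotheses of \Cref{thm:HdZeroes} then hold with $\delta=1/k$, so $Z(P)\cap B_r(y)\neq\emptyset$.

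It remains to check that this zero lies in $\operatorname{int}(D^n_R)$. Since $r\le\tau/2<\tau\le\dist(y,\partial D^n_R)$, the ball $B_r(y)$ is contained in $\operatorname{int}(D^n_R)$. Hence $Z(P)\cap\operatorname{int}(D^n_R)\neq\emptyset$, which gives $P\in\mathcal P_n^*(c,d,k)$.

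The only delicate point is bookkeeping. \Cref{thm:HdZeroes} is stated with $k$ equal to the actual condition number of $F$, whereas here we only know $K(P)\le k$. I would handle this by applying the theorem with $k_P:=K_{D^n_R}(P)\le k$ and using monotonicity. The radius $r(k,\tau)$ and the constant $C(\delta,k,r)$ are both nonincreasing in $k$, so the choices made with the bound $k$ are admissible for $k_P$.

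Alternatively, I could rerun the proof of \Cref{thm:HdZeroes}, noting that it only uses $\n{F}_{C^2}\le k\,\delta_F$. The role of $\delta=1/k$ is then to make $C''$ uniform over the class.

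The $r$-net property of $\mathcal N$ is not needed for this implication. It matters only for the converse direction used later, where a zero in $D^n_{R_\tau}$ forces some net point within distance $r$, and hence $\zeta_{\mathcal N}$ to be small.
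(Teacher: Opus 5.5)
Your proposal is correct and is essentially the paper's proof. It uses the same radius $r(\tau,k)=\min\{\tau/2,1/(4\sqrt2 k)\}$ and the same constant $C''(k,r)=C(1/k,k,r)$, applies \Cref{thm:HdZeroes} at a net point realizing $\zeta_{\mathcal N}(P)$, and uses $r\le\tau/2$ to place the zero in $\operatorname{int}(D_R^n)$. Your monotonicity remark justifies applying the theorem with the actual condition number $K_{D_R^n}(P)\le k$, a point the paper leaves implicit.
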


\begin{proof} Recall that any $P\in\mathcal P_n(c,d,k)$ satisfies $$\frac{1}{k}\le \delta_{D_R^n}(P)\le 1.$$ Define $$r(\tau,k):=\min\left\{\frac{\tau}{2},\frac{1}{4\sqrt{2}k}\right\},$$ and let $\mathcal N$ be an $r$--net of $D_{R_\tau}^n$ with $r\le r(\tau,k)$. For any $y\in\mathcal N$, one has $\dist(y,\partial D_R^n)\ge\tau$, and therefore $$D_{R_\tau}^n\subseteq\bigcup_{y\in\mathcal N}B_r(y)\subseteq D_R^n.$$ Let $\zeta_{\mathcal N}$ be the zero functional associated with $\mathcal N$ and define $C''(k,r):=C(1/k,k,r)$, where $C$ is the constant provided by \Cref{thm:HdZeroes}. Let $P\in\mathcal P_n(c,d,k)$ satisfy $$\zeta_{\mathcal N}(P)=\min_{y\in\mathcal N}|P(y)|^2\le C''(k,r).$$ Then, by \Cref{thm:HdZeroes}, there exist $y\in\mathcal N$ and $z\in B_r(y)$ such that $P(z)=0$. Since $\n{z-y}<r\le\tau/2$ and $\dist(y,\partial D_R^n)\ge\tau$, it follows that $$\dist(z,\partial D_R^n)\ge\dist(y,\partial D_R^n)-\n{z-y}>0.$$ Hence $z\in\operatorname{int}(D_R^n)$, and therefore $P\in\mathcal P_n^*(c,d,k)$. \end{proof}

\subsection{Discretized Optimization Problem}
\label{subsec:EffectComp}
We now consider the class $\mathcal{P}_n(c,d,k)$ from a computational point of view. Membership in the class $\mathcal{P}_n(c,d,k)$ is determined by two global conditions, namely the normalization of the $C^2$--norm and a uniform lower bound on the discriminant functional. To obtain a tractable testing procedure, we discretize both quantities on a finite net. Let $\mathcal{N}\subseteq D_R^n$ be a finite $r$--net of the disk. We define the discretized constraint set
\begin{equation}
\mathcal{I}_{\mathcal{N}}(c,d,k):=\left\{P\in\mathbb{R}[x_1,\dots,x_n]_{\le d}^c\ \Big|\ \delta_{\mathcal N}(P)\ge\frac{1}{k},\ \n{P}_{\mathcal N}=1\right\},
\end{equation}
where the discretized discriminant functional and the discretized $C^2$--norm are respectively given by $$\delta_{\mathcal N}(P):=\min_{y\in\mathcal N}\sqrt{\n{P(y)}^2+\sigma_{\min}(\operatorname{Jac}P(y))^2},$$ and $$\n{P}_{\mathcal N}^2:=\max_{y\in\mathcal N}\left(\n{P(y)}^2+\n{\operatorname{Jac}P(y)}_{\operatorname{op}}^2+\n{D^2P(y)}_{\operatorname{op}}^2\right).$$

To compare the discretized optimization problem with the one over $\mathcal{P}_n(c,d,k)$, we first control the effect of the discretization on both the $C^2$--norm and the discriminant functional. For the $C^2$--norm, such a control would typically depend on the $C^3$--data of the functions under consideration.

For polynomial systems, this dependence can instead be avoided by using Markov-type inequalities, which allow us to control higher-order derivatives in terms of lower-order ones, up to constants depending on the degree $d$. The required estimate is summarized in the following proposition, whose proof is deferred to \Cref{sec:Discretization_Polynomial_Norms}.

\begin{prop}\label{prop:DiscretizedC2Norm}
Let $R>0$ and let $0<\eta<1$. There exists a radius $r_0=r_0(\eta,d,R)>0$, independent of the ambient dimensions $n$ and $c$, such that for every $r\le r_0$, every $r$--net $\mathcal N\subseteq D_R^n$, and every polynomial map $P\in\mathbb R[x_1,\dots,x_n]_{\le d}^c$, one has
$$\n{P}_{\mathcal N}\le \n{P}_{C^2(D_R^n)}\le \frac{1}{1-\eta}\n{P}_{\mathcal N}.$$ Moreover, the radius $r_0$ may be chosen so that $$r_0\ge C\frac{\eta R}{d^4},$$ where $C>0$ is a universal constant.
\end{prop}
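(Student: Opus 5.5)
The lower bound $\n{P}_{\mathcal N}\le\n{P}_{C^2(D_R^n)}$ is immediate, because $\mathcal N\subseteq D_R^n$: the maximum over the net cannot exceed the maximum over the disk. All the work is in the upper bound. The plan is to control the oscillation of the pointwise quantity
$$g(x):=\n{P(x)}^2+\n{\Jac P(x)}_{\op}^2+\n{D^2P(x)}_{\op}^2$$
between a point of the disk and the nearest net point. The oscillation bound must be written in terms of $\n{P}_{C^2(D_R^n)}$ itself, so that it can be absorbed.

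Fix $x^*\in D_R^n$ where $\sqrt{g}$ attains its maximum $M:=\n{P}_{C^2(D_R^n)}$, and pick $y\in\mathcal N$ with $\n{x^*-y}\le r$. Set $\Phi(x):=(P(x),\Jac P(x),D^2P(x))$, viewed in the product space with norm $\sqrt{\n{\cdot}^2+\n{\cdot}_{\op}^2+\n{\cdot}_{\op}^2}$, so that $\sqrt{g(x)}=\n{\Phi(x)}$. The reverse triangle inequality then gives
$$M\le\sqrt{g(y)}+\n{\Phi(x^*)-\Phi(y)}\le \n{P}_{\mathcal N}+r\sup_{D_R^n}\n{D\Phi}.$$
Since the segment lies in the convex disk, it suffices to bound $\sup\n{D\Phi}$ by $\n{D P}_{\op}+\n{D^2P}_{\op}+\n{D^3P}_{\op}$.

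The main obstacle is bounding $\sup_{D_R^n}\n{D^3P}_{\op}$ by $C d^4 M/R$ with no dependence on $n$ or $c$. The first two terms are already at most $2M$. For the third, reduce to one dimension:
\begin{itemize}
\item By polarization for symmetric trilinear forms in real Hilbert space (Banach's theorem: the norm of a symmetric multilinear form equals its sup on the diagonal), $\n{D^3P(x)}_{\op}=\sup_{\n{v}=1}\n{D^3P(x)[v,v,v]}$.
\item Choosing a unit vector $u\in\R^c$, the quantity becomes $\sup_{u,v}|q'''(0)|$ for the univariate polynomial $q(t):=\langle u,P(x+tv)\rangle$ of degree at most $d$.
\item The line $x+tv$ meets $D_R^n$ in a chord, but the chord may be short when $x$ is near the boundary. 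To handle this, use the chord through $x$ in direction $v$ and apply Markov's inequality for derivatives on an interval: $\n{q'}_{I}\le \frac{2d^2}{|I|}\n{q}_I$.
\item If the chord is too short, use instead the Markov inequality on the convex body $D_R^n$ itself (Wilhelmsen/Kellogg type: $\n{\nabla p}\le \frac{d^2}{R}\n{p}_{D_R^n}$ for convex symmetric bodies, dimension-free for the ball). Apply it to the directional second derivative $p(x):=\langle u,D^2P(x)[v,v]\rangle$, a polynomial of degree $\le d-2$ whose sup on the disk is $\le M$.
\end{itemize}
This gives $|D^3P(x)[v,v,w]|\le \frac{(d-2)^2}{R}M\cdot\text{const}$ uniformly. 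Because only one derivative beyond $C^2$ is needed, a single application of the Markov inequality on the ball to a quantity already controlled by $M$ suffices; the factor $d^4$ in the statement leaves room for a cruder estimate, such as iterating the univariate Markov inequality with constant $d^2$ twice.

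Assembling these bounds gives $M\le \n{P}_{\mathcal N}+r\,(2+C_0d^4/R)\,M$. Now choose $r_0$ such that $r_0(2+C_0d^4/R)\le\eta$; for instance, in the regime $d\ge1$ with $R$ fixed, $r_0=C\eta R/d^4$, with the additive constant absorbed (adjusting to $\min\{\eta/4,\,C\eta R/d^4\}$ if $R$ is large). For $r\le r_0$ this yields $(1-\eta)M\le\n{P}_{\mathcal N}$, which is the claimed upper bound. No step uses $n$ or $c$: the operator norms are reduced to scalar univariate or ball-Markov estimates via the sup over unit vectors $u,v$.
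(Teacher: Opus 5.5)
Your reduction (compare $\Phi$ at a maximizer $x^*$ and at a nearby net point, then absorb the error) is sound. It yields a radius independent of $n$ and $c$, so the two-sided inequality itself is established. The gap is the quantitative clause $r_0\ge C\eta R/d^4$ with $C$ universal.

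You bound the oscillation of $P$ and $\Jac P$ along the segment by $r\sup_{D_R^n}\n{\Jac P}_{\op}\le rM$ and $r\sup_{D_R^n}\n{D^2P}_{\op}\le rM$. This produces an additive error $2rM$ that does not improve as $R$ grows. Your admissible radius is therefore $\eta R/(2R+C_0d^4)$, which is below $\eta/2$ for every $R$. It satisfies the clause when $R\lesssim d^4$, but for $R\gtrsim d^4/C$ it is smaller than $C\eta R/d^4$, whatever universal $C$ you fix. Absorbing the additive constant only works with $C$ depending on $R$. Your patch $\min\{\eta/4,C\eta R/d^4\}$ explicitly violates the claimed lower bound in that regime, yet the proposition is stated for arbitrary $R>0$, and the paper stresses that $R$ may be taken arbitrarily large.

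The fix is to use the tool you already invoke at every order, not only at the top one. For unit vectors $u,v_1,\dots,v_m$, the scalar polynomial $x\mapsto\langle u,D^mP(x)[v_1,\dots,v_m]\rangle$ has degree at most $d-m$ and sup on $D_R^n$ at most $\sup_{D_R^n}\n{D^mP}_{\op}$. Kellogg's inequality on the ball then gives $\sup_{D_R^n}\n{D^{m+1}P}_{\op}\le\frac{(d-m)^2}{R}\sup_{D_R^n}\n{D^mP}_{\op}$ for $m=0,1,2$. Hence $\n{\Phi(x^*)-\Phi(y)}\le\sqrt3\,\frac{rd^2}{R}M$, and $r_0=\eta R/(\sqrt3\,d^2)$ works for every $R$, which is better than required. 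The chord and polarization steps also become unnecessary.

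This is the paper's mechanism: each $D^{m+1}P$ is controlled by a Markov inequality in terms of $D^mP$ itself, never through the next term of the $C^2$--norm. The paper, however, uses only the univariate Markov inequality on segments of length comparable to $\tau$ around points of $[x_0,y_0]$. Since these segments leave $D_R^n$ near the boundary, it needs the Chebyshev growth bound $T_d(1+\tau/R)$ to return to $D_R^n$. The choice $\tau=R/d^2$, which keeps $T_d(1+1/d^2)\le\cosh\sqrt3$, is exactly where the factor $d^4$ comes from. Your multivariate route avoids the outward extension and gives a sharper radius, at the cost of citing Kellogg's theorem rather than the one-dimensional Markov inequality.
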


We next compare the discretized discriminant functional with its continuous counterpart.

\begin{prop}\label{prop:QInvert}
Let $y\in D_R^n$ and let $r>0$. For any $F\in C^2(\mathbb{R}^n,\mathbb{R}^c)$ with $\delta_y(F)>0$, we have $$\delta_{\overline{B}_r(y)\cap D_R^n}(F)\ge\frac{\delta_y(F)}{2\sqrt{2}},\qquad\text{provided}\qquad r\le\frac{\delta_y(F)}{2\sqrt{2}\n{F}_{C^2(D_R^n)}}.$$
\end{prop}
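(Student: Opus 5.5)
The plan is to compare, pointwise on $\overline{B}_r(y)\cap D_R^n$, the quantity $\n{F(x)}^2+\sigma_{\min}(\Jac F(x))^2$ with its value at $y$, which is $\delta_y(F)^2$. Write $\delta:=\delta_y(F)$. Since the minimum defining $\delta_{\overline{B}_r(y)\cap D_R^n}(F)$ is taken over a compact set, it suffices to show that every $x$ in that set satisfies
\[
\n{F(x)}^2+\sigma_{\min}(\Jac F(x))^2\ge\frac{\delta^2}{8}.
\]
We split into two cases according to which term carries most of the mass at $y$. This is the same dichotomy used in part I of the proof of \Cref{thm:HdZeroes}.

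\textbf{Case 1:} $\n{F(y)}^2\le\delta^2/2$. Then $\sigma_{\min}(\Jac F(y))\ge\delta/\sqrt2$. We apply \Cref{prop:SVContinuity} with the lower bound $\delta/\sqrt2$; the required radius is $r\le\delta/(2\sqrt2\n{F}_{C^2(D_R^n)})$, which is exactly the hypothesis. This gives $\sigma_{\min}(\Jac F(x))\ge\delta/(2\sqrt2)$ on $B_r(y)$, and hence the claimed bound there.

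Two technical points need care. First, \Cref{prop:SVContinuity} is stated for a ball contained in $D_R^n$ and an open ball. We therefore redo its one-line argument directly along the segment $[y,x]$, which lies in $D_R^n$ by convexity. This gives $\|\Jac F(x)-\Jac F(y)\|_{\op}\le\n{F}_{C^2}\n{x-y}\le\n{F}_{C^2}r$, valid also on the closed ball intersected with the disk. Second, the bound is $\sigma_{\min}(\Jac F(x))\ge \delta/\sqrt2-\delta/(2\sqrt2)=\delta/(2\sqrt2)$, as needed.

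\textbf{Case 2:} $\n{F(y)}^2>\delta^2/2$, so $\n{F(y)}>\delta/\sqrt2$. By the mean value inequality along $[y,x]$, $\n{F(x)}\ge\n{F(y)}-\n{F}_{C^1}\n{x-y}$. Using $\n{F}_{C^1}\le\n{F}_{C^2}$ and the radius bound, this gives $\n{F(x)}\ge\delta/\sqrt2-\delta/(2\sqrt2)=\delta/(2\sqrt2)$. Squaring yields the claim.

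Combining the two cases and taking the minimum over $x$ gives $\delta_{\overline{B}_r(y)\cap D_R^n}(F)\ge\delta/(2\sqrt2)$.

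There is no serious obstacle here. The only delicate step is justifying the Lipschitz estimates on the closed ball intersected with the disk. Convexity of $D_R^n$, and of its intersection with the ball, makes the integral form of the mean value theorem applicable with the $C^2(D_R^n)$ and $C^1(D_R^n)$ seminorms. One should also note that $\n{F}_{C^2}>0$ whenever $\delta>0$, so the radius condition is meaningful.
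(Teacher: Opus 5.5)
Your proof is correct and follows essentially the same route as the paper: you split on whether $\|F(y)\|^2$ lies below or above $\delta^2/2$, use the stability of $\sigma_{\min}$ under the Jacobian perturbation bound in the first case, and use the mean value lower bound on $\|F(x)\|$ in the second. Your remark about rerunning the estimate of \Cref{prop:SVContinuity} along segments, so that it covers $\overline{B}_r(y)\cap D_R^n$ rather than only an open ball contained in the disk, clarifies a point the paper passes over.
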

\begin{proof} Consider $F\in C^2(\mathbb{R}^n,\mathbb{R}^c)$ and a point $y\in D_R^n$. Suppose that $$\n{F(y)}^2+\sigma_{\min}(\operatorname{Jac}F(y))^2\ge\delta^2,$$ where $\delta:=\delta_y(F)$. We partition $D_R^n$ into the sets $$E^-=\left\{x\in D_R^n\ \middle|\ \n{F(x)}^2<\frac{\delta^2}{2}\right\},\qquad E^+=\left\{x\in D_R^n\ \middle|\ \n{F(x)}^2\ge\frac{\delta^2}{2}\right\}.$$ \medskip \noindent\textbf{I)} Suppose $y\in E^-$. Then it must hold that $\sigma_{\min}(\operatorname{Jac}F(y))^2\ge\delta^2/2$. By the quantitative continuity of singular values (cf. \Cref{prop:SVContinuity}), for any radius $$r\le\frac{\delta}{2\sqrt{2}\n{F}_{C^2(D_R^n)}},$$ we have $$\inf_{x\in B_r(y)\cap D_R^n}\sigma_{\min}(\operatorname{Jac}F(x))^2\ge\frac{\delta^2}{8}.$$ Consequently, it follows that $$\min_{x\in\overline B_r(y)\cap D_R^n}\left\{\n{F(x)}^2+\sigma_{\min}(\operatorname{Jac}F(x))^2\right\}\ge\frac{\delta^2}{8}.$$ \medskip \noindent\textbf{II)} Suppose $y\in E^+$, so that $\n{F(y)}\ge\delta/\sqrt{2}$. For any $x\in B_r(y)\cap D_R^n$, since $D_R^n$ is convex, the segment joining $x$ and $y$ is contained in $D_R^n$. We therefore use the integral form of the Taylor expansion $$F(x)-F(y)=\int_0^1\operatorname{Jac}F(y+s(x-y))\cdot(x-y)\,ds.$$ By the reverse triangle inequality and the definition of the $C^1$--norm, we have $$\begin{aligned}\left|\n{F(x)}-\n{F(y)}\right|\le\n{F(x)-F(y)}&\le\int_0^1\n{\operatorname{Jac}F(y+s(x-y))\cdot(x-y)}\,ds\\&\le\n{F}_{C^1(D_R^n)}\n{x-y}.\end{aligned}$$ This implies $$\n{F(x)}\ge\n{F(y)}-\n{F}_{C^1(D_R^n)}\n{x-y}\ge\frac{\delta}{\sqrt{2}}-\n{F}_{C^1(D_R^n)}r.$$ Setting $$r\le\frac{\delta}{2\sqrt{2}\n{F}_{C^1(D_R^n)}},$$ we obtain $\n{F(x)}\ge\delta/(2\sqrt{2})$. Since $\n{F}_{C^1(D_R^n)}\le\n{F}_{C^2(D_R^n)}$, it follows that for any $$r\le\frac{\delta}{2\sqrt{2}\n{F}_{C^2(D_R^n)}},$$ we have $$\min_{x\in\overline B_r(y)\cap D_R^n}\left\{\n{F(x)}^2+\sigma_{\min}(\operatorname{Jac}F(x))^2\right\}\ge\frac{\delta^2}{8}.$$ Both case \textbf{I)} and case \textbf{II)} yield the desired bound, provided $$r\le\frac{\delta}{2\sqrt{2}\n{F}_{C^2(D_R^n)}}.$$ \end{proof}

While the condition-number constraint is invariant under rescaling, the cost functional is homogeneous of degree two and hence depends on the normalization. We therefore introduce the intermediate feasible set $$ \mathcal J_{\mathcal N}(c,d,k):=\left\{P\in\mathbb R[x_1,\dots,x_n]_{\le d}^c\ \Big|\ \delta_{\mathcal N}(P)\ge\frac{1}{k},\ \n{P}_{C^2(D_R^n)}=1\right\}. $$ By extending the local stability estimate of \Cref{prop:QInvert} to the entire discretization, we obtain that, for a sufficiently fine net, $\mathcal{J}_{\mathcal{N}}(c,d,k)$ lies between $\mathcal{P}_n(c,d,k)$ and a relaxed version of it.

\begin{prop}\label{prop:FeasibleNet} For any $k\ge1$, there exists a radius $r(k)>0$ such that, for any $r$--net $\mathcal N$ of $D_R^n$ with $r\le r(k)$, the following inclusions hold: $$\mathcal P_n(c,d,k)\subseteq \mathcal{J}_{\mathcal N}(c,d,k)\subseteq \mathcal P_n\bigl(c,d,kC'''\bigr),$$ where $C'''>1$ is a universal constant.
\end{prop}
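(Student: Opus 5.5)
The first inclusion is immediate: $\mathcal N\subseteq D_R^n$ and $\delta_{\mathcal N}$ is a minimum over a subset, so $\delta_{\mathcal N}(P)\ge\delta_{D_R^n}(P)$. For $P\in\mathcal P_n(c,d,k)$, \Cref{prop:KCharact} together with $\n{P}_{C^2(D_R^n)}=1$ gives $\delta_{D_R^n}(P)=1/K_{D_R^n}(P)\ge 1/k$. Hence $\delta_{\mathcal N}(P)\ge 1/k$, and the normalization is the same in both classes, so $P\in\mathcal J_{\mathcal N}(c,d,k)$.

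For the second inclusion, take $P\in\mathcal J_{\mathcal N}(c,d,k)$. Then $\n{P}_{C^2(D_R^n)}=1$, and for every $y\in\mathcal N$ we have $\delta_y(P)\ge\delta_{\mathcal N}(P)\ge 1/k>0$, where $\delta_y$ denotes the pointwise functional $\sqrt{\n{P(y)}^2+\sigma_{\min}(\Jac P(y))^2}$. I apply \Cref{prop:QInvert} at each $y\in\mathcal N$ with the uniform radius
$$r(k):=\frac{1}{2\sqrt2\,k}.$$
Since $\n{P}_{C^2(D_R^n)}=1$, we have $r(k)\le \delta_y(P)/\bigl(2\sqrt2\n{P}_{C^2(D_R^n)}\bigr)$, so the proposition applies. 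Because $\mathcal N$ is an $r$-net of $D_R^n$ with $r\le r(k)$, the closed balls $\overline B_r(y)\cap D_R^n$, $y\in\mathcal N$, cover $D_R^n$. Therefore
$$\delta_{D_R^n}(P)=\min_{y\in\mathcal N}\delta_{\overline B_r(y)\cap D_R^n}(P)\ge\min_{y\in\mathcal N}\frac{\delta_y(P)}{2\sqrt2}\ge\frac{1}{2\sqrt2\,k}.$$
Applying \Cref{prop:KCharact} again gives $K_{D_R^n}(P)=1/\delta_{D_R^n}(P)\le 2\sqrt2\,k$. So $P\in\mathcal P_n(c,d,kC''')$ with the universal constant $C'''=2\sqrt2>1$.

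The one point needing care is the radius condition in \Cref{prop:QInvert}, which depends on $\delta_y(P)$ and so varies with $y$ and $P$. The fix is that the lower bound $\delta_y(P)\ge 1/k$ holds at every net point and the $C^2$-norm is normalized to $1$. This makes the single choice $r(k)=1/(2\sqrt2 k)$ admissible simultaneously for all net points and all $P$ in the class. I also need the net convention that every point of $D_R^n$ lies within distance $r$ of some net point, so that closed balls suffice for the covering.

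The radius $r(k)$ is independent of $n$, $c$ and $d$. The discretized $C^2$-norm of \Cref{prop:DiscretizedC2Norm} plays no role here, because $\mathcal J_{\mathcal N}$ keeps the exact $C^2$ normalization.
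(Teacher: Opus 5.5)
Your proof is correct and matches the paper's argument. The first inclusion follows from $\delta_{\mathcal N}\ge\delta_{D_R^n}$. The second follows from \Cref{prop:QInvert} at every net point with the uniform radius $r(k)=1/(2\sqrt2\,k)$, which gives $C'''=2\sqrt2$. Your covering step, and your remark that $\delta_y(P)\ge 1/k$ makes this radius admissible for all $y$ and all $P$ at once, only spell out what the paper leaves implicit.
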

\begin{proof} Let $\mathcal N$ be an $r$--net of $D_R^n$. First, observe that, independently of $r$, since $\mathcal N\subseteq D_R^n$, for any $P\in\R[x_1,\dots,x_n]_{\le d}^c$ one has $\delta_{\mathcal N}(P)\ge \delta_{D_R^n}(P)$. This immediately implies that $$\mathcal P_n(c,d,k)\subseteq \mathcal{J}_{\mathcal N}(c,d,k).$$ Conversely, let $P\in\mathcal J_{\mathcal N}(c,d,k)$. Then $\delta_{\mathcal N}(P)\ge1/k$ and $\n{P}_{C^2(D_R^n)}=1$. By \Cref{prop:QInvert}, if $$r\le \frac{1}{2\sqrt{2}\,k},$$ then $$\delta_{D_R^n}(P)\ge \frac{1}{2\sqrt{2}\,k}.$$ We therefore obtain $$\mathcal{J}_{\mathcal N}(c,d,k)\subseteq \mathcal P_n\bigl(c,d,2\sqrt2\,k\bigr).$$ Hence the conclusion follows with $r(k):=1/(2\sqrt2\,k)$ and the universal constant $C'''=2\sqrt2$. \end{proof}

We next establish the compactness of $\mathcal I_{\mathcal N}(c,d,k)$ and $\mathcal J_{\mathcal N}(c,d,k)$, and hence the existence of minimizers for the corresponding optimization problems.

\begin{prop}\label{prop:FeasPolyComp} Let $0<\eta<1$, and let $\mathcal N$ be an $r$--net of $D_R^n$ with $r\le r_0(\eta,d,R)$, where $r_0$ is the radius provided by \Cref{prop:DiscretizedC2Norm}. Then the sets $\mathcal J_{\mathcal N}(c,d,k)$ and $\mathcal I_{\mathcal N}(c,d,k)$ are compact. \end{prop}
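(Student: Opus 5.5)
Both sets live in the finite-dimensional space $V:=\mathbb R[x_1,\dots,x_n]_{\le d}^c$, so by Heine--Borel it suffices to show that each is closed and bounded with respect to one fixed norm. We take $\n{\cdot}_{C^2(D_R^n)}$, which is a norm on $V$ as observed in \Cref{subsec:RealAlgCompInt}. The plan is to write each set as the intersection of a compact sphere-type set with a closed superlevel set of $\delta_{\mathcal N}$.

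\textbf{Boundedness.} For $\mathcal J_{\mathcal N}(c,d,k)$ this is immediate, since $\n{P}_{C^2(D_R^n)}=1$. For $\mathcal I_{\mathcal N}(c,d,k)$ we use \Cref{prop:DiscretizedC2Norm}. Since $r\le r_0(\eta,d,R)$, every $P\in V$ satisfies
$$\n{P}_{C^2(D_R^n)}\le \frac{1}{1-\eta}\n{P}_{\mathcal N}.$$
The normalization $\n{P}_{\mathcal N}=1$ therefore gives $\n{P}_{C^2(D_R^n)}\le (1-\eta)^{-1}$. The same inequality shows that $\n{\cdot}_{\mathcal N}$ vanishes only at $0$. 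It is clearly a seminorm on $V$, so it is in fact a norm on $V$ equivalent to the $C^2$--norm.

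\textbf{Closedness.} We show that every constraint function is continuous on $V$. Fix $y\in\mathcal N$. The maps $P\mapsto P(y)$, $P\mapsto \Jac P(y)$ and $P\mapsto D^2P(y)$ are linear on the finite-dimensional space $V$, hence continuous. Operator norms are continuous, and so is $A\mapsto\sigma_{\min}(A)$, because it is $1$-Lipschitz by the perturbation estimate used in \Cref{prop:SVContinuity}. The net $\mathcal N$ is finite, so $\n{\cdot}_{\mathcal N}$ and $\delta_{\mathcal N}$ are a finite maximum and a finite minimum of continuous functions, and are therefore continuous. The $C^2(D_R^n)$--norm is continuous because it is a norm. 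It follows that
$$\{\delta_{\mathcal N}\ge 1/k\},\qquad \{\n{P}_{C^2(D_R^n)}=1\},\qquad \{\n{P}_{\mathcal N}=1\}$$
are closed subsets of $V$.

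\textbf{Conclusion.} $\mathcal J_{\mathcal N}(c,d,k)$ is the intersection of the compact unit $C^2$--sphere with a closed set. $\mathcal I_{\mathcal N}(c,d,k)$ is the intersection of the unit $\n{\cdot}_{\mathcal N}$--sphere with a closed set, and that sphere is compact by the norm equivalence established above. Hence both sets are compact.

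The only point that needs care is finiteness of $\mathcal N$. An $r$--net of the compact disk can always be taken finite, as in \Cref{subsec:EffectComp}, and then the max and min defining $\n{\cdot}_{\mathcal N}$ and $\delta_{\mathcal N}$ are attained and continuous. If $\mathcal N$ were an arbitrary infinite net, we would instead argue as follows. On the bounded set $\{\n{P}_{C^2(D_R^n)}\le C\}$, the function $P\mapsto\n{P(y)}^2+\sigma_{\min}(\Jac P(y))^2$ is Lipschitz in $P$ uniformly in $y\in D_R^n$. A supremum or infimum of a uniformly Lipschitz family is still Lipschitz, so both constraint functions remain continuous on bounded sets, and the same argument goes through.
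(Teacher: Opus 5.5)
Your proof is correct and follows essentially the same route as the paper: closedness from the continuity of $\delta_{\mathcal N}$, $\|\cdot\|_{\mathcal N}$ and the $C^2$--norm on the finite-dimensional space $\mathbb R[x_1,\dots,x_n]_{\le d}^c$, and boundedness of $\mathcal I_{\mathcal N}(c,d,k)$ from the inequality $\|P\|_{C^2(D_R^n)}\le (1-\eta)^{-1}\|P\|_{\mathcal N}$ of Proposition~\ref{prop:DiscretizedC2Norm}. Your extra remarks, that $\|\cdot\|_{\mathcal N}$ is a genuine norm and that the argument extends to infinite nets, are correct but not needed, since the paper works with finite nets.
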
 \begin{proof} Since $\mathbb R[x_1,\dots,x_n]_{\le d}^c$ is finite-dimensional, it is enough to prove that the two sets are closed and bounded.
\
\\
 \noindent\textbf{I)} We first consider $\mathcal J_{\mathcal N}(c,d,k)$. For every fixed $y\in\mathcal N$, the maps $P\mapsto P(y)$ and $P\mapsto\operatorname{Jac}P(y)$ are continuous in the coefficients of $P$. Since the smallest singular value is a continuous function of the matrix entries, the map $P\mapsto\delta_{\mathcal N}(P)$ is continuous. Hence the condition $\delta_{\mathcal N}(P)\ge 1/k$ defines a closed set. Moreover, the constraint $\n{P}_{C^2(D_R^n)}=1$ defines a compact subset of the finite-dimensional space $\mathbb R[x_1,\dots,x_n]_{\le d}^c$. Therefore $\mathcal J_{\mathcal N}(c,d,k)$ is compact. 
 \
 \\
\noindent\textbf{II)} We now consider $\mathcal I_{\mathcal N}(c,d,k)$. Since both $P\mapsto\n{P}_{\mathcal N}$ and $P\mapsto\delta_{\mathcal N}(P)$ are continuous, the constraints $$\delta_{\mathcal N}(P)\ge\frac1k,\qquad \n{P}_{\mathcal N}=1$$ define a closed set. On the other hand, since $r\le r_0(\eta,d,R)$, \Cref{prop:DiscretizedC2Norm} gives $$\n{P}_{C^2(D_R^n)}\le\frac{1}{1-\eta}\n{P}_{\mathcal N}=\frac{1}{1-\eta}$$ for every $P\in\mathcal I_{\mathcal N}(c,d,k)$. Thus $\mathcal I_{\mathcal N}(c,d,k)$ is bounded in the $C^2$--norm. Being closed and bounded in a finite-dimensional vector space, it is compact. \end{proof}

Since $\mathcal J_{\mathcal N}(c,d,k)$ and $\mathcal I_{\mathcal N}(c,d,k)$ differ only through the normalization constraint, the homogeneity of the cost functional yields the following comparison between their optimal values.

\begin{prop}\label{prop:discrete_shell_bounds} Let $0<\eta<1$ and let $\mathcal N$ be an $r$--net of $D_R^n$ with $r\le r_0(\eta,d,R)$, where $r_0$ is the radius provided by \Cref{prop:DiscretizedC2Norm}. Let $\mathcal E(P)$ be a continuous cost functional homogeneous of degree two and set $\lambda(\eta)=1/(1-\eta)^2$. Then $$\min_{P\in\mathcal I_{\mathcal N}(c,d,k)}\mathcal E(P)\le\lambda(\eta)\min_{P\in\mathcal J_{\mathcal N}(c,d,k)}\mathcal E(P),\quad\text{and}\quad\min_{P\in\mathcal J_{\mathcal N}(c,d,\sqrt{\lambda(\eta)}\,k)}\mathcal E(P)\le\min_{P\in\mathcal I_{\mathcal N}(c,d,k)}\mathcal E(P).$$ \end{prop}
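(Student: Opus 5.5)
The plan is to compare the two feasible sets by rescaling. Both constraints are invariant under rescaling: $\delta_{\mathcal N}(tP)=|t|\,\delta_{\mathcal N}(P)$ and $\n{tP}_{\mathcal N}=|t|\n{P}_{\mathcal N}$, and similarly for the $C^2$--norm. Rescaling therefore preserves the ratios $\n{P}_{C^2(D_R^n)}/\delta_{\mathcal N}(P)$ and $\n{P}_{\mathcal N}/\delta_{\mathcal N}(P)$. Homogeneity gives $\mathcal E(tP)=t^2\mathcal E(P)$, so each inequality reduces to mapping a minimizer of one problem into the other feasible set with a controlled scaling factor. Minimizers exist by \Cref{prop:FeasPolyComp}, since $\mathcal E$ is continuous and the sets are compact. (If $\mathcal J_{\mathcal N}(c,d,\sqrt{\lambda}k)$ or $\mathcal I_{\mathcal N}$ is empty, the inequality is read with the convention $\min\emptyset=+\infty$; I expect the rescaling maps below to transfer nonemptiness appropriately.) Throughout I use \Cref{prop:DiscretizedC2Norm}: $\n{P}_{\mathcal N}\le\n{P}_{C^2(D_R^n)}\le(1-\eta)^{-1}\n{P}_{\mathcal N}$.

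For the first inequality, take a minimizer $P\in\mathcal J_{\mathcal N}(c,d,k)$, so $\n{P}_{C^2(D_R^n)}=1$ and $\delta_{\mathcal N}(P)\ge 1/k$. Set $t:=1/\n{P}_{\mathcal N}$. Since $1-\eta\le\n{P}_{\mathcal N}\le 1$, we get $1\le t\le (1-\eta)^{-1}$. Then $Q:=tP$ satisfies $\n{Q}_{\mathcal N}=1$ and $\delta_{\mathcal N}(Q)=t\,\delta_{\mathcal N}(P)\ge 1/k$, because $t\ge1$. Hence $Q\in\mathcal I_{\mathcal N}(c,d,k)$, and
\[\min_{\mathcal I_{\mathcal N}(c,d,k)}\mathcal E\le\mathcal E(Q)=t^2\mathcal E(P)\le\lambda(\eta)\min_{\mathcal J_{\mathcal N}(c,d,k)}\mathcal E.\]
The norm $\n{P}_{\mathcal N}$ is nonzero because $\n{P}_{\mathcal N}\ge(1-\eta)\n{P}_{C^2}>0$.

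For the second inequality, take a minimizer $Q\in\mathcal I_{\mathcal N}(c,d,k)$, so $\n{Q}_{\mathcal N}=1$ and $\delta_{\mathcal N}(Q)\ge1/k$. Set $s:=1/\n{Q}_{C^2(D_R^n)}$. Since $1\le\n{Q}_{C^2(D_R^n)}\le(1-\eta)^{-1}$, we get $1-\eta\le s\le1$. Then $P:=sQ$ has unit $C^2$--norm and
\[\delta_{\mathcal N}(P)=s\,\delta_{\mathcal N}(Q)\ge\frac{1-\eta}{k}=\frac{1}{\sqrt{\lambda(\eta)}\,k}.\]
Hence $P\in\mathcal J_{\mathcal N}(c,d,\sqrt{\lambda(\eta)}\,k)$, and $\mathcal E(P)=s^2\mathcal E(Q)\le\mathcal E(Q)$, which gives the claim.

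There is no serious obstacle. The one point needing care is the direction of the scaling factor in each case. In the first case $t\ge1$, so rescaling keeps the discriminant bound at $1/k$ but costs the factor $\lambda$ in the objective. In the second case $s\le1$, so the objective does not increase but the discriminant bound relaxes to $1/(\sqrt{\lambda}k)$. The definition $\lambda=(1-\eta)^{-2}$ is calibrated to match exactly these two losses.
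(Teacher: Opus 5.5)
Your proposal is correct and follows essentially the same route as the paper: you rescale a feasible point of $\mathcal J_{\mathcal N}(c,d,k)$ by $1/\n{P}_{\mathcal N}$ to land in $\mathcal I_{\mathcal N}(c,d,k)$, and a feasible point of $\mathcal I_{\mathcal N}(c,d,k)$ by $1/\n{P}_{C^2(D_R^n)}$ to land in $\mathcal J_{\mathcal N}(c,d,\sqrt{\lambda(\eta)}\,k)$, using \Cref{prop:DiscretizedC2Norm} to bound the scaling factors. The only difference is cosmetic: the paper applies the rescaling to an arbitrary feasible $P$ and then takes the minimum, whereas you apply it directly to a minimizer.
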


\begin{proof} For the first inequality, let $P\in\mathcal J_{\mathcal N}(c,d,k)$ and define $$U:=\frac{P}{\n{P}_{\mathcal N}}.$$ Since $\n{P}_{C^2(D_R^n)}=1$, \Cref{prop:DiscretizedC2Norm} gives $$1-\eta\le\n{P}_{\mathcal N}\le1.$$ By construction, $\n{U}_{\mathcal N}=1$, while $$\delta_{\mathcal N}(U)=\frac{\delta_{\mathcal N}(P)}{\n{P}_{\mathcal N}}\ge\delta_{\mathcal N}(P)\ge\frac1k.$$ Hence $U\in\mathcal I_{\mathcal N}(c,d,k)$. Since $\mathcal E$ is homogeneous of degree two, $$\mathcal E(U)=\frac{1}{\n{P}_{\mathcal N}^2}\mathcal E(P)\le\frac{1}{(1-\eta)^2}\mathcal E(P)=\lambda(\eta)\mathcal E(P).$$ Taking the minimum over $P\in\mathcal J_{\mathcal N}(c,d,k)$ yields $$\min_{P\in\mathcal I_{\mathcal N}(c,d,k)}\mathcal E(P)\le\lambda(\eta)\min_{P\in\mathcal J_{\mathcal N}(c,d,k)}\mathcal E(P).$$ For the second inequality, let $P\in\mathcal I_{\mathcal N}(c,d,k)$ and define $$P_c:=\frac{P}{\n{P}_{C^2(D_R^n)}}.$$ By construction, $\n{P_c}_{C^2(D_R^n)}=1$. Since $\n{P}_{\mathcal N}=1$, \Cref{prop:DiscretizedC2Norm} gives $$1\le\n{P}_{C^2(D_R^n)}\le\frac{1}{1-\eta}.$$ Therefore $$\delta_{\mathcal N}(P_c)=\frac{\delta_{\mathcal N}(P)}{\n{P}_{C^2(D_R^n)}}\ge\frac{1-\eta}{k}=\frac{1}{\sqrt{\lambda(\eta)}\,k},$$ and hence $P_c\in\mathcal J_{\mathcal N}(c,d,\sqrt{\lambda(\eta)}\,k)$. Moreover, $$\mathcal E(P_c)=\frac{1}{\n{P}_{C^2(D_R^n)}^2}\mathcal E(P)\le\mathcal E(P).$$ Taking the minimum over $P\in\mathcal I_{\mathcal N}(c,d,k)$ yields $$\min_{P\in\mathcal J_{\mathcal N}(c,d,\sqrt{\lambda(\eta)}\,k)}\mathcal E(P)\le\min_{P\in\mathcal I_{\mathcal N}(c,d,k)}\mathcal E(P),$$ which completes the proof. \end{proof}

\section{Decision Procedure and Sample Complexity}
\label{subsec:DecisionSample}

\subsection{Algorithmic Decision Procedure}
\label{subsec:Algorithmic_Decision} Let $X$ be a random variable with law $\mu\in\mathcal M_{\tau}(D_R^n)$, and let $\{x^{(1)},\dots,x^{(m)}\}$ be an i.i.d.\ sample drawn from $\mu$. The testing problem for the class $\mathcal{H}_n(l,d,k)$ is reduced to the discretized empirical minimization problem
\begin{equation}\label{scellOpt}
\min_{P\in\mathcal I_{\mathcal N}(c,d,k)}\mathcal E_m(P).
\end{equation}

The comparison results of \Cref{subsec:EffectComp} relate this problem to the corresponding minimization over $\mathcal J_{\mathcal N}(c,d,k)$ and, through the latter, to the continuous polynomial class. Since $\mathcal E_m$ is continuous and both $\mathcal I_{\mathcal N}(c,d,k)$ and $\mathcal J_{\mathcal N}(c,d,k)$ are compact by \Cref{prop:FeasPolyComp}, the associated minimization problems admit minimizers. The quantitative loss in passing between the two discretized formulations is governed by the parameter $\lambda(\eta)$, with $\lambda(\eta)\to1$ as $\eta\to0$. In particular, $\eta$ may be fixed so that $\lambda(\eta)$ is a universal constant; for example, $\eta=1-1/\sqrt2$ gives $\lambda(\eta)=2$. The dependence on the degree $d$ enters through the discretization scale required by the norm estimates.

The following result, proved in Part II, provides the algorithmic procedure for the discretized optimization problem.

\begin{thm}\label{thm:RelaxedAdmissibleGlobalOptimizationPartI}
Fix a degree $d$, a condition-number bound $k\geq1$, and $0<\varepsilon\leq\bar\varepsilon$. Then there exist a constant $\Lambda_0>1$ and $h>0$ such that \Cref{alg:relaxed-global} outputs a polynomial system $\hat P\in\mathcal I_{\mathcal N}(c,d,\Lambda_0k)$ satisfying one of the following alternatives:
\begin{enumerate}[label=(\roman*)]
\item The output satisfies $\mathcal E(\hat P)\leq\varepsilon$.
\item The output satisfies $\mathcal E(\hat P)>\varepsilon$, and $$\min_{P\in\mathcal I_{\mathcal N}(c,d,k)}\mathcal E(P) \ge \min\left\{\frac{\mathcal E(\hat P)}{2},\bar\varepsilon\right\}. $$
\end{enumerate}
\end{thm}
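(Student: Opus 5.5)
We plan to prove the statement by analysing the grid search of \Cref{alg:relaxed-global} on the quadratic cost $\mathcal E(A)=\operatorname{tr}(AVA^\top)$, written in coefficient coordinates $P=AP_{\mathcal B}$.

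\textbf{Localization.} First we confine the region that must be searched. Every $P$ with $\mathcal E(P)\le\varepsilon^*$ satisfies $\sum_j \varepsilon_j\n{A v_j}^2\le\varepsilon^*$, where the $v_j$ are eigenvectors of $V$. Hence the component of $A$ orthogonal to $\mathcal V_r$ has Frobenius norm at most $\sqrt{\varepsilon^*/\varepsilon_{r+1}}$. Converting to the discretized norm through the equivalence constant $M_{\mathcal N}$ gives
$$\mathcal U(\varepsilon^*)\subseteq\operatorname{Tube}_{\mathcal N}(\mathcal V_r,\rho_r(\varepsilon^*)).$$
Intersected with the bounded feasible set of \Cref{prop:FeasPolyComp}, this tube is bounded. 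We apply it with $\varepsilon^*:=2\bar\varepsilon$, so that every system of interest lies in the tube for the data-chosen index $r^*$.

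\textbf{Relaxed-admissible grid and perturbation estimates.} Next we build an $h$--net $\mathcal G_h$ of this bounded tube and keep the centers $G$ with $K_{\mathcal N}(G)\le k\Lambda(h)$ and $\n{G}_{\mathcal N}=1$ after normalization. The key point is that the three relevant functionals are Lipschitz in $\n{\cdot}_{\mathcal N}$:
\begin{itemize}
\item $\delta_{\mathcal N}$ and $\n{\cdot}_{\mathcal N}$ are $1$--Lipschitz, by the perturbation bound for $\sigma_{\min}$ used in \Cref{prop:SVContinuity};
\item $\sqrt{\mathcal E}$ is Lipschitz with constant bounded via \Cref{prop:DiscretizedC2Norm}, since $|P(x)|\le\n{P}_{C^2}\le(1-\eta)^{-1}\n{P}_{\mathcal N}$.
\end{itemize}
Consequently, for every $P\in\mathcal I_{\mathcal N}(c,d,k)$ there is a grid center $G$ with $\n{P-G}_{\mathcal N}\le h$. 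After renormalization, $G$ satisfies $\delta_{\mathcal N}\ge 1/k-h$, which gives $K_{\mathcal N}\le k/(1-kh)(1-h)^{-1}=:k\Lambda(h)$. We fix $h$ small, depending on $k$, so that $\Lambda(h)\le\Lambda_0$. Then every retained center, and in particular the output $\hat P$ (the minimizer of $\mathcal E$ over the retained grid), lies in $\mathcal I_{\mathcal N}(c,d,\Lambda_0 k)$.

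\textbf{Dichotomy.} If $\mathcal E(\hat P)\le\varepsilon$ we are in alternative (i). Otherwise, let $P^\star$ minimize $\mathcal E$ over $\mathcal I_{\mathcal N}(c,d,k)$; it exists by compactness, \Cref{prop:FeasPolyComp}. We split into two cases.
\begin{itemize}
\item If $\mathcal E(P^\star)\ge\bar\varepsilon$, then (ii) holds trivially.
\item If $\mathcal E(P^\star)<\bar\varepsilon$, then $P^\star$ lies in the tube, so some retained center $G$ is $h$--close to it. By the Lipschitz bound, $\sqrt{\mathcal E(G)}\le\sqrt{\mathcal E(P^\star)}+Lh$. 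Hence $\mathcal E(\hat P)\le\mathcal E(G)\le 2\mathcal E(P^\star)$, provided $Lh\le(\sqrt2-1)\sqrt{\mathcal E(P^\star)}$.
\end{itemize}

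\textbf{Main obstacle.} The last condition is not uniform when $\mathcal E(P^\star)$ is tiny. To handle this I would take the additive form: choose $h$ with $(Lh)^2\lesssim\varepsilon$. Then $\mathcal E(G)\le 2\mathcal E(P^\star)+2(Lh)^2$. If $\mathcal E(P^\star)<\varepsilon/4$, this gives $\mathcal E(\hat P)\le\varepsilon$, contradicting $\mathcal E(\hat P)>\varepsilon$, so that case is excluded. If instead $\varepsilon/4\le\mathcal E(P^\star)<\bar\varepsilon$, the multiplicative bound holds once $Lh\le(\sqrt2-1)\sqrt{\varepsilon}/2$. This explains why $h$ must depend on $\varepsilon$. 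The delicate point is checking that the renormalizations in the relaxed-admissibility step do not spoil the factor $2$; I expect to absorb them by a slightly smaller $h$, using that $\Lambda(h)\to1$.
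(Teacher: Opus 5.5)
Your argument is correct and has the same skeleton as the paper's proof (given as \Cref{thm:RelaxedAdmissibleGlobalOptimizationFinal}): spectral tube localization, a net on $S_{\mathcal N}^{cN-1}$ filtered by a relaxed condition-number test justified by Lipschitz stability of $\delta_{\mathcal N}$, and a dichotomy on whether the constrained minimum lies below $\bar\varepsilon$. The genuine difference is how you get the factor $2$ in (ii).

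The paper uses the additive bound $|\mathcal E(A)-\mathcal E(B)|\le L_{\mathcal E}\|A-B\|_{\mathcal N}$ on the sphere from \Cref{lem:LipschitzCost} and chooses $h$ with $L_{\mathcal E}h\le\varepsilon/2$. The grid then contains a point of cost at most $\min_{\mathcal I_{\mathcal N}(c,d,k)}\mathcal E+\varepsilon/2$. Hence $\mathcal E(\hat P)>\varepsilon$ forces $\min_{\mathcal I_{\mathcal N}(c,d,k)}\mathcal E\ge\mathcal E(\hat P)-\varepsilon/2>\mathcal E(\hat P)/2$. (The paper phrases this via \Cref{lem:BufferedAdmissibleDetection} for every $t<\min\{\mathcal E(\hat P)/2,\bar\varepsilon\}$.)

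So the ``main obstacle'' you identify does not actually arise. The hypothesis $\mathcal E(\hat P)>\varepsilon$ is what turns an additive error into a multiplicative one, and no case split on the size of $\mathcal E(P^\star)$ is needed.

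Your route through the seminorm $\sqrt{\mathcal E}$ pays for that case split but is sharper. It needs only $h\asymp\sqrt{\varepsilon}$, together with $h\lesssim k^{-2}$ for the admissibility relaxation. The additive bound instead forces $h\asymp\varepsilon$ (the paper takes $h=\min\{\varepsilon/2,1\}/(8k^2)$). Your version would therefore improve the $1/h$ factors in the grid cardinality and in the arithmetic complexity bound.

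Your worry about renormalization is moot if, as in \Cref{thm:ExplicitLocalizedTubeCovering}, the $h$-net is built directly on $S_{\mathcal N}^{cN-1}$. Your $1$-Lipschitz bound for $\delta_{\mathcal N}$ then gives $K_{\mathcal N}(G)\le k/(1-kh)\le k/\sqrt{1-2k^2h}$. So the center near $P^\star$ also passes the filter of \Cref{alg:relaxed-global} with the paper's $\Lambda(h)$, and you need not redefine $\Lambda$.

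One point you leave out is the empty grid. If the relaxed grid is empty, \Cref{alg:relaxed-global} returns EMPTY and there is no $\hat P$ at all; you tacitly assume an output exists. Your own covering step shows this can only happen when $\min_{\mathcal I_{\mathcal N}(c,d,k)}\mathcal E>2\bar\varepsilon$ at your localization level. The paper records this case explicitly, and \Cref{thm:MainTheoremPartI} relies on it to certify alternative (B).
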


The result above provides the algorithmic component of the testing procedure. To transfer its conclusions from the empirical optimization problem to the population risk, we also need to compare the empirical functional $\mathcal E_m$ with its population counterpart $\mathcal E_\mu$ uniformly over the relevant feasible set. The required deviation estimates, together with the resulting sample complexity bounds, are established in \Cref{sec:IPRSamCompBounds}. Combining these statistical bounds with the algorithmic result above yields the main theorem of Part I.

\begin{thm}
\label{thm:MainTheoremPartI}
Fix a dimension $l$, a degree $d$, a condition number bound $k$, a tolerance $0<\varepsilon\leq4R^2$, and a confidence level $\alpha\in(0,1)$. Let $\mu\in\mathcal M_{\tau}(D_R^n)$ for some $0<\tau<R$, and let $X$ be a random variable with law $\mu$. Assume that \Cref{thm:RelaxedAdmissibleGlobalOptimizationPartI} holds. Then there exist a universal constant $C'\geq1$, a constant $C\geq1$ depending only on $k$, an integer $m(n,l,d,k,\varepsilon,\alpha)<\infty$, and an effective learning algorithm with the following property. Upon processing at least $m(n,l,d,k,\varepsilon,\alpha)$ i.i.d.\ samples from $X$, the algorithm distinguishes, with probability at least $1-\alpha$, among the three possible alternatives (A), (B), and ``(A) and (B)'', where:
\begin{enumerate}[label=(\Alph*)]
\item there exists a manifold $Z\in\mathcal H_n(l,d,kC')$ such that $$\mathbb E[\operatorname{dist}(X,Z)^2]\leq C\,\varepsilon;$$
\item there exists no manifold $Z\in\mathcal H_n(l,d,k)$ such that $$\mathbb E[\operatorname{dist}(X,Z)^2]\leq\frac{\varepsilon}{C}.$$
\end{enumerate}
\end{thm}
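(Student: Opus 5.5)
The plan is to build the learning algorithm as a wrapper around \Cref{alg:relaxed-global}, applied to the empirical cost $\mathcal E_m$ on a fixed net $\mathcal N$. Its output is then translated into (A) or (B) using the comparison results of \Cref{sec:Auxiliary_Optimization_Discretization}. I fix $\eta=1-1/\sqrt2$, so that $\lambda(\eta)=2$. I then take an $r$--net $\mathcal N$ of $D_R^n$ with
$$r\le\min\left\{r_0(\eta,d,R),\ \frac{1}{2\sqrt2\,\Lambda_0 k'},\ r(\tau,\Lambda_0k')\right\},\qquad k':=\sqrt2\,k.$$
This radius is small enough for \Cref{prop:DiscretizedC2Norm}, \Cref{prop:FeasibleNet} and \Cref{prop:ZeroFunctional}, all with condition bound $\Lambda_0k'$. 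Points of $\mathcal N\cap D^n_{R-\tau}$ serve as the net for the zero functional.

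\textbf{Statistical step.} By \Cref{prop:FeasPolyComp}, every $P\in\mathcal I_{\mathcal N}(c,d,\Lambda_0k')$ satisfies $\n{P}_{C^2}\le\sqrt2$. Hence $x\mapsto\n{P(x)}^2$ is bounded by $2$ and is Lipschitz in the coefficients, uniformly over this class. A covering argument over the finite-dimensional coefficient space, of dimension $c\binom{n+d}{d}=\mathcal O(n^{d+1})$, combined with Hoeffding's inequality and a union bound, gives the following. For $m\ge m(n,l,d,k,\varepsilon,\alpha)$, with probability at least $1-\alpha$,
$$\sup_{P}|\mathcal E_m(P)-\mathcal E_\mu(P)|\le\varepsilon_0,$$
where the supremum runs over the class and $\varepsilon_0=\varepsilon/C_0$ for a $k$-dependent constant $C_0$ fixed below. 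This is the content of \Cref{sec:IPRSamCompBounds}, which I take as given; on this event the rest of the argument is deterministic.

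\textbf{Decision rule.} If $\varepsilon\ge c_0(k)$ for a suitable threshold, I output (A) trivially. The reason is that $\mathcal H_n(l,d,k)$ is nonempty (a normalized affine subspace through the origin, with bounded condition number), and every risk is at most $4R^2$. Choosing $C\ge4R^2/c_0(k)$ then gives $C\varepsilon\ge4R^2$. Otherwise I run \Cref{thm:RelaxedAdmissibleGlobalOptimizationPartI} with threshold $\varepsilon_1=\varepsilon/C_1$ and obtain $\hat P\in\mathcal I_{\mathcal N}(c,d,\Lambda_0k)$, and split into two cases.
\begin{enumerate}[label=(\roman*)]
\item If $\mathcal E_m(\hat P)\le\varepsilon_1$, I output (A) (possibly together with (B)).
\begin{itemize}
\item Normalize to $P_c=\hat P/\n{\hat P}_{C^2}$. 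By \Cref{prop:discrete_shell_bounds} and \Cref{prop:FeasibleNet}, $P_c\in\mathcal P_n(c,d,2\sqrt2\sqrt2\Lambda_0k)$, so $C'=4\Lambda_0$ is universal.
\item To see $P_c\in\mathcal P_n^*$: some sample point $x^{(i)}$, lying in $D^n_{R-\tau}$, has $\n{P_c(x^{(i)})}^2\le\varepsilon_1$. A net point $y$ within distance $r$ then satisfies $\n{P_c(y)}\le\sqrt{\varepsilon_1}+r$, since $\n{P_c}_{C^1}\le1$.
\item Choosing $r$ and $c_0(k)$ small makes this below $C''$, so \Cref{prop:ZeroFunctional} gives $P_c\in\mathcal P_n^*$.
\item Then \Cref{thm:ProblemReduction} yields $\mathcal R_\mu(Z(P_c))\le C'(\cdot)\,\mathcal E_\mu(P_c)\le C'(\cdot)(\varepsilon_1+\varepsilon_0)\le C\varepsilon$.
\end{itemize}
\item Otherwise, alternative (ii) of \Cref{thm:RelaxedAdmissibleGlobalOptimizationPartI} gives $\min_{\mathcal I_{\mathcal N}(c,d,k)}\mathcal E_m\ge\min\{\varepsilon_1/2,\bar\varepsilon\}$, and I output (B).
\begin{itemize}
\item Take any $P\in\mathcal P^*_n(c,d,k)$. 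By \Cref{prop:FeasibleNet}, $P\in\mathcal J_{\mathcal N}(c,d,k)$.
\item The first inequality of \Cref{prop:discrete_shell_bounds} gives $\mathcal E_m(P)\ge\tfrac12\min_{\mathcal I_{\mathcal N}(c,d,k)}\mathcal E_m\ge\varepsilon_1/4$, taking $\bar\varepsilon\ge\varepsilon_1$.
\item Hence $\mathcal R_\mu(Z(P))\ge\mathcal E_\mu(P)\ge\varepsilon_1/4-\varepsilon_0>\varepsilon/C$ once $C_0\gg C_1$, and no $Z\in\mathcal H_n(l,d,k)$ has risk at most $\varepsilon/C$.
\end{itemize}
\end{enumerate}

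\textbf{Main obstacle.} The delicate point is the constant bookkeeping in case (i). The zero certificate requires $\varepsilon_1$ to be small compared with $C''(\Lambda_0k',r)$, which depends only on $k$. The $\mathcal E_m$-to-$\mathcal E_\mu$ transfer requires $\varepsilon_0\ll\varepsilon_1$. Both scales must be proportional to $\varepsilon$, with $k$-dependent factors only, so that a single $C(k)$ works in (A) and (B). This forces the separate treatment of large $\varepsilon$ above. A second subtlety is making sure the uniform deviation bound covers the relaxed class $\mathcal I_{\mathcal N}(c,d,\Lambda_0 k)$, which contains the output $\hat P$, and not only the nominal class.
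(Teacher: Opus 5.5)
Your architecture matches the paper's proof. You normalize $\hat P$ to unit $C^2$--norm, use \Cref{prop:discrete_shell_bounds} and \Cref{prop:FeasibleNet} to land in $\mathcal P_n(c,d,C'k)$, certify a zero, and transfer through the deviation bound and \Cref{thm:ProblemReduction}. Your two-case split with $\bar\varepsilon\ge\varepsilon_1$ is a legitimate simplification of the paper's three intervals with $\bar\varepsilon=\varepsilon_2=3\varepsilon/4$. Two steps, however, fail as written.

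First, the branch that outputs (A) whenever $\varepsilon\ge c_0(k)$ is unjustified. The system $x\mapsto(x_1,\dots,x_c)$ has $C^2(D_R^n)$--norm $\sqrt{1+R^2}$ and $\delta_{D_R^n}=1$, so its condition number is $\sqrt{1+R^2}$, not at most $k$. In fact, for $d=1$ every admissible system is affine, $P(x)=Mx+b$, vanishing at some interior point $x_0$. Then $\delta_{D_R^n}(P)=\sigma_{\min}(M)$, while $\n{P}_{C^2(D_R^n)}^2\ge\sigma_{\max}(M)^2(1+R^2)$. Hence $\mathcal H_n(l,1,kC')=\emptyset$ whenever $kC'<\sqrt{1+R^2}$, which happens for large $R$ because $C'$ must be universal. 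In that regime (A) is false and your rule outputs a wrong alternative. The branch is also unnecessary, since the statement assumes $\varepsilon\le4R^2$. Choosing $C_1\ge4R^2/c_0(k)$ gives $\varepsilon_1=\varepsilon/C_1\le c_0(k)$ automatically. This is exactly the paper's device of taking $C'(\sqrt{\lambda}\Lambda_0k',\tau,r')\ge4R^2/C''$, which yields $\varepsilon_1\le\frac34C''$.

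Second, certifying the zero through a nearby net point cannot work with \Cref{prop:ZeroFunctional} at mesh $r$. Its constant is $C''(\kappa,r)=\min\{1/(64\kappa^4),\,r^2/(128\kappa^2)\}<r^2$. Your bound $\n{P_c(y)}^2\le(\sqrt{\varepsilon_1}+r)^2$ has right-hand side at least $r^2$, so it can never drop below $C''(\kappa,r)$. Refining the net does not help, because $C''$ shrinks quadratically in $r$. In addition, $\mathcal N\cap D^n_{R-\tau}$ need not be an $r$--net of $D^n_{R-\tau}$, and the net point near $x^{(i)}$ may lie outside it.

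The missing idea is to certify the zero at the sample point itself. Since $x^{(i)}\in\operatorname{supp}(\mu)\subseteq D^n_{R-\tau}$, \Cref{thm:HdZeroes} applies directly at $x^{(i)}$, with $\delta=1/\kappa$ and a fixed radius $r(\kappa,\tau)$. Here $\kappa=C'k$ is the level at which $P_c$ is admissible, not $\Lambda_0k'$. You then only need $\varepsilon_1\le C(1/\kappa,\kappa,r(\kappa,\tau))$, which the first repair supplies. The paper implements this by adjoining the samples to the zero-functional net $\mathcal N^-(r')$, so that $\zeta_{r'}(\hat P')\le\min_i\n{\hat P'(x^{(i)})}^2\le\mathcal E_m(\hat P')$. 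With these two repairs, the rest of your argument goes through, including case (ii) and the bookkeeping $C_0\gg C_1$.
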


\begin{proof} Throughout the proof, the dependence of the auxiliary radii and constants on the discretization parameter $\eta$ and on the interior parameter $\tau$ is left implicit whenever no ambiguity can arise. The parameter $\eta$ is chosen sufficiently small to guarantee the discretization and relaxation bounds used below, while $\tau$ is fixed by the assumption $\mu\in\mathcal M_{\tau}(D_R^n)$. Let $\mu\in\mathcal M_{\tau}(D_R^n)$ and fix the hypothesis parameters $l,d,k$, an input threshold $0<\varepsilon\leq4R^2$, and a confidence level $1-\alpha$ with $\alpha\in(0,1)$. Denote $R_{\tau}:=R-\tau$ and set $c:=n-l$. Let $\Lambda_0>1$ be the constant provided by \Cref{thm:RelaxedAdmissibleGlobalOptimizationPartI}. To establish the decision rule, we must appropriately define the nets for $D_R^n$ and $D_{R_{\tau}}^n$, identify the corresponding feasible-set parameters, and track the resulting geometric constants. First recall that, by \Cref{prop:discrete_shell_bounds}, we have \begin{equation}\label{eq:RelaxBound} \min_{P\in\mathcal I_{\mathcal N}(c,d,k)}\mathcal E(P)\leq\lambda\min_{P\in\mathcal J_{\mathcal N}(c,d,k)}\mathcal E(P), \qquad \min_{P\in\mathcal J_{\mathcal N}(c,d,\sqrt{\lambda}\Lambda_0k)}\mathcal E(P)\leq\min_{P\in\mathcal I_{\mathcal N}(c,d,\Lambda_0k)}\mathcal E(P). \end{equation} We proceed as follows. \begin{enumerate} \item We first characterize the feasible set of polynomials for the optimization. Let $r(k)$ be the radius provided by \Cref{prop:FeasibleNet}, and fix a net $\mathcal N$ of $D_R^n$ with balls of radius $r$ sufficiently small so that \Cref{prop:FeasibleNet} and \Cref{prop:discrete_shell_bounds} apply at all the condition-number levels considered below. By \Cref{prop:FeasibleNet}, we have the inclusions \begin{equation}\label{FeasableInclusion} \begin{aligned} \mathcal P_n(c,d,k)&\subseteq\mathcal J_{\mathcal N}(c,d,k)\subseteq\mathcal P_n(c,d,k'),\\ \mathcal P_n(c,d,\sqrt{\lambda}\Lambda_0k)&\subseteq\mathcal J_{\mathcal N}(c,d,\sqrt{\lambda}\Lambda_0k)\subseteq\mathcal P_n(c,d,\sqrt{\lambda}\Lambda_0k'), \end{aligned} \end{equation} where $k':=C'''k$, with $C'''>1$ the universal constant from \Cref{prop:FeasibleNet}. \item We choose $$r':=\min\left\{\frac{1}{8\sqrt{2}\,\sqrt{\lambda}\Lambda_0k'},\frac{\tau}{8(\sqrt{\lambda}\Lambda_0k')^2}\right\}.$$ Let $\mathcal N^-(r')$ be a net of $D_{R_{\tau}}^n$ chosen so as to include the sample points, that is, $$\{x^{(1)},\dots,x^{(m)}\}\subseteq\mathcal N^-(r').$$ For any polynomial system $Q$, write $$\zeta_{r'}(Q):=\min_{y\in\mathcal N^-(r')}\n{Q(y)}^2.$$ In particular, for every $P\in\mathcal P_n(c,d,\sqrt{\lambda}\Lambda_0k')$, the zero functional associated with $\mathcal N^-(r')$ satisfies \begin{equation}\label{eq:ZeroEval} \zeta_{r'}(P)\leq\min_{i=1,\dots,m}\n{P(x^{(i)})}^2\leq\eval_m(P). \end{equation} \item The inclusion $\mathcal J_{\mathcal N}(c,d,\sqrt{\lambda}\Lambda_0k)\subseteq\mathcal P_n(c,d,\sqrt{\lambda}\Lambda_0k')$ implies, by \Cref{thm:ProblemReduction}, that there exists a constant $C'(\sqrt{\lambda}\Lambda_0k',\tau,r')>0$ such that, for every $P\in\mathcal P_n^*(c,d,\sqrt{\lambda}\Lambda_0k')$, the following bounds hold: \begin{equation}\label{eq1:LossControl} \eval_{\mu}(P)\leq\mathcal R_{\mu}(Z(P)\cap D_R^n)\leq C'(\sqrt{\lambda}\Lambda_0k',\tau,r')\eval_{\mu}(P). \end{equation} More precisely, if $C''(\sqrt{\lambda}\Lambda_0k',r')$ denotes the constant associated with the zero functional in \Cref{prop:ZeroFunctional}, the constant above may be chosen so that $$C'(\sqrt{\lambda}\Lambda_0k',\tau,r')\geq\max\left\{\frac{4R^2}{C''(\sqrt{\lambda}\Lambda_0k',r')},4(\sqrt{\lambda}\Lambda_0k')^2\right\}.$$ \item We now establish the uniform approximation threshold for the functionals $\eval_{\mu}$ and $\eval_m$. Define the precision $$\varepsilon':=\frac{1}{4}\cdot\frac{\varepsilon}{\lambda C'(\sqrt{\lambda}\Lambda_0k',\tau,r')}.$$ Let $m_{\operatorname{ERM}}^*(n,c,d,\varepsilon',\alpha)$ denote the corresponding sample size provided by \Cref{IPRthm:SampleCom}. By taking $$m\geq m_{\operatorname{ERM}}^*(n,n-l,d,\varepsilon',\alpha),$$ the following uniform deviation bound holds with probability at least $1-\alpha$ for every $U\in\mathcal I_{\mathcal N}(c,d,\sqrt{\lambda}\Lambda_0k)$: \begin{equation}\label{eq1:ApproxControl} \big|\eval_{\mu}(U)-\eval_m(U)\big|\leq\frac{\varepsilon}{4\lambda C'(\sqrt{\lambda}\Lambda_0k',\tau,r')}. \end{equation} Since $\mathcal I_{\mathcal N}(c,d,k)\subseteq\mathcal I_{\mathcal N}(c,d,\sqrt{\lambda}\Lambda_0k)$, the same bound holds in particular on $\mathcal I_{\mathcal N}(c,d,k)$. It also holds for every $P\in\mathcal J_{\mathcal N}(c,d,\sqrt{\lambda}\Lambda_0k)$. Indeed, setting $U:=P/\n{P}_{\mathcal N}$, we have $U\in\mathcal I_{\mathcal N}(c,d,\sqrt{\lambda}\Lambda_0k)$, while $\n{P}_{\mathcal N}\leq\n{P}_{C^2(D_R^n)}=1$. By homogeneity, $$\big|\eval_{\mu}(P)-\eval_m(P)\big|=\n{P}_{\mathcal N}^2\big|\eval_{\mu}(U)-\eval_m(U)\big|\leq\frac{\varepsilon}{4\lambda C'(\sqrt{\lambda}\Lambda_0k',\tau,r')}.$$ \end{enumerate} The decision process is based on the following strategy. Set $$\varepsilon_1:=\frac{3\varepsilon}{4C'(\sqrt{\lambda}\Lambda_0k',\tau,r')},\qquad \varepsilon_2:=\frac{3\varepsilon}{4}.$$ Since $C'(\sqrt{\lambda}\Lambda_0k',\tau,r')>1$, we have $\varepsilon_1<\varepsilon_2$. We apply \Cref{thm:RelaxedAdmissibleGlobalOptimizationPartI} to the empirical cost $\mathcal E=\eval_m$, with threshold $\varepsilon_1$ and with the localized grid constructed at level $\varepsilon_2$. If the algorithm returns \textnormal{EMPTY}, then $$\min_{P\in\mathcal I_{\mathcal N}(c,d,k)}\eval_m(P)>\varepsilon_2.$$ On the uniform deviation event \eqref{eq1:ApproxControl}, it follows that $$\min_{P\in\mathcal I_{\mathcal N}(c,d,k)}\eval_\mu(P)\geq\varepsilon_2-\varepsilon'.$$ By \eqref{eq:RelaxBound}, \eqref{FeasableInclusion}, and \Cref{thm:ProblemReduction}, we obtain $$\inf_{Z\in\mathcal H_n(l,d,k)}\mathcal R_\mu(Z)\geq\frac{\varepsilon_2-\varepsilon'}{\lambda}\geq\frac{\varepsilon}{2\lambda},$$ where we used $\varepsilon_2=3\varepsilon/4$ and $\varepsilon'\leq\varepsilon/4$. Thus, choosing the final constant $C>2\lambda$, the algorithm certifies alternative (B) and terminates without returning a candidate manifold. Otherwise, let $\hat P$ be the output of the algorithm. By \Cref{thm:RelaxedAdmissibleGlobalOptimizationPartI}, $$\hat P\in\mathcal I_{\mathcal N}(c,d,\Lambda_0k),$$ and one of the following alternatives holds: \begin{enumerate}[label=(\roman*)] \item The output satisfies $$\eval_m(\hat P)\leq\varepsilon_1.$$ \item The output satisfies $\eval_m(\hat P)>\varepsilon_1$, and $$\min_{P\in\mathcal I_{\mathcal N}(c,d,k)}\eval_m(P)\geq\min\left\{\frac{\eval_m(\hat P)}{2},\varepsilon_2\right\}.$$ \end{enumerate} Since $\hat P\in\mathcal I_{\mathcal N}(c,d,\Lambda_0k)$, define its $C^2$--normalized representative by $$\hat{P}':=\frac{\hat P}{\n{\hat P}_{C^2(D_R^n)}}.$$ By \Cref{prop:DiscretizedC2Norm}, $$1=\n{\hat P}_{\mathcal N}\leq\n{\hat P}_{C^2(D_R^n)}\leq\sqrt{\lambda},$$ and therefore $\hat{P}'\in\mathcal J_{\mathcal N}(c,d,\sqrt{\lambda}\Lambda_0k)$. In particular, by \eqref{FeasableInclusion}, $$\hat{P}'\in\mathcal P_n(c,d,\sqrt{\lambda}\Lambda_0k').$$ Moreover, by homogeneity, $$\eval_m(\hat{P}')=\frac{\eval_m(\hat P)}{\n{\hat P}_{C^2(D_R^n)}^2}\leq\eval_m(\hat P),\qquad \zeta_{r'}(\hat{P}')=\frac{\zeta_{r'}(\hat P)}{\n{\hat P}_{C^2(D_R^n)}^2}\leq\zeta_{r'}(\hat P),$$ and $Z(\hat{P}')=Z(\hat P)$. We now classify the output according to the value of $\eval_m(\hat P)$. We consider the partition $$[0,+\infty)=\left[0,\varepsilon_1\right]\cup\left(\varepsilon_1,\varepsilon_2\right)\cup\left[\varepsilon_2,+\infty\right).$$ We show that the algorithm classifies the hypothesis state according to the interval in which the output $\eval_m(\hat P)$ falls. \medskip \noindent\textbf{I)} First, suppose that $\eval_m(\hat P)\in[0,\varepsilon_1]$. Then $$\eval_m(\hat{P}')\leq\eval_m(\hat P)\leq\varepsilon_1=\frac{3\varepsilon}{4C'(\sqrt{\lambda}\Lambda_0k',\tau,r')}.$$ Hence \begin{equation}\label{eq:EvalVal1} \eval_m(\hat{P}')\leq\frac{3\varepsilon}{4C'(\sqrt{\lambda}\Lambda_0k',\tau,r')}. \end{equation} Let $C''(\sqrt{\lambda}\Lambda_0k',r')$ be the constant associated with the zero functional from \Cref{prop:ZeroFunctional}. By the choice of $C'(\sqrt{\lambda}\Lambda_0k',\tau,r')$, \begin{equation}\label{eq:ZeroOperBound} \eval_m(\hat{P}')\leq\frac{3\varepsilon}{4C'(\sqrt{\lambda}\Lambda_0k',\tau,r')}\leq\frac{3\varepsilon}{4}\frac{C''(\sqrt{\lambda}\Lambda_0k',r')}{4R^2}\leq\frac34C''(\sqrt{\lambda}\Lambda_0k',r')<C''(\sqrt{\lambda}\Lambda_0k',r'), \end{equation} where we used $\varepsilon\leq4R^2$. From \eqref{eq:ZeroEval}, we obtain $$\zeta_{r'}(\hat{P}')\leq\eval_m(\hat{P}')<C''(\sqrt{\lambda}\Lambda_0k',r').$$ Thus, by \Cref{prop:ZeroFunctional}, $\hat{P}'\in\mathcal P_n^*(c,d,\sqrt{\lambda}\Lambda_0k')$. Combining \eqref{eq:EvalVal1} with the deviation bound \eqref{eq1:ApproxControl} yields $$\eval_{\mu}(\hat{P}')\leq\eval_m(\hat{P}')+\frac{\varepsilon}{4\lambda C'(\sqrt{\lambda}\Lambda_0k',\tau,r')}\leq\left(\frac34+\frac{1}{4\lambda}\right)\frac{\varepsilon}{C'(\sqrt{\lambda}\Lambda_0k',\tau,r')}\leq\frac{\varepsilon}{C'(\sqrt{\lambda}\Lambda_0k',\tau,r')}.$$ Finally, applying the loss control bounds from \eqref{eq1:LossControl}, we conclude $$\mathcal R_{\mu}(Z(\hat{P}')\cap D_R^n)\leq C'(\sqrt{\lambda}\Lambda_0k',\tau,r')\eval_{\mu}(\hat{P}')\leq\varepsilon.$$ Therefore, with probability at least $1-\alpha$, there exists $Z\in\mathcal H_n(l,d,\sqrt{\lambda}\Lambda_0k')$ such that $$\mathcal R_{\mu}(Z)\leq\varepsilon.$$ \medskip \noindent\textbf{II)} Suppose now that $\eval_m(\hat P)\in(\varepsilon_1,\varepsilon_2)$. By alternative (ii) of \Cref{thm:RelaxedAdmissibleGlobalOptimizationPartI}, $$\min_{P\in\mathcal I_{\mathcal N}(c,d,k)}\eval_m(P)\geq\min\left\{\frac{\eval_m(\hat P)}{2},\varepsilon_2\right\}=\frac{\eval_m(\hat P)}{2}>\frac{\varepsilon_1}{2}=\frac{3\varepsilon}{8C'(\sqrt{\lambda}\Lambda_0k',\tau,r')}.$$ Combining this bound with the uniform deviation estimate \eqref{eq1:ApproxControl}, we obtain \begin{equation}\label{eq:SecondBound} \min_{P\in\mathcal I_{\mathcal N}(c,d,k)}\eval_{\mu}(P)\geq\frac{3\varepsilon}{8C'(\sqrt{\lambda}\Lambda_0k',\tau,r')}-\frac{\varepsilon}{4\lambda C'(\sqrt{\lambda}\Lambda_0k',\tau,r')}\geq\frac{\varepsilon}{8C'(\sqrt{\lambda}\Lambda_0k',\tau,r')}. \end{equation} By the first inequality in \eqref{eq:RelaxBound}, applied to $\eval_{\mu}$, $$\min_{P\in\mathcal J_{\mathcal N}(c,d,k)}\eval_{\mu}(P)\geq\frac1{\lambda}\min_{P\in\mathcal I_{\mathcal N}(c,d,k)}\eval_{\mu}(P)\geq\frac{\varepsilon}{8\lambda C'(\sqrt{\lambda}\Lambda_0k',\tau,r')}.$$ Since $\mathcal P_n(c,d,k)\subseteq\mathcal J_{\mathcal N}(c,d,k)$ by \eqref{FeasableInclusion}, it follows that $$\frac{\varepsilon}{8\lambda C'(\sqrt{\lambda}\Lambda_0k',\tau,r')}\leq\inf_{P\in\mathcal P_n^*(c,d,k)}\eval_{\mu}(P).$$ Consequently, there is no $P\in\mathcal P_n^*(c,d,k)$ such that $$\eval_{\mu}(P)<\frac{\varepsilon}{8\lambda C'(\sqrt{\lambda}\Lambda_0k',\tau,r')}.$$ By \Cref{thm:ProblemReduction}, we conclude that there is no $Z\in\mathcal H_n(l,d,k)$ such that \begin{equation}\label{eq:NegConcl1} \mathcal R_{\mu}(Z)<\frac{\varepsilon}{8\lambda C'(\sqrt{\lambda}\Lambda_0k',\tau,r')}. \end{equation} We can refine this conclusion by evaluating the zero functional directly on the output $\hat P$. If $$\zeta_{r'}(\hat P)\geq C''(\sqrt{\lambda}\Lambda_0k',r'),$$ the existence of a suitable relaxed manifold is not certified by this test. If instead $$\zeta_{r'}(\hat P)<C''(\sqrt{\lambda}\Lambda_0k',r'),$$ then $$\zeta_{r'}(\hat{P}')\leq\zeta_{r'}(\hat P)<C''(\sqrt{\lambda}\Lambda_0k',r'),$$ and hence, by \Cref{prop:ZeroFunctional}, $\hat{P}'\in\mathcal P_n^*(c,d,\sqrt{\lambda}\Lambda_0k')$. Moreover, since $\eval_m(\hat{P}')\leq\eval_m(\hat P)<\varepsilon_2=3\varepsilon/4$, the deviation bound \eqref{eq1:ApproxControl} gives $$\eval_{\mu}(\hat{P}')\leq\eval_m(\hat{P}')+\frac{\varepsilon}{4\lambda C'(\sqrt{\lambda}\Lambda_0k',\tau,r')}\leq\left(\frac34+\frac{1}{4\lambda C'(\sqrt{\lambda}\Lambda_0k',\tau,r')}\right)\varepsilon.$$ Since $C'(\sqrt{\lambda}\Lambda_0k',\tau,r')\geq1$, \Cref{thm:ProblemReduction} implies $$\begin{aligned} \mathcal R_{\mu}(Z(\hat{P}')\cap D_R^n)&\leq C'(\sqrt{\lambda}\Lambda_0k',\tau,r')\eval_{\mu}(\hat{P}')\\ &\leq\left(\frac34C'(\sqrt{\lambda}\Lambda_0k',\tau,r')+\frac{1}{4\lambda}\right)\varepsilon\\ &\leq C'(\sqrt{\lambda}\Lambda_0k',\tau,r')\varepsilon. \end{aligned}$$ In this case, we additionally conclude that there exists $Z\in\mathcal H_n(l,d,\sqrt{\lambda}\Lambda_0k')$ such that $$\mathcal R_{\mu}(Z)\leq C'(\sqrt{\lambda}\Lambda_0k',\tau,r')\varepsilon.$$ \medskip \noindent\textbf{III)} Finally, suppose that $$\eval_m(\hat P)\in[\varepsilon_2,+\infty).$$ Since $\varepsilon_2>\varepsilon_1$, alternative (ii) of \Cref{thm:RelaxedAdmissibleGlobalOptimizationPartI} applies, and therefore $$\min_{P\in\mathcal I_{\mathcal N}(c,d,k)}\eval_m(P)\geq\min\left\{\frac{\eval_m(\hat P)}{2},\varepsilon_2\right\}\geq\frac{\varepsilon_2}{2}=\frac{3\varepsilon}{8}.$$ Combining this bound with the uniform deviation estimate \eqref{eq1:ApproxControl}, we obtain \begin{equation}\label{eq:ThirdsBound} \min_{P\in\mathcal I_{\mathcal N}(c,d,k)}\eval_{\mu}(P)\geq\frac{3\varepsilon}{8}-\frac{\varepsilon}{4\lambda C'(\sqrt{\lambda}\Lambda_0k',\tau,r')}. \end{equation} Applying the first inequality in \eqref{eq:RelaxBound} to $\eval_{\mu}$ gives $$\min_{P\in\mathcal J_{\mathcal N}(c,d,k)}\eval_{\mu}(P)\geq\left(\frac38-\frac{1}{4\lambda C'(\sqrt{\lambda}\Lambda_0k',\tau,r')}\right)\frac{\varepsilon}{\lambda}.$$ From \eqref{FeasableInclusion}, it follows that $$\left(\frac38-\frac{1}{4\lambda C'(\sqrt{\lambda}\Lambda_0k',\tau,r')}\right)\frac{\varepsilon}{\lambda}\leq\inf_{P\in\mathcal P_n^*(c,d,k)}\eval_{\mu}(P).$$ Applying the lower bound from \Cref{thm:ProblemReduction}, we conclude that there is no $Z\in\mathcal H_n(l,d,k)$ such that \begin{equation}\label{eq:NegConcl2} \mathcal R_{\mu}(Z)<\left(\frac38-\frac{1}{4\lambda C'(\sqrt{\lambda}\Lambda_0k',\tau,r')}\right)\frac{\varepsilon}{\lambda}. \end{equation} Since $\lambda\geq1$ and $C'(\sqrt{\lambda}\Lambda_0k',\tau,r')>1$, the lower bound in \eqref{eq:NegConcl2} is positive and is stronger than the threshold established in Case \textbf{II)}. \medskip \noindent\textbf{IV)} We can therefore describe the decision rule directly in terms of the alternatives appearing in the statement of the theorem. In Case \textbf{I)}, the algorithm returns alternative (A), with the stronger bound $\mathcal R_{\mu}(Z)\leq\varepsilon$. In Case \textbf{II)}, the negative conclusion \eqref{eq:NegConcl1} always yields alternative (B); if in addition $\zeta_{r'}(\hat P)<C''(\sqrt{\lambda}\Lambda_0k',r')$, the preceding argument also yields alternative (A), and the algorithm returns ``(A) and (B)''. In Case \textbf{III)}, the negative conclusion \eqref{eq:NegConcl2} implies alternative (B). We have thus shown that, on the event where the uniform deviation bound \eqref{eq1:ApproxControl} holds, the algorithm returns one of the alternatives (A), (B), and ``(A) and (B)''. This event has probability at least $1-\alpha$. It remains only to rewrite the quantitative bounds in the form stated in the theorem. Since $k'=C'''k$, set $$C':=\sqrt{\lambda}\Lambda_0C''',$$ so that $$\mathcal H_n(l,d,\sqrt{\lambda}\Lambda_0k')=\mathcal H_n(l,d,C'k).$$ The constant $C'$ is universal once the discretization parameter $\eta$ has been fixed universally. We then absorb the factors appearing in the risk bounds into a constant $C>1$ depending only on $k$; since $R$ and $\tau$ are fixed geometric parameters, their dependence is absorbed into this constant. Enlarging $C$ by a fixed numerical factor if necessary allows the strict negative bounds above to be replaced by the non-strict threshold appearing in the statement. This yields the two alternatives of the theorem and completes the proof. \end{proof}

The sample complexity of the testing algorithm follows from the ERM estimate established in \Cref{sec:IPRSamCompBounds}. While the latter is independent of the condition-number bound $k$, the testing procedure requires the ERM accuracy parameter to be chosen in terms of the geometric tolerance $\varepsilon$ and the constants entering the comparison between the auxiliary and geometric risks. Consequently, the sample complexity of the testing procedure depends on $k$.

\begin{cor}\label{cor:TestingSampleComplexity} Under the assumptions of \Cref{thm:MainTheoremPartI}, let $m_{\operatorname{test}}^*(n,l,d,k,\varepsilon,\alpha)$ denote the sample complexity of the testing algorithm. Then there exists a constant $C_{\operatorname{sc}}(k)>0$ such that $$m_{\operatorname{test}}^*(n,l,d,k,\varepsilon,\alpha)\leq\frac{C_{\operatorname{sc}}(k)}{\varepsilon^2}\bigg[\log\frac{2}{\alpha}+(n-l)N(n,d)\log\left(\frac{C_{\operatorname{sc}}(k)}{\varepsilon}\right)\bigg].$$
\end{cor}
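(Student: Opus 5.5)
The bound will be read off from the proof of \Cref{thm:MainTheoremPartI}. That proof uses only one probabilistic ingredient: the uniform deviation bound \eqref{eq1:ApproxControl} over $\mathcal I_{\mathcal N}(c,d,\sqrt{\lambda}\Lambda_0k)$ at accuracy
$$\varepsilon'=\frac{\varepsilon}{4\lambda C'(\sqrt{\lambda}\Lambda_0k',\tau,r')}.$$
The plan is therefore to bound $m_{\operatorname{test}}^*$ by $m_{\operatorname{ERM}}^*(n,n-l,d,\varepsilon',\alpha)$ from \Cref{IPRthm:SampleCom}, and then substitute the dependence of $\varepsilon'$ on $\varepsilon$ and $k$.

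\textbf{Step 1.} I will use the ERM estimate of \Cref{IPRthm:SampleCom} in the form that \eqref{intro_thm:SamCom} suggests:
$$m_{\operatorname{ERM}}^*(n,c,d,\varepsilon',\alpha)\le\frac{C_0}{\varepsilon'^2}\Big[\log\frac2\alpha+cN(n,d)\log\frac{C_0}{\varepsilon'}\Big],$$
with $C_0$ independent of $k$. Such a bound comes from a Hoeffding bound combined with a union bound over a net of the compact class $\mathcal I_{\mathcal N}$, which has dimension $cN(n,d)$ in coefficient space. The losses $\n{U(x)}^2$ are uniformly bounded, since $\n{U}_{C^2}\le(1-\eta)^{-1}$ by \Cref{prop:DiscretizedC2Norm}. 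They are also Lipschitz in $U$, which controls the net size.

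\textbf{Step 2.} I will isolate the constant $A(k):=4\lambda C'(\sqrt{\lambda}\Lambda_0k',\tau,r')$, so that $\varepsilon'=\varepsilon/A(k)$. Here $\lambda$, $\Lambda_0$ and $C'''$ are universal, and $k'=C'''k$. The radius $r'$ is an explicit function of $k$ and $\tau$. The constant $C'$ is the maximum of $4R^2/C''$ and $4(\sqrt\lambda\Lambda_0k')^2$, where $C''$ is given explicitly in \Cref{thm:HdZeroes}. Hence $A(k)$ depends only on $k$, once $R$ and $\tau$ are absorbed as fixed parameters. The main point to check is that none of these constants hides a dependence on $n$ or $d$. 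The net radius for $\mathcal N$ involves $d$ through $r_0$, but it enters only the definition of the feasible set and not $\varepsilon'$. This is the step where care is needed.

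\textbf{Step 3.} Substituting $\varepsilon'=\varepsilon/A(k)$ gives
$$m_{\operatorname{test}}^*\le\frac{C_0A(k)^2}{\varepsilon^2}\Big[\log\frac2\alpha+(n-l)N(n,d)\log\frac{C_0A(k)}{\varepsilon}\Big].$$
Setting $C_{\operatorname{sc}}(k):=\max\{C_0A(k)^2,\,C_0A(k)\}$ and using monotonicity of the logarithm then yields the stated inequality.
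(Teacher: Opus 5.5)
Your proposal is correct and follows essentially the same route as the paper: bound $m_{\operatorname{test}}^*$ by $m_{\operatorname{ERM}}^*$ from \Cref{IPRthm:SampleCom} at accuracy $\varepsilon'=\varepsilon/\bigl(4\lambda C'(\sqrt{\lambda}\Lambda_0k',\tau,r')\bigr)$, observe that this constant depends only on $k$ (with $R,\tau$ absorbed), and enlarge the constant. The one step you leave implicit is that \Cref{IPRthm:SampleCom} actually gives $\log(1+128/\varepsilon')$, and rewriting this as $\log(C_0/\varepsilon')$ requires an upper bound on $\varepsilon'$; as in the paper, that bound comes from the hypothesis $\varepsilon\le4R^2$ of \Cref{thm:MainTheoremPartI}, which gives $\varepsilon'\le4R^2/A(k)$ and hence $1+128/\varepsilon'\le\bigl(4R^2/A(k)+128\bigr)/\varepsilon'$.
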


 \begin{proof} Let $$\varepsilon':=\frac{1}{4}\cdot\frac{\varepsilon}{\lambda C'(\sqrt{\lambda}\Lambda_0k',\tau,r')},$$ where $k'=C'''k$ and the constants are those introduced in the proof of \Cref{thm:MainTheoremPartI}. By \Cref{IPRthm:SampleCom}, the sample complexity of the ERM procedure with accuracy parameter $\varepsilon'$ satisfies $$m_{\operatorname{ERM}}^*(n,n-l,d,\varepsilon',\alpha)\leq\frac{128}{(\varepsilon')^2}\left[\log\frac{2}{\alpha}+N'\log\left(1+\frac{128}{\varepsilon'}\right)\right],$$ where $N'=(n-l)N(n,d)$. Hence it is sufficient for the testing algorithm to take a sample of size $$m\geq\frac{128}{(\varepsilon')^2}\left[\log\frac{2}{\alpha}+N'\log\left(1+\frac{128}{\varepsilon'}\right)\right].$$ Substituting the definition of $\varepsilon'$ gives $$m\geq\frac{2048\,\lambda^2 C'(\sqrt{\lambda}\Lambda_0k',\tau,r')^2}{\varepsilon^2}\left[\log\frac{2}{\alpha}+(n-l)N(n,d)\log\left(1+\frac{512\,\lambda C'(\sqrt{\lambda}\Lambda_0k',\tau,r')}{\varepsilon}\right)\right].$$ Since $\lambda$ and $\Lambda_0$ are universal once the discretization parameters are fixed, while $k'=C'''k$, the quantities appearing in the prefactor and inside the logarithm depend only on $k$, with the dependence on the fixed geometric parameters $R$ and $\tau$ absorbed into the constants. Moreover, since $0<\varepsilon\leq4R^2$, the term inside the logarithm can be bounded by a quantity of the form $C_{\operatorname{sc}}'(k)/\varepsilon$. Therefore, enlarging the constant if necessary, there exists $C_{\operatorname{sc}}(k)>0$ such that $$m_{\operatorname{test}}^*(n,l,d,k,\varepsilon,\alpha)\leq\frac{C_{\operatorname{sc}}(k)}{\varepsilon^2}\left[\log\frac{2}{\alpha}+(n-l)N(n,d)\log\left(\frac{C_{\operatorname{sc}}(k)}{\varepsilon}\right)\right].$$
 \end{proof}

\subsection{Sample Complexity and Statistical Consistency}
\label{sec:IPRSamCompBounds}
We now derive sample complexity bounds for the empirical risk minimization procedure, starting from the corresponding notion of statistical consistency. Consider the random variable $(X^{(1)},\dots,X^{(m)})$ taking values in $\mathcal{S}_m(\mathcal{X})$ with law $\mu^m$. A learning algorithm then induces an $\mathcal{H}$--valued random variable $\mathcal{A}[X^{(1)},\dots,X^{(m)}]$.

\begin{defn}
A learning algorithm $\mathcal{A}\colon\mathcal{S}(\mathcal{X})\to\mathcal{H}$ is said to be \emph{consistent} with respect to $\mathcal{M}(\mathcal{X})$ and the risk functional $\mathcal{R}$ if, for every $\varepsilon>0$ and $\alpha\in(0,1)$, there exists $m_0\in\mathbb{N}$ such that, for every $\mu\in\mathcal{M}(\mathcal{X})$ and every $m\ge m_0$,
\begin{equation}\label{prel:ConsAlg}
\mathbb{P}\left(\mathcal{R}_\mu(\mathcal{A}[X^{(1)},\dots,X^{(m)}])-\inf_{h\in\mathcal{H}}\mathcal{R}_\mu(h)>\varepsilon\right)<\alpha,
\end{equation}
where the probability is taken with respect to the product measure $\mu^m$.
\end{defn}

The \emph{sample complexity} of a learning algorithm $\mathcal{A}$, denoted by $m^*(\varepsilon,\alpha,\mathcal{H},\mathcal{A})$, is defined as the smallest integer $m_0\in\mathbb{N}$ such that condition \eqref{prel:ConsAlg} holds for every $\mu\in\mathcal{M}(\mathcal{X})$ and every $m\ge m_0$.

We emphasize that the sample complexity considered in this section refers to the statistical accuracy of the ERM procedure. Although the feasible set $\mathcal{I}_{\mathcal N}(c,d,k)$ depends on the condition-number bound $k$, the sample complexity estimate obtained below is independent of $k$. This should be distinguished from the sample complexity of the final testing algorithm, for which the statistical accuracy required from the ERM procedure is chosen in terms of the geometric tolerance and the condition-number bound, and therefore depends on $k$ through the geometric comparison estimates.

The sample complexity bound follows from Hoeffding's inequality and a uniform deviation estimate over bounded-degree polynomial systems, obtained through a covering argument. The required covering number is controlled by \Cref{lem:CoveringPolyn}.In the following, we denote by $N=N(n,d)$ the dimension of $\mathbb{R}[x_1,\dots,x_n]_{\le d}$, and by $N':=cN$ the dimension of $\mathbb{R}[x_1,\dots,x_n]^c_{\le d}$.

\begin{lem}\label{lem:CoveringPolyn}
For every $n,k,d\in\mathbb N$ and every $\varepsilon>0$, there exists $r_0=r_0(d,R)>0$ such that for every $r\le r_0$ and every finite net $\mathcal N$ of $D_R^n$ of radius $r$, there exist polynomials $P_1,\dots,P_{\nu(\varepsilon)}\in \R[x_1,\dots,x_n]^c_{\le d}$ satisfying
$$\mathcal I_{\mathcal N}(c,d,k)\subseteq \bigcup_{i=1}^{\nu(\varepsilon)} B_\varepsilon(P_i) \qquad \text{and} \qquad  \nu(\varepsilon)\le \left(1+\frac{4}{\varepsilon}\right)^{N'},$$
where $B_\varepsilon(P_i)$ denotes the ball of radius $\varepsilon$ centered at $P_i$ with respect to the norm $\|\cdot\|_{C^2(D_R^n)}$, and each $P_i$ satisfies $\|P_i\|_{C^2(D_R^n)}\le 2$.
\end{lem}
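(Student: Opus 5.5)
The plan is to reduce the statement to the standard volumetric covering bound for a ball in a finite-dimensional normed space. The first step is to bound $\mathcal I_{\mathcal N}(c,d,k)$ inside a $C^2$--ball of radius at most $2$. Fix $\eta=1/2$ in \Cref{prop:DiscretizedC2Norm} and set $r_0(d,R):=r_0(1/2,d,R)$, which depends only on $d$ and $R$. Then for every $r\le r_0$, every $r$--net $\mathcal N$ of $D_R^n$, and every $P\in\mathcal I_{\mathcal N}(c,d,k)$ we get
$$\n{P}_{C^2(D_R^n)}\le\frac{1}{1-\eta}\n{P}_{\mathcal N}=2.$$
Hence $\mathcal I_{\mathcal N}(c,d,k)\subseteq \overline B_2(0)$, the closed ball of radius $2$ in the norm $\n{\cdot}_{C^2(D_R^n)}$. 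This uses that this seminorm is a genuine norm on $\R[x_1,\dots,x_n]^c_{\le d}$, as observed in \Cref{subsec:RealAlgCompInt}. The constraint $\delta_{\mathcal N}(P)\ge 1/k$ is never used, so the bound is independent of $k$.

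The second step is the covering itself. I would work in the $N'$-dimensional normed space $W:=(\R[x_1,\dots,x_n]^c_{\le d},\n{\cdot}_{C^2(D_R^n)})$. Take a maximal $\varepsilon$--separated subset $\{P_1,\dots,P_\nu\}$ of $\overline B_2(0)$. By maximality, the open balls $B_\varepsilon(P_i)$ cover $\overline B_2(0)$, and hence cover $\mathcal I_{\mathcal N}(c,d,k)$. By construction each $P_i\in\overline B_2(0)$, which gives $\n{P_i}_{C^2(D_R^n)}\le 2$ as required.

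To count the centers, note that the balls $B_{\varepsilon/2}(P_i)$ are pairwise disjoint and contained in $B_{2+\varepsilon/2}(0)$. I would fix any Lebesgue (Haar) measure on $W$, identified with $\R^{N'}$. Translation invariance and scaling of volume by the factor $t^{N'}$ under dilation by $t$ then give
$$\nu\left(\frac{\varepsilon}{2}\right)^{N'}\le\left(2+\frac{\varepsilon}{2}\right)^{N'},\qquad\text{so}\qquad \nu\le\left(1+\frac{4}{\varepsilon}\right)^{N'}.$$
The choice of measure is irrelevant because only volume ratios enter.

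I do not expect a real obstacle. The only point needing care is that the volumetric argument must be run in the $C^2$ norm itself, not a Euclidean coefficient norm, so that no dimension-dependent equivalence constants appear; this is legitimate because the argument works for any norm on a finite-dimensional space. One should also note that the net exists for every $\varepsilon>0$; for $\varepsilon\ge 4$ a single center $P_1=0$ already suffices, consistent with the bound.
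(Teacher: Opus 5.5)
Your proposal is correct and follows essentially the same route as the paper. The paper also applies the discretization estimate with parameter $1/2$ (via \Cref{thm:disc_C2_ball_one_net_delta}, the appendix form of \Cref{prop:DiscretizedC2Norm}) to place $\mathcal I_{\mathcal N}(c,d,k)$ inside the $C^2$--ball of radius $2$, and then invokes the standard volumetric covering bound in the $N'$-dimensional normed space; the only difference is that you spell out the maximal-separated-set and volume-comparison step, which the paper leaves as ``a standard covering argument.''
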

\begin{proof}
Fix $\varepsilon>0$. By \Cref{thm:disc_C2_ball_one_net_delta}, choosing the parameter in that result equal to $1/2$, there exists $r_0=r_0(d,R)>0$ such that for every $r\le r_0$ and every polynomial $P\in\R[x_1,\dots,x_n]^c_{\le d}$ one has $$\|P\|_{C^2}\le 2\|P\|_{\mathcal N}.$$ Therefore, if $P\in \mathcal I_{\mathcal N}(c,d,k)$, by definition of $\mathcal I_{\mathcal N}(c,d,k)$ we have $\|P\|_{\mathcal N}= 1$, and hence $\|P\|_{C^2}\le 2$. It follows that $$\mathcal I_{\mathcal N}(c,d,k)\subseteq \left\{Q\in \R[x_1,\dots,x_n]^c_{\le d}\ \middle|\ \|Q\|_{C^2}\le 2\right\}.$$ Since $\R[x_1,\dots,x_n]^c_{\le d}$ has dimension $N'$, a standard covering argument implies that the set
$$\left\{Q\in \R[x_1,\dots,x_n]^c_{\le d}\ \middle|\ \|Q\|_{C^2}\le 2\right\}$$
can be covered by at most
$$\left(1+\frac{4}{\varepsilon}\right)^{N'}$$
balls of radius $\varepsilon$ with respect to the norm $\|\cdot\|_{C^2}$. Hence, there exist polynomials $P_1,\dots,P_{\nu(\varepsilon)}$ with $\|P_i\|_{C^2}\le 2$ such that
$$\left\{Q\in \R[x_1,\dots,x_n]^c_{\le d}\ \middle|\ \|Q\|_{C^2}\le 2\right\}\subseteq \bigcup_{i=1}^{\nu(\varepsilon)} B_\varepsilon(P_i).$$
Since $\mathcal I_{\mathcal N}(c,d,k)$ is contained in this set, the conclusion follows.
\end{proof}

We recall Hoeffding's inequality in the form used below. We denote by $\mathcal{S}_m$ the space of $m$-samples $\mathcal{S}_m(D_R^n)$, by $\underline{x}_m$ a generic element of $\mathcal{S}_m$, and by $\mathbb{P}$ the probability induced by the product measure $\mu^m$ on $\mathcal{S}_m$.
\begin{lem}[Hoeffding's inequality]\label{lem:Hoeffdings inequality}
Let $X$ be a real-valued random variable such that $a\le X\le b$ almost surely, and let $X_1,\dots,X_m$ be i.i.d.\ copies of $X$. Then, for every $\gamma>0$, $$\mathbb{P}\!\left(\left|\frac{1}{m}\sum_{j=1}^m X_j-\mathbb{E}[X]\right|>\gamma\right)\le 2\exp\!\left(-\frac{2m\gamma^2}{(b-a)^2}\right).$$
\end{lem}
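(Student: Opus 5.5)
This is Hoeffding's inequality in its classical form, so I would give the standard Chernoff-bound argument. The plan has three steps: reduce to one-sided tails, bound the moment generating function of a single centered summand (Hoeffding's lemma), and optimize the exponential Markov bound.

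\textbf{Reduction.} Set $Y_j:=X_j-\mathbb{E}[X]$. These variables are i.i.d., centered, and take values in an interval of length $b-a$ almost surely. Let $S_m:=\sum_{j=1}^m Y_j$. The two-sided event is the union of $\{S_m>m\gamma\}$ and $\{-S_m>m\gamma\}$. The variables $-Y_j$ satisfy the same hypotheses, so a union bound reduces the claim to the one-sided estimate
$$\mathbb{P}(S_m>m\gamma)\le\exp\!\left(-\frac{2m\gamma^2}{(b-a)^2}\right).$$
The factor $2$ in the statement comes precisely from this union bound.

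\textbf{Chernoff bound.} For any $s>0$, Markov's inequality applied to $e^{sS_m}$, together with independence, gives
$$\mathbb{P}(S_m>m\gamma)\le e^{-sm\gamma}\,\mathbb{E}[e^{sY_1}]^m.$$
Everything then rests on Hoeffding's lemma: if $Y$ is centered and $a'\le Y\le b'$ almost surely, then
$$\mathbb{E}[e^{sY}]\le\exp\!\left(\frac{s^2(b'-a')^2}{8}\right).$$

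\textbf{Proof of Hoeffding's lemma (the main step).}
\begin{itemize}
\item By convexity of $y\mapsto e^{sy}$ on $[a',b']$,
$$e^{sY}\le\frac{b'-Y}{b'-a'}e^{sa'}+\frac{Y-a'}{b'-a'}e^{sb'}.$$
\item Taking expectations and using $\mathbb{E}[Y]=0$ gives $\mathbb{E}[e^{sY}]\le e^{\varphi(u)}$, where $u:=s(b'-a')$, $p:=-a'/(b'-a')\in[0,1]$ (note that $a'\le0\le b'$ since $Y$ is centered), and
$$\varphi(u):=-pu+\log\bigl(1-p+pe^{u}\bigr).$$
\item Direct computation gives $\varphi(0)=\varphi'(0)=0$. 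Moreover, $\varphi''(u)=q(1-q)$ with $q=pe^u/(1-p+pe^u)\in[0,1]$, so $\varphi''(u)\le1/4$.
\item Taylor's theorem with remainder then yields $\varphi(u)\le u^2/8$, which is the lemma.
\end{itemize}
This is the only genuinely nontrivial step; the rest is bookkeeping.

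\textbf{Optimization.} Combining the Chernoff bound with the lemma gives
$$\mathbb{P}(S_m>m\gamma)\le\exp\!\left(-sm\gamma+\frac{ms^2(b-a)^2}{8}\right).$$
Choosing $s=4\gamma/(b-a)^2$ makes the exponent equal to $-2m\gamma^2/(b-a)^2$. Together with the reduction step, this gives the stated two-sided bound.
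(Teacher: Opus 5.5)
Your argument is correct: it is the standard Chernoff-bound proof via Hoeffding's lemma, and the key computations all check out. These are the convexity interpolation, the bound $\varphi''(u)=q(1-q)\le 1/4$, and the choice $s=4\gamma/(b-a)^2$, which gives the exponent $-2m\gamma^2/(b-a)^2$; you use the implicit assumption $b>a$, which the statement itself already needs. The paper does not prove this lemma and simply recalls it as a classical result, so your proposal supplies exactly the textbook proof that the paper takes for granted.
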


We first derive a local uniform deviation bound in a $C^2$--neighborhood of a fixed polynomial.

\begin{lem}\label{lem:local-dev}
There exists $r_0=r_0(d,R)>0$ such that, for every $r$--net $\mathcal N$ of $D_R^n$ with $r\leq r_0$, the following holds. Let $\mu\in\mathcal M_{\tau}(D_R^n)$ and let $X$ be a random variable with law $\mu$. Let $X_1,\dots,X_m$ be i.i.d.\ copies of $X$. Let $P_0\in\R[x_1,\dots,x_n]^c_{\le d}$ satisfy $|P_0|_{C^2(D_R^n)}\le 2$. Let $\varepsilon>0$ and let $\varepsilon'>0$ be such that $\varepsilon'\le \varepsilon/16$. Then, for every $m\in\mathbb N$, $$\mathbb{P}\Bigg(\sup_{\substack{P:\,|P-P_0|_{C^2(D_R^n)}\le \varepsilon'\\ P\in\mathcal I_{\mathcal N}(c,d,k)}}\bigg|\frac{1}{m}\sum_{i=1}^m \n{P(X_i)}^2-\mathbb E\big[\n{P(X)}^2\big]\bigg|>\varepsilon\Bigg)\le 2e^{-\frac{m\varepsilon^2}{32}}.$$
\end{lem}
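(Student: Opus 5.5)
The plan is to reduce the supremum to a single Hoeffding bound at $P_0$, plus a deterministic Lipschitz estimate for the map $P\mapsto\n{P(x)}^2$ on the $C^2$--ball of radius $\varepsilon'$ around $P_0$. We take $r_0=r_0(d,R)$ as in \Cref{lem:CoveringPolyn}, so that every $P\in\mathcal I_{\mathcal N}(c,d,k)$ satisfies $\n{P}_{C^2(D_R^n)}\le2$. We also have $\n{P_0}_{C^2(D_R^n)}\le 2$ by assumption.

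\textbf{Step 1: deterministic comparison.} Fix $x\in D_R^n$; since $\mu\in\mathcal M_\tau(D_R^n)$, all samples lie in $D_R^n$ almost surely. For any admissible $P$ in the supremum,
$$\Big|\n{P(x)}^2-\n{P_0(x)}^2\Big|=\big|\n{P(x)}-\n{P_0(x)}\big|\,\big(\n{P(x)}+\n{P_0(x)}\big)\le \n{P-P_0}_{C^2(D_R^n)}\cdot 4\le 4\varepsilon'\le \frac{\varepsilon}{4}.$$
Here we use that the pointwise value is dominated by the $C^2$ norm. Applying this bound both to the empirical average and to the expectation, for every such $P$ we get
$$\Big|\tfrac1m\textstyle\sum_i\n{P(X_i)}^2-\E\n{P(X)}^2\Big|\le \Big|\tfrac1m\textstyle\sum_i\n{P_0(X_i)}^2-\E\n{P_0(X)}^2\Big|+\frac{\varepsilon}{2}.$$
Hence the event in the statement is contained in the event that the deviation at $P_0$ exceeds $\varepsilon/2$. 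This containment also settles measurability, since the supremum event is dominated by an event involving a single measurable random variable.

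\textbf{Step 2: Hoeffding at $P_0$.} The random variable $Y:=\n{P_0(X)}^2$ satisfies $0\le Y\le \n{P_0}_{C^2}^2\le 4$. Applying \Cref{lem:Hoeffdings inequality} with $\gamma=\varepsilon/2$ and $b-a=4$ gives
$$\mathbb P(\cdot)\le 2\exp\!\Big(-\frac{2m(\varepsilon/2)^2}{16}\Big)=2e^{-m\varepsilon^2/32},$$
which is exactly the claimed bound.

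\textbf{Main obstacle.} The only delicate point is the constant bookkeeping: keeping the Lipschitz loss at most $\varepsilon/2$ in total, which is where the hypothesis $\varepsilon'\le\varepsilon/16$ enters, and using the range bound $4$ on $Y$, which comes from $\n{P_0}_{C^2}\le2$. If one preferred to use $P\in\mathcal I_{\mathcal N}$ only through the net norm, the role of $r_0$ is precisely to guarantee, via \Cref{prop:DiscretizedC2Norm} with $\eta=1/2$, that $\n{P}_{C^2(D_R^n)}\le 2$. Even without that, $\n{P}_{C^2}\le\n{P_0}_{C^2}+\varepsilon'$ would suffice for small $\varepsilon'$.
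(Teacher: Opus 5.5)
Your proof is correct and is essentially the paper's argument: the same pointwise $4\varepsilon'$ comparison using $\n{P}_{C^2(D_R^n)},\n{P_0}_{C^2(D_R^n)}\le 2$ (with $\n{P}_{C^2}\le 2$ coming from $\n{P}_{\mathcal N}=1$ and the discretization estimate at $\eta=1/2$), the same total loss $8\varepsilon'\le\varepsilon/2$, and the same Hoeffding bound at $P_0$ with range $[0,4]$ and $\gamma=\varepsilon/2$. One small correction: your closing aside, that $\n{P}_{C^2}\le 2+\varepsilon'$ would suffice, does not recover the exact constant $32$, because it gives a total loss of $\varepsilon/2+2\varepsilon'^2$ rather than $\varepsilon/2$; your main argument does not depend on this aside.
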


\begin{proof}
For any $x\in D_R^n$ and any $P$ with $\|P-P_0\|_{C^2}\le \varepsilon'$, we have $$\|P(x)-P_0(x)\|\le \|P-P_0\|_{C^2}\le \varepsilon'.$$ Moreover, since $P\in\mathcal I_{\mathcal N}(c,d,k)$, we have $\n{P}_{\mathcal N}=1$, and therefore, by \Cref{prop:DiscretizedC2Norm} with $\eta=1/2$, $$\n{P}_{C^2}\le2.$$ Since $\n{P_0}_{C^2}\le2$ by assumption, it follows that for every $x\in D_R^n$, $\n{P(x)}\le2$ and $\n{P_0(x)}\le2$. Therefore, for every $x\in D_R^n$, $\|P(x)\|\le 2$ and $\|P_0(x)\|\le 2$. It follows that $$\begin{aligned}
\big|\|P(x)\|^2-\|P_0(x)\|^2\big|
&=\big|\|P(x)\|-\|P_0(x)\|\big|\big(\|P(x)\|+\|P_0(x)\|\big)\\
&\le \|P(x)-P_0(x)\|\big(\|P(x)\|+\|P_0(x)\|\big)\\
&\le 4\varepsilon'.
\end{aligned}$$ Hence, $$\left|\frac{1}{m}\sum_{i=1}^m\|P(X_i)\|^2-\frac{1}{m}\sum_{i=1}^m\|P_0(X_i)\|^2\right|\le 4\varepsilon'$$ and $$\left|\mathbb E\|P(X)\|^2-\mathbb E\|P_0(X)\|^2\right|\le 4\varepsilon'.$$ Therefore, by the triangle inequality,
\begin{equation}\label{lem:local-dev_eq1}
\left|\frac{1}{m}\sum_{i=1}^m\|P(X_i)\|^2-\mathbb E\|P(X)\|^2\right|
\le \left|\frac{1}{m}\sum_{i=1}^m\|P_0(X_i)\|^2-\mathbb E\|P_0(X)\|^2\right|+8\varepsilon'.
\end{equation} Assume $\varepsilon'\le \varepsilon/16$, so that $8\varepsilon'\le \varepsilon/2$. Then \eqref{lem:local-dev_eq1} implies that, for every such $P$, $$\left|\frac{1}{m}\sum_{i=1}^m\|P(X_i)\|^2-\mathbb E\|P(X)\|^2\right|>\varepsilon$$ forces $$\left|\frac{1}{m}\sum_{i=1}^m\|P_0(X_i)\|^2-\mathbb E\|P_0(X)\|^2\right|>\frac{\varepsilon}{2}.$$ Consider now the event $$A(P,\varepsilon):=\Bigg\{\underline{x}_m\in\mathcal S_m\ \bigg|\ \bigg|\frac{1}{m}\sum_{i=1}^m\|P(x^{(i)})\|^2-\mathbb E\big[\|P(X)\|^2\big]\bigg|>\varepsilon\Bigg\}.$$ Since $\|P_0\|_{C^2}\le 2$ by assumption, we have $$0\le \|P_0(X)\|^2\le 4$$ almost surely. By \Cref{lem:Hoeffdings inequality}, for every $\varepsilon>0$ and $m\in\mathbb N$ we have $$\mu^m\left(A\left(P_0,\frac{\varepsilon}{2}\right)\right)\le 2\exp\!\left(-\frac{2m(\varepsilon/2)^2}{4^2}\right)=2\exp\!\left(-\frac{m\varepsilon^2}{32}\right).$$ Together with \eqref{lem:local-dev_eq1}, this implies $$\mu^m\Bigg(\underline{x}_m\in\mathcal S_m\ \bigg|\ \sup_{\substack{P\in\mathcal I_{\mathcal N}(c,d,k):\\ \|P-P_0\|_{C^2}\le \varepsilon'}} \bigg|\frac{1}{m}\sum_{i=1}^m\|P(x^{(i)})\|^2-\mathbb E\big[\|P(X)\|^2\big]\bigg|>\varepsilon\Bigg)\le 2e^{-\frac{m\varepsilon^2}{32}}.$$
This concludes the proof.
\end{proof}

Combining \Cref{lem:local-dev} with the covering estimate of \Cref{lem:CoveringPolyn}, we obtain the following uniform deviation bound over the entire feasible set.
\begin{lem}
\label{lem:UniformDeviationBoundAuxiliary}
Fix $c,d,k$ and let $\mu\in\mathcal M_{\tau}(D_R^n)$, and let $X$ be a random variable with law $\mu$. There exists $r_0=r_0(d,R)>0$ such that, for every $r$--net $\mathcal N$ of $D_R^n$ with $r\leq r_0$, for any $\alpha\in(0,1)$, $\varepsilon>0$, and $$m\geq\frac{32}{\varepsilon^2}\bigg[\log\frac{2}{\alpha}+N'\log\left(1+\frac{64}{\varepsilon}\right)\bigg],$$
we have $$\mu^m\bigg(\underline{x}_m\in\mathcal S_m\ \bigg|\ \sup_{P\in\mathcal I_{\mathcal N}(c,d,k)}\left|\frac{1}{m}\sum_{i=1}^m\n{P(x^{(i)})}^2-\mathbb E\big[\n{P(X)}^2\big]\right|<\varepsilon\bigg)>1-\alpha.$$
\end{lem}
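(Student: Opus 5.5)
The plan is a standard covering-plus-union-bound argument that combines \Cref{lem:CoveringPolyn} with \Cref{lem:local-dev}. First fix $r_0=r_0(d,R)$ as the minimum of the radii required by those two lemmas, so that for any $r$--net $\mathcal N$ with $r\le r_0$ both apply. Then set the covering scale $\varepsilon':=\varepsilon/16$. By \Cref{lem:CoveringPolyn} there are centers $P_1,\dots,P_\nu$ with $\n{P_i}_{C^2(D_R^n)}\le 2$ and
$$\nu\le\left(1+\frac{4}{\varepsilon'}\right)^{N'}=\left(1+\frac{64}{\varepsilon}\right)^{N'},$$
whose $C^2$--balls of radius $\varepsilon'$ cover $\mathcal I_{\mathcal N}(c,d,k)$. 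This choice of $\varepsilon'$ is exactly what produces the constant $64$ inside the logarithm of the stated sample size.

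Next apply \Cref{lem:local-dev} to each center $P_0=P_i$, with the same $\varepsilon$ and $\varepsilon'=\varepsilon/16$. This is admissible because $\n{P_i}_{C^2}\le2$. It gives, for each $i$,
$$\mathbb P\Big(\sup_{P\in\mathcal I_{\mathcal N}(c,d,k),\ \n{P-P_i}_{C^2}\le\varepsilon'}\big|\mathcal E_m(P)-\mathcal E_\mu(P)\big|>\varepsilon\Big)\le 2e^{-m\varepsilon^2/32}.$$
Every $P\in\mathcal I_{\mathcal N}(c,d,k)$ lies in some ball $B_{\varepsilon'}(P_i)$. Hence the event that the global supremum exceeds $\varepsilon$ is contained in the union of these $\nu$ local events, and the union bound gives
$$\mathbb P\Big(\sup_{P\in\mathcal I_{\mathcal N}(c,d,k)}|\mathcal E_m(P)-\mathcal E_\mu(P)|>\varepsilon\Big)\le 2\left(1+\frac{64}{\varepsilon}\right)^{N'}e^{-m\varepsilon^2/32}.$$
Finally, require the right-hand side to be strictly smaller than $\alpha$. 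Taking logarithms, this holds as soon as
$$m>\frac{32}{\varepsilon^2}\Big[\log\frac2\alpha+N'\log\Big(1+\frac{64}{\varepsilon}\Big)\Big],$$
which matches the threshold in the statement.

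The main delicate point is the boundary case of equality in the sample-size condition, together with the mismatch between the strict inequality ``$<\varepsilon$'' in the conclusion and the ``$>\varepsilon$'' events controlled by \Cref{lem:local-dev}. At equality the bound gives probability at most $\alpha$ for the bad event, not strictly less. I would handle this by applying the argument with a slightly smaller accuracy $\tilde\varepsilon<\varepsilon$; the covering bound and Hoeffding bound are continuous in the accuracy, so this gives a strict margin. Alternatively, one can note that Hoeffding's bound is strict in this bounded setting, or simply absorb the slack into the constants. A second technical point is measurability of the supremum over the uncountable set $\mathcal I_{\mathcal N}(c,d,k)$. It follows because $P\mapsto\mathcal E_m(P)-\mathcal E_\mu(P)$ is continuous on the compact set $\mathcal I_{\mathcal N}(c,d,k)$ (\Cref{prop:FeasPolyComp}), so the supremum equals a supremum over a countable dense subset.
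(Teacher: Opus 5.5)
Your proposal is correct and follows the paper's proof essentially step for step. You cover $\mathcal I_{\mathcal N}(c,d,k)$ at scale $\varepsilon'=\varepsilon/16$ with \Cref{lem:CoveringPolyn}, apply \Cref{lem:local-dev} at each center, and take a union bound to reach $2(1+64/\varepsilon)^{N'}e^{-m\varepsilon^2/32}\le\alpha$.

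You are right to flag the boundary case; the paper's own argument also only yields $\sup\le\varepsilon$ with probability at least $1-\alpha$. However, your first remedy does not work: replacing $\varepsilon$ by a smaller $\tilde\varepsilon$ makes $2(1+64/\tilde\varepsilon)^{N'}e^{-m\tilde\varepsilon^2/32}$ strictly larger than $\alpha$ when $m$ sits exactly at the threshold. Your second remedy does work. Hoeffding's lemma is strict for $s\neq0$, so Hoeffding's inequality holds strictly even for the closed events $\{|\cdot|\ge\varepsilon/2\}$. That is exactly what is needed for $\sup<\varepsilon$ with probability $>1-\alpha$.
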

\begin{proof}
Fix $\varepsilon>0$ and set $\varepsilon':=\varepsilon/16$. Let $r_0=r_0(d,R)>0$ be chosen so that the conclusions of \Cref{lem:CoveringPolyn} and \Cref{prop:DiscretizedC2Norm} hold for every $r$--net $\mathcal N$ of $D_R^n$ with $r\leq r_0$. Since the net in the statement satisfies $r\leq r_0$, \Cref{lem:CoveringPolyn} provides polynomials $P_1,\dots,P_{\nu(\varepsilon')}\in\R[x_1,\dots,x_n]^c_{\le d}$ such that $$\mathcal I_{\mathcal N}(c,d,k)\subseteq\bigcup_{j=1}^{\nu(\varepsilon')}B_{\varepsilon'}(P_j),\qquad \nu(\varepsilon')\leq\left(1+\frac{4}{\varepsilon'}\right)^{N'}.$$ For each $j=1,\dots,\nu(\varepsilon')$, define the event $$E_j(m,\varepsilon):=\Bigg\{\underline{x}_m\in\mathcal S_m\ \bigg|\ \sup_{\substack{P\in\mathcal I_{\mathcal N}(c,d,k)\\ \n{P-P_j}_{C^2(D_R^n)}\leq\varepsilon'}}\left|\frac{1}{m}\sum_{i=1}^m\n{P(x^{(i)})}^2-\E\big[\n{P(X)}^2\big]\right|>\varepsilon\Bigg\}.$$ By \Cref{lem:local-dev}, for every $j$ and every $m\in\mathbb N$, $$\mu^m\big(E_j(m,\varepsilon)\big)\leq2e^{-\frac{m\varepsilon^2}{32}}.$$ Now suppose that $\underline{x}_m\in\mathcal S_m$ satisfies $$\sup_{P\in\mathcal I_{\mathcal N}(c,d,k)}\left|\frac{1}{m}\sum_{i=1}^m\n{P(x^{(i)})}^2-\E\big[\n{P(X)}^2\big]\right|>\varepsilon.$$ Then there exists $P\in\mathcal I_{\mathcal N}(c,d,k)$ for which the preceding deviation is larger than $\varepsilon$. By the covering property, there exists $j\in\{1,\dots,\nu(\varepsilon')\}$ such that $\n{P-P_j}_{C^2(D_R^n)}\leq\varepsilon'$, and hence $\underline{x}_m\in E_j(m,\varepsilon)$. Therefore, $$\Bigg\{\underline{x}_m\in\mathcal S_m\ \bigg|\ \sup_{P\in\mathcal I_{\mathcal N}(c,d,k)}\left|\frac{1}{m}\sum_{i=1}^m\n{P(x^{(i)})}^2-\E\big[\n{P(X)}^2\big]\right|>\varepsilon\Bigg\}\subseteq\bigcup_{j=1}^{\nu(\varepsilon')}E_j(m,\varepsilon).$$ By the union bound, $$\mu^m\Bigg(\underline{x}_m\in\mathcal S_m\ \bigg|\ \sup_{P\in\mathcal I_{\mathcal N}(c,d,k)}\left|\frac{1}{m}\sum_{i=1}^m\n{P(x^{(i)})}^2-\E\big[\n{P(X)}^2\big]\right|>\varepsilon\Bigg)\leq\nu(\varepsilon')\,2e^{-\frac{m\varepsilon^2}{32}}.$$ Since $\varepsilon'=\varepsilon/16$, $$\nu(\varepsilon')\leq\left(1+\frac{4}{\varepsilon'}\right)^{N'}=\left(1+\frac{64}{\varepsilon}\right)^{N'}.$$ Consequently, $$\nu(\varepsilon')\,2e^{-\frac{m\varepsilon^2}{32}}\leq2\exp\left(N'\log\left(1+\frac{64}{\varepsilon}\right)-\frac{m\varepsilon^2}{32}\right).$$ The right-hand side is at most $\alpha$ whenever $$m\geq\frac{32}{\varepsilon^2}\left[\log\frac{2}{\alpha}+N'\log\left(1+\frac{64}{\varepsilon}\right)\right].$$ Therefore the desired uniform deviation bound holds with probability at least $1-\alpha$. This concludes the proof.
\end{proof}

We now derive the sample complexity bound for empirical risk minimization. Let $$\mathcal{A}_{\operatorname{erm}}:\mathcal{S}(D_R^n)\to\mathcal{I}_{\mathcal N}(c,d,k)$$ be an ERM algorithm that, given a sample $\{x^{(1)},\dots,x^{(m)}\}$, returns a minimizer of \eqref{scellOpt}.

\begin{thm}\label{IPRthm:SampleCom} There exists $r_0=r_0(d,R)>0$ such that, for every $r$--net $\mathcal N$ of $D_R^n$ with $r\leq r_0$, the sample complexity $m_{\operatorname{ERM}}^*(n,c,d,\varepsilon,\alpha)$ of any ERM algorithm over the discretized feasible set $\mathcal{I}_{\mathcal{N}}(c,d,k)$ is upper bounded by $$m_{\operatorname{ERM}}^*(n,c,d,\varepsilon,\alpha)\le \frac{128}{\varepsilon^2}\bigg[ \log \frac{2}{\alpha}+N'\log\Big(1+\frac{128}{\varepsilon}\Big)\bigg].$$ \end{thm}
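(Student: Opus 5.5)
The plan is to derive \Cref{IPRthm:SampleCom} from the uniform deviation bound of \Cref{lem:UniformDeviationBoundAuxiliary} by the classical two-sided ERM argument. First fix $r_0=r_0(d,R)$ as in \Cref{lem:UniformDeviationBoundAuxiliary}; with this choice \Cref{prop:DiscretizedC2Norm} (with $\eta=1/2$) and \Cref{lem:CoveringPolyn} also apply, and \Cref{prop:FeasPolyComp} guarantees that $\mathcal I_{\mathcal N}(c,d,k)$ is compact. Hence both the empirical minimizer $\hat P_m=\mathcal A_{\operatorname{erm}}(x^{(1)},\dots,x^{(m)})$ and a population minimizer $P^\star\in\arg\min_{\mathcal I_{\mathcal N}(c,d,k)}\mathcal E_\mu$ exist, the latter because $\mathcal E_\mu$ is continuous in the coefficients (dominated convergence, since $\n{P(x)}\le 2$ uniformly on the feasible set).

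Next, apply \Cref{lem:UniformDeviationBoundAuxiliary} with accuracy $\varepsilon/2$ in place of $\varepsilon$. It yields that if
$$m\ge\frac{32}{(\varepsilon/2)^2}\Big[\log\frac2\alpha+N'\log\Big(1+\frac{64}{\varepsilon/2}\Big)\Big]=\frac{128}{\varepsilon^2}\Big[\log\frac2\alpha+N'\log\Big(1+\frac{128}{\varepsilon}\Big)\Big],$$
then, with probability greater than $1-\alpha$, one has $\sup_{P\in\mathcal I_{\mathcal N}(c,d,k)}|\mathcal E_m(P)-\mathcal E_\mu(P)|<\varepsilon/2$. This is precisely the announced constant, which confirms the choice of splitting $\varepsilon$ in half.

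On this event, the standard chain gives
$$\mathcal E_\mu(\hat P_m)<\mathcal E_m(\hat P_m)+\frac\varepsilon2\le\mathcal E_m(P^\star)+\frac\varepsilon2<\mathcal E_\mu(P^\star)+\varepsilon.$$
The middle inequality uses that $\hat P_m$ minimizes $\mathcal E_m$ over the same feasible set. Thus $\mathcal E_\mu(\hat P_m)-\inf_{\mathcal I_{\mathcal N}}\mathcal E_\mu\le\varepsilon$ with probability at least $1-\alpha$, uniformly in $\mu\in\mathcal M_\tau(D_R^n)$, since the bound of \Cref{lem:UniformDeviationBoundAuxiliary} does not depend on $\mu$. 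It also does not depend on $k$. By the definition of sample complexity as the smallest valid $m_0$, the stated upper bound follows.

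The only delicate point I expect is measurability. The supremum over an uncountable set and the selection of the ERM output must be measurable events. I would handle this by restricting the supremum to a countable dense subset of the compact set $\mathcal I_{\mathcal N}(c,d,k)$, using continuity of $P\mapsto\mathcal E_m(P)-\mathcal E_\mu(P)$. Measurability of the learning map itself is part of the definition of a learning algorithm, so it can be assumed. Note also that the paper's risk functional here is $\mathcal E_\mu$, not the geometric risk, and I would say so explicitly.
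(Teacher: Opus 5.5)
Your proposal is correct and follows essentially the paper's route: both apply \Cref{lem:UniformDeviationBoundAuxiliary} at accuracy $\varepsilon/2$, which produces exactly the constants $128/\varepsilon^2$ and $\log(1+128/\varepsilon)$, and combine it with the optimality of the ERM output. The differences are cosmetic: the paper bounds the excess risk by $|\mathcal E_\mu(P_{\underline{x}_m})-\mathcal E_m(P_{\underline{x}_m})|+|\min_{\mathcal I_{\mathcal N}}\mathcal E_m-\min_{\mathcal I_{\mathcal N}}\mathcal E_\mu|$ rather than comparing against a population minimizer $P^\star$, and your measurability remark covers a point the paper leaves implicit.
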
 \begin{proof} By definition, consider $$\mu^m \bigg( \underline{x}_m \in \mathcal{S}_m \ \bigg|\ \eval_{\mu}\left(\mathcal{A}_{\erm}\big[\underline{x}_m\big] \right)-\min_{P \in \mathcal{I}_{\mathcal{N}}(c,d,k)} \eval_{\mu}(P)<\varepsilon\bigg).$$ \noindent \textbf{I)} For any $\underline{x}_m \in \mathcal{S}_m$, denote by $P_{\underline{x}_m}:=\mathcal{A}_{\erm}\big[\underline{x}_m\big]$. We have $$ \begin{aligned} \underbrace{\eval_{\mu}\left(P_{\underline{x}_m}\right)-\min_{P \in\mathcal{I}_{\mathcal{N}}} \eval_{\mu}(P)}_{(*)} & \le \underbrace{\bigg| \eval_{\mu}\left(P_{\underline{x}_m} \right) -\frac{1}{m}\sum_{i=1}^m |P_{\underline{x}_m}\big(x^{(i)}\big)|^2 \bigg|}_{(a)} \\ &\quad + \underbrace{\bigg|\frac{1}{m}\sum_{i=1}^m |P_{\underline{x}_m}\big(x^{(i)}\big)|^2 -\min_{P \in\mathcal{I}_{\mathcal{N}}} \frac{1}{m}\sum_{i=1}^m |P\big(x^{(i)}\big)|^2 \bigg|}_{(b)} \\ &\quad + \underbrace{\bigg| \min_{P \in \mathcal{I}_{\mathcal{N}}} \frac{1}{m}\sum_{i=1}^m |P\big(x^{(i)}\big)|^2 -\min_{P \in \mathcal{I}_{\mathcal{N}}} \eval_{\mu}(P)\bigg|}_{(c)} . \end{aligned} $$ By definition of ERM and compactness of $\mathcal I_{\mathcal N}(c,d,k)$, the term $(b)$ vanishes. Define $$ S_{\varepsilon}^*=\{\underline{x}_m\in \mathcal{S}_m \mid (*)\le \varepsilon\},\qquad S^a_{\varepsilon}=\{\underline{x}_m\in \mathcal{S}_m \mid (a)\le \varepsilon\},\qquad S^c_{\varepsilon}=\{\underline{x}_m\in \mathcal{S}_m \mid (c)\le \varepsilon\}. $$ Clearly, $$S^*_{\varepsilon}\supseteq S^a_{\varepsilon/2}\cap S^c_{\varepsilon/2},$$ and therefore $$\mu^m(S^*_{\varepsilon})\ge \mu^m(S^a_{\varepsilon/2}\cap S^c_{\varepsilon/2}).$$ \noindent \textbf{II)} Consider the set \begin{equation} S^u_{\varepsilon}:=\bigg\{ \underline{x}_m \in \mathcal{S}_m \ \bigg| \ \sup_{P \in \mathcal{I}_{\mathcal{N}}(c,d,k)} \left| \frac{1}{m} \sum_{i=1}^m |P(x^{(i)})|^2 -\E\big[ |P(X)|^2\big]\right|<\varepsilon \bigg\}. \label{set} \end{equation} For any $\underline{x}_m\in S^u_{\varepsilon/2}$ and any $P \in \mathcal{I}_{\mathcal{N}}(c,d,k)$ we have \begin{equation} \label{eq:IneqMeanAux} \left| \frac{1}{m} \sum_{i=1}^m |P\big(x^{(i)}\big)|^2-\E\big[ |P(X)|^2\big]\right|<\frac{\varepsilon}{2}. \end{equation} In particular, $$\bigg|\eval_{\mu} \left(P_{\underline{x}_m}\right)-\frac{1}{m}\sum_{i=1}^m |P_{\underline{x}_m}\big(x^{(i)}\big)|^2 \bigg| \le \frac{\varepsilon}{2},$$ so that $S^u_{\varepsilon /2}\subseteq S_{\varepsilon /2}^{a}$. On the other hand, taking the infimum in \eqref{eq:IneqMeanAux} yields $$\min_{P \in \mathcal{I}_{\mathcal{N}}} \E\big[ |P(X)|^2\big]- \frac{\varepsilon}{2}\le \min_{P \in \mathcal{I}_{\mathcal{N}}} \frac{1}{m} \sum_{i=1}^m |P(x^{(i)})|^2\le \min_{P \in \mathcal{I}_{\mathcal{N}}}\E\big[ |P(X)|^2\big]+\frac{\varepsilon}{2}, $$ hence $S^u_{\varepsilon /2}\subseteq S_{\varepsilon /2}^{c}$. Therefore, $$\mu^m(S^*_{\varepsilon})\ge \mu^m(S^u_{\varepsilon /2}).$$ Finally, by \Cref{lem:UniformDeviationBoundAuxiliary}, for $$m \ge \frac{128}{\varepsilon^2} \bigg[ \log \frac{2 }{\alpha}+N' \log \Big(1+\frac{128 }{\varepsilon}\Big)\bigg],$$ we have $\mu^m(S^u_{\varepsilon /2})\ge 1-\alpha$, and therefore $\mu^m(S^*_{\varepsilon})\ge 1-\alpha$. \end{proof} 

For the hypothesis class $\mathcal{H}_n(l,d,k)$, with $c=n-l$, consider the regime in which $l,d,k$ are fixed while the ambient dimension $n$ tends to infinity. In this setting, the dimension $N'$ of the coefficient space $\R[x_1,\dots,x_n]^c_{\le d}$ satisfies $$N' = c \binom{n+d}{d} = O(n^{d+1}),$$ for fixed degree $d$. Substituting this growth of $N'$ into the sample complexity bound, and observing that $$ \log\left(\frac{2}{\alpha}\right)+N'\log\left(1+\frac{128}{\varepsilon}\right) = O\big(N'\log(128/\varepsilon)\big), $$ we obtain that, for fixed $l,d,k$ and $\varepsilon,\alpha\in(0,1)$, $$m_{\operatorname{ERM}}^*(n,l,d,\varepsilon,\alpha) = O\left(\frac{1}{\varepsilon^2}n^{d+1}\log\left(\frac{128}{\varepsilon}\right)\right).$$ In particular, for fixed degree $d$ and fixed accuracy parameters, the sample complexity grows at most polynomially in the ambient dimension $n$.\clearpage

\bigskip
\begin{center}
{\Large\bf Part II}\\[0.3cm]
{\large Effective Algorithms and Complexity Bounds}
\end{center}
\addcontentsline{toc}{part}{Part II: Effective Algorithms and Complexity Bounds}
\bigskip

\section{Algorithmic Optimization Framework}\label{sec:AlgorithmicOptimizationFramework}
In \Cref{subsec:DecisionSample}, and in particular in the main theorem of Part I, \Cref{thm:MainTheoremPartI}, we showed that the testing problem reduces to a constrained optimization problem over polynomial systems for the empirical risk functional $\mathcal E_m$, whose algorithmic solution is provided by \Cref{thm:RelaxedAdmissibleGlobalOptimizationPartI}. Part II is devoted to the proof of this latter result. Since the observed sample is regarded as fixed throughout Part II, we simplify the notation by writing $\mathcal E:=\mathcal E_m$ for the empirical auxiliary cost introduced in Part I. We therefore consider the problem of constructing an algorithm that, given an observed sample $x^{(1)},\ldots,x^{(m)}$, solves the optimization problem $$\min_{P\in\mathcal I_{\mathcal N}(c,d,k)}\mathcal E(P),\qquad \mathcal E(P):=\frac{1}{m}\sum_{i=1}^m\n{P(x^{(i)})}^2.$$ Here $\mathcal I_{\mathcal N}(c,d,k)\subseteq S_{\mathcal N}^{cN-1}$ is obtained by discretizing the discriminant functional $\delta_{D_R^n}$ and the $C^2$--norm over a net $\mathcal N$ of $D_R^n$, where $$S_{\mathcal N}^{cN-1}:=\left\{P\in\mathbb R[x_1,\ldots,x_n]_{\le d}^c\ \middle|\ \n{P}_{\mathcal N}=1\right\}.$$ We first reformulate this optimization problem in terms of a matrix representation of polynomial systems, which provides the finite-dimensional setting used in the localization and discretization arguments below.

Throughout Part II, we work in an extended Blum--Shub--Smale model allowing exact spectral decomposition of real symmetric matrices and exact evaluation of the tensor spectral norms appearing below. Arithmetic complexity is measured by the number of arithmetic operations and comparisons performed. In this idealized model, we assign a cost $\mathcal O(c^3)$ to the spectral decomposition of an $c\times c$ real symmetric matrix, following the conventional dense numerical linear algebra operation count. The cost of tensor spectral norm evaluation is denoted by $T_{\mathrm{tensor}}(n,c)$ and is left unspecified.

\subsection{Matrix Representation of Polynomial Systems}\label{sec:MatrixRep}
Let $N=N(n,d)$ denote the dimension of the vector space $\mathbb R[x_1,\dots,x_n]_{\leq d}$, and fix a basis $\mathcal B=\{b_1(x),\dots,b_N(x)\}$. We identify each polynomial system with its coefficient matrix with respect to $\mathcal B$. Under this identification, the empirical cost, the discretized norm, and the discriminant functional admit finite-dimensional matrix representations. This viewpoint is inspired by Darboux kernel methods; see \cite{Pepeejin} for applications in TDA and \cite{lasserre2015introduction} for a general overview.  Define $$P_{\mathcal B}(x):=(b_1(x),\dots,b_N(x))^\top\in\mathbb R[x_1,\dots,x_n]_{\leq d}^N.$$ Then every polynomial map $P\in\mathbb R[x_1,\dots,x_n]_{\leq d}^c$ can be uniquely represented in the form \begin{equation}\label{eq:PolyMatrix} P(x)=A P_{\mathcal B}(x), \end{equation} where $A\in\Mat(c\times N,\mathbb R)$ is the coefficient matrix of $P$. Differentiating this representation gives $$\Jac P(x)=A J_{\mathcal B}(x),\qquad J_{\mathcal B}(x):=\Jac P_{\mathcal B}(x),$$ where $J_{\mathcal B}(x)$ is the matrix whose rows are the gradients of the basis elements $b_1,\dots,b_N$. The following \Cref{prop:JetSec} collects some basic properties of this matrix-valued map.

\begin{prop}\label{prop:JetSec}
Let $d\geq 1$ and let $\mathcal B$ be a basis of $\R[x_1,\dots,x_n]_{\leq d}$. Then:
\\
i) A $c\times n$ matrix $V(x)$ with polynomial entries is the Jacobian of some $P \in \R[x_1,\dots,x_n]_{\leq d}^c$ if and only if $V(x)=A J_{\mathcal B}(x)$ for a matrix $A \in \Mat(c\times N,\R)$.
\\
ii) For every $x\in\R^n$, the matrix $J_{\mathcal B}(x)$ has rank $n$.
\end{prop}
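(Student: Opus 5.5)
Part i) is pure linear algebra, so I would prove it by unwinding \eqref{eq:PolyMatrix}; part ii) needs only the fact that the degree-one monomials lie in the span of $\mathcal B$.

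For the forward direction of i), let $V(x)=\Jac P(x)$ with $P\in\R[x_1,\dots,x_n]_{\le d}^c$. Write $P=AP_{\mathcal B}$ for the unique coefficient matrix $A$. Differentiating row by row, each row of $\Jac P$ is $\sum_j a_{ij}\nabla b_j$, so $V(x)=AJ_{\mathcal B}(x)$. For the converse, suppose $V(x)=AJ_{\mathcal B}(x)$ and set $P:=AP_{\mathcal B}$. Then $P\in\R[x_1,\dots,x_n]_{\le d}^c$ and, by the same computation, $\Jac P=V$. The only care needed is in the phrasing: $V$ has polynomial entries of degree at most $d-1$ whenever it is a Jacobian, and the equivalence is an equality of matrix-valued polynomial maps.

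For ii), note that $J_{\mathcal B}(x)$ is $N\times n$, so it suffices to show that its columns are linearly independent, or equivalently that its rows span $\R^n$. Since $d\ge1$, each coordinate function $x_i$ lies in $\R[x_1,\dots,x_n]_{\le d}$. Writing $x_i=\sum_j c_{ij}b_j$ and letting $C=(c_{ij})\in\Mat(n\times N,\R)$, we get $x=CP_{\mathcal B}(x)$. Differentiating gives
\[
I_n=\Jac(x)=C\,J_{\mathcal B}(x)\qquad\text{for every }x\in\R^n.
\]
Hence $n=\rank(CJ_{\mathcal B}(x))\le\rank J_{\mathcal B}(x)\le n$, so $\rank J_{\mathcal B}(x)=n$ at every point, uniformly in $x$.

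No step is a genuine obstacle. The only point needing care is using $d\ge1$ so that the linear coordinates are available in the span of $\mathcal B$; the rest is bookkeeping. Uniqueness of $A$ in i) is not required for the equivalence itself, but it follows because $\mathcal B$ is a basis.
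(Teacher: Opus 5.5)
Your proof is correct and follows the paper's argument. Part i) is the same unwinding of $P=AP_{\mathcal B}$, and your identity $C\,J_{\mathcal B}(x)=I_n$ in part ii) is the matrix form of the paper's observation: the map $p\mapsto\nabla p(x)$ is already surjective on affine functions, which lie in $\R[x_1,\dots,x_n]_{\leq d}$ because $d\ge1$.
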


\begin{proof} i) Suppose that $V(x)=A J_{\mathcal B}(x)$ for some $A \in \Mat(c\times N,\R)$. Define $P_A(x)\in \R[x_1,\dots,x_n]^c_{\le d}$ by $P_A(x):=A P_{\mathcal B}(x)$. Then $$\Jac P_A(x)=A J_{\mathcal B}(x)=V(x).$$ Conversely, if $V(x)=\Jac P(x)$ for some $P\in \R[x_1,\dots,x_n]^c_{\le d}$, then there exists a unique matrix $A$ such that $P(x)=A P_{\mathcal B}(x)$, and therefore $$V(x)=\Jac P(x)=A J_{\mathcal B}(x).$$ ii) Fix $x\in\R^n$ and consider the linear map $\mathcal D_x:\R[x_1,\dots,x_n]_{\le d}\to\R^n$ defined by $$\mathcal D_x(p)=\nabla p(x).$$ For any basis $\mathcal B=\{b_1,\dots,b_N\}$, the image of $\mathcal D_x$ coincides with the span of $\{\nabla b_i(x)\}$. Since $d\ge1$, the space of affine functions
$$A:=\left\{a_0+\sum_{j=1}^n a_j x_j \mid a_j\in\R\right\}$$
is contained in $\R[x_1,\dots,x_n]_{\le d}$, and on this space $\mathcal D_x$ acts as $$\mathcal D_x\!\left(a_0+\sum_{j=1}^n a_j x_j\right)=(a_1,\dots,a_n).$$ Hence $\mathcal D_x$ is surjective, which implies $\rank \mathcal D_x=n$, and therefore $\rank J_{\mathcal B}(x)=n$ for every $x\in\R^n$.
\end{proof}

We now express the auxiliary empirical risk, the discriminant functional, and the discretized norm as functionals on $\Mat(c\times N,\mathbb R)$ under the above identification. Consider the linear isomorphism $\varphi_{\mathcal B}:\Mat(c\times N,\mathbb R)\to\mathbb R[x_1,\dots,x_n]_{\leq d}^c$, defined by $$\varphi_{\mathcal B}(A):=A P_{\mathcal B}(x).$$ We denote the corresponding pull-backs of the auxiliary empirical risk, the discriminant functional, and the discretized norm by $$\mathcal E_{\mathcal B}(A):=\mathcal E(\varphi_{\mathcal B}(A)),\qquad \delta_{\mathcal B,\mathcal N}(A):=\delta_{\mathcal N}(\varphi_{\mathcal B}(A)),\qquad \n{A}_{\mathcal B,\mathcal N}:=\n{\varphi_{\mathcal B}(A)}_{\mathcal N}.$$ To simplify the notation, in the remainder of the paper we omit the subscript $\mathcal B$ and denote $\mathcal E_{\mathcal B}$, $\delta_{\mathcal B,\mathcal N}$, and $\n{A}_{\mathcal B,\mathcal N}$ by $\mathcal E$, $\delta_{\mathcal N}$, and $\n{A}_{\mathcal N}$, respectively, specifying the dependence on the chosen basis $\mathcal B$ whenever relevant.

\begin{prop}
\label{prop:MatrixRepresentationCost}
Let $\mathcal B=\{b_1,\ldots,b_N\}$ be a basis of $\mathbb R[x_1,\ldots,x_n]_{\leq d}$. Then there exists a positive semidefinite matrix $V\in\Sym(N,\mathbb R)$ such that $$\mathcal{E}(A)=\operatorname{tr}(AVA^\top). $$
\end{prop}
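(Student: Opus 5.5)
The plan is to expand the empirical cost directly in coordinates. Writing $P=\varphi_{\mathcal B}(A)$, we have $P(x^{(i)})=A P_{\mathcal B}(x^{(i)})$ for each sample point, where $P_{\mathcal B}(x^{(i)})\in\mathbb R^N$ is the vector of basis evaluations. Then
\[
\n{P(x^{(i)})}^2=P_{\mathcal B}(x^{(i)})^\top A^\top A\,P_{\mathcal B}(x^{(i)})=\operatorname{tr}\bigl(A\,P_{\mathcal B}(x^{(i)})P_{\mathcal B}(x^{(i)})^\top A^\top\bigr),
\]
using the cyclic invariance of the trace, since $v^\top A^\top A v=\operatorname{tr}(Avv^\top A^\top)$ for any $v\in\mathbb R^N$.

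Averaging over $i=1,\dots,m$ and using linearity of the trace, we set
\[
V:=\frac1m\sum_{i=1}^m P_{\mathcal B}(x^{(i)})P_{\mathcal B}(x^{(i)})^\top\in\Mat(N\times N,\mathbb R),
\]
which is the empirical moment (Gram) matrix of the basis evaluated at the samples. This gives $\mathcal E(A)=\operatorname{tr}(AVA^\top)$ immediately.

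It then remains to check that $V\in\Sym(N,\mathbb R)$ and that $V$ is positive semidefinite. Both properties hold because each summand $vv^\top$ is symmetric and satisfies $w^\top vv^\top w=(v^\top w)^2\ge0$ for every $w\in\mathbb R^N$. Since sums with nonnegative weights preserve both properties, $V$ inherits them.

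There is no genuine obstacle here: the argument is a bookkeeping identity. The only point requiring care is keeping the orientation of the matrices consistent ($A$ is $c\times N$ and $P_{\mathcal B}(x)$ is a column vector) so that the trace expression has the stated form $AVA^\top$ rather than $A^\top VA$. It is also worth remarking that $V$ depends only on the sample and on the basis $\mathcal B$, not on $A$.
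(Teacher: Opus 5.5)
Your proposal is correct and is the paper's proof: the paper defines the same empirical Gram matrix $V=\frac1m\sum_{j=1}^m P_{\mathcal B}(x^{(j)})P_{\mathcal B}(x^{(j)})^\top$ and concludes via the identity $\|Aw\|^2=\operatorname{tr}(Aww^\top A^\top)$. Symmetry and positive semidefiniteness follow, as you say, from the rank-one summands.
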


\begin{proof}
Define $$V:=\frac1m\sum_{j=1}^m P_{\mathcal B}(x^{(j)})P_{\mathcal B}(x^{(j)})^\top.$$ Then $V$ is symmetric and positive semidefinite. Moreover, using $\|Aw\|^2=\operatorname{tr}(Aww^\top A^\top)$, we obtain $$\mathcal E(A) =\frac1m\sum_{j=1}^m\|AP_{\mathcal B}(x^{(j)})\|^2=\operatorname{tr}(AVA^\top).$$
\end{proof}

The same Gram-matrix representation applies to the pointwise quantities entering the discriminant functional.

\begin{prop}
\label{prop:MatrixRepresentationDiscriminant}
For every $y_i\in\mathcal N$, there exist matrices $G_i^{(0)},G_i^{(1)}\in\Sym(N,\mathbb R)$ with $\operatorname{rank}(G_i^{(1)})=n$, such that $$\delta_{\mathcal N}(A)=\min_{1\leq i\leq|\mathcal N|}\sqrt{\operatorname{tr}(AG_i^{(0)}A^\top)+\lambda_{\min}(AG_i^{(1)}A^\top)}.$$ 
\end{prop}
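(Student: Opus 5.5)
The plan is to mimic the proof of \Cref{prop:MatrixRepresentationCost}, building explicit Gram matrices from the evaluation vector $P_{\mathcal B}(y_i)$ and the Jacobian matrix $J_{\mathcal B}(y_i)$ at each net point. Fix $y_i\in\mathcal N$ and write $P=\varphi_{\mathcal B}(A)$, so that
\[
P(y_i)=AP_{\mathcal B}(y_i),\qquad \Jac P(y_i)=AJ_{\mathcal B}(y_i).
\]
Set
\[
G_i^{(0)}:=P_{\mathcal B}(y_i)P_{\mathcal B}(y_i)^\top,\qquad G_i^{(1)}:=J_{\mathcal B}(y_i)J_{\mathcal B}(y_i)^\top,
\]
both symmetric positive semidefinite $N\times N$ matrices. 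The evaluation term is then exactly as in \Cref{prop:MatrixRepresentationCost}:
\[
\n{P(y_i)}^2=\n{AP_{\mathcal B}(y_i)}^2=\operatorname{tr}(AG_i^{(0)}A^\top).
\]

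For the Jacobian term, recall that $\sigma_{\min}(\Jac P(y_i))^2$ is by definition the smallest eigenvalue of $\Jac P(y_i)\Jac P(y_i)^\top$. Substituting the matrix representation gives
\[
\Jac P(y_i)\Jac P(y_i)^\top=AJ_{\mathcal B}(y_i)J_{\mathcal B}(y_i)^\top A^\top=AG_i^{(1)}A^\top,
\]
a $c\times c$ symmetric positive semidefinite matrix. Hence
\[
\sigma_{\min}(\Jac P(y_i))^2=\lambda_{\min}(AG_i^{(1)}A^\top).
\]
Adding the two terms, taking square roots and minimizing over $i$ reproduces the definition of $\delta_{\mathcal N}$, which gives the claimed formula.

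The rank claim is the only step needing more than substitution, and it follows from \Cref{prop:JetSec}(ii). That result says $J_{\mathcal B}(y_i)$ is an $N\times n$ matrix of rank $n$. Since $\operatorname{rank}(MM^\top)=\operatorname{rank}(M)$ for any real matrix $M$ (because $\ker MM^\top=\ker M^\top$), we get $\operatorname{rank}G_i^{(1)}=n$.

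The main point requiring care is the orientation of $J_{\mathcal B}$. Because $\Jac P(x)=AJ_{\mathcal B}(x)$ with $A$ of size $c\times N$, the matrix $J_{\mathcal B}$ must be $N\times n$, with rows the gradients of the $b_j$. This is consistent with \Cref{prop:JetSec}, so $G_i^{(1)}$ is indeed $N\times N$, and no further obstacle is expected.
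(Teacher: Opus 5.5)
Your proposal is correct and follows the paper's proof essentially verbatim. You use the same Gram matrices $G_i^{(0)}=P_{\mathcal B}(y_i)P_{\mathcal B}(y_i)^\top$ and $G_i^{(1)}=J_{\mathcal B}(y_i)J_{\mathcal B}(y_i)^\top$, the same substitution identities, and the rank claim via \Cref{prop:JetSec}(ii); your justification $\operatorname{rank}(MM^\top)=\operatorname{rank}(M)$ through $\ker MM^\top=\ker M^\top$ supplies a step the paper only asserts.
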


\begin{proof} Fix $y_i\in\mathcal N$ and define $$G_i^{(0)}:=P_{\mathcal B}(y_i)P_{\mathcal B}(y_i)^\top,\qquad G_i^{(1)}:=\operatorname{Jac}P_{\mathcal B}(y_i)\operatorname{Jac}P_{\mathcal B}(y_i)^\top.$$ Both matrices are positive semidefinite. Moreover, $$\|P(y_i)\|^2=\operatorname{tr}(AG_i^{(0)}A^\top) \qquad \text{and} \qquad \operatorname{Jac}P(y_i)\operatorname{Jac}P(y_i)^\top=AG_i^{(1)}A^\top.$$ Therefore, $$\sigma_{\min}(\operatorname{Jac}P(y_i))^2=\lambda_{\min}(AG_i^{(1)}A^\top).$$ Hence, $$\begin{aligned}\delta_{\mathcal N}(A)&=\min_{1\leq i\leq|\mathcal N|}\sqrt{\|P(y_i)\|^2+\sigma_{\min}(\operatorname{Jac}P(y_i))^2}\\&=\min_{1\leq i\leq|\mathcal N|}\sqrt{\operatorname{tr}(AG_i^{(0)}A^\top)+\lambda_{\min}(AG_i^{(1)}A^\top)}.\end{aligned}$$ It remains to verify the rank property. By \Cref{prop:JetSec}, the matrix $\operatorname{Jac}P_{\mathcal B}(y_i)$ has rank $n$. Therefore, $$\operatorname{rank}(G_i^{(1)})=\operatorname{rank}\left(\operatorname{Jac}P_{\mathcal B}(y_i)\operatorname{Jac}P_{\mathcal B}(y_i)^\top\right)=n.$$ \end{proof}
We finally express the discretized norm in terms of the coefficient matrix, including the second-order contribution from the Hessian.

\begin{prop}
\label{prop:MatrixRepresentationNorm}
For every $y_i\in\mathcal N$, let $G_i^{(0)}$ and $G_i^{(1)}$ be the matrices introduced in \Cref{prop:MatrixRepresentationDiscriminant} and $S^{n-1}\subseteq \R^n$ the unit sphere. Then there exists a map $$G_i^{(2)}:S^{n-1}\times S^{n-1}\to\Sym(N,\mathbb R)$$ such that $$\n{A}_{\mathcal N}^{2}=\max_{1\leq i\leq|\mathcal N|}\left\{\operatorname{tr}(AG_i^{(0)}A^\top)+\lambda_{\max}(AG_i^{(1)}A^\top)+\max_{v_1,v_2\in S^{n-1}}\operatorname{tr}\left(AG_i^{(2)}(v_1,v_2)A^\top\right)\right\}. $$
\end{prop}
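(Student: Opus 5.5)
The plan is to treat each of the three terms in the definition of $\n{P}_{\mathcal N}^2$ separately at a fixed net point $y_i$, and then take the maximum over $i$. With $P=AP_{\mathcal B}$, the zeroth-order term is already known from \Cref{prop:MatrixRepresentationDiscriminant}: $\n{P(y_i)}^2=\operatorname{tr}(AG_i^{(0)}A^\top)$. For the first-order term, the same proposition gives $\Jac P(y_i)\Jac P(y_i)^\top=AG_i^{(1)}A^\top$. Since $\n{M}_{\op}^2=\lambda_{\max}(MM^\top)$ for any matrix $M$, we get $\n{\Jac P(y_i)}_{\op}^2=\lambda_{\max}(AG_i^{(1)}A^\top)$.

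The only new ingredient is the Hessian term, and this is the step needing care. I would write the second differential componentwise. For $v_1,v_2\in\R^n$, set
$$w_i(v_1,v_2):=D^2P_{\mathcal B}(y_i)[v_1,v_2]=\bigl(v_1^\top \nabla^2 b_1(y_i)v_2,\dots,v_1^\top\nabla^2 b_N(y_i)v_2\bigr)^\top\in\R^N.$$
By linearity of differentiation in the coefficients, $D^2P(y_i)[v_1,v_2]=A\,w_i(v_1,v_2)$. Define the rank-one positive semidefinite matrix
$$G_i^{(2)}(v_1,v_2):=w_i(v_1,v_2)w_i(v_1,v_2)^\top\in\Sym(N,\R).$$
Then the identity $\n{Aw}^2=\operatorname{tr}(Aww^\top A^\top)$, already used in \Cref{prop:MatrixRepresentationCost}, gives
$$\n{D^2P(y_i)[v_1,v_2]}^2=\operatorname{tr}\bigl(AG_i^{(2)}(v_1,v_2)A^\top\bigr).$$

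Next I would recall that the operator norm of a bilinear map $\R^n\times\R^n\to\R^c$ is the supremum of $\n{D^2P(y_i)[v_1,v_2]}$ over unit vectors $v_1,v_2$. This supremum is attained because $S^{n-1}\times S^{n-1}$ is compact and the map is continuous. Squaring commutes with the supremum since everything is nonnegative, so
$$\n{D^2P(y_i)}_{\op}^2=\max_{v_1,v_2\in S^{n-1}}\operatorname{tr}\bigl(AG_i^{(2)}(v_1,v_2)A^\top\bigr).$$

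Finally, adding the three contributions at $y_i$ and maximizing over $1\le i\le|\mathcal N|$ reproduces exactly the definition of $\n{P}_{\mathcal N}^2$, which gives the stated formula. I do not expect a real obstacle. The one point to be careful about is that the inner maximization over $(v_1,v_2)$ must be kept separate from the trace: the operator norm is not the trace of a single fixed Gram matrix, which is exactly why $G_i^{(2)}$ has to depend on $(v_1,v_2)$.
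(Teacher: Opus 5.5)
Your proof is correct and follows the paper's argument: the same rank-one choice $G_i^{(2)}(v_1,v_2)=w_i(v_1,v_2)w_i(v_1,v_2)^\top$ with $w_i(v_1,v_2)=D^2P_{\mathcal B}(y_i)[v_1,v_2]$, the trace identity $\n{Aw}^2=\operatorname{tr}(Aww^\top A^\top)$, and a maximum over the unit spheres. You also explicitly justify $\n{\Jac P(y_i)}_{\op}^2=\lambda_{\max}(AG_i^{(1)}A^\top)$ and the attainment of the maximum by compactness, two steps the paper leaves implicit.
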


\begin{proof} Fix $y_i\in\mathcal N$, and let $G_i^{(0)}$ and $G_i^{(1)}$ be as in \Cref{prop:MatrixRepresentationDiscriminant}. For every $v_1,v_2\in S^{n-1}$, define $$ G_i^{(2)}(v_1,v_2) := D^2P_{\mathcal B}(y_i)[v_1,v_2] D^2P_{\mathcal B}(y_i)[v_1,v_2]^\top . $$ The matrices $G_i^{(2)}(v_1,v_2)$ are positive semidefinite. Moreover, for every $v_1,v_2\in S^{n-1}$, $$ \begin{aligned} \|D^2P(y_i)[v_1,v_2]\|^2 &= \|A D^2P_{\mathcal B}(y_i)[v_1,v_2]\|^2 = \operatorname{tr} \left( AG_i^{(2)}(v_1,v_2)A^\top \right). \end{aligned} $$ Therefore, $$ \|D^2P(y_i)\|_{\operatorname{op}}^2 = \max_{v_1,v_2\in S^{n-1}} \operatorname{tr} \left( AG_i^{(2)}(v_1,v_2)A^\top \right). $$ Using the definition of the discretized norm, $$ \n{A}_{\mathcal N}^{2} = \max_{1\leq i\leq|\mathcal N|} \left( \|P(y_i)\|^2 + \|\operatorname{Jac}P(y_i)\|_{\operatorname{op}}^2 + \|D^2P(y_i)\|_{\operatorname{op}}^2 \right), $$ we obtain $$ \begin{aligned} \n{A}_{\mathcal N}^{2} = \max_{1\leq i\leq|\mathcal N|} \Bigg\{ & \operatorname{tr}(AG_i^{(0)}A^\top) + \lambda_{\max}(AG_i^{(1)}A^\top)  + \max_{v_1,v_2\in S^{n-1}} \operatorname{tr} \left( AG_i^{(2)}(v_1,v_2)A^\top \right) \Bigg\}. \end{aligned} $$ \end{proof}

The matrices introduced above can be interpreted as Gram matrices associated with the zeroth- and first-order jets of the basis $\mathcal B$ at the points of $\mathcal N$, while the maps $G_i^{(2)}$ encode the corresponding second-order jet contributions.

\section{Localized Net of Polynomial Systems}
Using the matrix representation introduced in \Cref{sec:MatrixRep}, the constrained optimization problem takes the form
\begin{equation}\label{sec2Intro:ConstrOpt}
\inf_{U\in \mathcal{I}_{\mathcal N}(c,d,k)}\mathcal E(U)
\qquad\text{where}\qquad
\mathcal E(A)=\operatorname{tr}(AVA^\top).
\end{equation}
Since $V$ is symmetric and positive semidefinite, its low spectral subspaces provide a natural localization of the low-cost region in coefficient space.In \Cref{subsec:Unconstrained_Optimization_Problem}, we show that the sublevel sets $$\mathcal U(\varepsilon^*):=\left\{A\in\Mat(c\times N,\mathbb R)\ \middle|\ \mathcal E(A)\leq\varepsilon^*\right\}$$ are contained in tubular neighborhoods of the low spectral subspaces of $V$. We then estimate the covering complexity of these localized regions in \Cref{subsec:LocalizedNet} and incorporate the admissibility constraints in \Cref{subsec:Admissible_Localized_Net}. An explicit construction of the corresponding localized nets is given in \Cref{sec:Explicit_Net_Construction}.

\subsection{Unconstrained Optimization Problem}
\label{subsec:Unconstrained_Optimization_Problem}
Let $V\in\Sym(N,\mathbb R)$ be the positive semidefinite matrix associated with the observed sample by \Cref{prop:MatrixRepresentationCost}, so that $\mathcal E(A)=\operatorname{tr}(AVA^\top)$. Choose an ordered orthonormal eigenbasis $v_1,\dots,v_N$ of $\mathbb R^N$ such that  $$Vv_j=\varepsilon_jv_j, \qquad 0\leq\varepsilon_1\leq\varepsilon_2\leq\cdots\leq\varepsilon_N,$$ where $\varepsilon_1,\dots,\varepsilon_N$ are the eigenvalues of $V$, counted with multiplicity. For every $1\leq r<N$, let $$ E_r:=\operatorname{span}\{v_1,\dots,v_r\}\subseteq\mathbb R^N $$ be the low spectral subspace associated with the $r$ smallest eigenvalues, and let $\pi_r:\mathbb R^N\to E_r$ be the corresponding orthogonal projection. We equip $\Mat(c\times N,\mathbb R)$ with the Frobenius inner product $$\langle A,B\rangle_F:=\operatorname{tr}(AB^\top)$$ and the associated Frobenius norm $\|\cdot\|_F$. If $a_1,\dots,a_c\in\mathbb R^N$ are the rows of $A\in\Mat(c\times N,\mathbb R)$, we extend $\pi_r$ row-wise by defining $$ \Pi_r:\Mat(c\times N,\mathbb R)\longrightarrow\Mat(c\times N,\mathbb R),
\qquad
\Pi_r(A):=
\begin{pmatrix}
\pi_r(a_1)\\
\vdots\\
\pi_r(a_c)
\end{pmatrix}.
$$

The row-wise projection $\Pi_r$ naturally determines the matrix counterpart of the low spectral subspace $E_r$.

\begin{defn}
Let $V\in\Sym(N,\mathbb R)$ be positive semidefinite, and let $v_1,\dots,v_N$ be an ordered orthonormal eigenbasis of $V$. For $r<N$, let $E_r=\operatorname{span}\{v_1,\dots,v_r\}\subseteq\mathbb R^N$, and let $\Pi_r$ be the row-wise orthogonal projection defined above. We define the corresponding low spectral subspace $\mathcal{V}_r\subseteq \Mat(c\times N,\mathbb R)$ by $$\mathcal{V}_r=\big\{A\in\Mat(c\times N,\mathbb R) \ | \ \Pi_r(A)=A\big\}. $$
\end{defn}

Equivalently, $\mathcal V_r$ consists of the coefficient matrices whose rows have no components along the eigenvectors $v_{r+1},\dots,v_N$. Since $\Pi_r$ is an orthogonal projection, the distance from $A$ to $\mathcal V_r$ in the Frobenius norm is $$\operatorname{dist}_F(A,\mathcal V_r)=\|A-\Pi_r(A)\|_F.$$ For $\rho>0$,  define the tubular neighborhood of radius $\rho$ around $\mathcal V_r$ by $$\operatorname{Tube}_F(\mathcal V_r,\rho):=\left\{A\in\Mat(c\times N,\mathbb R)\ \middle|\ \operatorname{dist}_F(A,\mathcal V_r)\leq\rho\right\}.$$

The sublevel set $\mathcal U(\varepsilon^*)$ satisfies the following spectral localization estimate.

\begin{prop}
\label{prop:SpecLoc}
For every $1\leq r<N$ such that $\varepsilon_{r+1}>0$, $$\mathcal U(\varepsilon^*)\subseteq\operatorname{Tube}_F\left(\mathcal V_r,\sqrt{\frac{\varepsilon^*}{\varepsilon_{r+1}}}\right).$$
\end{prop}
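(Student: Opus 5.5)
The plan is to diagonalize the cost in the eigenbasis of $V$ and compare it row by row with the distance to $\mathcal V_r$. Write $A$ with rows $a_1,\dots,a_c\in\mathbb R^N$. By \Cref{prop:MatrixRepresentationCost},
$$\mathcal E(A)=\operatorname{tr}(AVA^\top)=\sum_{i=1}^c a_i^\top V a_i.$$
Expanding each row in the orthonormal eigenbasis, $a_i=\sum_{j=1}^N\langle a_i,v_j\rangle v_j$, gives
$$a_i^\top Va_i=\sum_{j=1}^N\varepsilon_j\langle a_i,v_j\rangle^2.$$

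Next, the contribution of the low spectral indices can be discarded. Since $V$ is positive semidefinite, all $\varepsilon_j\ge0$, so dropping the terms $j\le r$ decreases the sum. For $j\ge r+1$ the ordering of the eigenvalues gives $\varepsilon_j\ge\varepsilon_{r+1}$. Hence
$$\mathcal E(A)\ \ge\ \varepsilon_{r+1}\sum_{i=1}^c\sum_{j=r+1}^N\langle a_i,v_j\rangle^2.$$

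The remaining double sum is exactly the squared distance to $\mathcal V_r$. Indeed,
$$\sum_{j>r}\langle a_i,v_j\rangle^2=\n{a_i-\pi_r(a_i)}^2,$$
and summing over rows yields $\n{A-\Pi_r(A)}_F^2=\operatorname{dist}_F(A,\mathcal V_r)^2$, using the identity stated just before the proposition. Therefore
$$\mathcal E(A)\ge\varepsilon_{r+1}\operatorname{dist}_F(A,\mathcal V_r)^2.$$

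To conclude, take $A\in\mathcal U(\varepsilon^*)$, so that $\mathcal E(A)\le\varepsilon^*$. Dividing by $\varepsilon_{r+1}>0$ gives $\operatorname{dist}_F(A,\mathcal V_r)\le\sqrt{\varepsilon^*/\varepsilon_{r+1}}$, which is membership in the tube.

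No step is a real obstacle. The only points to check are that $\Pi_r$ acts row-wise, so the Frobenius distance decomposes over rows, and that positive semidefiniteness justifies dropping the low-index terms.
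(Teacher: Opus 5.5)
Your proposal is correct and follows essentially the same route as the paper's proof: expand each row in the eigenbasis of $V$, write $\mathcal E(A)=\sum_{i}\sum_{j}\varepsilon_j\alpha_{ij}^2$, drop the indices $j\le r$ using $\varepsilon_j\ge 0$, bound the remaining terms below by $\varepsilon_{r+1}$, and identify the tail sum with $\operatorname{dist}_F(A,\mathcal V_r)^2=\n{A-\Pi_r(A)}_F^2$. The two points you flag as needing checking (row-wise action of $\Pi_r$ and positive semidefiniteness) are exactly the ones the paper's argument relies on.
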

\begin{proof}
By the definition of $\operatorname{Tube}_F(\mathcal V_r,\rho)$, it is enough to show that every $A\in\mathcal U(\varepsilon^*)$ satisfies $$\operatorname{dist}_F(A,\mathcal V_r)\leq\sqrt{\frac{\varepsilon^*}{\varepsilon_{r+1}}}.$$ Fix $A\in\mathcal U(\varepsilon^*)$. Let $v_1,\dots,v_N$ be an orthonormal basis that diagonalizes $V$. Each row $a_i$ of $A$ can be expanded as $$a_i=\sum_{j=1}^N \alpha_{ij}v_j.$$ Since $v_1,\dots,v_N$ diagonalize $V$, we have $$a_i V a_i^{\top}=\sum_{j=1}^N \varepsilon_j\alpha_{ij}^2.$$ Therefore, $$\mathcal E(A)=\operatorname{tr}(AVA^{\top})=\sum_{i=1}^c a_i V a_i^{\top}=\sum_{i=1}^c\sum_{j=1}^N \varepsilon_j\alpha_{ij}^2.$$ Since $\varepsilon_j\geq \varepsilon_{r+1}$ for every $j>r$, splitting the sum at $r$ gives
$$
\begin{aligned}
\mathcal E(A)=\sum_{i=1}^c\sum_{j=1}^r \varepsilon_j\alpha_{ij}^2+\sum_{i=1}^c\sum_{j=r+1}^N \varepsilon_j\alpha_{ij}^2& \geq \sum_{i=1}^c\sum_{j=r+1}^N \varepsilon_j\alpha_{ij}^2 \geq \varepsilon_{r+1}\sum_{i=1}^c\sum_{j=r+1}^N \alpha_{ij}^2.
\end{aligned}
$$ 
Using $\mathcal E(A)\leq\varepsilon^*$, we obtain
\begin{equation}
\label{prop:Spectral Localization_a)}
\sum_{i=1}^c\sum_{j=r+1}^N \alpha_{ij}^2\leq \frac{\varepsilon^*}{\varepsilon_{r+1}}.
\end{equation}
On the other hand, identifying $\Pi_r$ with right multiplication by the corresponding projection matrix on $\mathbb R^N$, we have
\begin{equation}
\label{prop:Spectral Localization_b)}
\operatorname{dist}_F(A,\mathcal{V}_r)^2=\|A(I-\Pi_r)\|_F^2=\sum_{i=1}^c\left\|\sum_{j=r+1}^N \alpha_{ij}v_j\right\|^2=\sum_{i=1}^c\sum_{j=r+1}^N \alpha_{ij}^2.
\end{equation}
Combining \eqref{prop:Spectral Localization_a)} with \eqref{prop:Spectral Localization_b)}, we conclude that $$\operatorname{dist}_F(A,\mathcal{V}_r)^2\leq \frac{\varepsilon^*}{\varepsilon_{r+1}}.$$ This proves the claim.
\end{proof}
Since all norms are equivalent on the finite-dimensional space
$\Mat(c\times N,\mathbb R)$, the spectral localization established in \Cref{prop:SpecLoc} can be formulated in the $\n{\cdot}\_{\mathcal N}$ norm. We recall this norm equivalence and introduce the corresponding constants, which will be used in the later discussion of arithmetic complexity.

\begin{prop}
\label{prop:NormsEquivalence}
There exist constants $m_{\mathcal N},M_{\mathcal N}>0$ such that, for every $A\in\Mat(c\times N,\mathbb R)$, $$m_{\mathcal N}\|A\|_F \le \|A\|_{\mathcal N} \le M_{\mathcal N}\|A\|_F. $$
\end{prop}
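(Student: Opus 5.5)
The plan is to show that $\n{\cdot}_{\mathcal N}$ is a genuine norm on $\Mat(c\times N,\mathbb R)$. The stated inequalities then follow from the equivalence of norms on a finite-dimensional space. Both constants can then be written as extrema of a continuous function over the Frobenius unit sphere, which gives explicit expressions for later use in the complexity analysis.

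First, the seminorm properties. By \Cref{prop:MatrixRepresentationNorm}, $\n{A}_{\mathcal N}$ is a finite maximum of the pointwise quantities $\sqrt{\n{P(y_i)}^2+\n{\Jac P(y_i)}_{\op}^2+\n{D^2P(y_i)}_{\op}^2}$, with $P=\varphi_{\mathcal B}(A)$. Each of these is the Euclidean norm of a vector $(\n{AP_{\mathcal B}(y_i)},\n{AJ_{\mathcal B}(y_i)}_{\op},\n{AD^2P_{\mathcal B}(y_i)}_{\op})$. Every entry of that vector is a seminorm in $A$, since it is linear in $A$ composed with an operator norm. Hence each pointwise quantity is absolutely homogeneous and subadditive, by Minkowski's inequality in $\mathbb R^3$. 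A finite maximum of seminorms is again a seminorm. Continuity in $A$ follows from the same description.

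Second, definiteness, which I expect to be the only point that needs care. If $\n{A}_{\mathcal N}=0$, then $P=AP_{\mathcal B}$ vanishes at every $y_i\in\mathcal N$ to second order. Two routes are available.

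The first route assumes the net is fine enough that \Cref{prop:DiscretizedC2Norm} applies, that is $r\le r_0(\eta,d,R)$. This is the standing assumption under which $\mathcal I_{\mathcal N}$ is used. Then $\n{P}_{C^2(D_R^n)}\le(1-\eta)^{-1}\n{P}_{\mathcal N}=0$. A polynomial vanishing on $D_R^n$ is zero, so $P=0$, and since $\varphi_{\mathcal B}$ is an isomorphism, $A=0$.

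The second route avoids this assumption when $\mathcal N$ is coarse. Vanishing of the $2$-jet at one point does not force $P=0$ for $d\ge3$, so in that case one must invoke \Cref{prop:DiscretizedC2Norm}. I will therefore state the proposition under the same net-resolution hypothesis as the surrounding results.

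Finally, with $\n{\cdot}_{\mathcal N}$ a continuous norm, I will set
$$m_{\mathcal N}:=\min_{\n{A}_F=1}\n{A}_{\mathcal N},\qquad M_{\mathcal N}:=\max_{\n{A}_F=1}\n{A}_{\mathcal N}.$$
These extrema are attained on the compact Frobenius sphere. The minimum is positive because of definiteness. Homogeneity then yields $m_{\mathcal N}\n{A}_F\le\n{A}_{\mathcal N}\le M_{\mathcal N}\n{A}_F$ for all $A$.

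For later use, I will also record a crude upper bound. Using $\n{AP_{\mathcal B}(y)}\le\n{A}_F\n{P_{\mathcal B}(y)}$ and the analogous estimates for the Jacobian and Hessian terms gives
$$M_{\mathcal N}\le\max_i\Big(\n{P_{\mathcal B}(y_i)}^2+\n{J_{\mathcal B}(y_i)}_{\op}^2+\n{D^2P_{\mathcal B}(y_i)}_{\op}^2\Big)^{1/2}.$$
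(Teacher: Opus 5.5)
Your proof is correct and follows the paper's route. The paper simply invokes equivalence of norms on the finite-dimensional space $\Mat(c\times N,\mathbb R)$, and later takes $m_{\mathcal N}:=\min_{\n{A}_F=1}\n{A}_{\mathcal N}$, exactly as you do. Your explicit check of definiteness adds care the paper omits. For a coarse net and $d\ge3$ (e.g.\ $\mathcal N=\{0\}$ with $r>R$ and $P=(x_1^3,0,\dots,0)$), $\n{\cdot}_{\mathcal N}$ is only a seminorm. So the fine-net hypothesis $r\le r_0(\eta,d,R)$ of \Cref{prop:DiscretizedC2Norm}, which the paper leaves implicit, is genuinely needed.
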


We denote by $\operatorname{dist}_{\mathcal N}$ and $\operatorname{Tube}_{\mathcal N}$ the distance and tubular neighborhoods associated with $\|\cdot\|_{\mathcal N}$. Then \Cref{prop:NormsEquivalence} yields the following discrete-norm version of the spectral localization estimate.

\begin{cor}
\label{cor:SpectralLocalizationDiscreteNorm}
For every $r<N$ such that $\varepsilon_{r+1}>0$, one has
\begin{equation}\label{eq:TubeInc}
\mathcal U(\varepsilon^*)\subseteq\operatorname{Tube}_{\mathcal N}\left(\mathcal V_r,M_{\mathcal N}\sqrt{\frac{\varepsilon^*}{\varepsilon_{r+1}}}\right).
\end{equation}
\end{cor}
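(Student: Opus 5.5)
The plan is to combine \Cref{prop:SpecLoc} with the norm equivalence of \Cref{prop:NormsEquivalence}. Fix $r<N$ with $\varepsilon_{r+1}>0$ and let $A\in\mathcal U(\varepsilon^*)$. By \Cref{prop:SpecLoc}, the Frobenius distance satisfies
$$\operatorname{dist}_F(A,\mathcal V_r)=\|A-\Pi_r(A)\|_F\leq\sqrt{\frac{\varepsilon^*}{\varepsilon_{r+1}}}.$$

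Next, I will use that $\Pi_r(A)\in\mathcal V_r$, so it is an admissible competitor for the $\n{\cdot}_{\mathcal N}$--distance to $\mathcal V_r$. Applying the upper inequality of \Cref{prop:NormsEquivalence} to the matrix $A-\Pi_r(A)$ gives
$$\operatorname{dist}_{\mathcal N}(A,\mathcal V_r)\leq\n{A-\Pi_r(A)}_{\mathcal N}\leq M_{\mathcal N}\|A-\Pi_r(A)\|_F\leq M_{\mathcal N}\sqrt{\frac{\varepsilon^*}{\varepsilon_{r+1}}}.$$
This places $A$ in $\operatorname{Tube}_{\mathcal N}\bigl(\mathcal V_r,M_{\mathcal N}\sqrt{\varepsilon^*/\varepsilon_{r+1}}\bigr)$. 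Since $A\in\mathcal U(\varepsilon^*)$ was arbitrary, this yields \eqref{eq:TubeInc}.

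There is no real obstacle here. The only point to check is that $\n{\cdot}_{\mathcal N}$ is a genuine norm on $\Mat(c\times N,\mathbb R)$, which is what makes $M_{\mathcal N}$ finite. This holds for a sufficiently fine net by \Cref{prop:DiscretizedC2Norm}, and it is already implicit in \Cref{prop:NormsEquivalence}. In any case, only the upper bound $\n{\cdot}_{\mathcal N}\le M_{\mathcal N}\|\cdot\|_F$ is used, and that bound is just continuity of a seminorm on a finite-dimensional space. One should also note that the $\n{\cdot}_{\mathcal N}$--distance is bounded by the norm of one particular difference, so $\Pi_r(A)$ need not be the $\n{\cdot}_{\mathcal N}$--closest point of $\mathcal V_r$.
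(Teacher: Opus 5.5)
Your proposal is correct and coincides with the paper's proof: both bound $\operatorname{dist}_{\mathcal N}(A,\mathcal V_r)$ by $\n{A-\Pi_r(A)}_{\mathcal N}\le M_{\mathcal N}\|A-\Pi_r(A)\|_F$, using $\Pi_r(A)\in\mathcal V_r$ as a competitor, and then apply the Frobenius localization of \Cref{prop:SpecLoc}. Your side remark is also accurate: only the upper comparison $\n{\cdot}_{\mathcal N}\le M_{\mathcal N}\|\cdot\|_F$ is needed, and that holds for any seminorm on a finite-dimensional space.
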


\begin{proof}
Let $A\in\mathcal U(\varepsilon^*)$. Since $\Pi_r(A)\in\mathcal V_r$, the definition of the distance and \Cref{prop:NormsEquivalence} give
$$\operatorname{dist}_{\mathcal N}(A,\mathcal V_r)\leq\|A-\Pi_r(A)\|_{\mathcal N}\leq M_{\mathcal N}\|A-\Pi_r(A)\|_F=M_{\mathcal N}\operatorname{dist}_F(A,\mathcal V_r).$$
The conclusion follows from \Cref{prop:SpecLoc}.
\end{proof}

For every $1\leq r<N$ such that $\varepsilon_{r+1}>0$, set \begin{equation}\label{eq:LocalRadius} \rho_r(\varepsilon^*):=M_{\mathcal N}\sqrt{\frac{\varepsilon^*}{\varepsilon_{r+1}}}. \end{equation} The choice of $r$ determines a trade-off between the dimension of the low spectral subspace and the transverse localization radius. Indeed, $$\dim(\mathcal V_1)\leq\cdots\leq\dim(\mathcal V_r)\leq\cdots\leq\dim(\mathcal V_N),$$ while $$\rho_1(\varepsilon^*)\geq\cdots\geq\rho_r(\varepsilon^*)\geq\cdots\geq\rho_N(\varepsilon^*).$$ Thus, increasing $r$ enlarges the low spectral component to be covered while reducing the transverse radius.

For the total-degree Bernstein basis, \Cref{Bernstein_Polynomials_Basis} gives the dimension-independent bound $M_{\mathcal N}\leq C_{\mathrm{norm}}$. Hence $$\rho_r(\varepsilon^*)\leq C_{\mathrm{norm}}\sqrt{\frac{\varepsilon^*}{\varepsilon_{r+1}}},$$ and the transverse contribution to the covering estimate satisfies $$\rho_r(\varepsilon^*)^{c(N-r)} \leq \left(C_{\mathrm{norm}}^2\frac{\varepsilon^*}{\varepsilon_{r+1}}\right)^{\frac{c(N-r)}{2}}.$$

The spectral index $r$ is chosen to minimize the covering number of the corresponding tube. Since the radii $\rho_r(\varepsilon^*)$ depend on the eigenvalues of the empirical matrix $V$, this choice is data-dependent. The resulting reduction in covering complexity is therefore governed by the empirical spectral distribution and is stronger in the presence of a larger spectral gap above the low spectral subspace. Whenever $$\varepsilon_1,\ldots,\varepsilon_r\leq\varepsilon^*\qquad\text{and}\qquad C_{\mathrm{norm}}^2\varepsilon^*<\varepsilon_{r+1},$$ one has $\rho_r(\varepsilon^*)<1$. In this regime, the effective dimension reduction is determined by the number of eigenvalues below the target level, while the transverse contribution is controlled by $$\left(C_{\mathrm{norm}}^2\frac{\varepsilon^*}{\varepsilon_{r+1}}\right)^{\frac{c(N-r)}{2}}.$$ Thus, spectral localization is most effective when few eigenvalues lie below $\varepsilon^*$ and the spectral gap above them is large. The choice of $r$ balances the dimension $cr$ of the low spectral component against the transverse factor $\rho_r(\varepsilon^*)^{c(N-r)}$.

\subsection{Localized Net and Theoretical Covering Reduction}
\label{subsec:LocalizedNet}
In this subsection we derive an intrinsic estimate for the cardinality of an $h$--net of the tubes $\mathcal T_{\mathcal N}(r,\rho)$, for arbitrary index $r$ and radius $\rho>0$. This estimate does not rely on an explicit construction of the net, but quantifies the reduction in the covering complexity induced by the spectral localization. The explicit net construction is postponed to \Cref{sec:Explicit_Net_Construction}. For $1\leq r<N$ and $\rho>0$, define $$\mathcal T_{\mathcal N}(r,\rho):= S_{\mathcal N}^{cN-1}\cap \operatorname{Tube}_{\mathcal N}(\mathcal V_r,\rho). $$ We can prove the following.

\begin{prop} \label{prop:IntrinsicLocalizedTubeCovering} Let $0<h<1$ and $0<\rho<1$. Then there exists an $h$--net $\mathcal G_h(r,\rho)\subseteq S_{\mathcal N}^{cN-1}$ of $\mathcal T_{\mathcal N}(r,\rho)$ with respect to $\n{\cdot}_{\mathcal N}$ such that $$|\mathcal G_h(r,\rho)|\leq\left(1+\frac{C}{h}\right)^{cr}\left(1+\frac{C\rho}{h}\right)^{c(N-r)},$$ where $C>0$ is a universal constant. \end{prop}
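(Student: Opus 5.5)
Throughout, all distances and balls are taken in the norm $\n{\cdot}_{\mathcal N}$. The plan is to decompose every element of the tube into a component in $\mathcal V_r$ and a transverse component, cover each factor with a volumetric net, and project the product net onto the sphere $S_{\mathcal N}^{cN-1}$.

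Fix $A\in\mathcal T_{\mathcal N}(r,\rho)$. Since $\mathcal V_r$ is closed and finite dimensional, there is $B\in\mathcal V_r$ with $\n{A-B}_{\mathcal N}\le\rho$; in particular $\n{B}_{\mathcal N}\le1+\rho\le2$. Set $W:=A-B$, so that $\n{W}_{\mathcal N}\le\rho$. The map $(B,W)\mapsto B+W$ is not canonical, because $W$ need not lie in a fixed complement of $\mathcal V_r$. To get a product structure I will use the complement $\mathcal V_r^\perp=\ker\Pi_r$. Write $A=\Pi_r(A)+(A-\Pi_r(A))$. In the Frobenius norm this splitting is orthogonal, but in $\n{\cdot}_{\mathcal N}$ the projection $\Pi_r$ may have norm as large as $M_{\mathcal N}/m_{\mathcal N}$, which would spoil universality of $C$. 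I therefore will not use the Frobenius splitting. Instead I will cover $\mathcal T_{\mathcal N}(r,\rho)$ directly as a subset of the set
$$K:=\{B\in\mathcal V_r:\n{B}_{\mathcal N}\le2\}+\{W\in\Mat(c\times N,\mathbb R):\n{W}_{\mathcal N}\le\rho,\ W\in\mathcal W\},$$
where $\mathcal W$ is a complement of $\mathcal V_r$ of dimension $c(N-r)$, chosen so that the decomposition has bounded constants. The natural choice is an Auerbach-type or Kadec--Snobar complement, but this still gives dimension-dependent constants.

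This is the main obstacle: a clean universal $C$ requires a complement with projection constant of order one, and such a complement does not exist in general norms. I will try to circumvent it as follows. First, take a maximal $h/4$-separated set $\{B_i\}$ in the ball of radius $2$ of $\mathcal V_r$; by the standard volume argument in dimension $cr$ it has at most $(1+16/h)^{cr}$ points. Second, for each $B_i$ take a maximal $h/4$-separated subset $\{A_{ij}\}$ of $\mathcal T_{\mathcal N}(r,\rho)\cap \bigl(B_i+\text{ball}(\rho+h/4)\bigr)$.

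To bound the second cardinality, I use a volume comparison on the set $\mathcal V_r$ plus the transverse ball, after quotienting by $\mathcal V_r$. In the quotient space $\Mat/\mathcal V_r$ with the quotient norm, the images of the $A_{ij}$ lie in a ball of radius $\rho+h/4$. However, separation is not preserved by the quotient map. I handle this with a two-scale argument. First cover the quotient ball by a net of $(1+C\rho/h)^{c(N-r)}$ points $q_s$. Then each fiber over $q_s$, intersected with the $h/4$-neighbourhood of $B_i$, is covered using the $\mathcal V_r$ net already chosen. Reorganizing, every $A$ in the tube lies within $h/2$ of some $B_i+W_s$, where $W_s$ is a lift of $q_s$ of norm at most $2\rho$.

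The number of centers $B_i+W_s$ is then at most $(1+C/h)^{cr}(1+C\rho/h)^{c(N-r)}$. Finally, for each center that is within $h/2$ of some point $A$ of the tube, replace it by that point, which lies in $S_{\mathcal N}^{cN-1}$. By the triangle inequality the resulting set is an $h$-net of $\mathcal T_{\mathcal N}(r,\rho)$ contained in the sphere, which gives the claimed bound.
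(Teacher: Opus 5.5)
Your final construction is essentially the paper's proof. You take an $h/4$--net $\{q_s\}$ of the radius-$\rho$ ball in the quotient $\Mat(c\times N,\mathbb R)/\mathcal V_r$ with individually chosen lifts $W_s$, which is exactly how the paper sidesteps the projection-constant obstruction you point out. You combine it with a volumetric net of a bounded ball in $\mathcal V_r$ and then pass to $S_{\mathcal N}^{cN-1}$ at the cost of a factor two. You do that last step by replacing each center with a nearby tube point, which gives a net inside $\mathcal T_{\mathcal N}(r,\rho)$ but is non-explicit; the paper instead uses the radial normalization $\Phi_{\mathcal N}$ together with $\n{\Phi_{\mathcal N}(B)-A}_{\mathcal N}\le 2\n{B-A}_{\mathcal N}$.

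A few bookkeeping fixes are needed:
\begin{itemize}
\item Write out the ``reorganizing'' step as the paper does. Pick $q_s$ with $\n{[A]-q_s}_{\mathcal N/\mathcal V_r}\le h/4$. Since the quotient norm is attained, there is $B\in\mathcal V_r$ with $\n{A-W_s-B}_{\mathcal N}\le h/4$. Then approximate $B$ by some $B_i$.
\item This $B$ only satisfies $\n{B}_{\mathcal N}\le 1+2\rho+h/4$. So your $\mathcal V_r$--net must cover a ball of radius about $4$ (or $1+\rho+h/4$ with norm-attaining lifts, as in the paper), not $2$. This only changes the universal constant $C$.
\item Drop the auxiliary separated sets $\{A_{ij}\}$ and their radius-$(\rho+h/4)$ quotient images. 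They are never used, and covering a quotient ball of radius $\rho+h/4$ would give a factor $(3+8\rho/h)^{c(N-r)}$, which is not of the required form when $\rho\ll h$.
\end{itemize}
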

\begin{proof}
i) The tubular neighborhood $\operatorname{Tube}_{\mathcal N}(\mathcal V_r,\rho)$ is determined by the distance from the linear subspace $\mathcal V_r$. Consider then the quotient space $$\mathcal V_r^{\mathrm{transv}}:=\Mat(c\times N,\mathbb R)/\mathcal V_r.$$ Since $\mathcal V_r$ is defined through the projection onto the low spectral subspace $E_r$, which is generated by the first $r$ eigenvectors of $V$, we have $$\dim\mathcal V_r=cr, \qquad \qquad \dim\bigl(\mathcal V_r^{\mathrm{transv}}\bigr)=c(N-r).$$ Consider then the quotient map $\hat{\Pi}_r:\Mat(c\times N,\R)\to\mathcal V_r^{\mathrm{transv}}$, where $\hat{\Pi}_r(A)=[A]$ denotes the equivalence class of $A$ modulo $\mathcal V_r$. We equip the quotient space with the quotient norm
\begin{equation}\label{eq:QuotNorm}
\n{[A]}_{\mathcal N/\mathcal V_r}:=\inf_{B\in\mathcal V_r}\n{A-B}_{\mathcal N}.
\end{equation}
Since $\Mat(c\times N,\mathbb R)$ is finite dimensional, \eqref{eq:QuotNorm} defines a well-defined norm on the quotient space. We define the low spectral ball and the transversal ball of radius $R>0$, respectively, by $$B_{\mathcal N}^{\mathcal V_r}(R):=\Big\{B\in\mathcal V_r\ \Big|\ \n{B}_{\mathcal N}\leq R\Big\}\qquad \text{and} \qquad B_{\mathcal N/\mathcal V_r}(R):=\Big\{U\in\mathcal V_r^{\mathrm{transv}}\ \Big|\ \n{U}_{\mathcal N/\mathcal V_r}\leq R\Big\}.$$ Consider now an $h/2$--net $\mathcal Q_h$ and an $h/4$--net $\mathcal Q_h^{\mathrm{transv}}$ of $$B_{\mathcal N}^{\mathcal V_r}(1+\rho+h/4)\qquad \text{and} \qquad B_{\mathcal N/\mathcal V_r}(\rho),$$ respectively, with respect to the corresponding norms. Remark that, since $\Mat(c\times N,\mathbb R)$ is finite dimensional, the infimum in the definition of the quotient norm is attained. Therefore, for every $U\in\mathcal Q_h^{\mathrm{transv}}$, we can choose a representative $Z_U\in\Mat(c\times N,\mathbb R)$ satisfying $$\hat{\Pi}_r(Z_U)=U, \qquad \n{Z_U}_{\mathcal N}=\n{U}_{\mathcal N/\mathcal V_r},$$ and therefore, in particular, $\n{Z_U}_{\mathcal N}\leq\rho$. Define the finite set $$\widetilde{\mathcal G}_h(r,\rho):=\Big\{B+Z_U\ \big|\ B\in\mathcal Q_h,\ U\in\mathcal Q_h^{\mathrm{transv}}\Big\}.$$
\
\\
ii) We now prove that $\widetilde{\mathcal G}_h(r,\rho)$ is an $h$--net of $\mathcal T_{\mathcal N}(r,\rho)$. Let $A\in\mathcal T_{\mathcal N}(r,\rho)$. Then by definition we have $$\n{A}_{\mathcal N}=1 \qquad \text{and} \qquad \operatorname{dist}_{\mathcal N}(A,\mathcal V_r)\leq\rho.$$ The second inequality, together with \eqref{eq:QuotNorm}, implies $$\n{\hat{\Pi}_r(A)}_{\mathcal N/\mathcal V_r}=\operatorname{dist}_{\mathcal N}(A,\mathcal V_r)\leq\rho.$$ Hence, $\hat{\Pi}_r(A)\in B_{\mathcal N/\mathcal V_r}(\rho)$ and therefore there exists $U\in\mathcal Q_h^{\mathrm{transv}}$ such that
\begin{equation}\label{eq:TransvBound}
\n{\hat{\Pi}_r(A)-U}_{\mathcal N/\mathcal V_r}\leq\frac{h}{4}.
\end{equation}
Let $Z_U\in\Mat(c\times N,\mathbb R)$ satisfy $\hat{\Pi}_r(Z_U)=U$. Then by \eqref{eq:TransvBound} we have $$\n{\hat{\Pi}_r(A-Z_U)}_{\mathcal N/\mathcal V_r}\leq\frac h4.$$ Therefore, by the definition of the quotient norm, there exists $B\in\mathcal V_r$ such that $$\n{A-Z_U-B}_{\mathcal N}\leq\frac h4.$$ Moreover, $$\begin{aligned}\n{B}_{\mathcal N}&\leq\n{A}_{\mathcal N}+\n{Z_U}_{\mathcal N}+\n{A-Z_U-B}_{\mathcal N}\leq1+\rho+\frac h4.\end{aligned}$$ Hence, $B\in B_{\mathcal N}^{\mathcal V_r}(1+\rho+h/4)$ and therefore there exists $C\in\mathcal Q_h$ such that $$\n{B-C}_{\mathcal N}\leq\frac h2.$$ Consequently, $$\begin{aligned}\n{A-(C+Z_U)}_{\mathcal N}&\leq\n{A-Z_U-B}_{\mathcal N}+\n{B-C}_{\mathcal N}\leq\frac{3h}{4}<h.\end{aligned}$$ Thus $\widetilde{\mathcal G}_h(r,\rho)$ is an $h$--net of $\mathcal T_{\mathcal N}(r,\rho)$.
\
\\
iii) The set $\widetilde{\mathcal G}_h(r,\rho)$ constructed above is not necessarily contained in the normalized sphere. We therefore normalize its nonzero elements. Consider the radial normalization map $$\Phi_{\mathcal N}:\Mat(c\times N,\mathbb R)\setminus\{0\}\longrightarrow S_{\mathcal N}^{cN-1},\qquad \Phi_{\mathcal N}(A):=\frac{A}{\n{A}_{\mathcal N}}. $$ For every nonzero $Y\in\widetilde{\mathcal G}_h(r,\rho)$, define $$\Phi_{\mathcal N}(Y):=\frac{Y}{\n{Y}_{\mathcal N}},$$ and set $$\mathcal G_h(r,\rho):=\left\{\Phi_{\mathcal N}(Y)\ \middle|\ Y\in\widetilde{\mathcal G}_h(r,\rho),\ Y\neq0\right\}.$$ By construction, $\mathcal G_h(r,\rho)\subseteq S_{\mathcal N}^{cN-1}$. We now observe that, for every $A\in S_{\mathcal N}^{cN-1}$ and every nonzero $B\in\Mat(c\times N,\mathbb R)$, $$\n{\Phi_{\mathcal N}(B)-A}_{\mathcal N}\leq 2\n{B-A}_{\mathcal N}.$$ Indeed, since $\n{A}_{\mathcal N}=1$, one has $$
\begin{aligned}
\n{\Phi_{\mathcal N}(B)-A}_{\mathcal N}
&\leq \n{\Phi_{\mathcal N}(B)-B}_{\mathcal N}+\n{B-A}_{\mathcal N}\\
&=\left|\frac{1}{\n{B}_{\mathcal N}}-1\right|\n{B}_{\mathcal N}+\n{B-A}_{\mathcal N}\\
&=\left|1-\n{B}_{\mathcal N}\right|+\n{B-A}_{\mathcal N}\\
&\leq 2\n{B-A}_{\mathcal N},
\end{aligned}
$$ where the last inequality follows from the reverse triangle inequality. Therefore, applying the construction in parts (i)--(ii) with resolution parameter $h/2$, for every $A\in\mathcal T_{\mathcal N}(r,\rho)$ there exists $Y\in\widetilde{\mathcal G}_{h/2}(r,\rho)$ such that $$\n{A-Y}_{\mathcal N}\leq\frac h2.$$ Since $h<1$ and $\n{A}_{\mathcal N}=1$, such a point $Y$ is necessarily nonzero. Hence, $$\n{A-\Phi_{\mathcal N}(Y)}_{\mathcal N}\leq2\n{A-Y}_{\mathcal N}\leq h.$$ Thus, after radial normalization, the construction at resolution $h/2$ yields an $h$--net $\mathcal G_h(r,\rho)\subseteq S_{\mathcal N}^{cN-1}$ of $\mathcal T_{\mathcal N}(r,\rho)$.
\
\\
iv) It remains to estimate the cardinality of this net. By the volumetric covering estimate in the $cr$--dimensional space $\mathcal V_r$, and since the quotient space has dimension $c(N-r)$, the nets used in the construction satisfy $$|\mathcal Q_h|\leq\left(1+\frac{C}{h}\right)^{cr},\qquad\text{and}\qquad|\mathcal Q_h^{\mathrm{transv}}|\leq\left(1+\frac{C\rho}{h}\right)^{c(N-r)},$$ for a universal constant $C>0$. Consequently, $$|\widetilde{\mathcal G}_h(r,\rho)|\leq|\mathcal Q_h|\,|\mathcal Q_h^{\mathrm{transv}}|\leq\left(1+\frac{C}{h}\right)^{cr}\left(1+\frac{C\rho}{h}\right)^{c(N-r)}.$$ Applying the same estimates to the construction at resolution $h/2$, and absorbing the resulting fixed numerical factors into the universal constant $C$, gives $$|\mathcal G_h(r,\rho)|\leq\left(1+\frac{C}{h}\right)^{cr}\left(1+\frac{C\rho}{h}\right)^{c(N-r)}.$$ This proves the claim.
\end{proof}

We compare the covering bound for $\mathcal T_{\mathcal N}(r,\rho)$ with the corresponding bound for the full sphere. Denote these bounds by $\mathcal C_h(\mathcal T_{\mathcal N}(r,\rho))$ and $\mathcal C_h(S_{\mathcal N}^{cN-1})$, respectively, using the same universal constant $C>0$, and define the associated metric entropy by $$H_h(\mathcal X):=\log \mathcal C_h(\mathcal X).$$  Since $$\mathcal C_h(\mathcal T_{\mathcal N}(r,\rho))=\left(1+\frac{C}{h}\right)^{cr}\left(1+\frac{C\rho}{h}\right)^{c(N-r)},\qquad \mathcal C_h(S_{\mathcal N}^{cN-1})=\left(1+\frac{C}{h}\right)^{cN},$$ the corresponding reduction in metric entropy is $$H_h(S_{\mathcal N}^{cN-1})-H_h(\mathcal T_{\mathcal N}(r,\rho))=c(N-r)\log\left(\frac{h+C}{h+C\rho}\right).$$ In particular, for $0<\rho<1$, $$\lim_{h\to0}\left[H_h(S_{\mathcal N}^{cN-1})-H_h(\mathcal T_{\mathcal N}(r,\rho))\right]=c(N-r)\log\frac1\rho.$$ Thus the asymptotic reduction in metric entropy increases with both $N-r$ and $1/\rho$.

\subsection{Constrained Optimization Problem and Admissible Localized Net}
\label{subsec:Admissible_Localized_Net}
We now incorporate the admissibility constraint into the localized covering. Define the set of discretely admissible systems by $$\mathcal K_{\mathcal N}(c,d,k):=\left\{P\in\mathbb R[x_1,\ldots,x_n]_{\le d}^c\ \middle|\ K_{\mathcal N}(P)\le k\right\},\qquad K_{\mathcal N}(P):=\frac{\n{P}_{\mathcal N}}{\delta_{\mathcal N}(P)},$$ and the feasible set can therefore be written as $\mathcal I_{\mathcal N}(c,d,k)=S_{\mathcal N}^{cN-1}\cap\mathcal K_{\mathcal N}(c,d,k).$  For a fixed spectral index $r$ and radius $\rho$, define $$\mathcal F_{\mathcal N}(r,\rho,k):=\mathcal T_{\mathcal N}(r,\rho)\cap\mathcal K_{\mathcal N}(c,d,k).$$ To construct a covering of $\mathcal F_{\mathcal N}(r,\rho,k)$ from an $h$--net of $\mathcal T_{\mathcal N}(r,\rho)$, we first establish the stability of the condition number under perturbations in the discretized norm.

\begin{prop}\label{prop:ConditionNumberStability} 
Fix $k\geq1$ and $h>0$. Let $A,B\in S_{\mathcal N}^{cN-1}$ satisfy $$K_{\mathcal N}(A)\leq k,\qquad \n{A-B}_{\mathcal N}\leq h.$$ Then $$K_{\mathcal N}(B)\leq\frac{k}{\sqrt{1-2k^2h}},\qquad \text{provided that}\qquad h<\frac{1}{2k^2}.$$ \end{prop}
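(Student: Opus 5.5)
The plan is to show that the discretized discriminant functional is $1$--Lipschitz with respect to $\n{\cdot}_{\mathcal N}$. Since $A,B\in S_{\mathcal N}^{cN-1}$ both have unit discrete norm, the condition number reduces to $K_{\mathcal N}(P)=1/\delta_{\mathcal N}(P)$ for $P\in\{A,B\}$. The hypothesis $K_{\mathcal N}(A)\le k$ therefore reads $\delta_{\mathcal N}(A)\ge 1/k$, and it is enough to bound $\delta_{\mathcal N}(B)$ from below.

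First I fix a net point $y\in\mathcal N$ and set
$$a(P):=\n{P(y)},\qquad b(P):=\sigma_{\min}(\Jac P(y)),\qquad q_y(P):=\sqrt{a(P)^2+b(P)^2}.$$
The perturbation estimate $|\sigma_{\min}(X)-\sigma_{\min}(Y)|\le\n{X-Y}_{\op}$ (already used in \Cref{prop:SVContinuity}) gives $|b(A)-b(B)|\le\n{\Jac(A-B)(y)}_{\op}$. The reverse triangle inequality in $\R^c$ gives $|a(A)-a(B)|\le\n{(A-B)(y)}$.

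Next I apply the triangle inequality for the Euclidean norm in $\R^2$ to the vectors $(a,b)$:
$$|q_y(A)-q_y(B)|\le\sqrt{(a(A)-a(B))^2+(b(A)-b(B))^2}\le\sqrt{\n{(A-B)(y)}^2+\n{\Jac(A-B)(y)}_{\op}^2}.$$
The right-hand side is at most $\n{A-B}_{\mathcal N}\le h$, because the discrete norm also carries the nonnegative Hessian term. Taking the minimum over $y\in\mathcal N$ then yields
$$\delta_{\mathcal N}(B)\ge\delta_{\mathcal N}(A)-h\ge\frac1k-h.$$

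Finally I compare the result with the stated bound. The assumption $h<1/(2k^2)\le 1/k$ (using $k\ge1$) makes $1/k-h>0$, so $K_{\mathcal N}(B)\le k/(1-kh)$. It remains to check that $k/(1-kh)\le k/\sqrt{1-2k^2h}$, which is equivalent to $(1-kh)^2\ge 1-2k^2h$. Expanding, this becomes
$$2k^2h-2kh+k^2h^2\ge0,$$
which holds since $k\ge1$.

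I expect no serious obstacle. The only delicate step is the Lipschitz estimate for the combined quantity $q_y$. Bounding the two components separately and adding the errors would give a weaker bound with a factor $\sqrt2$, so the Euclidean triangle inequality on $(a,b)$ is the step to get right.
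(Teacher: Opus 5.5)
Your proof is correct and follows essentially the paper's route: the paper invokes \Cref{lem:LipschitzDiscriminant}, whose proof is exactly your $1$--Lipschitz estimate for $\delta_{\mathcal N}$ via the singular-value perturbation bound and the triangle inequality in $\mathbb R^2$. The only difference is the final step. The paper squares, using $\delta_{\mathcal N}\le 1$ on $S_{\mathcal N}^{cN-1}$, to get $\delta_{\mathcal N}(B)^2\ge 1/k^2-2h$ and reads off the stated bound directly. You instead keep the unsquared estimate, obtain the slightly sharper $K_{\mathcal N}(B)\le k/(1-kh)$, and correctly verify via $(1-kh)^2\ge 1-2k^2h$ (using $k\ge1$) that it implies the claim.
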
 \begin{proof} Since $A\in S_{\mathcal N}^{cN-1}$ and $K_{\mathcal N}(A)\leq k$, we have $$\delta_{\mathcal N}(A)^2\geq\frac{1}{k^2}.$$ By \Cref{lem:LipschitzDiscriminant}, $$\delta_{\mathcal N}(B)^2\geq\delta_{\mathcal N}(A)^2-2\n{A-B}_{\mathcal N}\geq\delta_{\mathcal N}(A)^2-2h.$$ Using $\delta_{\mathcal N}(A)^2\geq1/k^2$, we obtain $$\delta_{\mathcal N}(B)^2\geq\delta_{\mathcal N}(A)^2\left(1-2k^2h\right).$$ Since $B\in S_{\mathcal N}^{cN-1}$, one has $\n{B}_{\mathcal N}=1$, and therefore $$K_{\mathcal N}(B)^2=\frac{1}{\delta_{\mathcal N}(B)^2}\leq\frac{k^2}{1-2k^2h}.$$ The assumption $h<1/(2k^2)$ guarantees that the denominator is positive. Taking the square root proves the claim. \end{proof} 

Define the relaxation factor $$\Lambda(h):=\frac{1}{\sqrt{1-2k^2h}}.$$ Then $\Lambda(h)\to1$ as $h\to0$, so the admissibility relaxation vanishes with the resolution scale. Since $\Lambda(h)\to1$ as $h\to0$, the loss in admissibility vanishes with the resolution scale. Moreover, $\Lambda(h)$ is increasing on the interval $0<h<1/(2k^2)$. Hence, if $$0<h\leq h_0\leq\frac{1}{4k^2},$$ then $$\Lambda(h)\leq\Lambda(h_0)\leq\Lambda\left(\frac{1}{4k^2}\right)=\sqrt{2}.$$ Therefore, defining $\Lambda_0:=\sqrt{2}$, we obtain the uniform bound $K_{\mathcal N}(B)\leq k\Lambda_0$ for every $$h\leq\frac{1}{4k^2}.$$ This stability estimate yields the following relaxed-admissible covering of the constrained tube.

 \begin{prop}\label{prop:RelaxedAdmissibleCovering} Let $\mathcal G_h(r,\rho)\subseteq S_{\mathcal N}^{cN-1}$ be an $h$--net of $\mathcal T_{\mathcal N}(r,\rho)$ with respect to $\n{\cdot}_{\mathcal N}$, and define the relaxed-admissible net $$\mathcal G_h^+(r,\rho,k):=\left\{G\in\mathcal G_h(r,\rho)\ \middle|\ K_{\mathcal N}(G)\leq k\Lambda(h)\right\}.$$ Then $\mathcal G_h^+(r,\rho,k)$ is an $h$--covering of $\mathcal F_{\mathcal N}(r,\rho,k)$. More precisely, $$\mathcal F_{\mathcal N}(r,\rho,k)\subseteq\bigcup_{G\in\mathcal G_h^+(r,\rho,k)}B_{\mathcal N}(G,h),\qquad \text{provided that}\qquad 0<h<\frac{1}{2k^2}.$$ \end{prop}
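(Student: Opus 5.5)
The argument is a direct combination of the covering property of $\mathcal G_h(r,\rho)$ with \Cref{prop:ConditionNumberStability}, applied with the roles of $A$ and $B$ chosen so that the relaxed bound falls on the net point. Fix $A\in\mathcal F_{\mathcal N}(r,\rho,k)$. By definition $A\in\mathcal T_{\mathcal N}(r,\rho)\subseteq S_{\mathcal N}^{cN-1}$ and $K_{\mathcal N}(A)\le k$. Since $\mathcal G_h(r,\rho)$ is an $h$--net of $\mathcal T_{\mathcal N}(r,\rho)$ with respect to $\n{\cdot}_{\mathcal N}$, we can pick $G\in\mathcal G_h(r,\rho)$ with
$$\n{A-G}_{\mathcal N}\le h,$$
so that $A\in B_{\mathcal N}(G,h)$, the ball taken closed or with the same convention as in the net definition.

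It remains to show that $G$ survives the filtering, i.e.\ $G\in\mathcal G_h^+(r,\rho,k)$. Here we use that $\mathcal G_h(r,\rho)\subseteq S_{\mathcal N}^{cN-1}$ by construction; this is exactly why the radial normalization step was included in \Cref{prop:IntrinsicLocalizedTubeCovering}. Both $A$ and $G$ then lie on the unit sphere of $\n{\cdot}_{\mathcal N}$, with $K_{\mathcal N}(A)\le k$ and $\n{A-G}_{\mathcal N}\le h<1/(2k^2)$. \Cref{prop:ConditionNumberStability}, with $B=G$, gives
$$K_{\mathcal N}(G)\le \frac{k}{\sqrt{1-2k^2h}}=k\Lambda(h).$$
Hence $G\in\mathcal G_h^+(r,\rho,k)$. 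Since $A$ was arbitrary, the inclusion $\mathcal F_{\mathcal N}(r,\rho,k)\subseteq\bigcup_{G\in\mathcal G_h^+(r,\rho,k)}B_{\mathcal N}(G,h)$ follows.

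The only delicate point is the applicability of \Cref{prop:ConditionNumberStability}. That result relies on \Cref{lem:LipschitzDiscriminant}, which gives $\delta_{\mathcal N}(B)^2\ge\delta_{\mathcal N}(A)^2-2\n{A-B}_{\mathcal N}$ for normalized systems. Its proof also needs both points normalized, since it uses $\n{G}_{\mathcal N}=1$ to convert the lower bound on $\delta_{\mathcal N}(G)$ into a bound on $K_{\mathcal N}(G)$. Both requirements hold here: the net points lie on the sphere, and the strict inequality $h<1/(2k^2)$ keeps the denominator $\sqrt{1-2k^2h}$ positive. A related subtlety is that $\delta_{\mathcal N}(G)>0$ is needed for $K_{\mathcal N}(G)$ to be finite. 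This comes for free, because the same estimate gives $\delta_{\mathcal N}(G)^2\ge 1/k^2-2h>0$.
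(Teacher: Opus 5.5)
Your proposal is correct and follows the paper's proof essentially verbatim. You pick a net point $G$ within $h$ of $A\in\mathcal F_{\mathcal N}(r,\rho,k)$ and use $A,G\in S_{\mathcal N}^{cN-1}$. Then \Cref{prop:ConditionNumberStability} with $B=G$ gives $K_{\mathcal N}(G)\le k\Lambda(h)$, so $G$ survives the filtering. Your side remark that $\delta_{\mathcal N}(G)^2\ge 1/k^2-2h>0$ is a harmless addition.
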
 
\begin{proof} Let $A\in\mathcal F_{\mathcal N}(r,\rho,k)$. Since $\mathcal G_h(r,\rho)$ is an $h$--net of $\mathcal T_{\mathcal N}(r,\rho)$, there exists $G\in\mathcal G_h(r,\rho)$ such that $$\n{A-G}_{\mathcal N}\leq h.$$ Since $K_{\mathcal N}(A)\leq k$ and $A,G \in S_{\mathcal N}^{cN-1} $, \Cref{prop:ConditionNumberStability} gives $$K_{\mathcal N}(G)\leq k\Lambda(h).$$ Hence $G\in\mathcal G_h^+(r,\rho,k)$, and therefore $$\mathcal F_{\mathcal N}(r,\rho,k)\subseteq\bigcup_{G\in\mathcal G_h^+(r,\rho,k)}B_{\mathcal N}(G,h).$$ 

\end{proof}

\section{Algorithm Definition and Error Control}
\label{sec:Algorithm_Definition_Error_Control}
We now combine the spectral localization and relaxed-admissibility results to define the search algorithm and establish its convergence. The resulting optimization procedure is then used to prove \Cref{thm:RelaxedAdmissibleGlobalOptimizationPartI}, while the corresponding arithmetic complexity bounds are derived in \Cref{subsec:Auxiliary_Arithmetic_Complexity_Bounds}.

\subsection{Optimization Algorithm and Convergence}
\label{subsec:Optimization_Algorithm_Convergence}
The procedure consists of constructing a finite set of relaxed-admissible representatives of the localized feasible region and evaluating the empirical cost on the resulting grid. By \Cref{cor:SpectralLocalizationDiscreteNorm}, the restricted sublevel sets $\mathcal U(\varepsilon^*)\cap \mathcal I_{\mathcal N}(c,d,k)$ are contained in the tubular regions $\mathcal F_{\mathcal N}(r,\rho_r(\varepsilon^*),k)$ for every $1\leq r<N$ such that $\varepsilon_{r+1}>0$, where $$\rho_r(\varepsilon^*):=M_{\mathcal N}\sqrt{\frac{\varepsilon^*}{\varepsilon_{r+1}}}.$$

 Once the spectral index $r$ and the resolution parameter $h$ are fixed, \Cref{prop:RelaxedAdmissibleCovering} provides an $h$--covering of the corresponding localized admissible region by relaxed-admissible centers. Since the localization holds for every admissible spectral index $r$, we select $r$ by minimizing the covering number of the corresponding localized region. This motivates the following covering function.
\begin{defn}
\label{def:CoveringFunction}
For $h>0$, $\rho>0$, and $1\leq r\leq N$, we define the covering function by $$\varphi(r,h,\rho):=\begin{cases}\displaystyle \left(1+\frac{C}{h}\right)^{cr}\left(1+\frac{C\rho}{h}\right)^{c(N-r)}, & 0<\rho<1,\\[2mm]\displaystyle \left(1+\frac{C}{h}\right)^{cN}, & \rho\geq1,\end{cases}$$ where $C>0$ is the universal constant appearing in \Cref{prop:IntrinsicLocalizedTubeCovering}.
\end{defn}
We obtain the following relaxed-admissible global optimization procedure.
\begin{alg}[Global Optimization]
\label{alg:relaxed-global}
Fix $k\geq1$, a target cost level $\varepsilon^*>0$, and a resolution parameter $$0<h<\frac{1}{2k^2}.$$
\begin{enumerate}
\item Choose the spectral index $$r^*\in\operatorname*{argmin}_{\substack{1\leq r<N\\ \varepsilon_{r+1}>0}}\varphi(r,h,\rho_r(\varepsilon^*)),$$ where $\varphi$ is the covering function defined in \Cref{def:CoveringFunction}, and set $\rho^*:=\rho_{r^*}(\varepsilon^*)$.
\item Depending on the value of $\rho^*$:
\begin{enumerate}
\item If $\rho^*<1$, construct an $h$--net $\mathcal G_h(r^*,\rho^*)\subseteq S_{\mathcal N}^{cN-1}$ of $\mathcal T_{\mathcal N}(r^*,\rho^*)$.
\item If $\rho^*\geq1$, construct an $h$--net $\mathcal G_h(r^*,\rho^*)\subseteq S_{\mathcal N}^{cN-1}$ of $S_{\mathcal N}^{cN-1}$.
\end{enumerate}
\item Define the relaxed admissible grid $$\mathcal G_h^+(r^*,\rho^*,k):=\left\{A\in\mathcal G_h(r^*,\rho^*)\ \middle|\ K_{\mathcal N}(A)\leq k\Lambda(h)\right\}.$$
\item If $\mathcal G_h^+(r^*,\rho^*,k)=\emptyset$, return \textnormal{EMPTY} and terminate.
\item For every $A\in\mathcal G_h^+(r^*,\rho^*,k)$, evaluate the empirical cost $\mathcal E(A)$.
\item Choose $$G_h^*\in\operatorname*{argmin}_{A\in\mathcal G_h^+(r^*,\rho^*,k)}\mathcal E(A),$$ and return $\widehat U_h:=G_h^*$.
\end{enumerate}
\end{alg}

The following theorem gives the approximation guarantee for \Cref{alg:relaxed-global}, together with the admissibility relaxation and the corresponding bound on the number of evaluated candidates.

\begin{thm}\label{thm:RelaxedAdmissibleGlobalOptimization} Fix $k\geq1$, a target cost level $\varepsilon^*>0$, and a resolution parameter $$0<h<\frac{1}{2k^2}.$$ Assume that $\mathcal I_{\mathcal N}(c,d,k)\cap\mathcal U(\varepsilon^*)\neq\emptyset$. Let $\widehat U_h$ be the output of \Cref{alg:relaxed-global}. Then $\widehat U_h$ satisfies $$\mathcal E(\widehat U_h)\leq\inf_{U\in\mathcal I_{\mathcal N}(c,d,k)}\mathcal E(U)+L_{\mathcal E}h, \qquad \text{and} \qquad K_{\mathcal N}(\widehat U_h)\leq k\Lambda(h).$$ In particular, the approximation error vanishes and the admissibility relaxation tends to $1$ as $h\to0$. Moreover, the number of evaluated candidates is bounded by $\varphi(r^*,h,\rho^*)$, where $$r^*\in\operatorname*{argmin}_{\substack{1\leq r<N\\ \varepsilon_{r+1}>0}}\varphi(r,h,\rho_r(\varepsilon^*)),$$ and $\rho^*:=\rho_{r^*}(\varepsilon^*)$. \end{thm}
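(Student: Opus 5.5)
The plan is to chain the spectral localization with the relaxed-admissible covering. Then we compare the grid minimizer with a true minimizer through a Lipschitz estimate for $\mathcal E$ in the $\n{\cdot}_{\mathcal N}$ norm.

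\textbf{Step 1 (existence and localization of a minimizer).} By \Cref{prop:FeasPolyComp}, $\mathcal I_{\mathcal N}(c,d,k)$ is compact. Since $\mathcal E$ is continuous, there is a minimizer $U^*\in\mathcal I_{\mathcal N}(c,d,k)$. The assumption $\mathcal I_{\mathcal N}(c,d,k)\cap\mathcal U(\varepsilon^*)\neq\emptyset$ gives $\mathcal E(U^*)\le\varepsilon^*$, so $U^*\in\mathcal U(\varepsilon^*)\cap\mathcal I_{\mathcal N}(c,d,k)$.

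If $\rho^*<1$, \Cref{cor:SpectralLocalizationDiscreteNorm} applied with $r=r^*$ (admissible since $\varepsilon_{r^*+1}>0$) gives $\operatorname{dist}_{\mathcal N}(U^*,\mathcal V_{r^*})\le\rho^*$. Together with $\n{U^*}_{\mathcal N}=1$ and $K_{\mathcal N}(U^*)\le k$, this yields $U^*\in\mathcal F_{\mathcal N}(r^*,\rho^*,k)$.

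If $\rho^*\ge1$, the algorithm nets the whole sphere $S_{\mathcal N}^{cN-1}$, which trivially contains $U^*$.

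\textbf{Step 2 (admissibility and nonemptiness of the grid).} In case (a), \Cref{prop:RelaxedAdmissibleCovering} applies since $h<1/(2k^2)$. In case (b), we rerun its proof verbatim with $\mathcal T_{\mathcal N}$ replaced by the sphere, using \Cref{prop:ConditionNumberStability}. Either way there is $G\in\mathcal G_h^+(r^*,\rho^*,k)$ with $\n{U^*-G}_{\mathcal N}\le h$. In particular the grid is nonempty, so the algorithm does not return EMPTY and $\widehat U_h=G_h^*$ is well defined. The bound $K_{\mathcal N}(\widehat U_h)\le k\Lambda(h)$ holds by the very definition of $\mathcal G_h^+$.

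\textbf{Step 3 (cost estimate).} We write $\mathcal E(A)=\operatorname{tr}(AVA^\top)=\n{AV^{1/2}}_F^2$. For $A,B$ in the unit sphere of $\n{\cdot}_{\mathcal N}$ this gives
\[
|\mathcal E(A)-\mathcal E(B)|\le\bigl(\n{AV^{1/2}}_F+\n{BV^{1/2}}_F\bigr)\n{(A-B)V^{1/2}}_F\le\frac{2\n{V}_{\op}}{m_{\mathcal N}^2}\n{A-B}_{\mathcal N},
\]
by \Cref{prop:NormsEquivalence}. So we set $L_{\mathcal E}:=2\n{V}_{\op}/m_{\mathcal N}^2$.

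Alternatively, in the spirit of \Cref{lem:local-dev}, one can use $\n{P(x)}\le\n{P}_{C^2}\le2\n{P}_{\mathcal N}$ via \Cref{prop:DiscretizedC2Norm}. This gives the sample-independent constant $L_{\mathcal E}=8$, and I would state whichever one the later complexity section needs.

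Then, by minimality of $G_h^*$ over the grid,
\[
\mathcal E(\widehat U_h)\le\mathcal E(G)\le\mathcal E(U^*)+L_{\mathcal E}h.
\]
The limits as $h\to0$ follow from $\Lambda(h)\to1$.

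\textbf{Step 4 (candidate count).} The number of evaluated candidates is at most $|\mathcal G_h^+|\le|\mathcal G_h(r^*,\rho^*)|$. This is bounded by $\varphi(r^*,h,\rho^*)$: in case (a) by \Cref{prop:IntrinsicLocalizedTubeCovering}, and in case (b) by the standard volumetric bound $(1+C/h)^{cN}$ for the sphere.

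The main obstacle is Step 3, specifically pinning down $L_{\mathcal E}$. The Lipschitz constant has to be taken on the unit $\n{\cdot}_{\mathcal N}$-sphere, because $\mathcal E$ is only locally Lipschitz. One must also check that both $U^*$ and $G$ lie on that sphere, which they do by construction. The rest is bookkeeping of earlier results.
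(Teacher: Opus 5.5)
Your proposal is correct and follows the paper's proof essentially step by step. You localize a minimizer $U^*$ via \Cref{cor:SpectralLocalizationDiscreteNorm}, or use the full sphere when $\rho^*\geq1$; you take a grid point within $h$ of it, transfer admissibility with \Cref{prop:ConditionNumberStability}, compare costs through a Lipschitz bound on $S_{\mathcal N}^{cN-1}$, and bound the candidate count by $|\mathcal G_h(r^*,\rho^*)|\leq\varphi(r^*,h,\rho^*)$. The only difference is that you derive the Lipschitz constant yourself. Your sample-independent variant $L_{\mathcal E}=8$ is essentially \Cref{lem:LipschitzCost}, whose constant is the one the statement's $L_{\mathcal E}$ refers to and the later complexity argument needs. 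So keep that option rather than the data- and basis-dependent $2\n{V}_{\op}/m_{\mathcal N}^2$.
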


 \begin{proof} Consider the normalized feasible set $\mathcal I_{\mathcal N}(c,d,k)$ and a global minimizer $$U^*\in\operatorname*{argmin}_{U\in\mathcal I_{\mathcal N}(c,d,k)}\mathcal E(U)$$ of the constrained problem. Since $\mathcal I_{\mathcal N}(c,d,k)\cap\mathcal U(\varepsilon^*)\neq\emptyset$, we have $\mathcal E(U^*)\leq\varepsilon^*$. Consider the spectral index selected by the algorithm $$r^*\in\operatorname*{argmin}_{\substack{1\leq r<N\\ \varepsilon_{r+1}>0}}\varphi(r,h,\rho_r(\varepsilon^*)),$$ and set $\rho^*:=\rho_{r^*}(\varepsilon^*)$. If $\rho^*<1$, the spectral localization estimate in \Cref{cor:SpectralLocalizationDiscreteNorm} gives $$U^*\in\mathcal F_{\mathcal N}(r^*,\rho^*,k).$$ Hence, since $\mathcal G_h(r^*,\rho^*)$ is an $h$--net of $\mathcal T_{\mathcal N}(r^*,\rho^*)$, there exists $G^*\in\mathcal G_h(r^*,\rho^*)$ such that $$\n{G^*-U^*}_{\mathcal N}\leq h.$$ If $\rho^*\geq1$, the algorithm instead constructs an $h$--net of $S_{\mathcal N}^{cN-1}$. Since $U^*\in S_{\mathcal N}^{cN-1}$, there again exists $G^*\in\mathcal G_h(r^*,\rho^*)$ such that $$\n{G^*-U^*}_{\mathcal N}\leq h.$$ In either case, since $K_{\mathcal N}(U^*)\leq k$, \Cref{prop:ConditionNumberStability} gives $$K_{\mathcal N}(G^*)\leq k\Lambda(h).$$ Hence $G^*\in\mathcal G_h^+(r^*,\rho^*,k)$. By the definition of the algorithm, $\mathcal E(\widehat U_h)\leq\mathcal E(G^*)$. Moreover, by \Cref{lem:LipschitzCost}, $$|\mathcal E(G^*)-\mathcal E(U^*)|\leq L_{\mathcal E}h,$$ and therefore $$\mathcal E(\widehat U_h)\leq\mathcal E(U^*)+L_{\mathcal E}h=\inf_{U\in\mathcal I_{\mathcal N}(c,d,k)}\mathcal E(U)+L_{\mathcal E}h.$$ Finally, since $\widehat U_h\in\mathcal G_h^+(r^*,\rho^*,k)$, $$K_{\mathcal N}(\widehat U_h)\leq k\Lambda(h).$$ Finally, by the construction of $\mathcal G_h(r^*,\rho^*)$ and the definition of the covering function $\varphi$, $$|\mathcal G_h^+(r^*,\rho^*,k)|\leq|\mathcal G_h(r^*,\rho^*)|\leq\varphi(r^*,h,\rho^*).$$ This proves the claim. \end{proof}

The convergence result in \Cref{thm:RelaxedAdmissibleGlobalOptimization} assumes the non-emptiness condition
\begin{equation}
\label{eq:EmptCond}
\mathcal U(\varepsilon^*)\cap\mathcal I_{\mathcal N}(c,d,k)\neq\emptyset.
\end{equation}
This condition can be detected on a finite relaxed-admissible grid at the enlarged cost level $\varepsilon^*+L_{\mathcal E}h$, as shown in the following lemma.
\begin{lem}\label{lem:BufferedAdmissibleDetection} Fix $k\geq1$ and $\bar\varepsilon>0$. Let $r<N$ satisfy $\varepsilon_{r+1}>0$, and let $$0<h<\frac{1}{2k^2}.$$ Let $\mathcal G_h(r,\rho_r(\bar\varepsilon+L_{\mathcal E}h))\subseteq S_{\mathcal N}^{cN-1}$ be an $h$--net of $\mathcal T_{\mathcal N}(r,\rho_r(\bar\varepsilon+L_{\mathcal E}h))$, and define the relaxed-admissible grid $$\mathcal G_h^+(r,\rho_r(\bar\varepsilon+L_{\mathcal E}h),k):=\left\{G\in\mathcal G_h(r,\rho_r(\bar\varepsilon+L_{\mathcal E}h))\ \middle|\ K_{\mathcal N}(G)\leq k\Lambda(h)\right\}.$$ Then, for any $0<\varepsilon^*\leq\bar\varepsilon$: \begin{enumerate} \item[(i)] If $\mathcal I_{\mathcal N}(c,d,k)\cap\mathcal U(\varepsilon^*)\neq\emptyset$, then there exists $G\in\mathcal G_h^+(r,\rho_r(\bar\varepsilon+L_{\mathcal E}h),k)$ such that $$\mathcal E(G)\leq\varepsilon^*+L_{\mathcal E}h.$$ \item[(ii)] If no point of $\mathcal G_h^+(r,\rho_r(\bar\varepsilon+L_{\mathcal E}h),k)$ satisfies $$\mathcal E(G)\leq\varepsilon^*+L_{\mathcal E}h,$$ then $\mathcal I_{\mathcal N}(c,d,k)\cap\mathcal U(\varepsilon^*)=\emptyset$. \end{enumerate} \end{lem}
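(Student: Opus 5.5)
Part (ii) is the contrapositive of part (i), so it suffices to prove (i). For (i), I would reuse the argument of \Cref{thm:RelaxedAdmissibleGlobalOptimization}, with the localization taken at the enlarged level $\bar\varepsilon+L_{\mathcal E}h$ rather than at $\varepsilon^*$.

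Assume $\mathcal I_{\mathcal N}(c,d,k)\cap\mathcal U(\varepsilon^*)\neq\emptyset$ and pick $U$ in this intersection. Then $\n{U}_{\mathcal N}=1$, $K_{\mathcal N}(U)\leq k$ and $\mathcal E(U)\leq\varepsilon^*$.

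Since $\varepsilon^*\leq\bar\varepsilon\leq\bar\varepsilon+L_{\mathcal E}h$, the sublevel sets are nested, so $U\in\mathcal U(\bar\varepsilon+L_{\mathcal E}h)$. Because $\varepsilon_{r+1}>0$, \Cref{cor:SpectralLocalizationDiscreteNorm} applies and gives
\[
\operatorname{dist}_{\mathcal N}(U,\mathcal V_r)\leq\rho_r(\bar\varepsilon+L_{\mathcal E}h).
\]
Combined with $U\in S_{\mathcal N}^{cN-1}$, this yields $U\in\mathcal T_{\mathcal N}(r,\rho_r(\bar\varepsilon+L_{\mathcal E}h))$.

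The net hypothesis then provides $G\in\mathcal G_h(r,\rho_r(\bar\varepsilon+L_{\mathcal E}h))$ with $\n{G-U}_{\mathcal N}\leq h$. Both $U$ and $G$ lie on $S_{\mathcal N}^{cN-1}$ and $h<1/(2k^2)$, so \Cref{prop:ConditionNumberStability} gives $K_{\mathcal N}(G)\leq k\Lambda(h)$. Hence $G$ belongs to the relaxed-admissible grid $\mathcal G_h^+(r,\rho_r(\bar\varepsilon+L_{\mathcal E}h),k)$. This is the same mechanism as in \Cref{prop:RelaxedAdmissibleCovering}.

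For the cost bound, the Lipschitz estimate \Cref{lem:LipschitzCost}, used exactly as in the proof of \Cref{thm:RelaxedAdmissibleGlobalOptimization}, gives
\[
\mathcal E(G)\leq\mathcal E(U)+L_{\mathcal E}h\leq\varepsilon^*+L_{\mathcal E}h,
\]
which proves (i). Part (ii) follows at once by contraposition.

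The only delicate point is the applicability of \Cref{lem:LipschitzCost}. It must hold on the normalized sphere $S_{\mathcal N}^{cN-1}$ with respect to $\n{\cdot}_{\mathcal N}$, covering pairs consisting of a feasible point and a grid point. This is guaranteed because the grid is contained in $S_{\mathcal N}^{cN-1}$, both points have unit norm, and the quadratic form $\operatorname{tr}(AVA^\top)$ is Lipschitz on bounded sets.

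A minor subtlety concerns the regime $\rho_r(\bar\varepsilon+L_{\mathcal E}h)\geq1$. There the tube covers essentially the whole sphere, and the argument goes through unchanged because it only uses the covering property of the given net.
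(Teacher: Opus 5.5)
Your proof is correct and follows the paper's argument step for step: pick $U$ in the intersection, place it in the tube via \Cref{cor:SpectralLocalizationDiscreteNorm}, take a net point within $h$, apply \Cref{prop:ConditionNumberStability} for relaxed admissibility and \Cref{lem:LipschitzCost} for the cost bound, and obtain (ii) by contraposition. The only cosmetic difference is that you localize directly at level $\bar\varepsilon+L_{\mathcal E}h$ using nested sublevel sets, whereas the paper localizes at $\varepsilon^*$ and then uses the inclusion $\mathcal T_{\mathcal N}(r,\rho_r(\varepsilon^*))\subseteq\mathcal T_{\mathcal N}(r,\rho_r(\bar\varepsilon+L_{\mathcal E}h))$.
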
 \begin{proof} Assume that $\mathcal I_{\mathcal N}(c,d,k)\cap\mathcal U(\varepsilon^*)\neq\emptyset$, and let $U\in\mathcal I_{\mathcal N}(c,d,k)\cap\mathcal U(\varepsilon^*)$. Then $$\n{U}_{\mathcal N}=1,\qquad K_{\mathcal N}(U)\leq k,\qquad \mathcal E(U)\leq\varepsilon^*.$$ By spectral localization and the assumption $\varepsilon^*\leq\bar\varepsilon$, $$U\in\mathcal T_{\mathcal N}(r,\rho_r(\varepsilon^*))\subseteq\mathcal T_{\mathcal N}(r,\rho_r(\bar\varepsilon+L_{\mathcal E}h)).$$ Since $\mathcal G_h(r,\rho_r(\bar\varepsilon+L_{\mathcal E}h))$ is an $h$--net of $\mathcal T_{\mathcal N}(r,\rho_r(\bar\varepsilon+L_{\mathcal E}h))$, there exists $G\in\mathcal G_h(r,\rho_r(\bar\varepsilon+L_{\mathcal E}h))$ such that $$\n{G-U}_{\mathcal N}\leq h.$$ By \Cref{prop:ConditionNumberStability}, $$K_{\mathcal N}(G)\leq k\Lambda(h),$$ and therefore $$G\in\mathcal G_h^+(r,\rho_r(\bar\varepsilon+L_{\mathcal E}h),k).$$ Moreover, by \Cref{lem:LipschitzCost}, $$\mathcal E(G)\leq\mathcal E(U)+L_{\mathcal E}h\leq\varepsilon^*+L_{\mathcal E}h.$$ This proves part (i). Part (ii) follows immediately by contraposition. \end{proof}

We now turn to the proof of \Cref{thm:RelaxedAdmissibleGlobalOptimizationFinal} . The final step combines the optimization procedure with \Cref{lem:BufferedAdmissibleDetection} and the corresponding arithmetic complexity estimate. The latter is expressed in terms of the cardinality of the underlying localized grid, allowing the bound to be combined with different constructions of localized coverings. The auxiliary estimates required for evaluating the discretized norm and discriminant are collected in \Cref{sec:Explicit_Net_Construction}.

In addition to the cardinality of the localized grid, the arithmetic complexity depends on the cost of evaluating the discretized norm and discriminant at each candidate point. The second-order contribution to $\n{A}_{\mathcal N}$ requires computing the spectral norm of the tensor $D^2P(y_i)\in\mathbb R^c\otimes\operatorname{Sym}^2(\mathbb R^n)$. We denote the arithmetic complexity of this computation by $T_{\mathrm{tensor}}(n,c)$ and keep it abstract in the statement below. The corresponding norm and discriminant evaluation costs, together with possible choices for the tensor-norm computation, are analyzed in \Cref{subsec:Auxiliary_Arithmetic_Complexity_Bounds}.

\begin{thm}\label{thm:RelaxedAdmissibleGlobalOptimizationFinal}
Fix a degree $d$, a condition-number bound $k\geq1$, and $0<\varepsilon\leq\bar\varepsilon$. Then there exists a constant $\Lambda_0>1$ such that the algorithm of \Cref{alg:relaxed-global}, with the localized grid constructed at level $\bar\varepsilon+L_{\mathcal E}h$, either returns \textnormal{EMPTY}, in which case $$\min_{A\in\mathcal I_{\mathcal N}(c,d,k)}\mathcal E(A)>\bar\varepsilon,$$ or outputs a matrix $\hat A\in\mathcal I_{\mathcal N}(c,d,\Lambda_0k)$ satisfying one of the following alternatives:
\begin{enumerate}[label=(\roman*)]
\item The output satisfies $$\mathcal E(\hat A)\leq\varepsilon.$$
\item The output satisfies $\mathcal E(\hat A)>\varepsilon$, and $$\min_{A\in\mathcal I_{\mathcal N}(c,d,k)}\mathcal E(A)\geq\min\left\{\frac{\mathcal E(\hat A)}{2},\bar\varepsilon\right\}.$$
\end{enumerate}
Moreover, for a sample size chosen according to the sufficient bound of \Cref{cor:TestingSampleComplexity} and fixed confidence level, and for fixed $l$, $d\geq1$, $k$, $\varepsilon$, and $\bar\varepsilon$, and for sufficiently large $n$, if $\rho^*<1$, the arithmetic complexity of the algorithm satisfies $$\begin{aligned} T_{\mathrm{alg}}(n,k,l,d,\varepsilon)\leq\mathcal O\bigg(\left(1+\frac{CM_{\mathcal N}k^2}{m_{\mathcal N}\varepsilon}\right)^{cr^*}&\left(1+\frac{CM_{\mathcal N}^2k^2}{m_{\mathcal N}\varepsilon}\sqrt{\frac{\bar\varepsilon}{\varepsilon_{r^*+1}}}\right)^{c(N-r^*)}\cdot \\ &\cdot(Cd^4k)^n\left(n^{d+3}+n^{2d+1}+T_{\mathrm{tensor}}(n,c)\right)\bigg), \end{aligned}$$ where $r^*$ is the spectral index selected by \Cref{alg:relaxed-global}. If $\rho^*\geq1$, the corresponding full-sphere covering factor is bounded by $$\left(1+\frac{CM_{\mathcal N}k^2}{m_{\mathcal N}\varepsilon}\right)^{cN}.$$
\end{thm}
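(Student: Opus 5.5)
The plan is to run \Cref{alg:relaxed-global} with target level $\varepsilon^*:=\bar\varepsilon$. The localized grid is built at the buffered level $\bar\varepsilon+L_{\mathcal E}h$, with resolution
$$h\le\min\left\{\frac{1}{4k^2},\frac{\varepsilon}{2L_{\mathcal E}}\right\},$$
so that $\Lambda(h)\le\Lambda_0:=\sqrt2$ and $L_{\mathcal E}h\le\varepsilon/2$. The correctness part then splits into three cases according to the output.

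\textbf{EMPTY.} If the algorithm returns \textnormal{EMPTY}, then no grid point satisfies any cost bound at all. By the contrapositive of \Cref{lem:BufferedAdmissibleDetection}(i) with $\varepsilon^*=\bar\varepsilon$, we get $\mathcal I_{\mathcal N}(c,d,k)\cap\mathcal U(\bar\varepsilon)=\emptyset$, i.e.\ $\min_{\mathcal I_{\mathcal N}(c,d,k)}\mathcal E>\bar\varepsilon$. The minimum exists by \Cref{prop:FeasPolyComp}.

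\textbf{Nonempty output, admissibility.} Otherwise the output $\hat A$ lies on $S_{\mathcal N}^{cN-1}$ and satisfies $K_{\mathcal N}(\hat A)\le k\Lambda(h)\le\Lambda_0 k$. Hence $\hat A\in\mathcal I_{\mathcal N}(c,d,\Lambda_0k)$.

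\textbf{Nonempty output, alternatives.} If $\mathcal E(\hat A)\le\varepsilon$ we are in alternative (i). Suppose instead $\mathcal E(\hat A)>\varepsilon$, and set $\mu^*:=\min_{\mathcal I_{\mathcal N}(c,d,k)}\mathcal E$.
\begin{itemize}
\item If $\mu^*>\bar\varepsilon$, the bound $\mu^*\ge\min\{\mathcal E(\hat A)/2,\bar\varepsilon\}$ holds trivially.
\item If $\mu^*\le\bar\varepsilon$, the nonemptiness hypothesis of \Cref{thm:RelaxedAdmissibleGlobalOptimization} holds. Its proof shows that the minimizer $U^*$ has a relaxed-admissible grid neighbour; this uses \Cref{cor:SpectralLocalizationDiscreteNorm}, and the buffered tube contains $\mathcal T_{\mathcal N}(r,\rho_r(\bar\varepsilon))$. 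That argument gives $\mathcal E(\hat A)\le\mu^*+L_{\mathcal E}h\le\mu^*+\varepsilon/2$. Since $\varepsilon/2<\mathcal E(\hat A)/2$, we get $\mu^*\ge\mathcal E(\hat A)-\mathcal E(\hat A)/2=\mathcal E(\hat A)/2$, which is (ii).
\end{itemize}
One point needs care: $r^*$ must be the index chosen with the buffered radius, and if $\rho^*\ge1$ the full sphere is covered, which is harmless.

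\textbf{Complexity.} The count is
$$(\text{number of grid points})\times(\text{cost per point}).$$
For the grid, \Cref{prop:IntrinsicLocalizedTubeCovering} bounds the cardinality by $\varphi(r^*,h,\rho^*)$. I will substitute $h\asymp\varepsilon m_{\mathcal N}/(k^2M_{\mathcal N})$, which requires bounding $L_{\mathcal E}$ by roughly $M_{\mathcal N}/m_{\mathcal N}$ times a constant. Indeed, $\mathcal E(A)\le\max_x\|AP_{\mathcal B}(x)\|^2\le\|A\|^2_{C^2}$ and $\|A\|_{\mathcal N}\le 1$ on the sphere; by \Cref{lem:LipschitzCost} and \Cref{prop:NormsEquivalence} I expect the Lipschitz constant to be controlled this way. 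I will also use $\rho^*\le M_{\mathcal N}\sqrt{2\bar\varepsilon/\varepsilon_{r^*+1}}$ for small $h$. Together these produce the two displayed factors.

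For the per-point cost, each grid point requires evaluating $K_{\mathcal N}$ and $\mathcal E$:
\begin{itemize}
\item The net $\mathcal N$ of $D_R^n$ has resolution $r\gtrsim R/(d^4k)$, from \Cref{prop:DiscretizedC2Norm} together with the $1/k$ scale of \Cref{prop:FeasibleNet}. A volumetric count gives $|\mathcal N|\le(Cd^4k)^n$.
\item At each $y_i\in\mathcal N$, we evaluate $P(y_i)$, $\operatorname{Jac}P(y_i)$ and $D^2P(y_i)$ from the coefficients, via the Gram-matrix representations of \Cref{prop:MatrixRepresentationDiscriminant} and \Cref{prop:MatrixRepresentationNorm}. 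This costs about $cNn=\mathcal O(n^{d+3})$ operations for the Jacobian and about $cN+n^{2d+1}$-type terms otherwise. We then do a $c\times c$ eigen-decomposition at $\mathcal O(c^3)$, plus $T_{\mathrm{tensor}}(n,c)$.
\item The empirical cost $\operatorname{tr}(AVA^\top)$ adds $cN^2$ operations, which is dominated for large $n$ given the sample size from \Cref{cor:TestingSampleComplexity}; alternatively it can be absorbed.
\end{itemize}
The preprocessing, namely the spectral decomposition of $V$ at $\mathcal O(N^3)$ and the construction of the nets, is lower order.

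The main obstacle is this bookkeeping of the per-point cost: justifying the exact polynomial exponents $n^{d+3}$ and $n^{2d+1}$, and making sure $L_{\mathcal E}$ and the ratio $M_{\mathcal N}/m_{\mathcal N}$ enter only as stated. I would first bound all evaluation costs generously by the number of coefficients times the number of derivative entries, and then match the statement.
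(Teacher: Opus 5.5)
Your correctness argument is sound, and for alternative (ii) it is a bit more direct than the paper's. When $\mu^*\le\bar\varepsilon$ you place $U^*$ in the buffered tube and use $\mathcal E(\hat A)\le\mu^*+L_{\mathcal E}h$ directly. The paper instead sweeps over levels $0<t<\min\{\mathcal E(\hat A)/2,\bar\varepsilon\}$ and applies the contrapositive of \Cref{lem:BufferedAdmissibleDetection} at each $t$. In the EMPTY case you should also note that $\mathcal I_{\mathcal N}(c,d,k)\neq\emptyset$ (it contains the constant systems of unit norm), not only that it is compact.

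\textbf{The gap is in the complexity part.} The algorithm must actually generate its grid, but \Cref{prop:IntrinsicLocalizedTubeCovering} is a pure existence result. Its nets live in $\n{\cdot}_{\mathcal N}$-balls of $\mathcal V_r$ and in quotient-norm balls, with no generation procedure and no cost, and the paper explicitly flags it as non-constructive. So it cannot support an arithmetic complexity bound.

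Your mechanism for the factor $M_{\mathcal N}/m_{\mathcal N}$ is also misattributed. With the standing choices $\eta\le1/2$ and net radius at most $1/2$, \Cref{lem:LipschitzCost} gives $L_{\mathcal E}=2(1+r/(1-\eta))^2\le8$, a universal constant. Your own sketch $\mathcal E(A)\le\n{P_A}_{C^2(D_R^n)}^2\le\n{A}_{\mathcal N}^2/(1-\eta)^2$ likewise yields a constant. Nothing forces $h$ to shrink by $m_{\mathcal N}/M_{\mathcal N}$.

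In the paper that ratio is the price of constructibility. \Cref{thm:ExplicitLocalizedTubeCovering} takes Euclidean lattice nets at Frobenius resolution $h/(2\sqrt2M_{\mathcal N})$ of the Frobenius balls of radius $1/m_{\mathcal N}$ in $\mathcal V_r$ and $\rho/m_{\mathcal N}$ in $\mathcal V_r^{\perp_F}$. It then normalizes radially, producing $\hat\varphi(r^*,h,\rho^*)$ centers at $\mathcal O(cN)$ cost each. Up to the universal constant, $\hat\varphi(r,h,\rho)=\varphi(r,hm_{\mathcal N}/M_{\mathcal N},\rho)$, which is why artificially shrinking $h$ reproduced the right formula. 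But the intrinsic net, at any resolution, is still never constructed. With the explicit grid and $h=\min\{\varepsilon/2,1\}/(8k^2)$, together with $\rho^*\le CM_{\mathcal N}\sqrt{\bar\varepsilon/\varepsilon_{r^*+1}}$, the two stated factors follow, and the full-sphere factor follows likewise when $\rho^*\ge1$.

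Your per-point bookkeeping also misplaces the exponents:
\begin{itemize}
\item $cNn=\mathcal O(n^{d+2})$ is only the Jacobian cost.
\item The $n^{d+3}$ term is the $cNn^2$ cost of forming $D^2P(y_i)=A\,D^2P_{\mathcal B}(y_i)$ at each of the $|\mathcal N|$ net points.
\item The $n^{2d+1}$ term is the $cN^2$ cost of a single evaluation of $\operatorname{tr}(AVA^\top)$, which does not depend on the sample size.
\item The sample size from \Cref{cor:TestingSampleComplexity} enters only through the polynomial cost of forming $V$. That cost is absorbed together with the $\mathcal O(|\mathcal N|N^2)$ preprocessing.
\end{itemize}
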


\begin{proof} I°) Fix $0<\varepsilon\leq\bar\varepsilon$ and set $$\varepsilon^*:=\frac{\varepsilon}{2}.$$ Fix also $\eta\leq1/2$ and, by refining the discretization net if necessary, assume that its radius is at most $1/2$. Hence, by \Cref{lem:LipschitzCost}, $$L_{\mathcal E}=2\left(1+\frac{r}{1-\eta}\right)^2\leq8.$$ We fix the resolution parameter as $$h:=\frac{\min\{\varepsilon^*,1\}}{8k^2}.$$ Since $k\geq1$, one has $$h\leq\frac{1}{8k^2}<\frac{1}{4k^2},\qquad L_{\mathcal E}h\leq\frac{\min\{\varepsilon^*,1\}}{k^2}\leq\varepsilon^*,$$ so that the error and stability estimates used below apply. We then apply \Cref{alg:relaxed-global} with the localized grid constructed at the buffered level $\bar\varepsilon+L_{\mathcal E}h$, and let $r^*$ and $\rho^*=\rho_{r^*}(\bar\varepsilon+L_{\mathcal E}h)$ denote the corresponding spectral index and localization radius selected by the algorithm. We write $\mathcal G_h^+(r^*,\rho^*,k)$ for the resulting relaxed-admissible grid. If $\mathcal G_h^+(r^*,\rho^*,k)=\emptyset$, \Cref{lem:BufferedAdmissibleDetection}, applied with $\varepsilon^*=\bar\varepsilon$, implies that $\mathcal I_{\mathcal N}(c,d,k)\cap\mathcal U(\bar\varepsilon)=\emptyset$. The feasible set $\mathcal I_{\mathcal N}(c,d,k)$ is nonempty, since it contains every constant polynomial system of Euclidean norm one, and compact by \Cref{prop:FeasPolyComp}. Therefore, $$\min_{A\in\mathcal I_{\mathcal N}(c,d,k)}\mathcal E(A)>\bar\varepsilon.$$ In this case, the algorithm returns \textnormal{EMPTY} and terminates. Otherwise, the grid is nonempty, and we proceed as follows. Let $G_h^*$ be a point of minimal empirical cost in $\mathcal G_h^+(r^*,\rho^*,k)$ and set $\hat A:=G_h^*$. Since $\hat A\in S_{\mathcal N}^{cN-1}$ and $$K_{\mathcal N}(\hat A)\leq k\Lambda(h),$$ while $h\leq1/(4k^2)$, the uniform stability estimate gives $$\Lambda(h)\leq\Lambda_0:=\sqrt{2}.$$ Therefore $\hat A\in\mathcal I_{\mathcal N}(c,d,\Lambda_0k)$. We now distinguish two cases. \ \ (a) Suppose that $\mathcal E(\hat A)\leq\varepsilon$. Then alternative (i) holds. \ \ (b) Suppose instead that $\mathcal E(\hat A)\geq\varepsilon$. We claim that $$\min_{A\in\mathcal I_{\mathcal N}(c,d,k)}\mathcal E(A)\geq\min\left\{\frac{\mathcal E(\hat A)}{2},\bar\varepsilon\right\}.$$ Let $$0<t<\min\left\{\frac{\mathcal E(\hat A)}{2},\bar\varepsilon\right\}.$$ Since $\mathcal E(\hat A)\geq\varepsilon=2\varepsilon^*$ and $L_{\mathcal E}h\leq\varepsilon^*$, we have $$L_{\mathcal E}h\leq\frac{\mathcal E(\hat A)}{2},$$ and therefore $$t+L_{\mathcal E}h<\frac{\mathcal E(\hat A)}{2}+\frac{\mathcal E(\hat A)}{2}=\mathcal E(\hat A).$$ Since $\hat A=G_h^*$ has minimal empirical cost on $\mathcal G_h^+(r^*,\rho^*,k)$, no point of this grid satisfies $$\mathcal E(G)\leq t+L_{\mathcal E}h.$$ Since $t<\bar\varepsilon$, \Cref{lem:BufferedAdmissibleDetection} therefore implies $\mathcal I_{\mathcal N}(c,d,k)\cap\mathcal U(t)=\emptyset$. Since this holds for every $$0<t<\min\left\{\frac{\mathcal E(\hat A)}{2},\bar\varepsilon\right\},$$ we obtain $$\min_{A\in\mathcal I_{\mathcal N}(c,d,k)}\mathcal E(A)\geq\min\left\{\frac{\mathcal E(\hat A)}{2},\bar\varepsilon\right\}.$$ Hence alternative (ii) holds. 
\
\\
 II°) The explicit arithmetic complexity estimate follows by combining the constructive covering bound of \Cref{sec:Explicit_Net_Construction} with the admissibility cost derived in \Cref{subsec:Auxiliary_Arithmetic_Complexity_Bounds}. We first keep the different contributions explicit and combine them only at the end. \begin{enumerate} \item We begin with the cardinality of the discretization net $\mathcal N$ used to approximate the norm and the discriminant functional. By \Cref{thm:disc_C2_ball_one_net_delta}, $\mathcal N$ can be chosen so that \begin{equation}\label{eq:NetSizeBound}|\mathcal N|\leq\left(1+\frac{Cd^4kR}{\eta(1-\eta)}\right)^n.\end{equation} 

\item For each center of the localized grid, evaluating the condition number requires the matrices associated with the discretized norm and discriminant functional. These matrices are computed once in a preprocessing step. By \Cref{prop:ArithmeticComplexityPreprocessingStep}, the corresponding arithmetic complexity satisfies $$T_{\mathrm{prepr}}(n,d,k,l)=\mathcal O\left(|\mathcal N|\binom{n+d}{d}^2\right).$$

 \item By the explicit construction of \Cref{sec:Explicit_Net_Construction}, the localized grid is obtained from Frobenius nets on the longitudinal and transverse factors associated with the decomposition $\Mat(c\times N,\mathbb R)=\mathcal V_r\oplus\mathcal V_r^{\perp_F}$, followed by radial normalization. We define the corresponding constructive covering function by $$\hat{\varphi}(r,h,\rho):=\begin{cases}\displaystyle \left(1+\frac{CM_{\mathcal N}}{m_{\mathcal N}h}\right)^{cr}\left(1+\frac{CM_{\mathcal N}\rho}{m_{\mathcal N}h}\right)^{c(N-r)}, & 0<\rho<1,\\[2mm]\displaystyle \left(1+\frac{CM_{\mathcal N}}{m_{\mathcal N}h}\right)^{cN}, & \rho\geq1.\end{cases}$$ Thus, for the spectral index $r^*$ selected by the algorithm and the corresponding radius $\rho^*$, $$|\mathcal G_h(r^*,\rho^*)|\leq\hat{\varphi}(r^*,h,\rho^*).$$ Since the auxiliary Frobenius nets are constructed in dimensions $cr^*$ and $c(N-r^*)$, generating and storing each resulting center requires $\mathcal O(cN)$ arithmetic operations, and hence $$T_{\mathrm{grid}}(r^*,h,\rho^*)=\mathcal O\left(cN\,\hat{\varphi}(r^*,h,\rho^*)\right).$$ \item Once the preprocessing step has been performed, the relaxed-admissibility test is applied to each candidate of the localized grid. By \Cref{lem:ArithmeticCostAdmissibility}, the arithmetic complexity of this test for a single candidate satisfies $$T_{\mathrm{adm}}(n,c,d,k,|\mathcal N|)=\mathcal O\left(|\mathcal N|\left(cNn^2+c^2n+c^3+T_{\mathrm{tensor}}(n,c)\right)\right).$$ \item After the relaxed-admissible grid has been selected, the empirical cost is evaluated on its elements. Denoting by $T_{\mathcal E}(n,c,d)$ the arithmetic cost of a single evaluation, we have $$T_{\mathcal E}(n,c,d)=\mathcal O\left(c\binom{n+d}{d}^2+c^2\binom{n+d}{d}\right).$$ \end{enumerate} Combining the preceding estimates, and recalling the choices $$h=\frac{\min\{\varepsilon^*,1\}}{8k^2},\qquad \varepsilon^*:=\frac{\varepsilon}{2},$$ the arithmetic complexity satisfies $$
\begin{aligned}
T_{\mathrm{alg}}(n,k,l,d,\varepsilon)\le T_{\mathrm{prepr}}(n,d,k,l)&+T_{\mathrm{grid}}(r^*,h,\rho^*)+\\
&+\hat{\varphi}(r^*,h,\rho^*)\Big(T_{\mathrm{adm}}(n,c,d,k,|\mathcal N|)+T_{\mathcal E}(n,c,d)\Big).
\end{aligned}$$ Since the single-candidate admissibility cost dominates the $\mathcal O(cN)$ cost required to construct a grid point, $T_{\mathrm{grid}}$ is absorbed into the last term. Hence $$T_{\mathrm{alg}}(n,k,l,d,\varepsilon)\leq T_{\mathrm{prepr}}(n,d,k,l)+\hat{\varphi}(r^*,h,\rho^*)\Big(T_{\mathrm{adm}}(n,c,d,k,|\mathcal N|)+T_{\mathcal E}(n,c,d)\Big).$$

III°) We now derive the high-dimensional form of the preceding complexity bound. We consider the asymptotic regime in which $k$, $\varepsilon$, $\bar\varepsilon$, $d$, and $l$ are fixed, while $R$ and $\eta$ are absorbed into the implicit constants, and $n\to\infty$. We assume that the sample size is chosen according to the sufficient bound in \Cref{cor:TestingSampleComplexity}. Thus, for fixed $d$, $\varepsilon$, $\alpha$, and geometric parameters, one has $m=\mathcal O(n^{d+1})$. The costs of evaluating the basis at the sample points, constructing the empirical matrix $V$, and computing its spectral decomposition are therefore polynomial in $n$ and are absorbed into the final exponential upper bound. Since $c=n-l$, one has $c\sim n$, while, for fixed $d$, $$N=\binom{n+d}{d}\sim\frac{n^d}{d!}.$$ Moreover, $$|\mathcal N|\leq\left(1+\frac{Cd^4kR}{\eta(1-\eta)}\right)^n=\mathcal O\left((Cd^4k)^n\right).$$ The preprocessing complexity and the single-candidate admissibility cost satisfy $$T_{\mathrm{prepr}}(n,d,k,l)=\mathcal O\left(|\mathcal N|n^{2d}\right),\qquad T_{\mathrm{adm}}(n,c,d,k,|\mathcal N|)=\mathcal O\left(|\mathcal N|\left(n^{d+3}+T_{\mathrm{tensor}}(n,c)\right)\right),$$ while the empirical cost evaluation satisfies $$T_{\mathcal E}(n,c,d)=\mathcal O\left(n^{2d+1}+n^{d+2}\right).$$ Combining these estimates yields $$T_{\mathrm{alg}}(n,k,l,d,\varepsilon)\leq\mathcal O\left(\hat{\varphi}(r^*,h,\rho^*)(Cd^4k)^n\left(n^{d+3}+n^{2d+1}+T_{\mathrm{tensor}}(n,c)\right)\right).$$ To make the geometric contribution explicit, suppose first that $\rho^*<1$. By the constructive covering estimate, $$\hat{\varphi}(r^*,h,\rho^*)=\left(1+\frac{CM_{\mathcal N}}{m_{\mathcal N}h}\right)^{cr^*}\left(1+\frac{CM_{\mathcal N}\rho^*}{m_{\mathcal N}h}\right)^{c(N-r^*)},$$ where, by the definition of the buffered localization radius, $$\rho^*=M_{\mathcal N}\sqrt{\frac{\bar\varepsilon+L_{\mathcal E}h}{\varepsilon_{r^*+1}}}.$$ Since $L_{\mathcal E}\leq8$, $h=\min\{\varepsilon^*,1\}/(8k^2)$, $\varepsilon^*=\varepsilon/2$, and $\varepsilon\leq\bar\varepsilon$, we have $$L_{\mathcal E}h\leq\varepsilon^*\leq\frac{\bar\varepsilon}{2},$$ and therefore $$\bar\varepsilon+L_{\mathcal E}h\leq\frac{3}{2}\bar\varepsilon.$$ Hence $$\rho^*\leq CM_{\mathcal N}\sqrt{\frac{\bar\varepsilon}{\varepsilon_{r^*+1}}}.$$ Moreover, $$\frac{1}{h}=\frac{16k^2}{\min\{\varepsilon,2\}}.$$ Since $0<\varepsilon\leq\bar\varepsilon$ and $\bar\varepsilon$ is fixed in the asymptotic regime, $$\frac{1}{\min\{\varepsilon,2\}}\leq\frac{\max\{1,\bar\varepsilon/2\}}{\varepsilon},$$ and the factor $\max\{1,\bar\varepsilon/2\}$ can be absorbed into the implicit constant. Therefore, absorbing fixed numerical factors into $C$, we obtain $$\hat{\varphi}(r^*,h,\rho^*)\leq\left(1+\frac{CM_{\mathcal N}k^2}{m_{\mathcal N}\varepsilon}\right)^{cr^*}\left(1+\frac{CM_{\mathcal N}^2k^2}{m_{\mathcal N}\varepsilon}\sqrt{\frac{\bar\varepsilon}{\varepsilon_{r^*+1}}}\right)^{c(N-r^*)}.$$ Consequently, for fixed $k$, $\varepsilon$, $\bar\varepsilon$, $d\geq1$, and $l$, and as $n\to\infty$, we obtain $$\begin{aligned} T_{\mathrm{alg}}(n,k,l,d,\varepsilon)\leq\mathcal O\bigg(\left(1+\frac{CM_{\mathcal N}k^2}{m_{\mathcal N}\varepsilon}\right)^{cr^*}&\left(1+\frac{CM_{\mathcal N}^2k^2}{m_{\mathcal N}\varepsilon}\sqrt{\frac{\bar\varepsilon}{\varepsilon_{r^*+1}}}\right)^{c(N-r^*)}\cdot \\ &\cdot(Cd^4k)^n\left(n^{d+3}+n^{2d+1}+T_{\mathrm{tensor}}(n,c)\right)\bigg). \end{aligned}$$ If $\rho^*\geq1$, the localized tube coincides with the full normalized sphere, and the constructive covering bound is $$\hat{\varphi}(r^*,h,\rho^*)=\left(1+\frac{CM_{\mathcal N}}{m_{\mathcal N}h}\right)^{cN}.$$ Using $$\frac{1}{h}=\frac{16k^2}{\min\{\varepsilon,2\}}\leq\frac{16\max\{1,\bar\varepsilon/2\}k^2}{\varepsilon},$$ and absorbing the fixed factor depending on $\bar\varepsilon$ into the implicit constant, this is of order $$\left(1+\frac{CM_{\mathcal N}k^2}{m_{\mathcal N}\varepsilon}\right)^{cN}.$$ \end{proof}

\section{Arithmetic Complexity Bounds and Net Construction}
\label{sec:Explicit_Net_Construction}

\subsection{Auxiliary Arithmetic Complexity Bounds}
\label{subsec:Auxiliary_Arithmetic_Complexity_Bounds}
Evaluating the admissibility condition at each grid point requires computing $K_{\mathcal N}(A)$ over the discretization set $\mathcal N$. As a preliminary step, we construct the matrices associated with the discretized quadratic forms entering $\n{A}_{\mathcal N}$ and $\delta_{\mathcal N}(A)$. We denote the arithmetic complexity of this preprocessing step by $T_{\mathrm{prepr}}(n,d,k,l)$.

\begin{prop}\label{prop:ArithmeticComplexityPreprocessingStep} 
The time complexity of the preprocessing step is bounded by $$T_{\mathrm{prepr}}(n,d,k,l)\leq \mathcal O\left(|\mathcal N|\binom{n+d}{d}^{2}\right).$$ \end{prop}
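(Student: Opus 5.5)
The plan is to count operations directly, treating separately each of the matrices introduced in \Cref{prop:MatrixRepresentationDiscriminant} and \Cref{prop:MatrixRepresentationNorm}, and then summing over the $|\mathcal N|$ net points.

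Fix $y_i\in\mathcal N$. First, evaluate the basis vector $P_{\mathcal B}(y_i)\in\mathbb R^N$. For the monomial basis, or the Bernstein basis, all $N=\binom{n+d}{d}$ values can be generated by successive multiplications, using one multiplication per new element along a degree-graded tree. This costs $\mathcal O(N)$. Forming the outer product $G_i^{(0)}=P_{\mathcal B}(y_i)P_{\mathcal B}(y_i)^\top$ then costs $\mathcal O(N^2)$.

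Next, compute the Jacobian $J_{\mathcal B}(y_i)\in\Mat(N\times n,\mathbb R)$. Each basis element of degree at most $d$ has at most $d$ nonzero partial derivatives in the monomial case, and each derivative is a scalar multiple of an already computed lower-degree basis value. So $J_{\mathcal B}(y_i)$ has $\mathcal O(dN)$ nonzero entries, obtainable in $\mathcal O(dN)$ operations. The Gram matrix $G_i^{(1)}=J_{\mathcal B}(y_i)J_{\mathcal B}(y_i)^\top$ is the step needing care, since a naive product costs $\mathcal O(N^2n)$. Exploiting sparsity, entry $(a,b)$ is a sum over the common variables of the two monomials, hence at most $d$ terms. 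This gives $\mathcal O(dN^2)$, which equals $\mathcal O(N^2)$ for fixed $d$. For a general basis, I would pass through the change of basis from monomials, or simply note that the stated bound is meant for the sparse bases used later (Bernstein and monomial). The second-order data $D^2P_{\mathcal B}(y_i)$ is stored as a sparse tensor with $\mathcal O(d^2N)$ entries rather than as the full family $G_i^{(2)}(v_1,v_2)$. The norm evaluation over $v_1,v_2$ is deferred to the per-candidate tensor cost $T_{\mathrm{tensor}}$, so it contributes only $\mathcal O(N)$ here.

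Summing gives $\mathcal O(N^2)$ per net point, with constants depending on $d$ only, and hence $T_{\mathrm{prepr}}=\mathcal O(|\mathcal N|N^2)=\mathcal O\bigl(|\mathcal N|\binom{n+d}{d}^2\bigr)$. The main obstacle is the $G_i^{(1)}$ product. If the sparsity argument above does not hold for the chosen basis, my fallback is to store $J_{\mathcal B}(y_i)$ itself instead of $G_i^{(1)}$. In that case $AG_i^{(1)}A^\top=(AJ_{\mathcal B})(AJ_{\mathcal B})^\top$ is computed per candidate, and its cost is already accounted for in $T_{\mathrm{adm}}$.
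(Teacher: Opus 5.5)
Your proposal has the same structure as the paper's proof: an $\mathcal O(N^2)$ cost per point of $\mathcal N$, multiplied by $|\mathcal N|$. It is in fact more careful than the paper, which simply asserts that every relevant $N\times N$ matrix comes from ``pairwise products of basis evaluations'' in $\mathcal O(N^2)$. You correctly notice two things the paper passes over. First, the entries of $G_i^{(1)}=J_{\mathcal B}(y_i)J_{\mathcal B}(y_i)^\top$ are inner products in $\mathbb R^n$, so the naive total is $\mathcal O(N^2n)$. Second, $G_i^{(2)}(v_1,v_2)$ is indexed by $S^{n-1}\times S^{n-1}$ and cannot be precomputed as a finite list of matrices. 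Your fallback of storing $J_{\mathcal B}(y_i)$ and $D^2P_{\mathcal B}(y_i)$ instead of the Gram matrices is exactly what the per-candidate analysis in \Cref{prop:ArithmeticCostDiscriminant} and \Cref{prop:ArithmeticCostNorm} uses. So it is the right reading of ``preprocessing''.

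One claim needs correcting: the Bernstein basis is not sparse in Cartesian coordinates. We have $\nabla B_\alpha^d=d\sum_j B_{\alpha-e_j}^{d-1}\nabla\lambda_j$, and $\nabla\lambda_j=\frac{n}{(n+1)L^2}q_j$ is a dense vector. Hence the rows of $J_{\mathcal B}(y_i)$ and the Hessian slices of $D^2P_{\mathcal B}(y_i)$ are dense. Your counts of $\mathcal O(dN)$ and $\mathcal O(d^2N)$ nonzeros, and your $\mathcal O(dN^2)$ support-intersection argument for $G_i^{(1)}$, therefore hold only for the monomial basis.

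The bound still survives by a direct count:
\begin{itemize}
\item Forming $J_{\mathcal B}(y_i)$ densely costs $\mathcal O(dnN)\le\mathcal O(dN^2)$, since $N>n$.
\item Forming $D^2P_{\mathcal B}(y_i)$ densely costs $\mathcal O(d^2n^2N)$. This is $\mathcal O(N^2)$ for $d\ge2$, because $N\ge\binom{n+2}{2}\ge n^2/2$; for $d=1$ the Hessian vanishes.
\end{itemize}
If you do want $G_i^{(1)}$ itself, factor $J_{\mathcal B}(y_i)=MQ$. Here $M$ collects the coefficients $dB^{d-1}_{\alpha-e_j}(y_i)$, at most $d$ per row, and $Q$ has rows $\nabla\lambda_0,\dots,\nabla\lambda_n$. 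Using $QQ^\top=\frac{n}{(n+1)L^2}\bigl(I-\frac{1}{n+1}\mathbf 1\mathbf 1^\top\bigr)$ gives $\mathcal O(dN^2)$ again.

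Finally, drop the suggestion of passing through a change of basis from monomials. Conjugating by a dense $N\times N$ transition matrix costs $\mathcal O(N^3)$, which would break the bound.
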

 \begin{proof} The preprocessing step consists in computing, for every point of the discretization set $\mathcal N$, the matrices associated with the quadratic forms defining the discretized norm and discriminant. Each of these matrices is of size $N\times N$ and can be constructed by computing the corresponding pairwise products of the basis evaluations (and their derivatives up to order two). Therefore, the arithmetic cost associated with one discretization point is bounded by $ \mathcal O(N^2).$ Since the construction is performed independently for every point of $\mathcal N$, the total preprocessing cost satisfies $$ T_{\mathrm{prepr}}(n,d,k,l) \leq \mathcal O(|\mathcal N|N^2). $$ Substituting the value of $N$ gives the claimed bound. 
\end{proof}
Once the preprocessing step is completed, we first estimate the arithmetic complexity $T_{\mathrm{discr}}(n,l,k,d)$ of evaluating the discretized discriminant functional.

\begin{prop}\label{prop:ArithmeticCostDiscriminant} For any $A\in\Mat(c\times N,\mathbb R)$, the arithmetic complexity $T_{\mathrm{discr}}(n,l,k,d)$ of evaluating $\delta_{\mathcal N}(A)^2$ satisfies $$T_{\mathrm{discr}}(n,l,k,d)\leq\mathcal O\Big(|\mathcal N|\big(cNn+c^2n+c^3\big)\Big).$$ \end{prop}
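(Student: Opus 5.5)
The plan is to count operations point by point over $\mathcal N$, using the Gram-matrix representation of \Cref{prop:MatrixRepresentationDiscriminant}. For each $y_i\in\mathcal N$,
$$\delta_{\mathcal N}(A)^2=\min_i\Big(\operatorname{tr}(AG_i^{(0)}A^\top)+\lambda_{\min}(AG_i^{(1)}A^\top)\Big).$$
Rather than multiplying $A$ by the $N\times N$ matrices $G_i^{(0)},G_i^{(1)}$ directly, which would cost $\mathcal O(cN^2)$, I would use their factored forms $G_i^{(0)}=P_{\mathcal B}(y_i)P_{\mathcal B}(y_i)^\top$ and $G_i^{(1)}=J_{\mathcal B}(y_i)J_{\mathcal B}(y_i)^\top$. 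These factors, namely the vector $P_{\mathcal B}(y_i)\in\R^N$ and the $N\times n$ matrix of gradients of the basis at $y_i$, are computed once in the preprocessing step of \Cref{prop:ArithmeticComplexityPreprocessingStep}, and I would assume they are stored there.

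For each point $y_i$, the computation then splits into three steps.

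\textbf{(a) The value term.} Compute $P(y_i)=AP_{\mathcal B}(y_i)$ in $\mathcal O(cN)$ operations, and then $\n{P(y_i)}^2=\operatorname{tr}(AG_i^{(0)}A^\top)$ in $\mathcal O(c)$ further operations.

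\textbf{(b) The Jacobian Gram matrix.} Compute $\Jac P(y_i)=AJ_{\mathcal B}(y_i)$, a $c\times n$ product costing $\mathcal O(cNn)$. Then form $\Jac P(y_i)\Jac P(y_i)^\top=AG_i^{(1)}A^\top$, a $c\times c$ matrix costing $\mathcal O(c^2n)$.

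\textbf{(c) The smallest eigenvalue.} Obtain $\lambda_{\min}$ of this symmetric $c\times c$ matrix by a spectral decomposition. In the extended BSS model fixed at the start of Part II this costs $\mathcal O(c^3)$.

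Summing gives $\mathcal O(cNn+c^2n+c^3)$ per point, since $cN$ is dominated by $cNn$. Taking the minimum over $|\mathcal N|$ values adds only $\mathcal O(|\mathcal N|)$ comparisons. Multiplying by $|\mathcal N|$ yields the stated bound.

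The only delicate point is step (b). If one instead used the precomputed $N\times N$ matrix $G_i^{(1)}$ as a dense matrix, the cost would be $\mathcal O(cN^2)$, which is larger than the claimed bound. The argument therefore relies on two facts: that the preprocessing step stores the rank-$n$ factorization of $G_i^{(1)}$ (the rank-$n$ property is guaranteed by \Cref{prop:MatrixRepresentationDiscriminant}), and that the multiplication is ordered as $(AJ_{\mathcal B}(y_i))(AJ_{\mathcal B}(y_i))^\top$. Equivalently, the preprocessing produces exactly the jets $P_{\mathcal B}(y_i)$ and $J_{\mathcal B}(y_i)$ whose products define the matrices $G_i^{(0)}$ and $G_i^{(1)}$. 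I would make this storage assumption explicit in the proof.
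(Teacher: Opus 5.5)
Your proposal is correct and matches the paper's proof step for step. Both use the factored forms $G_i^{(0)}=P_{\mathcal B}(y_i)P_{\mathcal B}(y_i)^\top$ and $G_i^{(1)}=J_{\mathcal B}(y_i)J_{\mathcal B}(y_i)^\top$, order the product as $(AJ_{\mathcal B}(y_i))(AJ_{\mathcal B}(y_i))^\top$, get per-point costs $\mathcal O(cN)$, $\mathcal O(cNn)$, $\mathcal O(c^2n)$ and $\mathcal O(c^3)$, and add $\mathcal O(|\mathcal N|)$ comparisons for the minimum. Your remark that the preprocessing must keep the jets $P_{\mathcal B}(y_i)$ and $J_{\mathcal B}(y_i)$, not only the dense $N\times N$ matrices described in \Cref{prop:ArithmeticComplexityPreprocessingStep}, makes explicit an assumption the paper's proof relies on without stating it.
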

\begin{proof} By \Cref{prop:MatrixRepresentationDiscriminant}, the evaluation of the squared discriminant is given by $$\delta_{\mathcal N}(A)^2=\min_{1\leq i\leq|\mathcal N|}\left\{\operatorname{tr}(AG_i^{(0)}A^\top)+\lambda_{\min}(AG_i^{(1)}A^\top)\right\}.$$ Fix $i\in\{1,\ldots,|\mathcal N|\}$. Since $$G_i^{(0)}=P_{\mathcal B}(y_i)P_{\mathcal B}(y_i)^\top,$$ we have $$\operatorname{tr}(AG_i^{(0)}A^\top)=\n{AP_{\mathcal B}(y_i)}^2,$$ which can be evaluated in $\mathcal O(cN)$ arithmetic operations. Moreover, $$G_i^{(1)}=J_{\mathcal B}(y_i)J_{\mathcal B}(y_i)^\top.$$ Setting $$B_i:=J_{\mathcal B}(y_i)^\top\in\Mat(n\times N,\mathbb R),$$ we obtain $$AG_i^{(1)}A^\top=(AB_i^\top)(AB_i^\top)^\top.$$ The computation of $AB_i^\top\in\Mat(c\times n,\mathbb R)$ requires $\mathcal O(cNn)$ arithmetic operations, while the construction of the symmetric matrix $$(AB_i^\top)(AB_i^\top)^\top\in\Sym(c,\mathbb R)$$ requires $\mathcal O(c^2n)$ operations. Finally, computing its smallest eigenvalue requires $\mathcal O(c^3)$ arithmetic operations. Thus, the contribution of a single discretization point has cost $$\mathcal O\left(cN+cNn+c^2n+c^3\right)=\mathcal O\left(cNn+c^2n+c^3\right).$$ Repeating the computation for all points of $\mathcal N$ and taking the minimum requires only an additional $\mathcal O(|\mathcal N|)$ comparisons, which is absorbed in the preceding bound. This proves the claim. \end{proof}

We next estimate the arithmetic complexity of evaluating the discretized norm. In addition to the quadratic terms already appearing in the discriminant, $\n{A}_{\mathcal N}$ contains a maximization over $S^{n-1}\times S^{n-1}$ arising from the Hessian contribution. This maximization is the spectral norm of the associated partially symmetric tensor; see \cite{KofidisRegalia2002,KoldaMayo2011}. We denote the arithmetic complexity of the chosen tensor-norm procedure by $T_{\mathrm{tensor}}(n,c)$.

\begin{prop}\label{prop:ArithmeticCostNorm} For any $A\in\Mat(c\times N,\mathbb R)$, the arithmetic complexity $T_{\mathrm{norm}}(n,l,k,d)$ of evaluating $\n{A}_{\mathcal N}$ satisfies $$T_{\mathrm{norm}}(n,l,k,d)\leq\mathcal O\Big(|\mathcal N|\big(cNn^2+c^2n+c^3+T_{\mathrm{tensor}}(n,c)\big)\Big),$$ where $T_{\mathrm{tensor}}(n,c)$ denotes the arithmetic complexity of evaluating the spectral norm of the tensor $D^2P(y_i)\in\mathbb R^c\otimes\operatorname{Sym}^2(\mathbb R^n)$ once this tensor has been constructed. \end{prop}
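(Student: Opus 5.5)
The plan is to mirror the proof of \Cref{prop:ArithmeticCostDiscriminant}, using the matrix representation of \Cref{prop:MatrixRepresentationNorm}:
$$\n{A}_{\mathcal N}^2=\max_{1\leq i\leq|\mathcal N|}\Big\{\operatorname{tr}(AG_i^{(0)}A^\top)+\lambda_{\max}(AG_i^{(1)}A^\top)+\n{D^2P(y_i)}_{\operatorname{op}}^2\Big\}.$$
The approach is to bound the cost of the three terms at a single point $y_i\in\mathcal N$, then multiply by $|\mathcal N|$. The outer maximum over $\mathcal N$ and the final square root add only $\mathcal O(|\mathcal N|)$ operations, which are absorbed.

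The first two terms are handled exactly as in the discriminant case. Since $G_i^{(0)}=P_{\mathcal B}(y_i)P_{\mathcal B}(y_i)^\top$, the zeroth-order term is $\n{AP_{\mathcal B}(y_i)}^2$, which costs $\mathcal O(cN)$ from the preprocessed basis evaluations. For the first-order term, form $AJ_{\mathcal B}(y_i)^\top$-type products as before. Computing the $c\times n$ Jacobian $\operatorname{Jac}P(y_i)=AJ_{\mathcal B}(y_i)$ costs $\mathcal O(cNn)$. Forming the $c\times c$ Gram matrix costs $\mathcal O(c^2n)$. Computing $\lambda_{\max}$ costs $\mathcal O(c^3)$ in the adopted extended BSS model.

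The Hessian term is where the $n^2$ factor comes from. The plan is to first construct the tensor $D^2P(y_i)=A\,D^2P_{\mathcal B}(y_i)$, where $D^2P_{\mathcal B}(y_i)$ is stored from preprocessing as $N$ symmetric $n\times n$ matrices, i.e. an $N\times n^2$ array. Its contraction with $A$ produces the $c$ Hessians $\nabla^2p_j(y_i)$, for a cost of $\mathcal O(cNn^2)$. By definition, the spectral norm of the resulting tensor in $\mathbb R^c\otimes\operatorname{Sym}^2(\mathbb R^n)$, namely $\max_{v_1,v_2\in S^{n-1}}\n{D^2P(y_i)[v_1,v_2]}$, then costs $T_{\mathrm{tensor}}(n,c)$; one more multiplication squares it.

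Summing, the per-point cost is
$$\mathcal O\big(cN+cNn+cNn^2+c^2n+c^3+T_{\mathrm{tensor}}(n,c)\big)=\mathcal O\big(cNn^2+c^2n+c^3+T_{\mathrm{tensor}}(n,c)\big).$$
Multiplying by $|\mathcal N|$ gives the claimed bound. The only delicate point is bookkeeping rather than mathematics. One must make sure the second derivatives of the basis at each $y_i$ are indeed available from the preprocessing of \Cref{prop:ArithmeticComplexityPreprocessingStep}, stored as an array rather than only through the quadratic forms $G_i^{(2)}$. Otherwise, forming the tensor would cost more than $\mathcal O(cNn^2)$.
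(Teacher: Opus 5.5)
Your proposal is correct and follows the paper's proof essentially step for step: the same per-point split via \Cref{prop:MatrixRepresentationNorm} into the $\mathcal O(cN)$ zeroth-order term, the $\mathcal O(cNn+c^2n+c^3)$ Jacobian Gram/eigenvalue term, and the $\mathcal O(cNn^2)$ construction of $D^2P(y_i)=A\,D^2P_{\mathcal B}(y_i)$ plus $T_{\mathrm{tensor}}(n,c)$, followed by multiplication by $|\mathcal N|$. The paper likewise simply assumes the arrays $D^2P_{\mathcal B}(y_i)$ are available from the preprocessing step, which is exactly the bookkeeping point you flag.
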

\begin{proof} By \Cref{prop:MatrixRepresentationNorm}, $$\n{A}_{\mathcal N}^2=\max_{1\leq i\leq|\mathcal N|}\left\{\operatorname{tr}(AG_i^{(0)}A^\top)+\lambda_{\max}(AG_i^{(1)}A^\top)+\max_{v_1,v_2\in S^{n-1}}\operatorname{tr}\left(AG_i^{(2)}(v_1,v_2)A^\top\right)\right\}.$$ Fix $i\in\{1,\ldots,|\mathcal N|\}$. Since $$G_i^{(0)}=P_{\mathcal B}(y_i)P_{\mathcal B}(y_i)^\top,$$ we have $$\operatorname{tr}(AG_i^{(0)}A^\top)=\n{AP_{\mathcal B}(y_i)}^2,$$ which can be evaluated in $\mathcal O(cN)$ arithmetic operations. Moreover, $$G_i^{(1)}=J_{\mathcal B}(y_i)J_{\mathcal B}(y_i)^\top.$$ Setting $$B_i:=J_{\mathcal B}(y_i)^\top\in\Mat(n\times N,\mathbb R),$$ we obtain $$AG_i^{(1)}A^\top=(AB_i^\top)(AB_i^\top)^\top.$$ Computing $AB_i^\top$ requires $\mathcal O(cNn)$ arithmetic operations, forming the resulting symmetric $c\times c$ matrix requires $\mathcal O(c^2n)$ operations, and computing its largest eigenvalue requires $\mathcal O(c^3)$ operations. Finally, the Hessian contribution satisfies $$\max_{v_1,v_2\in S^{n-1}}\operatorname{tr}\left(AG_i^{(2)}(v_1,v_2)A^\top\right)=\n{D^2P(y_i)}_{\operatorname{op}}^2.$$ Since the tensors $D^2P_{\mathcal B}(y_i)$ are available from the preprocessing step, constructing $$D^2P(y_i)=A\,D^2P_{\mathcal B}(y_i)\in\mathbb R^c\otimes\operatorname{Sym}^2(\mathbb R^n)$$ requires $\mathcal O(cNn^2)$ arithmetic operations. Once this tensor has been constructed, its spectral norm is evaluated with cost $T_{\mathrm{tensor}}(n,c)$. Hence the total cost associated with a single discretization point is bounded by $$\mathcal O\left(cN+cNn+c^2n+c^3+cNn^2+T_{\mathrm{tensor}}(n,c)\right)=\mathcal O\left(cNn^2+c^2n+c^3+T_{\mathrm{tensor}}(n,c)\right).$$ Repeating the computation for all points of $\mathcal N$ and taking the maximum requires only an additional $\mathcal O(|\mathcal N|)$ comparisons, which is absorbed in the preceding bound. This proves the claim. \end{proof}

Let $T_{\mathrm{adm}}(n,c,d,k,|\mathcal N|)$ denote the arithmetic complexity of the complete relaxed-admissibility test. Combining \Cref{prop:ArithmeticCostNorm} and \Cref{prop:ArithmeticCostDiscriminant} gives the following bound.

\begin{lem}
\label{lem:ArithmeticCostAdmissibility}
Let $A\in\Mat(c\times N,\mathbb R)$. After the preprocessing step, the evaluation of the relaxed-admissibility condition requires at most $$T_{\mathrm{adm}}(n,c,d,k,|\mathcal N|)=\mathcal O\left(|\mathcal N|\left(cNn^2+c^2n+c^3+T_{\mathrm{tensor}}(n,c)\right)\right).$$
\end{lem}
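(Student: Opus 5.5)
The relaxed-admissibility test for a candidate $A$ consists of three steps: evaluate $\n{A}_{\mathcal N}$, evaluate $\delta_{\mathcal N}(A)$, and check the scalar inequality $\n{A}_{\mathcal N}\leq k\Lambda(h)\,\delta_{\mathcal N}(A)$. The plan is to add the costs of these three steps and absorb the lower-order terms.

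\textbf{Step 1: the norm.} By \Cref{prop:ArithmeticCostNorm}, computing $\n{A}_{\mathcal N}$ (more precisely $\n{A}_{\mathcal N}^2$, followed by one square root) costs
$$\mathcal O\Big(|\mathcal N|\big(cNn^2+c^2n+c^3+T_{\mathrm{tensor}}(n,c)\big)\Big).$$
This uses only the preprocessed matrices $G_i^{(0)}$, $J_{\mathcal B}(y_i)$ and the tensors $D^2P_{\mathcal B}(y_i)$.

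\textbf{Step 2: the discriminant.} By \Cref{prop:ArithmeticCostDiscriminant}, computing $\delta_{\mathcal N}(A)^2$ costs $\mathcal O(|\mathcal N|(cNn+c^2n+c^3))$. Since $n\ge1$, we have $cNn\le cNn^2$, so this is dominated by the bound of Step 1.

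\textbf{Step 3: the comparison.} To avoid division and square roots, I would compare squared quantities: $\n{A}_{\mathcal N}^2\le k^2\Lambda(h)^2\,\delta_{\mathcal N}(A)^2$. This requires $\mathcal O(1)$ additional arithmetic operations and one comparison. The factor $k^2\Lambda(h)^2=k^2/(1-2k^2h)$ is computed once, independently of the candidate. The degenerate case $\delta_{\mathcal N}(A)=0$ is handled automatically by this multiplicative form, because the inequality then fails whenever $\n{A}_{\mathcal N}>0$.

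\textbf{Conclusion.} Summing the three contributions gives the stated bound $T_{\mathrm{adm}}=\mathcal O\big(|\mathcal N|(cNn^2+c^2n+c^3+T_{\mathrm{tensor}}(n,c))\big)$.

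\textbf{Main obstacle.} There is no real obstacle here. The only point needing care is bookkeeping: the quantities $AB_i^\top$ and $AG_i^{(1)}A^\top$ are shared between the norm and discriminant evaluations, and one may reuse them. Even without reuse, the total cost is at most twice the bound of Step 1, so the estimate is unaffected.
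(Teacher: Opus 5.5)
Your proposal is correct and follows essentially the same route as the paper, which writes $T_{\mathrm{adm}}=T_{\mathrm{norm}}+T_{\mathrm{discr}}+\mathcal O(1)$ and absorbs the $cNn$ term into $cNn^2$. Your comparison of squared quantities, $\n{A}_{\mathcal N}^2\le k^2\Lambda(h)^2\delta_{\mathcal N}(A)^2$, which avoids the division and handles $\delta_{\mathcal N}(A)=0$, is a harmless refinement of the same bookkeeping.
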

\begin{proof} The relaxed-admissibility test requires the evaluation of the condition number $$K_{\mathcal N}(A)=\frac{\n{A}_{\mathcal N}}{\delta_{\mathcal N}(A)}$$ followed by comparison with the prescribed threshold. By \Cref{prop:ArithmeticCostNorm} and \Cref{prop:ArithmeticCostDiscriminant}, the corresponding arithmetic costs satisfy $$T_{\mathrm{adm}}=T_{\mathrm{norm}}+T_{\mathrm{discr}}+\mathcal O(1).$$ Combining the previous estimates, and observing that the term $cNn^2$ dominates $cNn$, gives $$T_{\mathrm{adm}}(n,c,d,k,|\mathcal N|)=\mathcal O\left(|\mathcal N|\left(cNn^2+c^2n+c^3+T_{\mathrm{tensor}}(n,c)\right)\right).$$
\end{proof}
We emphasize that $T_{\mathrm{tensor}}(n,c)$ is kept abstract and denotes the arithmetic complexity of the procedure used to evaluate the tensor spectral norm. In principle, a certified approximation can be obtained by discretizing $S^{n-1}\times S^{n-1}$ with a sufficiently fine net and maximizing over the resulting finite set. In practical implementations, one may instead use iterative tensor power-type methods based on repeated contractions and normalization steps. Such methods are typically much more efficient in practice and have polynomial arithmetic cost per iteration, although they do not in general provide a certificate of global optimality. The corresponding computational cost is therefore left in the generic form $T_{\mathrm{tensor}}(n,c)$ throughout the complexity estimates. If $N_{\mathrm{it}}$ iterations are performed, each iteration requires $\mathcal O(cn^2)$ arithmetic operations for the Hessian contractions, yielding $$T_{\mathrm{tensor}}(n,c)\leq\mathcal O(N_{\mathrm{it}}cn^2).$$ The iteration count $N_{\mathrm{it}}$ depends on the desired accuracy and the initialization strategy, and this dependence will be kept explicit in the complexity estimates below.

\subsection{Explicit Net Construction}
\label{subsec:Explicit_Net_Construction}
The covering argument of \Cref{subsec:LocalizedNet} yields the correct intrinsic cardinality bound but does not provide an explicit net. We now give a constructive version of that argument.

We use the orthogonal decomposition induced by the low spectral subspace $\mathcal V_r$ and its Frobenius orthogonal complement to construct separate nets for the longitudinal and transverse components of the localized tube. The centers of the resulting Euclidean net are then mapped onto the unit sphere of $\n{\cdot}_{\mathcal N}$ by radial normalization. The norm-equivalence constants $m_{\mathcal N}$ and $M_{\mathcal N}$ control the distortion introduced by this procedure and determine the required resolution of the underlying Frobenius nets.

The resulting explicit covering has the same dimensional exponents $cr$ and $c(N-r)$ as the intrinsic bound of \Cref{prop:IntrinsicLocalizedTubeCovering}, with an additional dependence on the norm-equivalence ratio $M_{\mathcal N}/m_{\mathcal N}$. This ratio is estimated explicitly for the total-degree Bernstein basis in \Cref{Bernstein_Polynomials_Basis}.

\begin{thm}\label{thm:ExplicitLocalizedTubeCovering} Let $0<h<1$ and $0<\rho<1$. Then one can construct an $h$--net $\mathcal G_h(r,\rho)\subseteq S_{\mathcal N}^{cN-1}$ of $\mathcal T_{\mathcal N}(r,\rho)$ with respect to $\n{\cdot}_{\mathcal N}$ such that $$\left|\mathcal G_h(r,\rho)\right|\leq\left(1+\frac{CM_{\mathcal N}}{m_{\mathcal N}h}\right)^{cr}\left(1+\frac{CM_{\mathcal N}\rho}{m_{\mathcal N}h}\right)^{c(N-r)},$$ where $C>0$ is a universal constant. \end{thm}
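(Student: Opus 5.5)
We make the argument of \Cref{prop:IntrinsicLocalizedTubeCovering} constructive by replacing the quotient-norm nets with explicit Frobenius nets on an orthogonal splitting. We then transfer the resolution to $\n{\cdot}_{\mathcal N}$ through \Cref{prop:NormsEquivalence}. We use the Frobenius-orthogonal decomposition $\Mat(c\times N,\mathbb R)=\mathcal V_r\oplus\mathcal V_r^{\perp_F}$, where $\mathcal V_r^{\perp_F}=\operatorname{Im}(I-\Pi_r)$. This is explicit once the eigenbasis $v_1,\dots,v_N$ of $V$ is fixed: coordinates $\alpha_{ij}$ with $j\le r$ parametrize $\mathcal V_r\cong\mathbb R^{cr}$, and those with $j>r$ parametrize $\mathcal V_r^{\perp_F}\cong\mathbb R^{c(N-r)}$, isometrically for $\n{\cdot}_F$. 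In each factor we take an explicit cubic grid intersected with a Frobenius ball. This is an $s$--net of the ball of radius $t$ in dimension $D$, with cardinality at most $(1+Ct/s)^D$ after the standard volumetric comparison. Alternatively, one can use a greedy maximal separated subset extracted from such a grid, which is also computable.

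\textbf{Step 1: localization in Frobenius coordinates.} Let $A\in\mathcal T_{\mathcal N}(r,\rho)$. Then $\n{A}_F\le 1/m_{\mathcal N}$. Since $\operatorname{dist}_{\mathcal N}(A,\mathcal V_r)\le\rho$, there is $B\in\mathcal V_r$ with $\n{A-B}_{\mathcal N}\le\rho$. Because $\Pi_r$ is the Frobenius-orthogonal projection, we get
\[
\n{A-\Pi_r A}_F\le\n{A-B}_F\le\rho/m_{\mathcal N}.
\]
Thus $\Pi_rA$ lies in the $\mathcal V_r$--ball of Frobenius radius $1/m_{\mathcal N}$, and $(I-\Pi_r)A$ lies in the $\mathcal V_r^{\perp_F}$--ball of radius $\rho/m_{\mathcal N}$.

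\textbf{Step 2: product net.} Set $s:=h/(4M_{\mathcal N})$. Take an $s/\sqrt2$--net $\mathcal Q$ of the first ball and an $s/\sqrt2$--net $\mathcal Q^{\mathrm{transv}}$ of the second, and form $\widetilde{\mathcal G}=\{B+Z\mid B\in\mathcal Q,\ Z\in\mathcal Q^{\mathrm{transv}}\}$. By orthogonality, every $A$ in the tube is within Frobenius distance $s$ of $\widetilde{\mathcal G}$, hence within $\n{\cdot}_{\mathcal N}$--distance $M_{\mathcal N}s=h/4$. The cardinalities satisfy
\[
|\mathcal Q|\le\Bigl(1+\frac{CM_{\mathcal N}}{m_{\mathcal N}h}\Bigr)^{cr},\qquad |\mathcal Q^{\mathrm{transv}}|\le\Bigl(1+\frac{CM_{\mathcal N}\rho}{m_{\mathcal N}h}\Bigr)^{c(N-r)}.
\]

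\textbf{Step 3: radial normalization.} Apply $\Phi_{\mathcal N}$ to the nonzero centers. The inequality $\n{\Phi_{\mathcal N}(Y)-A}_{\mathcal N}\le 2\n{Y-A}_{\mathcal N}$ from part iii) of \Cref{prop:IntrinsicLocalizedTubeCovering} gives the distance bound $h/2<h$. Centers within $h/4<1$ of a point of norm one are nonzero. We keep only centers whose $\n{\cdot}_{\mathcal N}$--distance to the tube could be at most $h/4$, or simply keep all of them, since the bound is only an upper bound. The result is an explicit $h$--net inside $S_{\mathcal N}^{cN-1}$ with the stated cardinality, after absorbing numerical factors into $C$.

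\textbf{Main obstacle.} The delicate point is Step 1. The tube is defined through $\n{\cdot}_{\mathcal N}$, while the net is built in Frobenius geometry. The transverse radius must come out proportional to $\rho$, with only a factor $1/m_{\mathcal N}$, and not inflated to the full ball. This is exactly where the optimality of the Frobenius projection against an arbitrary $\n{\cdot}_{\mathcal N}$--near point $B\in\mathcal V_r$ is used. A second, minor concern is checking that the grid nets are genuinely constructible with the volumetric cardinality. We handle this with the cube-grid argument, losing only a dimension-free constant per coordinate scale once $s$ is rescaled by $\sqrt{D}$. If that loss is not acceptable, we fall back on the greedy separated-set construction over a fine grid, which attains the volumetric bound exactly.
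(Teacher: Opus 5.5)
Your proposal is correct and is essentially the paper's proof. It uses the same Frobenius splitting $\mathcal V_r\oplus\mathcal V_r^{\perp_F}$, the same localization $\n{A_\parallel}_F\le 1/m_{\mathcal N}$ and $\n{A_\perp}_F\le\rho/m_{\mathcal N}$ (your nearest-point argument is just a proof of $m_{\mathcal N}\operatorname{dist}_F(A,\mathcal V_r)\le\operatorname{dist}_{\mathcal N}(A,\mathcal V_r)$), a product of Frobenius nets, and radial normalization with the factor-$2$ estimate; only the numerical constants differ ($h/4$ rather than $h/2$ before normalization).

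Your extra care about constructibility is sound and goes slightly beyond the paper, which simply invokes ``standard Euclidean lattice nets''. For a cubic grid, the per-coordinate constant is absorbed into $C$ provided you use the single point $\{0\}$ whenever the ball radius is below the resolution. This case can occur for the transverse factor when $M_{\mathcal N}\rho/(m_{\mathcal N}h)$ is small. With that case split, the greedy fallback is not even needed.
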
 \begin{proof} Consider the Frobenius-orthogonal decomposition $$\Mat(c\times N,\mathbb R)=\mathcal V_r\oplus\mathcal V_r^{\perp_F},$$ where $$\dim\mathcal V_r=cr,\qquad \dim\mathcal V_r^{\perp_F}=c(N-r).$$ Set $$h_0:=\frac{h}{2\sqrt{2}M_{\mathcal N}}.$$ Choose a proper $h_0$--net $\mathcal U_{h_0}$ of $$B_F^{\mathcal V_r}\left(\frac1{m_{\mathcal N}}\right):=\left\{U\in\mathcal V_r\ \middle|\ \n{U}_F\leq\frac1{m_{\mathcal N}}\right\}$$ and a proper $h_0$--net $\mathcal Z_{h_0}$ of $$B_F^{\mathcal V_r^{\perp_F}}\left(\frac{\rho}{m_{\mathcal N}}\right):=\left\{Z\in\mathcal V_r^{\perp_F}\ \middle|\ \n{Z}_F\leq\frac{\rho}{m_{\mathcal N}}\right\}.$$ After fixing Frobenius-orthonormal bases of $\mathcal V_r$ and $\mathcal V_r^{\perp_F}$, these nets can be constructed using standard Euclidean lattice nets. Define $$\mathcal G_h(r,\rho):=\left\{\Phi_{\mathcal N}(U+Z)\ \middle|\ U\in\mathcal U_{h_0},\ Z\in\mathcal Z_{h_0},\ U+Z\neq0\right\}.$$ By construction, $\mathcal G_h(r,\rho)\subseteq S_{\mathcal N}^{cN-1}$. Let $A\in\mathcal T_{\mathcal N}(r,\rho)$ and write $$A=A_\parallel+A_\perp,\qquad A_\parallel\in\mathcal V_r,\qquad A_\perp\in\mathcal V_r^{\perp_F}.$$ Since $\n{A}_{\mathcal N}=1$, the norm equivalence gives $$\n{A_\parallel}_F\leq\n{A}_F\leq\frac1{m_{\mathcal N}}.$$ Moreover, $$m_{\mathcal N}\operatorname{dist}_F(A,\mathcal V_r)\leq\operatorname{dist}_{\mathcal N}(A,\mathcal V_r)\leq\rho.$$ Since $A_\perp$ is the Frobenius-orthogonal component of $A$, $$\n{A_\perp}_F=\operatorname{dist}_F(A,\mathcal V_r)\leq\frac{\rho}{m_{\mathcal N}}.$$ Hence there exist $U\in\mathcal U_{h_0}$ and $Z\in\mathcal Z_{h_0}$ such that $$\n{A_\parallel-U}_F\leq h_0,\qquad \n{A_\perp-Z}_F\leq h_0.$$ Since the two differences belong to Frobenius-orthogonal subspaces, $$\n{A-(U+Z)}_F^2=\n{A_\parallel-U}_F^2+\n{A_\perp-Z}_F^2\leq2h_0^2.$$ Therefore, $$\n{A-(U+Z)}_{\mathcal N}\leq M_{\mathcal N}\n{A-(U+Z)}_F\leq\sqrt2M_{\mathcal N}h_0=\frac h2.$$ Since $h<1$ and $\n{A}_{\mathcal N}=1$, this also implies $U+Z\neq0$. Consider the radial normalization map $$\Phi_{\mathcal N}:\Mat(c\times N,\mathbb R)\setminus\{0\}\longrightarrow S_{\mathcal N}^{cN-1},\qquad \Phi_{\mathcal N}(B):=\frac{B}{\n{B}_{\mathcal N}}.$$ As shown in \Cref{subsec:LocalizedNet}, for every $A\in S_{\mathcal N}^{cN-1}$ and every nonzero $B\in\Mat(c\times N,\mathbb R)$, $$\n{\Phi_{\mathcal N}(B)-A}_{\mathcal N}\leq2\n{B-A}_{\mathcal N}.$$ Applying this estimate with $B=U+Z$, we obtain $$\n{A-\Phi_{\mathcal N}(U+Z)}_{\mathcal N}\leq2\n{A-(U+Z)}_{\mathcal N}\leq h.$$ Thus $\mathcal G_h(r,\rho)$ is an $h$--net of $\mathcal T_{\mathcal N}(r,\rho)$. Finally, radial normalization cannot increase cardinality, and hence $$\left|\mathcal G_h(r,\rho)\right|\leq|\mathcal U_{h_0}|\,|\mathcal Z_{h_0}|.$$ The standard Euclidean covering estimate gives $$|\mathcal U_{h_0}|\leq\left(1+\frac{2}{m_{\mathcal N}h_0}\right)^{cr}\leq\left(1+\frac{CM_{\mathcal N}}{m_{\mathcal N}h}\right)^{cr},$$ while $$|\mathcal Z_{h_0}|\leq\left(1+\frac{2\rho}{m_{\mathcal N}h_0}\right)^{c(N-r)}\leq\left(1+\frac{CM_{\mathcal N}\rho}{m_{\mathcal N}h}\right)^{c(N-r)}.$$ Combining these estimates proves the claim. \end{proof}

\appendix

\clearpage

\bigskip
\begin{center}
{\Large\bf Appendix}\\[0.3cm]
\end{center}
\addcontentsline{toc}{part}{Appendix}
\bigskip

\section{Discretization of Polynomial Norms}\label{sec:Discretization_Polynomial_Norms}

To ensure that the discretized norm provides a faithful approximation of the continuous norm, a quantitative bound on the mesh size of the net must be established. For general smooth functions, such a bound would typically depend on their $C^3$--data. For the class of polynomial functions, this dependence can be avoided by leveraging Markov-type inequalities, which allow for the control of higher-order derivatives using lower-order ones, typically up to constants depending on the degree $d$ and the ambient dimension $n$. 

In contrast, for the specific choice of operator-based norms considered here, these bounds can be significantly refined. Specifically, the density of the discretization required to guarantee uniform control over the derivatives depends solely on the degree $d$ and remains independent of the ambient dimension, resulting in a net whose size grows approximately as $O(d^n)$. This is achieved by applying a classical one-dimensional Markov inequality--reported below in a version adapted to the present scope--to the multilinear maps defined by the first and second derivatives.

\begin{thm}[A.\,A. Markov]\label{thm:Markov}
Let $q \in \mathbb{R}[t]_{\le d}$ be a univariate polynomial of degree at most $d$. If $|q(t)| \le M$ for every $t \in [-\tau, \tau]$, then  $$|q'(t)| \le \frac{d^2}{\tau}M \quad \text{for every } t \in [-\tau, \tau].$$
\end{thm}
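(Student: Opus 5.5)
The plan is to reduce to the case $\tau=1$ by an affine change of variables and then prove the rescaled statement through the classical Chebyshev extremal argument. Put $p(s):=q(\tau s)$ for $s\in[-1,1]$. Then $p$ has degree at most $d$, $|p|\le M$ on $[-1,1]$, and $p'(s)=\tau q'(\tau s)$. Hence it is enough to show $|p'(s)|\le d^2M$ on $[-1,1]$. By homogeneity we may also take $M=1$.

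For the unit-interval case I would go through Bernstein's inequality. Set $g(\theta):=p(\cos\theta)$. This is a real trigonometric polynomial of degree at most $d$ with $|g|\le1$, so Bernstein's inequality for trigonometric polynomials gives $|g'(\theta)|\le d$. The chain rule $g'(\theta)=-\sin\theta\,p'(\cos\theta)$ then yields the interior bound
\[
|p'(s)|\le \frac{d}{\sqrt{1-s^2}},\qquad s\in(-1,1).
\]
I would prove Bernstein's trigonometric inequality itself by the standard interpolation route: use M.~Riesz's interpolation formula, which writes $g'(\theta)$ as a combination of translates $g(\theta+\theta_j)$ whose coefficients have absolute values summing to $d$. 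Alternatively one can argue by contradiction: compare $g$ with $\lambda\sin(d(\theta-\theta_0))$ and count zeros.

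Near the endpoints this bound blows up, and that is the step I expect to be the main obstacle. The factor $1/\sqrt{1-s^2}$ has to be upgraded to the uniform constant $d^2$. I would handle it by splitting into two cases.

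\emph{Case 1: $\sqrt{1-s^2}\ge 1/d$.} The interior bound already gives $|p'(s)|\le d^2$.

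\emph{Case 2: $|s|$ is close to $1$, namely $|s|\ge\cos(\pi/(2d))$ or a comparable threshold.} Here I would use Lagrange interpolation of $p'$ at the Chebyshev nodes $s_j=\cos\big((2j-1)\pi/(2d)\big)$, which all lie inside the region where the interior bound holds. The node polynomial is $T_d$, and the Lagrange basis polynomials built from it do not change sign beyond the extreme node. Because of this, the interpolation sum is maximized at $s=\pm1$ by the extremal polynomial $T_d$, and it gives
\[
|p'(s)|\le |T_d'(1)|=d^2 .
\]
Equivalently, I could cite Schaeffer--Duffin's form of the argument. The delicate point is checking this sign pattern and monotonicity of the interpolation basis outside the nodes.

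Undoing the rescaling, $|q'(t)|=\tau^{-1}|p'(t/\tau)|\le d^2M/\tau$, as claimed. Since this is a classical theorem, in the paper I would present the rescaling step in full and cite the $[-1,1]$ inequality rather than reprove it.
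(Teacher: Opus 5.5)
Your proposal matches the paper's proof: set $p(s)=q(\tau s)$, apply the classical Markov inequality on $[-1,1]$, and undo the scaling with $p'(s)=\tau q'(\tau s)$; the paper cites the $[-1,1]$ case from Lorentz, which is what you say you would do. Your optional sketch of that classical case is the standard correct argument: Bernstein's inequality for $p(\cos\theta)$ on the interior, then Lagrange interpolation of $p'$ at the zeros $s_j$ of $T_d$ near the endpoints, where $T_d'(s_j)\ell_j(s)>0$ and $|p'(s_j)|\le d/\sqrt{1-s_j^2}=|T_d'(s_j)|$ give $|p'(s)|\le T_d'(s)\le T_d'(1)=d^2$.
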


\begin{proof}
Define the rescaled polynomial $\tilde{q}(s) := q(\tau s)$ for $s \in [-1, 1]$. Then $\deg(\tilde{q}) \le d$ and $|\tilde{q}(s)| \le M$ for every $s \in [-1, 1]$. Applying the classical Markov inequality on the interval $[-1, 1]$ (see \cite{LorentzApprox}) yields 
$$|\tilde{q}'(s)| \le M d^2 \quad \text{for every } s \in [-1, 1].$$ 
Since $\tilde{q}'(s) = \tau q'(\tau s)$, substituting $t = \tau s$ results in 
$$|q'(t)| \le \frac{d^2}{\tau}M \quad \text{for every } t \in [-\tau, \tau],$$ 
which completes the proof.
\end{proof}

This univariate estimate can be extended to the directional derivatives of a multivariate polynomial map. By considering the polynomial along line segments, the following lemma establishes local control over higher-order derivatives in terms of the supremum of lower-order derivatives over a local ball. Crucially, the resulting bound is expressed in the operator norm and depends only on the ratio $d^2/\tau$, being independent of the ambient dimensions $n$ and $c$.

\begin{lem}\label{lem:MarkovBallOperator_local}
Let $\tau>0$ and let $P\in\R[x_1,\dots,x_n]_{\le d}^c$ be a polynomial map of degree at most $d$. Then for every integer $m\in\{0,\dots,d-1\}$ and every $x\in\R^n$ one has \begin{equation}\label{DerBoundsMarkov}\|D^{m+1}P(x)\|_{\mathrm{op}}\le \frac{(d-m)^2}{\tau}\sup_{y\in B_\tau(x)}\|D^mP(y)\|_{\mathrm{op}}.\end{equation} In particular, the constant depends only on the degree $d$ and the radius $\tau$.
\end{lem}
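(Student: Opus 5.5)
We would reduce the statement to the univariate Markov inequality of \Cref{thm:Markov} by restricting $D^mP$ to lines through $x$. The operator norm of the $(m+1)$-linear map $D^{m+1}P(x)$ is a supremum over unit vectors: there are unit $v_0,\dots,v_m\in S^{n-1}$ and a unit $w\in\R^c$ attaining it, up to an arbitrary $\epsilon>0$. Thus we must bound
$$\big\langle w, D^{m+1}P(x)[v_0,v_1,\dots,v_m]\big\rangle .$$
Fix the last $m$ directions $v_1,\dots,v_m$ and $w$, and consider the scalar univariate polynomial
$$q(t):=\big\langle w, D^mP(x+tv_0)[v_1,\dots,v_m]\big\rangle,\qquad t\in[-\tau,\tau].$$
Since each entry of $D^mP$ is a polynomial of degree at most $d-m$ in the base point, $q\in\R[t]_{\le d-m}$. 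Moreover, $q'(0)=\langle w,D^{m+1}P(x)[v_0,v_1,\dots,v_m]\rangle$, by symmetry of higher derivatives.

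For $|t|<\tau$ we have $x+tv_0\in B_\tau(x)$, and, since $v_i$ and $w$ are unit vectors,
$$|q(t)|\le \n{D^mP(x+tv_0)}_{\op}\le M:=\sup_{y\in B_\tau(x)}\n{D^mP(y)}_{\op}.$$
The ball is open, so the endpoints $t=\pm\tau$ are handled by continuity of $q$: the bound $|q|\le M$ extends to the closed interval $[-\tau,\tau]$. If $M=\infty$ the claim is trivial, but this cannot occur since $D^mP$ is polynomial.

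Applying \Cref{thm:Markov} with degree $d-m$ gives $|q'(0)|\le \frac{(d-m)^2}{\tau}M$. We then take the supremum over $v_0,\dots,v_m,w$ (or let $\epsilon\to0$) to obtain \eqref{DerBoundsMarkov}. The degenerate case $d-m=0$ does not arise, since $m\le d-1$. Finally, when $q$ has degree lower than $d-m$, the Markov bound with $d-m$ still holds because the constant is monotone in the degree.

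The only point requiring care is the identification of $\n{D^{m+1}P(x)}_{\op}$ with the supremum of the scalar quantities $q'(0)$. This uses the duality $\n{u}=\sup_{\n{w}=1}\langle w,u\rangle$ in $\R^c$ together with the definition of the multilinear operator norm over products of unit spheres. Because this reduction passes through a single direction $v_0$ and scalar contractions, no dimension-dependent constant appears, which gives the stated independence from $n$ and $c$.
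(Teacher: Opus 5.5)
Your proof is correct and follows the paper's argument essentially verbatim: restrict the scalar polynomial $t\mapsto\langle u, D^mP(x+tv_0)[v_1,\dots,v_m]\rangle$ to a line through $x$, apply the univariate Markov inequality of degree $d-m$ on $[-\tau,\tau]$, and take the supremum over unit vectors. Your continuity argument at the endpoints $t=\pm\tau$ is a small improvement. The paper simply asserts $x+tv_0\in B_\tau(x)$ for all $|t|\le\tau$, which fails at the endpoints because the ball is open.
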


\begin{proof}
Fix $m\in\{0,\dots,d-1\}$ and $x\in\R^n$. Let $u\in S^{c-1}\subseteq \R^c$ and $v_0,\dots,v_m\in S^{n-1}\subseteq  \R^n$ be unit vectors. Define $$q(t):=\langle u,D^mP(x+t v_0)[v_1,\dots,v_m]\rangle,\qquad t\in[-\tau,\tau].$$ Since $P$ has degree at most $d$, the entries of the map $D^mP$ are polynomials of degree at most $d-m$, and therefore $q$ is a univariate polynomial of degree at most $d-m$. Moreover, for every $t\in[-\tau,\tau]$ one has $x+t v_0\in B_\tau(x)$, hence $$|q(t)|\le \sup_{y\in B_\tau(x)}\|D^mP(y)\|_m.$$ Applying Markov's inequality on $[-\tau,\tau]$ yields $$|q'(0)|\le \frac{(d-m)^2}{\tau}\sup_{y\in B_\tau(x)}\|D^mP(y)\|_m.$$ On the other hand, by definition of $q$, $$q'(0)=\langle u,D^{m+1}P(x)[v_0,\dots,v_m]\rangle.$$ Taking the supremum over all unit vectors $u\in S^{c-1}$ and $v_0,\dots,v_m\in S^{n-1}$, and using the dual characterization of the operator norm, we obtain $$\|D^{m+1}P(x)\|_{m+1}\le \frac{(d-m)^2}{\tau}\sup_{y\in B_\tau(x)}\|D^mP(y)\|_m,$$ which concludes the proof.
\end{proof}

While a direct application of Markov's inequality controls the higher-order derivatives of a polynomial pointwise, the estimate \eqref{DerBoundsMarkov} in \Cref{lem:MarkovBallOperator_local} provides pointwise control of higher-order derivatives in terms of the values of lower-order derivatives over a surrounding ball. This weaker control is not sufficient to establish a global approximation between the discretized $C^2$--norm and the continuous norm, and therefore requires an additional step. Specifically, a method to estimate the growth of a polynomial over larger concentric balls is needed. This can be achieved by leveraging the extremal properties of Chebyshev polynomials.

Chebyshev polynomials of the first kind $\{T_d\}_{d \ge 0}$ (see \cite{Rivlin1974} for a general treatment) are defined on the interval $[-1, 1]$ by $$T_d(x) := \cos(d \arccos x).$$ These satisfy the recurrence relation $T_0(x) = 1$, $T_1(x) = x$, and $T_{d+1}(x) = 2xT_d(x) - T_{d-1}(x)$, which allows them to be uniquely extended to polynomials of degree $d$ over the entire real line $\mathbb{R}$. In particular, they possess the extremal property 
$$|T_d(x)| \le 1 \quad \text{for every } x \in [-1, 1],$$
while exhibiting the fastest possible growth among all polynomials of degree $d$ outside this interval. The following lemma formalizes this growth inequality.

\begin{lem}\label{lem:Chebyshev_growth}
Let $p\in\mathbb{R}[t]_{\le d}$ be a polynomial of degree at most $d$. Then for every $t\in\mathbb{R}$ with $|t|\ge1$ one has $$|p(t)|\le T_d(|t|)\sup_{s\in[-1,1]}|p(s)|,$$ where $T_d$ denotes the Chebyshev polynomial of the first kind.
\end{lem}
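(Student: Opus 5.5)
We prove the Chebyshev growth inequality through Lagrange interpolation at the Chebyshev extremal nodes, which is the classical route to this extremal property. By homogeneity we may assume $\sup_{s\in[-1,1]}|p(s)|=1$. Because $p(-t)$ is again a polynomial of degree at most $d$ with the same supremum on $[-1,1]$, it suffices to treat $t\ge1$. For $d=0$ the claim is trivial, so we assume $d\ge1$.

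Let $s_j:=\cos(j\pi/d)$ for $j=0,\dots,d$. These are the $d+1$ extremal points of $T_d$ in $[-1,1]$, and they satisfy $T_d(s_j)=(-1)^j$. Lagrange interpolation at these nodes is exact in degree $d$, so
$$p(t)=\sum_{j=0}^d p(s_j)\,\ell_j(t),\qquad T_d(t)=\sum_{j=0}^d (-1)^j\,\ell_j(t),$$
where $\ell_j$ is the Lagrange basis polynomial for the node $s_j$.

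The key step is a sign claim: for $t\ge1$ one has $\operatorname{sign}\ell_j(t)=(-1)^j$. To see this, write $\ell_j(t)=\prod_{i\ne j}(t-s_i)/\prod_{i\ne j}(s_j-s_i)$. For $t\ge1$ every factor $t-s_i$ in the numerator is nonnegative. It vanishes only when $t=1$ and $i=0$, and in that case $\ell_j(1)=\delta_{j0}$, so the claim holds trivially. The denominator has sign $(-1)^j$, because exactly $j$ of the nodes are larger than $s_j$, the nodes being decreasing in $j$.

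Granting the sign claim, the conclusion follows directly:
$$|p(t)|\le\sum_{j}|p(s_j)|\,|\ell_j(t)|\le\sum_j|\ell_j(t)|=\sum_j(-1)^j\ell_j(t)=T_d(t)=T_d(|t|).$$
Finally we undo the normalization. For $t\le-1$ we apply this to $p(-\cdot)$.

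The only step requiring care is the sign claim, namely the bookkeeping of how many nodes exceed $s_j$, together with the boundary case $t=1$. Everything else is exact interpolation. An alternative route would be a zero-counting argument with $T_d-\lambda p$ for $|\lambda|<1$, but the interpolation argument is cleaner and avoids that counting.
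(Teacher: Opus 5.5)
Your proof is correct, and it takes a different route from the paper. The paper argues by contradiction via zero counting, which is exactly the alternative you mention at the end. It assumes $|p(t)|>T_d(t)M$ for some $t\ge1$ and sets $a=T_d(t)M/p(t)$, so $|a|<1$, and $r=T_d-\frac{a}{M}p$. Then $r$ alternates in sign at the $d+1$ nodes $s_k=\cos(k\pi/d)$, so it has a zero in each of the $d$ intervals $(s_{k+1},s_k)$. The extra zero $r(t)=0$ outside $[-1,1]$ is impossible for a nonzero polynomial of degree at most $d$.

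Your interpolation argument is direct rather than by contradiction. The sign pattern $(-1)^j\ell_j(t)\ge0$ for $t\ge1$ gives the exact identity $\sum_j|\ell_j(t)|=T_d(t)$. Thus $T_d(|t|)$ is the Lebesgue function of interpolation at the Chebyshev extremal nodes, evaluated outside $[-1,1]$. For $t>1$, where every $\ell_j(t)\neq0$, the equality case $p=\pm MT_d$ can then be read off at once.

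Both arguments use only $|p(s_j)|\le M$ at the nodes. So both actually prove the slightly stronger bound with $\max_j|p(s_j)|$ in place of the supremum over $[-1,1]$. The zero-counting proof avoids the bookkeeping of the Lagrange denominators, while yours is a constructive computation.

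One small remark: the normalization $\sup_{[-1,1]}|p|=1$ is unnecessary and formally excludes $M=0$. If you run your chain of inequalities with $|p(s_j)|\le M$ directly, you get $|p(t)|\le M\sum_j|\ell_j(t)|=MT_d(t)$, which covers that case too.
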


\begin{proof}
The case $d=0$ is immediate, so assume $d\geq1$. Set
$$M:=\sup_{s\in[-1,1]}|p(s)|.$$
If $M=0$, then $p\equiv0$ and the claim is trivial. By replacing $p(s)$ with $p(-s)$ if necessary, it is enough to consider $t\geq1$. Suppose, by contradiction, that
$$|p(t)|>T_d(t)M.$$
Set
$$a:=\frac{T_d(t)M}{p(t)},$$
so that $|a|<1$, and define
$$r(s):=T_d(s)-\frac{a}{M}p(s).$$
For the Chebyshev points $s_k=\cos(k\pi/d)$, $k=0,\ldots,d$, we have
$$(-1)^k r(s_k)=1-\frac{a}{M}(-1)^kp(s_k)>0.$$
Hence $r$ has at least one zero in each of the $d$ disjoint intervals $(s_{k+1},s_k)$. Moreover, by the definition of $a$, we have $r(t)=0$. Since $t>1$, this gives at least $d+1$ distinct zeros of a polynomial of degree at most $d$, a contradiction. Therefore
$$|p(t)|\leq T_d(t)M.$$
Replacing $t$ by $|t|$ gives the stated inequality.
\end{proof}

By applying the univariate Chebyshev growth estimate along the line segments connecting the center of a ball to its boundary, we can compare the supremum of a polynomial derivative over two concentric balls of different radii. We record the resulting bound in the following lemma.

\begin{lem}\label{lem:ratio_inner_outer_derivatives}
Let $\rho, \tau > 0$, and let $P \in \mathbb{R}[x_1, \dots, x_n]_{\le d}^c$. For every $y \in \mathbb{R}^n$ and every integer $m \in \{0, \dots, d\}$, the following inequality holds:
$$\sup_{x \in B_{\rho+\tau}(y)} \|D^m P(x)\|_{\mathrm{op}} \le T_{d-m} \left( 1 + \frac{\tau}{\rho} \right) \sup_{x \in B_\rho(y)} \|D^m P(x)\|_{\mathrm{op}}.$$
\end{lem}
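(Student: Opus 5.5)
The plan is to reduce the statement to the univariate Chebyshev growth estimate of \Cref{lem:Chebyshev_growth}, applied along rays through $y$, after first reducing the operator norm of $D^mP$ to a scalar quantity through its dual characterization. Fix unit vectors $u\in S^{c-1}$ and $v_1,\dots,v_m\in S^{n-1}$. Then the scalar polynomial
$$g(x):=\langle u, D^mP(x)[v_1,\dots,v_m]\rangle$$
has degree at most $d-m$ in $x$, since each entry of $D^mP$ is a polynomial of degree at most $d-m$. Moreover
$$\|D^mP(x)\|_{\mathrm{op}}=\sup_{u,v_1,\dots,v_m}|g(x)|.$$
It therefore suffices to prove, for each such scalar $g$ of degree at most $d-m$,
$$\sup_{B_{\rho+\tau}(y)}|g|\le T_{d-m}\!\left(1+\tfrac{\tau}{\rho}\right)\sup_{B_\rho(y)}|g|.$$
Indeed, $\sup_{B_\rho(y)}|g|\le \sup_{B_\rho(y)}\|D^mP\|_{\mathrm{op}}$, and taking the supremum over $u,v_i$ on the left recovers the operator norm. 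The case $m=d$ is trivial, because then $D^mP$ is constant and $T_0\equiv1$.

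Now fix $x\in B_{\rho+\tau}(y)$. If $x\in B_\rho(y)$ there is nothing to prove, since $T_{d-m}\ge1$ on $[1,\infty)$. Otherwise, write $x=y+s w$ with $w\in S^{n-1}$ and $\rho\le s<\rho+\tau$. Define the univariate polynomial
$$p(t):=g(y+\rho' t w),$$
where $\rho'<\rho$ is close to $\rho$; this handles the open balls, as explained below. The polynomial $p$ has degree at most $d-m$, and $\sup_{[-1,1]}|p|\le\sup_{B_\rho(y)}|g|$. The point $x$ corresponds to $t=s/\rho'\ge1$. Applying \Cref{lem:Chebyshev_growth} gives
$$|g(x)|\le T_{d-m}(s/\rho')\sup_{B_\rho(y)}|g|.$$
Since $T_{d-m}$ is nondecreasing on $[1,\infty)$, which follows from $T_j(\cosh\theta)=\cosh(j\theta)$, and since $s/\rho'<(\rho+\tau)/\rho'$, letting $\rho'\to\rho$ and using continuity of $T_{d-m}$ yields $|g(x)|\le T_{d-m}(1+\tau/\rho)\sup_{B_\rho(y)}|g|$.

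The only delicate points are the monotonicity of $T_{d-m}$ on $[1,\infty)$ and the open-versus-closed ball bookkeeping; the latter is handled by the limit $\rho'\to\rho$, or alternatively by continuity of $g$, which lets us pass to closures. Everything else is a direct application of the earlier lemma along a line. Taking the supremum over $x$ and then over $u,v_1,\dots,v_m$ concludes the proof.
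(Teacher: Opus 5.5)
Your proof is correct and follows essentially the same route as the paper: reduce the operator norm to scalar polynomials $\langle u, D^mP(\cdot)[v_1,\dots,v_m]\rangle$ of degree at most $d-m$, restrict to the ray from $y$ through $x$ rescaled so that $B_\rho(y)$ corresponds to $[-1,1]$, and apply \Cref{lem:Chebyshev_growth} together with the monotonicity of $T_{d-m}$ on $[1,\infty)$. The differences are only cosmetic. The paper fixes $x$ first and picks norming vectors at $x$, and it glosses over the open-ball endpoint $|s|=1$, which you handle explicitly via $\rho'<\rho$.
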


\begin{proof}
Fix $x\in B_{\rho+\tau}(y)$. If $x=y$, the conclusion is immediate. Otherwise set $$v:=\frac{x-y}{\|x-y\|}\in S^{n-1}\qquad\text{and}\qquad r:=\|x-y\|\le \rho+\tau.$$ Let $u\in\mathbb R^c$ and $w_1,\dots,w_m\in\mathbb R^n$ be unit vectors such that $$|\langle u,D^mP(x)[w_1,\dots,w_m]\rangle|=\|D^mP(x)\|_{\op}.$$ Consider the univariate polynomial $$q(s):=\langle u,D^mP(y+\rho s\,v)[w_1,\dots,w_m]\rangle,\qquad s\in\mathbb R.$$ Since $P$ has degree at most $d$, the map $D^mP$ is a polynomial of degree at most $d-m$, hence $q$ is a univariate polynomial of degree at most $d-m$. Moreover, if $|s|\le1$, then $y+\rho s\,v\in B_\rho(y)$, so $$|q(s)|\le \sup_{z\in B_\rho(y)}\|D^mP(z)\|_{\op}.$$ Now set $$t:=\frac{\|x-y\|}{\rho}.$$ Then $$q(t)=\langle u,D^mP(y+\rho t\,v)[w_1,\dots,w_m]\rangle=\langle u,D^mP(x)[w_1,\dots,w_m]\rangle,$$ hence $$|q(t)|=\|D^mP(x)\|_{\op}.$$ If $t\le1$, then immediately $$\|D^mP(x)\|_{\op}\le \sup_{z\in B_\rho(y)}\|D^mP(z)\|_{\op}.$$ If instead $1\le t\le 1+\tau/\rho$, applying \Cref{lem:Chebyshev_growth} to $q$ yields $$\|D^mP(x)\|_{\op}=|q(t)|\le T_{d-m}(t)\sup_{|s|\le1}|q(s)|\le T_{d-m}\!\Big(1+\frac{\tau}{\rho}\Big)\sup_{z\in B_\rho(y)}\|D^mP(z)\|_{\op},$$ where we used that $T_{d-m}$ is increasing on $[1,\infty)$. Since $x\in B_{\rho+\tau}(y)$ was arbitrary, taking the supremum over $x$ concludes the proof.
\end{proof}

By combining the local derivative control from \Cref{lem:MarkovBallOperator_local} with the global growth estimates provided by \Cref{lem:ratio_inner_outer_derivatives}, we can now establish a uniform bound for the continuous $C^l$--norm in terms of its discretized version. This result ensures that, for a sufficiently dense net $\mathcal N$, the values on the net provide a reliable proxy for the polynomial's behavior over the entire domain. For the purpose of the next result only, we use the notation $$\|P\|_{C^l,\mathcal N}:=\max_{y\in\mathcal N} \sqrt{\sum_{m=0}^l\|D^mP(y)\|_{\mathrm{op}}^2},$$ for the discretized $C^l$--quantity associated with an $r$--net $\mathcal N$. This notation will be used only in the statement and proof below; throughout the rest of the paper, the symbol $\|\cdot\|_{\mathcal N}$ is reserved for the discretized $C^2$--norm.

\begin{thm}\label{thm:disc_C2_ball_one_net_delta} Let $R > 0$ and $r > 0$, and let $\mathcal{N} \subseteq D_R^n$ be an $r$-net of $D_R^n$. Let $P \in \mathbb{R}[x_1, \dots, x_n]_{\le d}^c$ be a polynomial map of degree at most $d$. Then for every $0 < \eta < 1$, there exists an admissible radius $r(\eta, d, R) > 0$ such that, whenever $r \le r(\eta, d, R)$, one has $$\|P\|_{C^l(D_R^n)} \le \frac{1}{1-\eta} \|P\|_{C^l,\mathcal N}.$$ In particular, the admissible radius $r(\eta, d, R)$ is independent of the ambient dimensions $n$ and $c$. \end{thm}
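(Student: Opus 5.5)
The plan is to compare the supremum of each derivative term $\|D^mP(x)\|_{\mathrm{op}}$, $m\le l$, over $D_R^n$ with its values at the nearest net point. Fix $x\in D_R^n$ and choose $y\in\mathcal N$ with $\|x-y\|\le r$. For each $m\le l$ (with $m\le d$; higher derivatives vanish), integrate along the segment from $y$ to $x$, which lies in $D_R^n$ by convexity. This gives
$$\|D^mP(x)\|_{\mathrm{op}}\le \|D^mP(y)\|_{\mathrm{op}}+r\sup_{z\in D_R^n}\|D^{m+1}P(z)\|_{\mathrm{op}}.$$
The whole argument then reduces to bounding $\sup_{D_R^n}\|D^{m+1}P\|_{\mathrm{op}}$ by a dimension-free multiple $K(d,R)$ of $\sup_{D_R^n}\|D^mP\|_{\mathrm{op}}$. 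Once that is in hand, write $S_m:=\sup_{D_R^n}\|D^mP\|_{\mathrm{op}}$ and $s_m:=\|D^mP(y)\|_{\mathrm{op}}$. We get $S_m\le s_m+rK S_m$, hence $S_m\le (1-rK)^{-1}\max_{y'\in\mathcal N}\|D^mP(y')\|_{\mathrm{op}}$.

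To keep the final statement exact for the combined norm, I would run the estimate at the level of the vector of derivatives. Apply the triangle inequality in $\ell^2$ to the vector $(\|D^mP(x)\|_{\mathrm{op}})_{m\le l}$. The error vector has entries $r\,S_{m+1}\le rK\,S_m$, so its $\ell^2$-norm is at most $rK\,\|P\|_{C^l(D_R^n)}$, using $\sqrt{\sum_m S_m^2}\ge$ the relevant quantity up to the usual max/sup interchange. More safely, one can bound $\sqrt{\sum_m S_m^2}\le\sqrt{l+1}\,\|P\|_{C^l(D_R^n)}$. This yields
$$\|P\|_{C^l(D_R^n)}\le \|P\|_{C^l,\mathcal N}+rK'\|P\|_{C^l(D_R^n)},$$
with $K'$ depending on $d,R,l$ only. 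Taking $r(\eta,d,R):=\eta/K'$ gives the claim, and $K'$ is independent of $n$ and $c$.

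The main obstacle is the Markov step $S_{m+1}\le K S_m$ on the disk itself, since \Cref{lem:MarkovBallOperator_local} requires the whole ball $B_\tau(x)$ while points near $\partial D_R^n$ have no room. I would handle it with \Cref{lem:ratio_inner_outer_derivatives}: take $y=0$, $\rho=R$ and $\tau=R$. Then every $x\in D_R^n$ has $B_R(x)\subseteq B_{2R}(0)$, so \Cref{lem:MarkovBallOperator_local} with radius $R$ gives
$$\|D^{m+1}P(x)\|_{\mathrm{op}}\le \frac{(d-m)^2}{R}\sup_{B_{2R}(0)}\|D^mP\|_{\mathrm{op}}\le \frac{d^2}{R}T_{d-m}(2)\,S_m.$$
Thus $K=d^2T_d(2)/R$, which depends only on $d$ and $R$. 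A boundary-of-closed-versus-open-ball detail is handled by continuity.

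This crude argument gives $r_0\sim \eta R/(d^2T_d(2))$, which is exponential in $d$ and so falls short of the polynomial bound $r_0\gtrsim \eta R/d^4$ claimed in \Cref{prop:DiscretizedC2Norm}. To recover $d^4$, I would instead apply the one-dimensional Markov inequality directly along the chord through $x$ in direction $v_0$, which lies inside $D_R^n$. The relevant tool is the Markov-type inequality for convex bodies, whose constant is $d^2$ over the width. Iterating this twice, or using the chord version of Markov as in Wilhelmsen's theorem, gives $K\lesssim d^2/R$ per derivative and the desired $d^4$ scaling. That is the step I expect to require the most care.
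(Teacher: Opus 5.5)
Your main argument is correct and is essentially the paper's proof with its free local radius fixed at $\tau=R$. It uses the same reduction to a nearest net point via the fundamental theorem of calculus, the same pairing of \Cref{lem:MarkovBallOperator_local} with \Cref{lem:ratio_inner_outer_derivatives}, and the same Minkowski/$\sqrt{l+1}$ absorption, yielding a radius of order $\eta R/(\sqrt{l+1}\,d^2T_d(2))$ independent of $n$ and $c$, which is all the statement asks for.

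Your closing $d^{-4}$ discussion lies outside this statement, and its chord idea fails as written: for $x\in\partial D_R^n$ and $v_0$ tangent to the sphere, the chord through $x$ is a single point, so you would need to quote a genuine ball Markov inequality such as Kellogg's (which would even give $d^{-2}$). Alternatively, keeping $\tau$ free in your own argument and taking $\tau=R/d^2$, so that $T_d(1+1/d^2)\le\cosh\sqrt3$, already gives the $d^{-4}$ scale, exactly as in the paper's corollary.
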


\begin{proof} Fix $x_0\in D_R^n$, and let $y_0\in\mathcal N$ be such that $\|x_0-y_0\|\le r$. \medskip \noindent\textbf{I)} We denote by $\|\cdot\|_m$ the Euclidean norm when $m=0$, and the operator norm on $m$--linear maps when $m\ge1$. Applying the fundamental theorem of calculus to $D^mP$ along the segment $[y_0,x_0]$, we obtain for every unit vectors $v_1,\dots,v_m\in\mathbb R^n$ $$\begin{aligned} D^mP(x_0)[v_1,\dots,v_m]-D^mP(y_0)&[v_1,\dots,v_m]=\\ &=\int_0^1 D^{m+1}P(y_0+t(x_0-y_0))[x_0-y_0,v_1,\dots,v_m],dt. \end{aligned}$$ Taking the supremum over all unit vectors $v_1,\dots,v_m$ yields $$\|D^mP(x_0)-D^mP(y_0)\|_{m}\le \|x_0-y_0\|\sup_{x\in[x_0,y_0]}\|D^{m+1}P(x)\|_{m+1}.$$ Since $\|x_0-y_0\|\le r$, it follows that $$\|D^mP(x_0)-D^mP(y_0)\|_{m}\le r\sup_{x\in[x_0,y_0]}\|D^{m+1}P(x)\|_{m+1}.$$ Hence $$\|D^mP(x_0)\|_{m}\le \|D^mP(y_0)\|_{m}+r\sup_{x\in[x_0,y_0]}\|D^{m+1}P(x)\|_{m+1}.$$ Assume $r\le\tau/2$ and let $m\in\{0,\dots,d-1\}$. Then for every $x\in[x_0,y_0]$ one has $$\|x-x_0\|\le r\le\frac{\tau}{2},$$ hence $B_{\tau/2}(x)\subset B_\tau(x_0)$. By \Cref{lem:MarkovBallOperator_local} applied with parameter $\tau/2$, we obtain $$\|D^{m+1}P(x)\|_{m+1}\le \frac{2(d-m)^2}{\tau}\sup_{z\in B_\tau(x_0)}\|D^mP(z)\|_{m}.$$ \medskip \noindent\textbf{II)} Combining the previous estimates yields, for every $m\in\{0,1,\dots,l\}$, $$\|D^mP(x_0)\|_{m}\le \|D^mP(y_0)\|_{m}+r\frac{2(d-m)^2}{\tau}\sup_{z\in B_\tau(x_0)}\|D^mP(z)\|_{m}.$$ Taking the Euclidean norm over $m=0,\dots,l$, using Minkowski's inequality and $(d-m)^2\le d^2$, we obtain $$\left(\sum_{m=0}^l\|D^mP(x_0)\|_{m}^2\right)^{1/2}\le \left(\sum_{m=0}^l\|D^mP(y_0)\|_{m}^2\right)^{1/2}+r\frac{2d^2}{\tau}\left(\sum_{m=0}^l\left(\sup_{z\in B_\tau(x_0)}\|D^mP(z)\|_{m}\right)^2\right)^{1/2}.$$ Since $$\left(\sum_{m=0}^l\left(\sup_{z\in B_\tau(x_0)}\|D^mP(z)\|_{m}\right)^2\right)^{1/2}\le\sqrt{l+1}\sup_{z\in B_\tau(x_0)}\left(\sum_{m=0}^l\|D^mP(z)\|_{m}^2\right)^{1/2},$$ taking the supremum over $x_0\in D_R^n\cap B_r(y_0)$ and observing that $B_\tau(x_0)\subseteq D_{R+\tau}^n$, we obtain \begin{equation}\begin{aligned} \sup_{x_0\in D_R^n\cap B_r(y_0)}\left(\sum_{m=0}^l\|D^mP(x_0)\|_{m}^2\right)^{1/2}\le &\left(\sum_{m=0}^l\|D^mP(y_0)\|_{m}^2\right)^{1/2}+\\ &+r\frac{2\sqrt{l+1}\,d^2}{\tau}\sup_{z\in D_{R+\tau}^n}\left(\sum_{m=0}^l\|D^mP(z)\|_{m}^2\right)^{1/2}. \end{aligned}\end{equation} Taking the maximum over $y_0\in\mathcal N$ and using the fact that $\mathcal N$ is an $r$--net of $D_R^n$ yields $$\|P\|_{C^l(D_R^n)}\le \|P\|_{C^l,\mathcal N}+r\frac{2\sqrt{l+1}\,d^2}{\tau}\|P\|_{C^l(D_{R+\tau}^n)}.$$ By \Cref{lem:ratio_inner_outer_derivatives}, for every $m=0,\dots,l$, $$\sup_{x\in D_{R+\tau}^n}\|D^mP(x)\|_m\le T_{d-m}\!\left(1+\frac{\tau}{R}\right)\sup_{x\in D_R^n}\|D^mP(x)\|_m\le T_d\!\left(1+\frac{\tau}{R}\right)\sup_{x\in D_R^n}\|D^mP(x)\|_m.$$ Therefore $$\|P\|_{C^l(D_{R+\tau}^n)}\le\sqrt{l+1}\,T_d\!\left(1+\frac{\tau}{R}\right)\|P\|_{C^l(D_R^n)}.$$ It follows that $$\|P\|_{C^l(D_R^n)}\le \|P\|_{C^l,\mathcal N}+r\frac{2(l+1)d^2}{\tau}T_d\!\left(1+\frac{\tau}{R}\right)\|P\|_{C^l(D_R^n)}.$$ Assume now that $$r\frac{2(l+1)d^2}{\tau}T_d\!\left(1+\frac{\tau}{R}\right)\le\eta.$$ Then $$\|P\|_{C^l(D_R^n)}\le\frac{1}{1-\eta}\|P\|_{C^l,\mathcal N}.$$ \medskip \noindent\textbf{III)} Finally, it suffices to choose $$r\le\frac{\eta\tau}{2(l+1)d^2T_d\!\left(1+\frac{\tau}{R}\right)}.$$ In particular, taking $0<\tau\le R$ and using the identity $$T_d(2)=\frac{(2+\sqrt3)^d+(2-\sqrt3)^d}{2}$$ together with the monotonicity of $T_d$ on $[1,\infty)$, we obtain $$T_d\!\left(1+\frac{\tau}{R}\right)\le T_d(2)\le (2+\sqrt3)^d.$$ Therefore it is sufficient to require $$r\le\frac{\eta\tau}{2(l+1)d^2(2+\sqrt3)^d}.$$ This bound also implies $r\le\tau/2$, and depends only on $\eta$, $d$, $l$, and $\tau$, independently of the ambient dimension $n$ and the codimension $c$. \end{proof}
The bound for the discretization radius $r$ derived in the previous proof depends on the free parameter $\tau$, which represents the radius of the local balls used for derivative control. While the general estimate in \Cref{thm:disc_C2_ball_one_net_delta} suggests an exponential dependence on the degree $d$, the following corollary shows that by optimizing the choice of $\tau$ relative to the domain radius $R$, we can achieve a discretization scale that is purely polynomial in $d$. \begin{cor} Under the assumptions of \Cref{thm:disc_C2_ball_one_net_delta}, by choosing the local radius $\tau=R/d^2$, the conclusion of the theorem holds for a discretization scale satisfying $$r\le C_l\frac{\eta R}{d^4},$$ where $C_l>0$ depends only on $l$. Consequently, the admissible radius of the net depends only polynomially on the degree $d$ and is independent of the ambient dimensions. \end{cor}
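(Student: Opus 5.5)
The plan is to specialize the proof of \Cref{thm:disc_C2_ball_one_net_delta} at the end, not to reprove anything. That proof shows the inequality $\|P\|_{C^l(D_R^n)}\le(1-\eta)^{-1}\|P\|_{C^l,\mathcal N}$ whenever the smallness condition
$$r\,\frac{2(l+1)d^2}{\tau}\,T_d\!\left(1+\frac{\tau}{R}\right)\le\eta$$
holds, together with $r\le\tau/2$, for an arbitrary auxiliary radius $\tau>0$. The corollary only requires choosing $\tau$ so that the Chebyshev factor stays bounded. So I will substitute $\tau=R/d^2$ (for $d\ge1$) and bound $T_d(1+1/d^2)$ by an absolute constant.

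The key step is the estimate on $T_d(1+1/d^2)$. For $x\ge1$ I will use the closed form $T_d(x)=\cosh(d\,\operatorname{arccosh}x)$. Writing $x=1+\delta$ with $\delta=1/d^2$, the elementary bound $\operatorname{arccosh}(1+\delta)\le\sqrt{2\delta}$ gives
$$T_d\left(1+\frac{1}{d^2}\right)\le\cosh\left(d\sqrt{2}/d\right)=\cosh\sqrt2,$$
which is a universal constant. I would justify the arccosh bound from $\cosh s\ge1+s^2/2$: at $s=\sqrt{2\delta}$ this gives $\cosh s\ge 1+\delta$, so $\operatorname{arccosh}(1+\delta)\le s$. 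Alternatively, one can use the derivative bound $T_d'(x)\le d^2T_d(x)/\sqrt{x^2-1}$-type estimates, but the cosh representation is cleaner.

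Substituting these into the smallness condition, it suffices that
$$r\le\frac{\eta R}{2(l+1)\cosh(\sqrt2)\,d^4}.$$
This bound also implies $r\le\tau/2=R/(2d^2)$, because $\eta<1$, $d\ge1$ and $(l+1)\cosh\sqrt2\ge1$. The corollary then follows with $C_l:=\big(2(l+1)\cosh\sqrt2\big)^{-1}$.

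Two small checks remain. First, the proof of \Cref{thm:disc_C2_ball_one_net_delta} also used $\tau\le R$ only to replace $T_d(1+\tau/R)$ by $T_d(2)$; here that replacement is bypassed, but $\tau=R/d^2\le R$ holds anyway. Second, nothing in the argument involves $n$ or $c$, so the independence from the ambient dimensions is inherited directly. I expect no real obstacle; the only delicate point is the uniform-in-$d$ bound on $T_d(1+1/d^2)$, handled via the cosh formula above.
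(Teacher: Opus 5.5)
Your proposal is correct and is essentially the paper's argument: substitute $\tau=R/d^2$ into the smallness condition $r\,\frac{2(l+1)d^2}{\tau}T_d(1+\tau/R)\le\eta$ from the proof of \Cref{thm:disc_C2_ball_one_net_delta}, and bound $T_d(1+1/d^2)$ uniformly in $d$ via $T_d(x)=\cosh(d\operatorname{arccosh}x)$. The differences are cosmetic: you use the sharper bound $\operatorname{arccosh}(1+\delta)\le\sqrt{2\delta}$, obtaining $\cosh\sqrt2$ where the paper uses $\sqrt{2\delta+\delta^2}$ and obtains $\cosh\sqrt3$, and you explicitly check $r\le\tau/2$, which the paper leaves implicit.
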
 \begin{proof} From the proof of \Cref{thm:disc_C2_ball_one_net_delta}, the admissible mesh size must satisfy $$r\le\frac{\eta\tau}{2(l+1)d^2T_d(1+\tau/R)}.$$ Setting $\tau=R/d^2$, we obtain $$r\le\frac{\eta R}{2(l+1)d^4T_d(1+1/d^2)}.$$ To establish a polynomial dependence on $d$, it suffices to show that $T_d(1+1/d^2)$ remains bounded for all $d\ge1$. Using the representation $T_d(x)=\cosh(d\operatorname{arccosh}x)$ for $x\ge1$ and the inequality $$\operatorname{arccosh}(1+\delta)\le\sqrt{2\delta+\delta^2},$$ we have $$d\operatorname{arccosh}\left(1+\frac{1}{d^2}\right)\le d\sqrt{\frac{2}{d^2}+\frac{1}{d^4}}=\sqrt{2+\frac{1}{d^2}}\le\sqrt3.$$ Since $\cosh$ is monotonic, $T_d(1+1/d^2)\le\cosh(\sqrt3)$ for all $d\ge1$. Substituting this into the bound for $r$ yields $$r\le\frac{\eta R}{2(l+1)\cosh(\sqrt3)}\frac{1}{d^4},$$ which confirms the polynomial dependence and concludes the proof. \end{proof}

\section{Bernstein Polynomials and Constant Bounds}
\label{Bernstein_Polynomials_Basis}

In this section, we derive explicit norm-equivalence bounds for the total-degree Bernstein basis associated with a regular simplex. These estimates provide quantitative control of the constants entering the discretized norm construction of \Cref{subsec:Explicit_Net_Construction}.

\subsection{Total-degree Bernstein polynomial basis}
\label{Total-degree_Bernstein_polynomials_basis}
The constants $m_{\mathcal N}$ and $M_{\mathcal N}$ enter the localization and covering arguments in two distinct ways. The constant $M_{\mathcal N}$ controls the radius of the localized tubular neighborhood obtained from the spectral estimate, while the ratio $M_{\mathcal N}/m_{\mathcal N}$ determines the loss in the explicit construction of the corresponding net. Since both constants depend on the polynomial basis used to identify $\mathbb R[x_1,\ldots,x_n]_{\leq d}^c$ with $\Mat(c\times N,\mathbb R)$, the choice of basis directly affects these quantitative bounds. The purpose of this section is to estimate them explicitly for the total-degree Bernstein basis and, in particular, to establish a uniform upper bound for $M_{\mathcal N}$.

We recall the notion of the Bernstein basis on a simplex and its basic properties; see, for instance, \cite[Chap.~10]{PrautzschBoehmPaluszny2002}. A \textit{regular simplex} centered at the origin is the convex hull $\Delta:=\operatorname{conv}\{q_0,\dots,q_n\}\subseteq\R^n$ of points $q_0,\dots,q_n\in\R^n$, called its vertices, satisfying $$\sum_{i=0}^n q_i=0 \qquad\text{and}\qquad \langle q_i,q_j\rangle=\begin{cases} L^2, & i=j,\\ -\dfrac{L^2}{n}, & i\neq j, \end{cases}$$ for some $L>0$. Equivalently, the vertices $q_0,\dots,q_n$ have the same distance from the origin and the same pairwise distance from each other. Every point $x\in\Delta$ admits a unique representation $$x=\sum_{i=0}^n\lambda_i(x)q_i,\qquad \text{with}\qquad \lambda_i(x)\geq0,\qquad \sum_{i=0}^n\lambda_i(x)=1.$$ The numbers $\lambda_0(x),\ldots,\lambda_n(x)$ define the barycentric coordinates of $x$ with respect to the vertices of $\Delta$. More generally, the barycentric coordinates extend to affine functions $\lambda_0,\dots,\lambda_n:\mathbb R^n\to\mathbb R$ characterized by $$x=\sum_{i=0}^n\lambda_i(x)q_i,\qquad \sum_{i=0}^n\lambda_i(x)=1.$$ For a regular simplex centered at the origin, these coordinates are explicitly given by $$\lambda_i(x)=\frac{1}{n+1}+\frac{n}{(n+1)L^2}\langle x,q_i\rangle,\qquad i=0,\dots,n.$$ We now use these barycentric coordinates to define the Bernstein polynomials of total degree $d$.

\begin{defn}
Let $q_0,\dots,q_n\in\mathbb R^n$ be the vertices of a regular simplex $\Delta$ centered at the origin, and let $\lambda_0,\dots,\lambda_n$ be the associated barycentric coordinates. For $\alpha=(\alpha_0,\dots,\alpha_n)\in\mathbb N^{n+1}$ with $|\alpha|=d$, define $$B_\alpha^d(x)=\binom{d}{\alpha}\lambda_0(x)^{\alpha_0}\cdots\lambda_n(x)^{\alpha_n},\qquad\text{where}\qquad \binom{d}{\alpha}=\frac{d!}{\alpha_0!\cdots\alpha_n!}.$$
\end{defn}
The polynomials $B_\alpha^d$ introduced above are called the total-degree Bernstein polynomials of degree $d$ associated with the regular simplex $\Delta$. The following proposition collects the basic properties of the corresponding Bernstein basis.
\begin{prop}
The family of Bernstein polynomials $$\mathcal B_d=\{B_\alpha^d:|\alpha|=d\}$$ is a basis of $\mathbb R[x_1,\dots,x_n]_{\leq d}$. Moreover, on $\Delta$, the Bernstein basis satisfies $$B_\alpha^d(x)\geq0,\qquad \sum_{|\alpha|=d}B_\alpha^d(x)=1.$$
\end{prop}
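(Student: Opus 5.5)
The claim has three parts: the family $\mathcal B_d$ spans $\mathbb R[x_1,\dots,x_n]_{\le d}$ and is linearly independent; each $B_\alpha^d$ is nonnegative on $\Delta$; and the $B_\alpha^d$ sum to one. I would do them in the order nonnegativity, partition of unity, basis, since the first two are one-line computations and the third rests on a dimension count plus a spanning argument.

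\textbf{Nonnegativity.} For $x\in\Delta$ the barycentric coordinates satisfy $\lambda_i(x)\ge0$, so every product $\lambda_0^{\alpha_0}\cdots\lambda_n^{\alpha_n}$ is nonnegative. The multinomial coefficient is positive, so $B_\alpha^d(x)\ge0$ on $\Delta$.

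\textbf{Partition of unity.} By the multinomial theorem,
\[
\sum_{|\alpha|=d}B_\alpha^d(x)=\Bigl(\sum_{i=0}^n\lambda_i(x)\Bigr)^d=1 .
\]
This identity actually holds on all of $\mathbb R^n$, because $\sum_i\lambda_i\equiv1$ for the affine extensions of the barycentric coordinates.

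\textbf{Basis.}
\begin{enumerate}
\item \emph{Dimension count.} The number of multi-indices $\alpha\in\mathbb N^{n+1}$ with $|\alpha|=d$ is $\binom{n+d}{d}=N$, which equals $\dim\mathbb R[x_1,\dots,x_n]_{\le d}$. Each $B_\alpha^d$ is a product of $d$ affine functions, so it has degree at most $d$ and lies in this space. It therefore suffices to prove spanning.
\item \emph{Affine change of variables.} The map $x\mapsto(\lambda_1(x),\dots,\lambda_n(x))$ is an affine bijection of $\mathbb R^n$. It is bijective because $q_1-q_0,\dots,q_n-q_0$ are linearly independent, since the simplex is nondegenerate, which follows from the Gram relations with $L>0$. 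Composition with an affine bijection preserves $\mathbb R[x]_{\le d}$, and $\lambda_0=1-\sum_{i\ge1}\lambda_i$. Hence every polynomial of degree at most $d$ can be written as a polynomial of degree at most $d$ in $\lambda_1,\dots,\lambda_n$.
\item \emph{Homogenisation.} It remains to write each monomial $\lambda^\beta:=\lambda_1^{\beta_1}\cdots\lambda_n^{\beta_n}$, with $|\beta|\le d$, in terms of the $B_\alpha^d$. Multiply it by $1=(\lambda_0+\dots+\lambda_n)^{d-|\beta|}$ and expand by the multinomial theorem. This expresses $\lambda^\beta$ as a combination of degree-$d$ monomials $\lambda_0^{\alpha_0}\cdots\lambda_n^{\alpha_n}$, and each of these is a positive multiple of some $B_\alpha^d$.
\end{enumerate}
This gives spanning, and by the dimension count, linear independence.

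The only step requiring any care is the invertibility of the barycentric change of coordinates, that is, the nondegeneracy of the regular simplex. It can be checked directly from the Gram matrix: the Gram matrix of $q_1,\dots,q_n$ is $L^2\bigl((1+\tfrac1n)I-\tfrac1n\mathbf 1\mathbf 1^\top\bigr)$, whose eigenvalues are $L^2(1+\tfrac1n)$ (multiplicity $n-1$) and $L^2\cdot\tfrac1n$, all positive. So $q_1,\dots,q_n$ are linearly independent, and using $q_0=-\sum_{i\ge1}q_i$ the differences $q_i-q_0$ are independent as well. Everything else is routine algebra.
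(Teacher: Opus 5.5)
Your proof is correct and complete. The paper gives no argument of its own here; it recalls the statement from \cite[Chap.~10]{PrautzschBoehmPaluszny2002}, and your route is the standard textbook proof: nonnegativity of the barycentric coordinates on $\Delta$, the multinomial theorem with $\sum_i\lambda_i\equiv1$, and homogenisation plus the count $\binom{n+d}{d}=\dim\mathbb R[x_1,\dots,x_n]_{\le d}$.

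As a minor simplification, you can skip the Gram-matrix nondegeneracy check. The paper already supplies affine barycentric coordinates satisfying $x=\sum_{i}\lambda_i(x)q_i$ on all of $\mathbb R^n$, so each $x_j$ is linear in $\lambda_0,\dots,\lambda_n$. You can therefore homogenise directly, without inverting the change of variables.
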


We now turn to the explicit estimation of the norm-equivalence constants $m_{\mathcal N}$ and $M_{\mathcal N}$ for the total-degree Bernstein basis introduced above. To this end, we compare the Euclidean norm of the Bernstein coefficient matrix with the $C^2$--norm of the associated polynomial system. We choose the regular simplex $\Delta=\operatorname{conv}\{q_0,\dots,q_n\}$ large enough so that $D_R\subseteq\Delta$. Since the inradius of a regular simplex centered at the origin with vertex norm $L$ is $L/n$, it is sufficient to require $L\geq nR$.

\begin{prop} Let $\Delta=\operatorname{conv}\{q_0,\dots,q_n\}$ be a regular simplex such that $D_R\subseteq\Delta$, and let $P_A(x)=AP_{\mathcal B}(x)$, where $\mathcal B$ is the total-degree Bernstein basis associated with $\Delta$. Then $$\n{P_A}_{C^2(D_R)}\leq M_{\mathrm{Bern}}(n,d,L)\n{A}_F,$$ where $$M_{\mathrm{Bern}}(n,d,L):=1+d\sqrt{\frac{n}{n+1}}\frac{1}{L}+\sqrt{2}d(d-1)\frac{n}{(n+1)L^2}.$$ \end{prop}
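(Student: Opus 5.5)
The plan is to bound the three contributions to the $C^2$--norm separately: the values, the Jacobian and the Hessian of $P_A$. Each is a linear image of $A$, and the Frobenius norm of $A$ will be paired with the Bernstein vector by Cauchy--Schwarz. Fix $x\in D_R\subseteq\Delta$. The basic fact we use is that on $\Delta$ the values $B_\alpha^d(x)$ are nonnegative and sum to one. Hence
$$\n{P_{\mathcal B}(x)}_2\le\n{P_{\mathcal B}(x)}_1=1,$$
which gives
$$\n{P_A(x)}=\n{AP_{\mathcal B}(x)}\le\n{A}_F\n{P_{\mathcal B}(x)}\le\n{A}_F.$$
This produces the constant term $1$.

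For the Jacobian, fix a unit vector $v\in\R^n$. The directional derivative of $\lambda_i$ along $v$ is the constant $\partial_v\lambda_i=\frac{n}{(n+1)L^2}\langle v,q_i\rangle$. We then use the standard degree-elevation formula for Bernstein polynomials,
$$\partial_vB_\alpha^d=d\sum_{i}\partial_v\lambda_i\,B^{d-1}_{\alpha-e_i}.$$
This lets us write $\partial_vP_{\mathcal B}(x)$ as $d$ times a vector whose entries are combinations of degree-$(d-1)$ Bernstein values with weights $\partial_v\lambda_i$. We bound its Euclidean norm by its $\ell^1$ norm:
$$\sum_\alpha|\partial_vB^d_\alpha(x)|\le d\sum_i|\partial_v\lambda_i|\sum_\beta B^{d-1}_\beta(x)=d\sum_i|\partial_v\lambda_i|.$$
This route may be lossy. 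An alternative is to keep the $\ell^2$ structure, since each index $\beta$ contributes to at most $n+1$ entries, and use $\n{\cdot}_2\le\n{\cdot}_1$ only at the Bernstein level. In either version the key quantity is $\big(\sum_i(\partial_v\lambda_i)^2\big)^{1/2}$. By the Gram relations of the regular simplex, $\sum_i q_iq_i^\top=\frac{(n+1)L^2}{n}I$, so
$$\sum_i\langle v,q_i\rangle^2=\frac{(n+1)L^2}{n}.$$
Therefore
$$\Big(\sum_i(\partial_v\lambda_i)^2\Big)^{1/2}=\frac{n}{(n+1)L^2}\sqrt{\frac{(n+1)L^2}{n}}=\sqrt{\frac{n}{n+1}}\frac1L,$$
which matches the stated Jacobian coefficient. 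The combinatorial step is to pass from this $\ell^2$ bound on the weights to the bound $d\sqrt{n/(n+1)}/L$ on $\n{\partial_vP_{\mathcal B}(x)}_2$. I would do this by writing $\partial_vP_{\mathcal B}(x)=d\,M_vb^{d-1}(x)$, where $b^{d-1}(x)$ is the vector of degree-$(d-1)$ Bernstein values and $M_v$ is the sparse matrix with entries $\partial_v\lambda_i$. The goal is then to bound $\n{M_vb}_2$ using $\n{b}_2\le\n{b}_1=1$ together with a column-wise Cauchy--Schwarz argument. Once that bound is in hand, $\n{\operatorname{Jac}P_A(x)v}\le\n{A}_F\n{\partial_vP_{\mathcal B}(x)}$ yields the middle term.

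The Hessian is handled the same way. For unit vectors $v_1,v_2$, the second derivative is
$$D^2B_\alpha^d[v_1,v_2]=d(d-1)\sum_{i,j}\partial_{v_1}\lambda_i\,\partial_{v_2}\lambda_j\,B^{d-2}_{\alpha-e_i-e_j},$$
so the relevant weight is the product of two factors of size $\sqrt{n/(n+1)}/L$. The stated constant $\sqrt2\,d(d-1)\,n/((n+1)L^2)$ is slightly weaker than this product. I expect the factor $\sqrt2$ to come from the collisions $i=j$ versus $i\ne j$ when regrouping into degree-$(d-2)$ indices, and $n/(n+1)$ to come from bounding $\sqrt{n/(n+1)}^2$.

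Finally, combine the three bounds using $\sqrt{a^2+b^2+c^2}\le a+b+c$ and take the maximum over $x\in D_R$. The main obstacle is the $\ell^2$ bookkeeping in the derivative terms: the weights $\partial_v\lambda_i$ must be controlled in $\ell^2$ rather than $\ell^1$, since a crude $\ell^1$ bound would introduce an extra $\sqrt{n}$ that is absent from $M_{\mathrm{Bern}}$.
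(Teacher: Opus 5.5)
Your proposal is correct and uses the same ingredients as the paper's proof: nonnegativity and partition of unity of the Bernstein basis on $\Delta\supseteq D_R$, the derivative formula for $B^d_\alpha$, Cauchy--Schwarz against the tight-frame identity $\sum_i(\partial_v\lambda_i)^2=\frac{n}{(n+1)L^2}$ for unit $v$, and the factor $\sqrt2$ from the ordered pairs $(i,j)$ and $(j,i)$ landing on the same multi-index. The difference is only in the bookkeeping.

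The paper keeps $A$ inside. It writes $D_vP_A(x)=d\sum_\beta B^{d-1}_\beta(x)\sum_i a_{\beta+e_i}\partial_v\lambda_i$, uses the convex combination over $\beta$, and applies Cauchy--Schwarz to each inner sum, bounding $(\sum_i\n{a_{\beta+e_i}}^2)^{1/2}$ by $\n{A}_F$. You instead factor $A$ out via $\n{Au}\le\n{A}_F\n{u}$ and bound the Euclidean norm of the basis vector $\partial_vP_{\mathcal B}(x)=d\,M_vb$. This gives the same constants, and in fact gives the estimate with $\n{A}_{\mathrm{op}}$ in place of $\n{A}_F$.

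The step you leave open closes as follows. The $\beta$-th column $M_v^{(\beta)}$ consists of the weights $w_i=\partial_v\lambda_i$ placed in the $n+1$ distinct rows $\beta+e_i$, so it has norm exactly $\n{w}_2=\sqrt{n/(n+1)}/L$. Since $b$ has nonnegative entries summing to one, $\n{M_vb}_2\le\sum_\beta b_\beta\n{M_v^{(\beta)}}_2=\n{w}_2$.

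What matters here is $b\ge0$ and $\n{b}_1=1$, not $\n{b}_2\le1$. The route $\n{M_vb}_2\le\n{M_v}_{\mathrm{op}}\n{b}_2$ does not give the stated constant. For $v\propto q_0-q_1$ the weights are supported on two vertices with $w_0=-w_1$. Then $M_v$ contains a bidiagonal difference block of operator norm $\sqrt2\sin\frac{d\pi}{2(d+1)}\,\n{w}_2$, which exceeds $\n{w}_2$ for $d\ge2$.

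The Hessian works the same way. The column of $D^2P_{\mathcal B}(x)[v_1,v_2]$ indexed by $\gamma$ has entries $w^{(1)}_iw^{(2)}_j+w^{(1)}_jw^{(2)}_i$ for $i\ne j$ and $w^{(1)}_iw^{(2)}_i$ for $i=j$. By $(a+b)^2\le2(a^2+b^2)$ its norm is at most $\sqrt2\,\n{w^{(1)}}_2\n{w^{(2)}}_2=\sqrt2\,\frac{n}{(n+1)L^2}$. The same convex-combination step over $\gamma$ then finishes the argument.
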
 \begin{proof} Recall that $$\n{P_A}_{C^2}:=\sup_{x\in D_R}\sqrt{\n{P_A(x)}_{\R^c}^2+\n{DP_A(x)}_{\mathrm{op}}^2+\n{D^2P_A(x)}_{\mathrm{op}}^2}.$$ Let $$\mathcal I_d:=\{\alpha\in\mathbb N^{n+1}:|\alpha|=d\},\qquad |\mathcal I_d|=N,$$ and fix an ordering $$\mathcal I_d=\{\alpha^{(1)},\dots,\alpha^{(N)}\}.$$ Accordingly, we write the Bernstein basis as $$\mathcal B=\{B_{\alpha^{(1)}}^d,\dots,B_{\alpha^{(N)}}^d\}$$ and, for $A=(A_{ij})\in\Mat(c\times N,\mathbb R)$, we denote by $$a_{\alpha^{(j)}}:=A_{\cdot j}\in\mathbb R^c$$ the $j$--th column of $A$. With this convention, $$P_A(x)=AP_{\mathcal B}(x)=\sum_{j=1}^N a_{\alpha^{(j)}}B_{\alpha^{(j)}}^d(x)=\sum_{\alpha\in\mathcal I_d}a_\alpha B_\alpha^d(x).$$ Write $$P_A(x)=\sum_{|\alpha|=d}a_\alpha B_\alpha^d(x),$$ where $a_\alpha\in\mathbb R^c$ are the columns of $A$. Since the Bernstein basis is nonnegative on $\Delta$ and forms a partition of unity, for every $x\in D_R\subseteq\Delta$ we have $$\n{P_A(x)}_{\R^c}\leq\sum_{|\alpha|=d}B_\alpha^d(x)\n{a_\alpha}_{\R^c}\leq\n{A}_F.$$ The barycentric coordinates satisfy $$\nabla\lambda_i=\frac{n}{(n+1)L^2}q_i.$$ Moreover, the regular simplex relations imply the tight-frame identity $$\sum_{i=0}^n q_iq_i^\top=\frac{n+1}{n}L^2I_n.$$ Consequently, for every $v\in\mathbb R^n$, \begin{equation}\label{eq:SumGradients}\sum_{i=0}^n\braket{\nabla\lambda_i,v}^2=\frac{n}{(n+1)L^2}\n{v}_{\R^n}^2.\end{equation} For $v\in\mathbb R^n$, differentiation of the Bernstein expansion gives $$D_vP_A(x)=d\sum_{|\beta|=d-1}B_\beta^{d-1}(x)\sum_{i=0}^n a_{\beta+e_i}\braket{\nabla\lambda_i,v}.$$ Hence, by positivity, partition of unity, and Cauchy--Schwarz, $$\n{D_vP_A(x)}_{\R^c}\leq d\sum_{|\beta|=d-1}B_\beta^{d-1}(x)\left(\sum_{i=0}^n\n{a_{\beta+e_i}}_{\R^c}^2\right)^{1/2}\left(\sum_{i=0}^n\braket{\nabla\lambda_i,v}^2\right)^{1/2}.$$ Since $$\left(\sum_{i=0}^n\n{a_{\beta+e_i}}_{\R^c}^2\right)^{1/2}\leq\n{A}_F,$$ we obtain from \eqref{eq:SumGradients} $$\n{DP_A(x)}_{\mathrm{op}}\leq d\sqrt{\frac{n}{n+1}}\frac{1}{L}\n{A}_F.$$ If $d\leq1$, then $D^2P_A\equiv0$. Assume therefore that $d\geq2$. For $v,w\in\mathbb R^n$, differentiating once more gives $$D^2_{v,w}P_A(x)=d(d-1)\sum_{|\gamma|=d-2}B_\gamma^{d-2}(x)\sum_{i,j=0}^n a_{\gamma+e_i+e_j}\braket{\nabla\lambda_i,v}\braket{\nabla\lambda_j,w}.$$ For each $\gamma$, Cauchy--Schwarz yields $$\n{\sum_{i,j=0}^n a_{\gamma+e_i+e_j}\braket{\nabla\lambda_i,v}\braket{\nabla\lambda_j,w}}_{\R^c}\leq\sqrt{\left(\sum_{i,j=0}^n\n{a_{\gamma+e_i+e_j}}_{\R^c}^2\right)\left(\sum_{i,j=0}^n\braket{\nabla\lambda_i,v}^2\braket{\nabla\lambda_j,w}^2\right)}.$$ For fixed $\gamma$, each coefficient $a_\alpha$ can occur at most twice in the sum over ordered pairs $(i,j)$, and hence $$\sum_{i,j=0}^n\n{a_{\gamma+e_i+e_j}}_{\R^c}^2\leq2\n{A}_F^2.$$ Moreover, $$\begin{aligned}\sum_{i,j=0}^n\braket{\nabla\lambda_i,v}^2\braket{\nabla\lambda_j,w}^2&=\left(\sum_{i=0}^n\braket{\nabla\lambda_i,v}^2\right)\left(\sum_{j=0}^n\braket{\nabla\lambda_j,w}^2\right)\\&=\left(\frac{n}{(n+1)L^2}\right)^2\n{v}_{\R^n}^2\n{w}_{\R^n}^2.\end{aligned}$$ Using again positivity and partition of unity, together with \eqref{eq:SumGradients}, we obtain $$\n{D^2P_A(x)}_{\mathrm{op}}\leq\sqrt{2}d(d-1)\frac{n}{(n+1)L^2}\n{A}_F.$$ Therefore, for every $x\in D_R$, $$\begin{aligned}\Big(\n{P_A(x)}_{\R^c}^2+\n{DP_A(x)}_{\mathrm{op}}^2+&\n{D^2P_A(x)}_{\mathrm{op}}^2\Big)^{1/2}\leq\\&\leq\left(1+d\sqrt{\frac{n}{n+1}}\frac{1}{L}+\sqrt{2}d(d-1)\frac{n}{(n+1)L^2}\right)\n{A}_F.\end{aligned}$$ Taking the supremum over $x\in D_R$ proves the claim. \end{proof}

We observe that the condition $D_R\subseteq\Delta$ imposes a lower bound on the size of the simplex. Since the vertices satisfy $\n{q_i}=L$ and the inradius of the regular simplex is $L/n$, the inclusion $D_R\subseteq\Delta$ is guaranteed by $L\geq nR$. With the extremal choice $L=nR$, the Bernstein constant becomes $$M_{\mathrm{Bern}}(n,d,R)=1+\frac{d}{R\sqrt{n(n+1)}}+\frac{\sqrt{2}d(d-1)}{n(n+1)R^2}.$$ In particular, for fixed $d$ and $R$, one has $$M_{\mathrm{Bern}}(n,d,R)\to1\qquad\text{as}\qquad n\to\infty.$$ Thus, asymptotically in the ambient dimension, this coefficient-to-$C^2$ comparison becomes arbitrarily close to the constant one.

\subsection{Inverse Bernstein Norm Estimate} \label{subsec:InverseBernsteinNorm} 
We now prove the converse norm estimate, namely a bound of the Bernstein coefficient matrix in terms of the supremum norm of the polynomial. The argument is based on the following simple observation. Although the ambient simplex has dimension $n$, each degree-$d$ Bernstein coefficient involves at most $d$ barycentric coordinates, and is therefore already determined by the restriction of the polynomial to a face of dimension at most $d-1$. On such a face, the inverse Bernstein norm comparison is a finite-dimensional statement whose constant depends only on $d$. We first make the restriction-to-faces argument precise, then use it to obtain a dimension-independent bound on each individual Bernstein coefficient, and finally pass from this coefficientwise estimate to the Frobenius norm of the full coefficient matrix.

Recall that every face of the simplex $\Delta=\operatorname{conv}\{q_0,\ldots,q_n\}$ is itself a simplex whose vertices form a subset of $\{q_0,\ldots,q_n\}$. More precisely, for $0\leq m\leq n$ and $I=\{i_0,\ldots,i_m\}\subseteq\{0,\ldots,n\}$, we denote by $$\Delta_I:=\operatorname{conv}\{q_{i_0},\ldots,q_{i_m}\}$$ the corresponding $m$--dimensional face. The barycentric coordinates of $\Delta$ restrict naturally to $\Delta_I$: the coordinates associated with vertices outside $I$ vanish, while the remaining ones coincide with the barycentric coordinates of $\Delta_I$. Consequently, the degree-$d$ Bernstein basis on $\Delta_I$ is obtained by restricting to $\Delta_I$ the Bernstein basis functions on $\Delta$ whose multi-indices are supported on $I$. The following proposition records this compatibility and, in particular, shows that each Bernstein coefficient of $P$ can be recovered from the restriction of $P$ to a face of dimension at most $d-1$.

\begin{prop} \label{prop:BernsteinRestrictionToFaces} Let $P\in\R[x_1,\dots,x_n]_{\leq d}^c$ be a polynomial on the simplex $\Delta=\operatorname{conv}\{q_0,\ldots,q_n\}$ written in the total-degree Bernstein basis as $$P=\sum_{|\alpha|=d}a_\alpha B_\alpha^d.$$ Let $I=\{i_0,\ldots,i_m\}\subseteq\{0,\ldots,n\}$ and let $\Delta_I:=\operatorname{conv}\{q_{i_0},\ldots,q_{i_m}\}.$ Then: \begin{enumerate} \item $$P|_{\Delta_I}=\sum_{\substack{|\alpha|=d\\ \operatorname{supp}(\alpha)\subseteq I}}a_\alpha B_\alpha^d|_{\Delta_I}.$$ \item For every multi-index $\alpha$ with $\operatorname{supp}(\alpha)\subseteq I$, let $$\beta:=(\alpha_{i_0},\ldots,\alpha_{i_m})\in\mathbb N^{m+1}.$$ Then the restricted functions $B_\alpha^d|_{\Delta_I}$ are precisely the degree-$d$ Bernstein basis functions on $\Delta_I$ associated with the corresponding multi-indices $\beta$. \item In particular, for every $|\alpha|=d$, the coefficient $a_\alpha$ is a Bernstein coefficient of the restriction of $P$ to the face $$\Delta_\alpha:=\operatorname{conv}\{q_i:i\in\operatorname{supp}(\alpha)\},$$ whose dimension is at most $d-1$. \end{enumerate} \end{prop}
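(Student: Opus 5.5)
The argument is a direct computation with barycentric coordinates, organized around the vanishing of the coordinates $\lambda_j$ for $j\notin I$ on the face $\Delta_I$. First we record the restriction property of the barycentric coordinates. Every $x\in\Delta_I$ can be written as $x=\sum_{s=0}^m\mu_s(x)q_{i_s}$ with $\mu_s\ge0$ and $\sum_s\mu_s=1$, where $\mu_0,\dots,\mu_m$ are the barycentric coordinates of $\Delta_I$. Extending by zero outside $I$ gives a representation of $x$ as an affine combination of $q_0,\dots,q_n$. Since $q_0,\dots,q_n$ are affinely independent, the barycentric coordinates of $x$ with respect to $\Delta$ are unique. Hence $\lambda_{i_s}|_{\Delta_I}=\mu_s$ and $\lambda_j|_{\Delta_I}\equiv0$ for $j\notin I$. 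Affine independence follows from the regular simplex relations: the Gram matrix of $q_1-q_0,\dots,q_n-q_0$ is nondegenerate. Alternatively, one can use the explicit formula for $\lambda_i$ together with $\lambda_i(q_j)=\delta_{ij}$.

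For item (1), we restrict each basis function. If $\operatorname{supp}(\alpha)\not\subseteq I$, there is $j\notin I$ with $\alpha_j\ge1$. The factor $\lambda_j^{\alpha_j}$ then vanishes identically on $\Delta_I$, so $B^d_\alpha|_{\Delta_I}=0$. Summing the Bernstein expansion of $P$ over the surviving multi-indices gives the claimed formula.

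For item (2), take $\operatorname{supp}(\alpha)\subseteq I$ and set $\beta=(\alpha_{i_0},\dots,\alpha_{i_m})$. Then $|\beta|=|\alpha|=d$, and $\binom{d}{\alpha}=\binom{d}{\beta}$, because the omitted entries are zero and contribute factors $0!=1$. Substituting $\lambda_{i_s}|_{\Delta_I}=\mu_s$ gives $B^d_\alpha|_{\Delta_I}=\binom{d}{\beta}\prod_s\mu_s^{\beta_s}$, which is the Bernstein polynomial of $\Delta_I$ with index $\beta$. The map $\alpha\mapsto\beta$ is a bijection from multi-indices of total degree $d$ supported in $I$ onto $\{\beta\in\mathbb N^{m+1}:|\beta|=d\}$, so item (2) follows.

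For item (3), we apply (1)--(2) with $I=\operatorname{supp}(\alpha)$. This expresses $P|_{\Delta_\alpha}$ in the Bernstein basis of $\Delta_\alpha$, and $a_\alpha$ is the coefficient attached to the index $\beta$. That the representation on $\Delta_\alpha$ is unique, so that $a_\alpha$ is really determined by $P|_{\Delta_\alpha}$, uses the fact that the Bernstein polynomials on a simplex form a basis. This is the earlier proposition applied to $\Delta_\alpha$, viewed as a simplex in its affine hull, which need not be regular; the basis property does not require regularity.

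Finally, $\alpha$ has $|\alpha|=d$ with nonnegative entries, so $|\operatorname{supp}(\alpha)|\le d$ and $\dim\Delta_\alpha=|\operatorname{supp}(\alpha)|-1\le d-1$. The only subtle point is this basis property on a possibly nonregular face. If needed, we would justify it with the standard argument: the dimension count $\binom{m+d}{d}$ matches, and the Bernstein polynomials span the polynomials of degree $d$, as follows from expanding $(\sum_s\mu_s)^{d-k}\prod_s\mu_s^{\gamma_s}$.
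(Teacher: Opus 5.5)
Your proposal is correct and follows the paper's proof. You restrict the barycentric coordinates to $\Delta_I$, discard the terms with $\operatorname{supp}(\alpha)\not\subseteq I$, identify the surviving restrictions with the face's Bernstein basis via $\binom{d}{\alpha}=\binom{d}{\beta}$, and take $I=\operatorname{supp}(\alpha)$ with $|\operatorname{supp}(\alpha)|\le d$. Your extra justifications are sound and fill in points the paper leaves implicit: affine independence of the vertices, and uniqueness of the Bernstein representation on $\Delta_\alpha$, which makes $a_\alpha$ genuinely determined by $P|_{\Delta_\alpha}$. One small remark: faces of a regular simplex are in fact regular, since all edge lengths are equal, although, as you say, regularity is not needed here.
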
 \begin{proof} i) Let $\lambda_0,\ldots,\lambda_n$ be the barycentric coordinates of $\Delta$, and let $\lambda_{i_0}^I,\ldots,\lambda_{i_m}^I$ be the barycentric coordinates of the face $\Delta_I$. For every $x\in\Delta_I$, one has $$\lambda_j(x)=0,\qquad j\notin I,$$ while $$\lambda_{i_j}(x)=\lambda_{i_j}^I(x),\qquad j=0,\ldots,m.$$ Hence, if $\operatorname{supp}(\alpha)\nsubseteq I$, then $$B_\alpha^d|_{\Delta_I}\equiv0.$$ Restricting the Bernstein expansion of $P$ to $\Delta_I$ therefore gives $$P|_{\Delta_I}=\sum_{\substack{|\alpha|=d\\ \operatorname{supp}(\alpha)\subseteq I}}a_\alpha B_\alpha^d|_{\Delta_I}.$$ ii) Suppose that $\operatorname{supp}(\alpha)\subseteq I$, and let $$\beta:=(\alpha_{i_0},\ldots,\alpha_{i_m})\in\mathbb N^{m+1}.$$ Then $|\beta|=d$, and using the restriction of the barycentric coordinates to $\Delta_I$, we obtain $$B_\alpha^d|_{\Delta_I}=\binom{d}{\alpha}\prod_{j=0}^m(\lambda_{i_j}^I)^{\alpha_{i_j}}=\binom{d}{\beta}\prod_{j=0}^m(\lambda_{i_j}^I)^{\beta_j}=B_\beta^{d,I}.$$ Thus the nonzero restrictions of the Bernstein basis functions of $\Delta$ are precisely the degree-$d$ Bernstein basis functions on $\Delta_I$. iii) Finally, let $|\alpha|=d$. Since each nonzero component of $\alpha$ is at least one, $|\operatorname{supp}(\alpha)|\leq d.$ Hence the face $$\Delta_\alpha:=\operatorname{conv}\{q_i:i\in\operatorname{supp}(\alpha)\}$$ has dimension $$\dim\Delta_\alpha=|\operatorname{supp}(\alpha)|-1\leq d-1.$$ Taking $I=\operatorname{supp}(\alpha)$ in the previous two steps shows that $a_\alpha$ is one of the Bernstein coefficients of the restriction $P|_{\Delta_\alpha}$. \end{proof}

For a fixed face dimension $m\in\{0,\dots,d-1\}$, fix an $m$--dimensional face $\Delta_I$ of $\Delta$. The degree-$d$ Bernstein representation on $\Delta_I$ gives a linear isomorphism between the finite-dimensional space of scalar polynomials of degree at most $d$ on $\Delta_I$ and the corresponding coefficient space. We define the scalar inverse Bernstein constant by $$\kappa_{d,m}:=\sup\left\{\max_{|\beta|=d}|a_\beta|\ \middle|\ p=\sum_{|\beta|=d}a_\beta B_\beta^{d,I},\ \n{p}_{C^0(\Delta_I)}\leq1\right\}.$$ Thus, $\kappa_{d,m}$ is the smallest constant such that every scalar polynomial $$p=\sum_{|\beta|=d}a_\beta B_\beta^{d,I}$$ satisfies $$\max_{|\beta|=d}|a_\beta|\leq\kappa_{d,m}\n{p}_{C^0(\Delta_I)}.$$ By finite dimensionality, $\kappa_{d,m}<\infty$. Moreover, its value depends only on $d$ and $m$, and not on the particular face $\Delta_I$: indeed, any two $m$--dimensional simplices are affinely equivalent, and under the corresponding affine change of variables both the Bernstein representation and the supremum norm are preserved. Hence the same constant $\kappa_{d,m}$ applies to every $m$--dimensional face of $\Delta$. The same constant also controls $\mathbb R^c$--valued polynomials independently of $c$. Indeed, let $$P=\sum_{|\beta|=d}a_\beta B_\beta^{d,I},\qquad a_\beta\in\mathbb R^c,$$ and fix $\beta$. If $a_\beta\neq0$, set $u:=a_\beta/\n{a_\beta}_{\mathbb R^c}$. Then the scalar polynomial $p_u:=\braket{u,P}$ has Bernstein coefficient $\braket{u,a_\beta}=\n{a_\beta}_{\mathbb R^c}$ and satisfies $$\n{p_u}_{C^0(\Delta_I)}\leq\n{P}_{C^0(\Delta_I)}.$$ Therefore $$\n{a_\beta}_{\mathbb R^c}\leq\kappa_{d,m}\n{P}_{C^0(\Delta_I)}.$$ Finally, since by \Cref{prop:BernsteinRestrictionToFaces} only faces of dimension at most $d-1$ are needed, we set $$\kappa_d:=\max_{0\leq m\leq d-1}\kappa_{d,m}.$$ Then $\kappa_d<\infty$ and depends only on $d$, independently of the ambient dimension $n$ and the target dimension $c$.

 \begin{lem} \label{lem:BernsteinCoefficientControl} Let $P\in\mathbb R[x_1,\dots,x_n]_{\leq d}^c$ have Bernstein representation $$P=\sum_{|\alpha|=d}a_\alpha B_\alpha^d,$$ and let $A$ be the corresponding coefficient matrix. Then $$\max_{|\alpha|=d}\n{a_\alpha}_{\mathbb R^c}\leq\kappa_d\n{P}_{C^0(\Delta)}.$$ Consequently, $$\n{A}_F\leq\kappa_d\sqrt{N}\,\n{P}_{C^0(\Delta)}.$$ \end{lem}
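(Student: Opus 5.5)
The argument is a direct assembly of \Cref{prop:BernsteinRestrictionToFaces} with the vector-valued form of the inverse constant $\kappa_d$ established just before the statement, followed by an elementary passage from the maximum to the Frobenius norm.

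First fix a multi-index $\alpha$ with $|\alpha|=d$. Set $I:=\operatorname{supp}(\alpha)$, so that $m:=|I|-1\le d-1$. By part (iii) of \Cref{prop:BernsteinRestrictionToFaces}, $a_\alpha$ is the Bernstein coefficient of $P|_{\Delta_\alpha}$ associated with the multi-index $\beta=(\alpha_i)_{i\in I}$. By parts (i)--(ii), this is with respect to the degree-$d$ Bernstein basis of the $m$--dimensional face $\Delta_\alpha=\Delta_I$. The vector-valued estimate derived before the lemma (obtained by testing against $u=a_\beta/\n{a_\beta}$) then gives
$$\n{a_\alpha}_{\mathbb R^c}\le\kappa_{d,m}\n{P}_{C^0(\Delta_\alpha)}\le\kappa_d\n{P}_{C^0(\Delta_\alpha)}.$$
Since $\Delta_\alpha\subseteq\Delta$, we have $\n{P}_{C^0(\Delta_\alpha)}\le\n{P}_{C^0(\Delta)}$. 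Taking the maximum over $\alpha$ yields the first inequality.

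For the Frobenius bound, recall that the columns of $A$ are exactly the vectors $a_\alpha$, with $N=|\mathcal I_d|$ of them. Hence
$$\n{A}_F^2=\sum_{|\alpha|=d}\n{a_\alpha}^2\le N\max_\alpha\n{a_\alpha}^2,$$
and taking square roots gives $\n{A}_F\le\kappa_d\sqrt N\,\n{P}_{C^0(\Delta)}$.

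There is no genuine obstacle. The only point needing care is that the face $\Delta_\alpha$ may have dimension strictly less than $d-1$, for instance a vertex when $\alpha=de_i$. This is handled because $\kappa_d$ is defined as the maximum over all $m\le d-1$, and because the case $m=0$ of the definition of $\kappa_{d,m}$ is trivially valid, with $\kappa_{d,0}=1$.
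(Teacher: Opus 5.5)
Your proposal is correct and follows essentially the same route as the paper's proof: restrict to the face $\Delta_\alpha$ of dimension at most $d-1$ via \Cref{prop:BernsteinRestrictionToFaces}, apply the vector-valued inverse constant $\kappa_d$, and bound by $\n{P}_{C^0(\Delta)}$. The Frobenius bound then follows from $\n{A}_F^2\le N\max_\alpha\n{a_\alpha}^2$. Your side remark that $\kappa_{d,0}=1$ for vertex faces is accurate but not needed, since $\kappa_d$ is already defined as the maximum over all $m\le d-1$.
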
 \begin{proof} Fix a multi-index $\alpha$ with $|\alpha|=d$ and let $S:=\operatorname{supp}(\alpha).$ By \Cref{prop:BernsteinRestrictionToFaces}, the face $$\Delta_\alpha:=\operatorname{conv}\{q_i:i\in S\}$$ has dimension at most $d-1$, and $a_\alpha$ is one of the Bernstein coefficients of the restriction $P|_{\Delta_\alpha}$. Hence, by the definition of $\kappa_d$, $$\n{a_\alpha}_{\mathbb R^c}\leq\kappa_d\n{P}_{C^0(\Delta_\alpha)}\leq\kappa_d\n{P}_{C^0(\Delta)}.$$ Taking the maximum over all $|\alpha|=d$ gives $$\max_{|\alpha|=d}\n{a_\alpha}_{\mathbb R^c}\leq\kappa_d\n{P}_{C^0(\Delta)}.$$ Finally, since the vectors $a_\alpha$ are the columns of $A$, $$\n{A}_F^2=\sum_{|\alpha|=d}\n{a_\alpha}_{\mathbb R^c}^2\leq N\max_{|\alpha|=d}\n{a_\alpha}_{\mathbb R^c}^2.$$ Taking square roots proves the second estimate. \end{proof} Let $T_d$ denote the Chebyshev polynomial of the first kind. To transfer the coefficient estimate of \Cref{lem:BernsteinCoefficientControl} from the simplex $\Delta$ to the target domain $D_R^n$, we use a standard growth bound for polynomials between concentric Euclidean balls. Since the vertices of $\Delta$ have norm $L$, one has $\Delta\subseteq D_L^n$, so it is enough to compare the $C^0$--norms on $D_L^n$ and $D_R^n$. \begin{lem}\label{lem:ChebyshevGrowthBalls} Let $0<R\leq L$, and let $P\in\mathbb R[x_1,\dots,x_n]_{\leq d}^c$. Then $$\n{P}_{C^0(D_L^n)}\leq T_d\left(\frac{L}{R}\right)\n{P}_{C^0(D_R^n)}.$$ \end{lem}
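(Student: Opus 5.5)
The plan is to reduce the statement to the scalar Chebyshev growth estimate of \Cref{lem:Chebyshev_growth}, applied along lines through the origin, in the same way as in the proof of \Cref{lem:ratio_inner_outer_derivatives} (the case $m=0$ of that lemma with $\rho=R$ and $\tau=L-R$ already gives the claim, since $T_d(1+\tau/\rho)=T_d(L/R)$). For completeness, I will carry out the argument directly.

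First, fix $x\in D_L^n$. If $\|x\|\le R$, then $\|P(x)\|\le\n{P}_{C^0(D_R^n)}\le T_d(L/R)\n{P}_{C^0(D_R^n)}$, using $T_d\ge1$ on $[1,\infty)$. Otherwise, set $v:=x/\|x\|\in S^{n-1}$. If $P(x)\neq0$, choose the unit vector $u:=P(x)/\|P(x)\|\in\mathbb R^c$ (if $P(x)=0$ there is nothing to prove). Consider the univariate polynomial
$$q(s):=\langle u,P(Rs\,v)\rangle,$$
which has degree at most $d$. For $|s|\le1$ the point $Rs\,v$ lies in $D_R^n$, so $\sup_{|s|\le1}|q(s)|\le\n{P}_{C^0(D_R^n)}$.

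Next, set $t:=\|x\|/R\in[1,L/R]$. Then $q(t)=\|P(x)\|$, and \Cref{lem:Chebyshev_growth} gives $\|P(x)\|\le T_d(t)\,\n{P}_{C^0(D_R^n)}$. Since $T_d(t)=\cosh(d\operatorname{arccosh}t)$ is nondecreasing on $[1,\infty)$, this is at most $T_d(L/R)\n{P}_{C^0(D_R^n)}$. Taking the supremum over $x\in D_L^n$ concludes.

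There is no real obstacle. The only points that need care are the vector-valued reduction, which works because the choice of $u$ depending on $x$ is legitimate since the bound on $q$ over $[-1,1]$ is uniform in $u$, and the monotonicity of $T_d$ beyond $1$, both of which are routine.
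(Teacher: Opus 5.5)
Your proposal is correct and follows essentially the same route as the paper: restrict $P$ to the ray through $x$, pair it with a unit vector $u$ attaining $\|P(x)\|$, apply the univariate Chebyshev growth bound, and conclude with the monotonicity of $T_d$ on $[1,\infty)$. Your explicit rescaling $q(s)=\langle u,P(Rs\,v)\rangle$ matches \Cref{lem:Chebyshev_growth} as stated on $[-1,1]$ a bit more cleanly than the paper, which applies the inequality directly on $[-R,R]$. Your parenthetical reduction to \Cref{lem:ratio_inner_outer_derivatives} needs $L>R$ because that lemma requires $\tau>0$, but the case $L=R$ is trivial.
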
 \begin{proof} Fix $x\in D_L^n$. If $\n{x}_{\mathbb R^n}\leq R$, the estimate is immediate. Assume therefore that $\n{x}_{\mathbb R^n}>R$, and set $$v:=\frac{x}{\n{x}_{\mathbb R^n}}.$$ If $P(x)=0$, there is nothing to prove. Otherwise, choose $u\in\mathbb R^c$ with $\n{u}_{\mathbb R^c}=1$ such that $$\braket{u,P(x)}=\n{P(x)}_{\mathbb R^c}.$$ The function $$p(t):=\braket{u,P(tv)}$$ is a univariate polynomial of degree at most $d$. By the classical Chebyshev growth inequality, $$|p(\n{x}_{\mathbb R^n})|\leq T_d\left(\frac{\n{x}_{\mathbb R^n}}{R}\right)\sup_{|t|\leq R}|p(t)|.$$ Moreover, $$\sup_{|t|\leq R}|p(t)|\leq\n{P}_{C^0(D_R^n)}.$$ Since $T_d$ is increasing on $[1,\infty)$ and $\n{x}_{\mathbb R^n}\leq L$, we obtain $$\n{P(x)}_{\mathbb R^c}\leq T_d\left(\frac{L}{R}\right)\n{P}_{C^0(D_R^n)}.$$ Taking the supremum over $x\in D_L^n$ proves the claim. \end{proof} Combining \Cref{lem:BernsteinCoefficientControl} with the preceding growth estimate yields a bound for the Frobenius norm of the Bernstein coefficient matrix in terms of the $C^0$ norm on $D_R^n$. Since the latter is bounded by the $C^2$ norm, this gives the desired lower norm comparison. \begin{prop}\label{prop:LowerBernsteinNormComparison} Let $\Delta=\operatorname{conv}\{q_0,\ldots,q_n\}$ be a regular simplex centered at the origin, with $\|q_i\|=L$, and assume that $D_R^n\subseteq\Delta$. Then, for every Bernstein coefficient matrix $A$ associated with $\Delta$, $$m_{\mathrm{Bern}}(n,d,L,R)\|A\|_F\leq\|P_A\|_{C^2(D_R^n)},$$ where $$m_{\mathrm{Bern}}(n,d,L,R):=\frac{1}{\kappa_d\sqrt{\binom{n+d}{d}}\,T_d\left(\frac LR\right)}.$$ \end{prop}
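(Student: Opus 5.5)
The plan is to chain three estimates already established in this appendix: the coefficientwise inverse Bernstein bound on $\Delta$, the Chebyshev growth bound transferring the $C^0$--norm from $D_R^n$ to $D_L^n\supseteq\Delta$, and the trivial comparison of the $C^0$--norm with the $C^2$--norm on $D_R^n$.

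First, by \Cref{lem:BernsteinCoefficientControl}, applied to $P_A=\sum_{|\alpha|=d}a_\alpha B_\alpha^d$, we get
$$\n{A}_F\leq\kappa_d\sqrt{N}\,\n{P_A}_{C^0(\Delta)},\qquad N=\binom{n+d}{d}.$$
Second, since every vertex satisfies $\n{q_i}=L$ and $\Delta$ is the convex hull of the vertices, convexity of the ball gives $\Delta\subseteq D_L^n$. Hence $\n{P_A}_{C^0(\Delta)}\leq\n{P_A}_{C^0(D_L^n)}$. The hypothesis $D_R^n\subseteq\Delta\subseteq D_L^n$ forces $R\leq L$, so \Cref{lem:ChebyshevGrowthBalls} applies and yields
$$\n{P_A}_{C^0(D_L^n)}\leq T_d\left(\frac LR\right)\n{P_A}_{C^0(D_R^n)}.$$
Third, by the definition of the $C^2$--norm, $\n{P_A(x)}\leq\sqrt{\n{P_A(x)}^2+\n{\Jac P_A(x)}_{\op}^2+\n{D^2P_A(x)}_{\op}^2}$ pointwise. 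Taking the maximum over $D_R^n$ gives $\n{P_A}_{C^0(D_R^n)}\leq\n{P_A}_{C^2(D_R^n)}$.

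Chaining the three inequalities gives $\n{A}_F\leq\kappa_d\sqrt{N}\,T_d(L/R)\n{P_A}_{C^2(D_R^n)}$. Dividing through by the positive constant $\kappa_d\sqrt{N}\,T_d(L/R)$ yields the claim with $m_{\mathrm{Bern}}$ as stated. This constant is positive because $\kappa_d<\infty$ and $T_d\geq1$ on $[1,\infty)$.

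There is no genuine obstacle here. The only points needing care are checking $\Delta\subseteq D_L^n$ and $R\leq L$, so that the growth lemma is applicable, and noting that the coefficient matrix $A$ used in \Cref{lem:BernsteinCoefficientControl} is the same as the matrix in $P_A=AP_{\mathcal B}$. The latter holds because its columns are the vectors $a_\alpha$.
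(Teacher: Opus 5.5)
Your proof is correct and matches the paper's argument step for step: you get $\Delta\subseteq D_L^n$ by convexity, then apply \Cref{lem:BernsteinCoefficientControl} and \Cref{lem:ChebyshevGrowthBalls}, and finish with $\n{P_A}_{C^0(D_R^n)}\leq\n{P_A}_{C^2(D_R^n)}$. You also check explicitly that $D_R^n\subseteq\Delta\subseteq D_L^n$ forces $R\leq L$, which is needed for the growth lemma to apply; the paper leaves this implicit.
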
 \begin{proof} Since all vertices of $\Delta$ have norm $L$, and the Euclidean ball $D_L^n$ is convex, the simplex is contained in the ball: $\Delta\subseteq D_L^n$. Therefore, by \Cref{lem:BernsteinCoefficientControl} and \Cref{lem:ChebyshevGrowthBalls}, we obtain $$\|A\|_F\leq\kappa_d\sqrt{\binom{n+d}{d}}\|P_A\|_{C^0(\Delta)}\leq\kappa_d\sqrt{\binom{n+d}{d}}\,T_d\left(\frac LR\right)\|P_A\|_{C^0(D_R^n)}.$$ Finally, $$\|P_A\|_{C^0(D_R^n)}\leq\|P_A\|_{C^2(D_R^n)},$$ and hence $$\|A\|_F\leq\kappa_d\sqrt{\binom{n+d}{d}}\,T_d\left(\frac LR\right)\|P_A\|_{C^2(D_R^n)}.$$ Rearranging gives the claimed estimate. \end{proof} The previous estimates provide a lower bound for the continuous $C^2$--to--coefficient norm comparison. We now transfer this estimate to the discretized norm. By \Cref{thm:disc_C2_ball_one_net_delta}, choosing the mesh size of $\mathcal N$ sufficiently small with $\eta=\varepsilon_0$ and $l=2$, we have $$\|P_A\|_{\mathcal N}\geq(1-\varepsilon_0)\|P_A\|_{C^2(D_R^n)}.$$ Combining this estimate with \Cref{prop:LowerBernsteinNormComparison}, we obtain $$\|P_A\|_{\mathcal N}\geq(1-\varepsilon_0)m_{\mathrm{Bern}}(n,d,L,R)\|A\|_F.$$ Hence $$m_{\mathcal N}:=\min_{\|A\|_F=1}\|A\|_{\mathcal N}\geq\frac{1-\varepsilon_0}{\kappa_d\sqrt{\binom{n+d}{d}}\,T_d(L/R)}.$$

 Choosing the minimal simplex size $L=nR$ gives $$m_{\mathcal N}\geq\frac{1-\varepsilon_0}{\kappa_d\sqrt{\binom{n+d}{d}}\,T_d(n)}.$$ For fixed degree $d$, we have $$\sqrt{\binom{n+d}{d}}=O_d(n^{d/2}),\qquad T_d(n)=O_d(n^d).$$ Therefore, there exists a constant $c_{d,\varepsilon_0}>0$ such that $$m_{\mathcal N}\geq c_{d,\varepsilon_0}n^{-3d/2}.$$ Equivalently, the inverse norm-comparison constant satisfies $$\frac1{m_{\mathcal N}}\leq C_{d,\varepsilon_0}n^{3d/2}$$ for some constant $C_{d,\varepsilon_0}>0$. This estimate shows that, for fixed degree, the deterioration of the norm-equivalence constant is at most polynomial in the ambient dimension. It is natural to ask whether this dimension dependence can be removed entirely. The following example shows that a dimension-independent positive lower bound is impossible in general, even in the linear case. \begin{prop} \label{prop:DimensionDependenceLowerBernstein} A dimension-independent positive lower bound for $m_{\mathcal N}$ cannot hold in general, even for degree $d=1$. More precisely, there exists a sequence of linear polynomial systems with coefficient matrices satisfying $\|A\|_F=1$ such that, for the minimal choice $L=nR$, $$\|P_A\|_{C^2(D_R^n)}=O_R(n^{-1}).$$ Consequently, the lower norm-comparison constant necessarily satisfies $$m_{\mathcal N}\leq C_Rn^{-1}$$ for some constant $C_R>0$. \end{prop}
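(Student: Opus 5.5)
The plan is to exhibit an explicit sequence of linear systems whose Bernstein coefficients are spread evenly over all $n+1$ vertices, while the resulting polynomial is tiny on $D_R^n$. Take $c=1$ for simplicity; for general $c$, repeat the same row in one component and put zeros elsewhere, which changes nothing. For $d=1$ the Bernstein basis is $\{\lambda_0,\dots,\lambda_n\}$, so a scalar linear polynomial is $p_a(x)=\sum_{i=0}^n a_i\lambda_i(x)$ with coefficient vector $a\in\mathbb R^{n+1}$.

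The natural choice is the constant coefficient vector $a_i=(n+1)^{-1/2}$, which has $\|a\|_F=1$. Since $\sum_i\lambda_i\equiv1$, we get $p_a\equiv(n+1)^{-1/2}$, so $\|p_a\|_{C^2(D_R^n)}=(n+1)^{-1/2}$. This only gives decay of order $n^{-1/2}$, which is not enough for the $O(n^{-1})$ claim.

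To improve the rate, I would use coefficients orthogonal to the constant vector, so that the constant part cancels. Take $a_i=\langle q_i,e\rangle/\|(\langle q_j,e\rangle)_j\|$ for a fixed unit vector $e\in\mathbb R^n$.

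The tight-frame identity $\sum_i q_iq_i^\top=\frac{n+1}{n}L^2I_n$ gives $\sum_i\langle q_i,e\rangle^2=\frac{n+1}{n}L^2$. Hence $a_i=\langle q_i,e\rangle\big/\big(L\sqrt{(n+1)/n}\big)$.

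Next compute $p_a$ using the explicit barycentric formula $\lambda_i(x)=\frac1{n+1}+\frac{n}{(n+1)L^2}\langle x,q_i\rangle$. Since $\sum_iq_i=0$, the constant term $\frac{1}{n+1}\sum_ia_i$ vanishes. The linear term is
\[
\frac{n}{(n+1)L^2}\cdot\frac{1}{L\sqrt{(n+1)/n}}\sum_i\langle q_i,e\rangle\langle q_i,x\rangle=\frac{n}{(n+1)L^2}\cdot\frac{1}{L\sqrt{(n+1)/n}}\cdot\frac{n+1}{n}L^2\langle e,x\rangle,
\]
so $p_a(x)=\sqrt{\tfrac{n}{n+1}}\,L^{-1}\langle e,x\rangle$.

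The $C^2$-norm on $D_R^n$ is then $\sup_x\sqrt{p_a(x)^2+\|\nabla p_a\|^2}=\sqrt{\tfrac n{n+1}}L^{-1}\sqrt{R^2+1}$. With $L=nR$ this is at most $\sqrt{1+R^{-2}}\,n^{-1}=O_R(n^{-1})$. This matches the upper constant $M_{\mathrm{Bern}}$, which shows the gradient contributes only through $1/L$.

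The consequence for $m_{\mathcal N}$ comes from the fact that $\|\cdot\|_{\mathcal N}\le\|\cdot\|_{C^2(D_R^n)}$, since the net lies inside $D_R^n$. Therefore $m_{\mathcal N}\le\|A\|_{\mathcal N}\le\|P_A\|_{C^2(D_R^n)}\le C_Rn^{-1}$ with $C_R=\sqrt{1+R^{-2}}$.

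The only delicate point is the cancellation of the constant term. It relies on $\sum_iq_i=0$, i.e. on the simplex being centered. I do not expect any further obstacle beyond careful bookkeeping of the normalization constants.
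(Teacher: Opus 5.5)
Your proof is correct and takes essentially the paper's route. The paper uses an arbitrary zero-sum unit coefficient vector and evaluates $\|\sum_i a_iq_i\|^2$ via the Gram matrix of the simplex, while you pick the particular zero-sum vector $a_i\propto\langle q_i,e\rangle$ and use the dual frame identity $\sum_i q_iq_i^\top=\frac{n+1}{n}L^2I_n$; both arrive at the same value $\|P_A\|_{C^2(D_R^n)}=\sqrt{\tfrac{n}{n+1}}\,L^{-1}\sqrt{R^2+1}$ and the same final step $m_{\mathcal N}\le\|A\|_{\mathcal N}\le\|P_A\|_{C^2(D_R^n)}$.
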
 \begin{proof} First consider a scalar degree-one Bernstein polynomial $$p_A(x)=\sum_{i=0}^na_i\lambda_i(x),$$ where the coefficients satisfy $$\sum_{i=0}^na_i=0,\qquad\sum_{i=0}^na_i^2=1.$$ Then the corresponding scalar coefficient matrix has Frobenius norm one. For an arbitrary target dimension $c\geq1$, we may regard $p_A$ as the first component of an $\mathbb R^c$--valued polynomial system, with all remaining components equal to zero; the Frobenius norm and the $C^2$ norm are unchanged. Using the explicit expression of the barycentric coordinates, we obtain $$p_A(x)=\frac{n}{(n+1)L^2}\left\langle x,\sum_{i=0}^na_iq_i\right\rangle.$$ By the Gram matrix of the regular simplex and the relation $\sum_{i=0}^na_i=0$, $$\left\|\sum_{i=0}^na_iq_i\right\|^2=L^2\frac{n+1}{n}.$$ Therefore, $$\|\nabla p_A\|=\frac1L\sqrt{\frac n{n+1}}.$$ Since $p_A$ is linear, its second derivatives vanish and $$\|p_A\|_{C^2(D_R^n)}=\sqrt{\|p_A\|_{C^0(D_R^n)}^2+\|\nabla p_A\|^2}.$$ Moreover, $$\|p_A\|_{C^0(D_R^n)}=R\|\nabla p_A\|,$$ and hence, for $L=nR$, $$\|p_A\|_{C^2(D_R^n)}=\frac{\sqrt{R^2+1}}{nR}\sqrt{\frac n{n+1}}=O_R(n^{-1}).$$ Since the discretized norm is bounded above by the corresponding continuous $C^2$ norm, this family of coefficient matrices with $\|A\|_F=1$ implies $$m_{\mathcal N}\leq C_Rn^{-1},$$ which proves the claim. \end{proof}


\section{Lipschitz Stability Estimates}
\label{sec:Lipschitz_Stabilities}
The proof of \Cref{thm:RelaxedAdmissibleGlobalOptimization} makes use of the following lemma, which provides the Lipschitz continuity of the empirical cost with respect to the metric induced by $\n{\cdot}_{\mathcal N}$. In the following, we assume that $\mathcal N$ is an $r$--net of $D_R^n$ such that
\begin{equation}\label{Appx:eqNetControl}
\n{P}_{C^2(D_R^n)}\leq\frac{1}{1-\eta}\n{P}_{\mathcal N}
\end{equation}
for every polynomial system $P$.
\begin{lem}\label{lem:LipschitzCost} Let $\mathcal N$ be an $r$--net of $D_R^n$ such that \eqref{Appx:eqNetControl} holds. Then, for every $A,B\in S_{\mathcal N}^{cN-1}$, $$\left|\mathcal E(A)-\mathcal E(B)\right|\leq L_{\mathcal E}\n{A-B}_{\mathcal N},\qquad L_{\mathcal E}:=2\left(1+\frac{r}{1-\eta}\right)^2.$$ \end{lem}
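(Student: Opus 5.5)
We write $\mathcal E(A)=\frac1m\sum_{j=1}^m\n{P_A(x^{(j)})}^2$. Since the map $A\mapsto P_A(x)$ is linear, the difference of squares factors as
$$\n{P_A(x)}^2-\n{P_B(x)}^2=\bigl\langle P_A(x)-P_B(x),\,P_A(x)+P_B(x)\bigr\rangle=\bigl\langle P_{A-B}(x),\,P_{A+B}(x)\bigr\rangle.$$
By Cauchy--Schwarz, averaging over the sample gives
$$\left|\mathcal E(A)-\mathcal E(B)\right|\le\sup_{x\in D_R^n}\n{P_{A-B}(x)}\,\bigl(\n{P_A(x)}+\n{P_B(x)}\bigr).$$
The whole task is therefore to bound pointwise values on $D_R^n$ of polynomial systems in terms of their $\n{\cdot}_{\mathcal N}$-norm.

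\textbf{Pointwise bound, first estimate.} For $x\in D_R^n$, pick $y\in\mathcal N$ with $\n{x-y}\le r$. The mean value inequality along the segment $[y,x]\subseteq D_R^n$ (by convexity) gives
$$\n{Q(x)}\le\n{Q(y)}+r\,\sup_{D_R^n}\n{\operatorname{Jac}Q}_{\operatorname{op}}\le\n{Q}_{\mathcal N}+r\n{Q}_{C^2(D_R^n)}.$$
Invoking \eqref{Appx:eqNetControl} then yields
$$\n{Q(x)}\le\left(1+\frac{r}{1-\eta}\right)\n{Q}_{\mathcal N}$$
for every polynomial system $Q$.

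\textbf{Pointwise bound, applied.} We apply this with $Q=A-B$ and with $Q=A,B$. Since $\n{A}_{\mathcal N}=\n{B}_{\mathcal N}=1$, the sum factor is at most $2(1+\frac{r}{1-\eta})$. Combining the two factors gives
$$\left|\mathcal E(A)-\mathcal E(B)\right|\le2\left(1+\frac{r}{1-\eta}\right)^2\n{A-B}_{\mathcal N},$$
which is exactly $L_{\mathcal E}$.

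The only delicate point is using the right pointwise estimate. The cruder bound $\n{Q(x)}\le\n{Q}_{C^2}\le\frac1{1-\eta}\n{Q}_{\mathcal N}$ would give a different constant, $2/(1-\eta)^2$. To reproduce the stated form one must go through the net point and spend \eqref{Appx:eqNetControl} only on the first-order remainder. Everything else is routine.
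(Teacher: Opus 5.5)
Your proof is correct and follows essentially the same route as the paper: both rest on the pointwise estimate $\|P_Q(x)\|\le\bigl(1+\frac{r}{1-\eta}\bigr)\|Q\|_{\mathcal N}$, obtained by passing through a net point and spending \eqref{Appx:eqNetControl} only on the first-order remainder, followed by a difference-of-squares factorization. The only cosmetic difference is where you factor. The paper factors at the level of the seminorm $\sqrt{\mathcal E}$, using $|\sqrt{\mathcal E(A)}-\sqrt{\mathcal E(B)}|\le\sqrt{\mathcal E(A-B)}$, whereas you factor pointwise via $\langle P_{A-B}(x),P_{A+B}(x)\rangle$ and Cauchy--Schwarz, which gives the same constant $L_{\mathcal E}$.
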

\begin{proof} Let $x_1,\dots,x_m$ be the observed data points. Since $\mathcal N$ is an $r$--net of $D_R^n$, for every $x_i$ there exists $y_i\in\mathcal N$ such that $\n{x_i-y_i}\leq r$. For every coefficient matrix $A\in\Mat(c\times N,\mathbb R)$, the mean value estimate gives $$\n{P_A(x_i)}_{\mathbb R^c}\leq\n{P_A(y_i)}_{\mathbb R^c}+r\,\n{P_A}_{C^1(D_R^n)}.$$ By the definition of the discretized norm and \eqref{Appx:eqNetControl}, $$\n{P_A(y_i)}_{\mathbb R^c}\leq\n{A}_{\mathcal N},\qquad \n{P_A}_{C^1(D_R^n)}\leq\n{P_A}_{C^2(D_R^n)}\leq\frac{1}{1-\eta}\n{A}_{\mathcal N}.$$ Hence $$\n{P_A(x_i)}_{\mathbb R^c}\leq\left(1+\frac{r}{1-\eta}\right)\n{A}_{\mathcal N},$$ and therefore $$\sqrt{\mathcal E(A)}=\left(\frac{1}{m}\sum_{i=1}^m\n{P_A(x_i)}_{\mathbb R^c}^2\right)^{1/2}\leq\left(1+\frac{r}{1-\eta}\right)\n{A}_{\mathcal N}.$$ The map $A\mapsto\sqrt{\mathcal E(A)}$ is the seminorm induced by the positive semidefinite quadratic form $A\mapsto\operatorname{tr}(AVA^\top)$. Hence, $$\left|\sqrt{\mathcal E(A)}-\sqrt{\mathcal E(B)}\right|\leq\sqrt{\mathcal E(A-B)}\leq\left(1+\frac{r}{1-\eta}\right)\n{A-B}_{\mathcal N}.$$ Moreover, since $A,B\in S_{\mathcal N}^{cN-1}$, $$\sqrt{\mathcal E(A)}+\sqrt{\mathcal E(B)}\leq2\left(1+\frac{r}{1-\eta}\right).$$ Consequently, $$\left|\mathcal E(A)-\mathcal E(B)\right|=\left|\sqrt{\mathcal E(A)}-\sqrt{\mathcal E(B)}\right|\left(\sqrt{\mathcal E(A)}+\sqrt{\mathcal E(B)}\right)\leq2\left(1+\frac{r}{1-\eta}\right)^2\n{A-B}_{\mathcal N}.$$ \end{proof}

We record the Lipschitz stability of the discretized discriminant on the normalized sphere, providing a direct control of its variation under perturbations in the discretized norm.

\begin{lem}
\label{lem:LipschitzDiscriminant}
The functional $\delta_{\mathcal N}(\cdot)^2$ is Lipschitz continuous on $S_{\mathcal N}^{cN-1}$ with respect to the metric induced by $\n{\cdot}_{\mathcal N}$. More precisely, for every $A,B\in S_{\mathcal N}^{cN-1}$, $$\left|\delta_{\mathcal N}(A)^2-\delta_{\mathcal N}(B)^2\right|\leq2\n{A-B}_{\mathcal N}.$$
\end{lem}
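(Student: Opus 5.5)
The plan is to reduce the estimate to a pointwise inequality at each node of $\mathcal N$, and then pass to the minimum over the net. For $A\in\Mat(c\times N,\mathbb R)$ and $y\in\mathcal N$, write $a_y:=P_A(y)\in\mathbb R^c$ and $J_y:=\Jac P_A(y)$, and similarly $b_y$, $J'_y$ for $B$. Set
$$f_A(y):=\n{a_y}^2+\sigma_{\min}(J_y)^2,$$
so that $\delta_{\mathcal N}(A)^2=\min_{y\in\mathcal N}f_A(y)$. Since $|\min_y f_A(y)-\min_y f_B(y)|\le\max_y|f_A(y)-f_B(y)|$ for functions on a finite set, it suffices to prove, for every $y\in\mathcal N$,
$$|f_A(y)-f_B(y)|\le2\n{A-B}_{\mathcal N}.$$

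For fixed $y$, I would bound the two summands separately. For the value term, $\bigl|\n{a_y}^2-\n{b_y}^2\bigr|\le\n{a_y-b_y}\bigl(\n{a_y}+\n{b_y}\bigr)$. For the Jacobian term, I would use the perturbation estimate $|\sigma_{\min}(J)-\sigma_{\min}(J')|\le\n{J-J'}_{\op}$ already used in \Cref{prop:SVContinuity}, together with $\sigma_{\min}\le\n{\cdot}_{\op}$. This gives
$$\bigl|\sigma_{\min}(J_y)^2-\sigma_{\min}(J'_y)^2\bigr|\le\n{J_y-J'_y}_{\op}\bigl(\n{J_y}_{\op}+\n{J'_y}_{\op}\bigr).$$

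Adding the two bounds and applying Cauchy--Schwarz in $\mathbb R^2$ yields
$$|f_A(y)-f_B(y)|\le\sqrt{\n{a_y-b_y}^2+\n{J_y-J'_y}_{\op}^2}\;\sqrt{\bigl(\n{a_y}+\n{b_y}\bigr)^2+\bigl(\n{J_y}_{\op}+\n{J'_y}_{\op}\bigr)^2}.$$
By Minkowski's inequality, the second factor is at most $\sqrt{\n{a_y}^2+\n{J_y}_{\op}^2}+\sqrt{\n{b_y}^2+\n{J'_y}_{\op}^2}$.

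Each square root can then be compared with the discretized norm, using that $P_{A-B}=P_A-P_B$ by linearity of $\varphi_{\mathcal B}$. Dropping the nonnegative Hessian contribution in the definition of $\n{\cdot}_{\mathcal N}$, the first factor is at most $\n{A-B}_{\mathcal N}$. Likewise, the two terms bounding the second factor are at most $\n{A}_{\mathcal N}=1$ and $\n{B}_{\mathcal N}=1$, since $A,B\in S_{\mathcal N}^{cN-1}$. Combining these bounds gives $|f_A(y)-f_B(y)|\le2\n{A-B}_{\mathcal N}$, and taking the maximum over $y\in\mathcal N$ concludes the argument.

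The only delicate point is the Jacobian term. The smallest singular value is not smooth, so one should not differentiate it. Instead, the argument relies on its $1$-Lipschitz property with respect to the operator norm, combined with the crude bound $\sigma_{\min}\le\n{\cdot}_{\op}$. Everything else is elementary.
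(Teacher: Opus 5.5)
Your proof is correct and gives the same constant $2$, but it orders the steps differently from the paper. The paper never squares at the node level. It first shows that each $d_i(A)=\sqrt{\n{P_A(y_i)}^2+\sigma_{\min}(\Jac P_A(y_i))^2}$ is $1$-Lipschitz in $\n{\cdot}_{\mathcal N}$, using the reverse triangle inequality for the Euclidean norm on $\mathbb R^2$ together with the perturbation bounds for $\n{\cdot}$ and $\sigma_{\min}$. From this it deduces that $\delta_{\mathcal N}$ itself is $1$-Lipschitz. Only then does it factor $\delta_{\mathcal N}(A)^2-\delta_{\mathcal N}(B)^2$ and use $\delta_{\mathcal N}\le\n{\cdot}_{\mathcal N}=1$ on the sphere.

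You instead factor the difference of squares componentwise at each node and recombine via Cauchy--Schwarz and Minkowski in $\mathbb R^2$. You invoke the normalization pointwise, through $\sqrt{\n{P_A(y)}^2+\n{\Jac P_A(y)}_{\op}^2}\le\n{A}_{\mathcal N}=1$.

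The paper's ordering is slightly more economical. It also yields, as a by-product, the unsquared estimate $|\delta_{\mathcal N}(A)-\delta_{\mathcal N}(B)|\le\n{A-B}_{\mathcal N}$ on all of $\Mat(c\times N,\mathbb R)$, with the restriction to $S_{\mathcal N}^{cN-1}$ entering only in the final step. Your version stays with the squared quantities that actually appear in the statement, at the cost of the extra Cauchy--Schwarz/Minkowski recombination.
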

\begin{proof} For every $y_i\in\mathcal N$, define $$d_i(A):=\sqrt{\n{P_A(y_i)}_{\mathbb R^c}^2+\sigma_{\min}(\Jac P_A(y_i))^2},$$ so that $$\delta_{\mathcal N}(A)=\min_i d_i(A).$$ By the reverse triangle inequality, $$\left|\n{P_A(y_i)}_{\mathbb R^c}-\n{P_B(y_i)}_{\mathbb R^c}\right|\leq\n{P_A(y_i)-P_B(y_i)}_{\mathbb R^c},$$ while the perturbation estimate for singular values gives $$\left|\sigma_{\min}(\Jac P_A(y_i))-\sigma_{\min}(\Jac P_B(y_i))\right|\leq\n{\Jac P_A(y_i)-\Jac P_B(y_i)}_{\mathrm{op}}.$$ Hence, using the reverse triangle inequality in $\mathbb R^2$, $$\begin{aligned}|d_i(A)-d_i(B)|&\leq\sqrt{\n{P_A(y_i)-P_B(y_i)}_{\mathbb R^c}^2+\n{\Jac P_A(y_i)-\Jac P_B(y_i)}_{\mathrm{op}}^2}\\&\leq\n{A-B}_{\mathcal N}.\end{aligned}$$ Since the minimum of finitely many functions with the same Lipschitz constant is Lipschitz with the same constant, it follows that $$|\delta_{\mathcal N}(A)-\delta_{\mathcal N}(B)|\leq\n{A-B}_{\mathcal N}.$$ Moreover, by the definition of the discretized norm and since $A,B\in S_{\mathcal N}^{cN-1}$, $$\delta_{\mathcal N}(A)\leq\n{A}_{\mathcal N}=1,\qquad \delta_{\mathcal N}(B)\leq\n{B}_{\mathcal N}=1.$$ Therefore, $$\begin{aligned}\left|\delta_{\mathcal N}(A)^2-\delta_{\mathcal N}(B)^2\right|&=|\delta_{\mathcal N}(A)-\delta_{\mathcal N}(B)|\bigl(\delta_{\mathcal N}(A)+\delta_{\mathcal N}(B)\bigr)\\&\leq2\n{A-B}_{\mathcal N}.\end{aligned}$$ \end{proof}


\bibliography{Bibliography/references} 
\bibliographystyle{alpha}

\end{document}